\documentclass[12pt,reqno]{article}

\usepackage{fullpage}
\usepackage[dvipsnames]{xcolor}
\usepackage[hidelinks,bookmarks=false,linkcolor=black]{hyperref}
\usepackage{graphicx,here,comment}
\usepackage{multirow}
\usepackage{enumitem}
\usepackage{natbib}
\usepackage{booktabs}
\usepackage{subcaption}

\usepackage{amssymb, amsmath, amsfonts, amsthm}
\usepackage{bm}
\usepackage{mathtools}
\mathtoolsset{showonlyrefs=true}

\usepackage{color}

\newtheorem{theorem}{Theorem}[section]
\newtheorem{corollary}[theorem]{Corollary}
\newtheorem{lemma}[theorem]{Lemma}
\newtheorem{proposition}[theorem]{Proposition}

\newtheorem{assumption}{Assumption}[section]
\newtheorem{fact}{Fact}[section]

\theoremstyle{definition}
\newtheorem{definition}{Definition}[section]
\newtheorem{remark}{Remark}[section]
\newtheorem{example}{Example}[section]

\title{Spectrum-Adaptive Generalization Bounds \\for Trained Deep Transformers}
\author{Mana Sakai$^{1,2}$\quad Masaaki Imaizumi$^{1,2,3}$}

\date{}

\begin{document}

\maketitle

\vspace{-2em}
\begin{center}
    $^{1}$The University of Tokyo\\
    $^{2}$RIKEN Center for Advanced Intelligence Project\\
    $^{3}$Kyoto University
\end{center}

\begin{abstract}
    Understanding why trained Transformers generalize well is a fundamental problem in modern machine learning theory, and complexity-based generalization bounds provide a principled way to study this question. While existing norm-based bounds for Transformers remove the explicit polynomial dependence on the hidden dimension, they typically impose fixed norm constraints specified a priori and can exhibit unfavorable exponential dependence on depth. In this paper, we derive spectrum-adaptive post hoc generalization bounds for multi-layer Transformers. Under layerwise spectral norm control, the bounds are expressed in terms of layerwise Schatten quantities of the query-key, value, and feedforward weight matrices. Since the Schatten indices need not be fixed a priori and can instead be selected after training, separately for each matrix type and layer, the bounds adaptively trade off spectral complexity against the dimension- and depth-dependent factors according to the learned singular-value profiles. Empirical comparisons of BERT-adapted proxies for the leading complexity factors suggest that the proxies induced by our bounds grow more slowly with depth and hidden dimension than the corresponding norm-based proxies. Overall, our results provide a complexity-based perspective on how the spectral structure of trained Transformers is reflected in generalization analyses.
\end{abstract}

\section{Introduction}\label{sec:Introduction}

Transformers have become a central architecture of modern machine learning since their introduction by \citet{vaswani2017attention}. Their success across language \citep{devlin2019bert,brown2020language,chowdhery2023palm}, vision \citep{dosovitskiy2021image}, and many other modalities has made it increasingly important to understand why heavily overparameterized Transformers can still generalize well. Generalization bounds offer one principled route toward this goal, because they can reveal which features of a learned model are statistically relevant.

For generalization analysis of Transformers, \citet{edelman2022inductive} established norm-based generalization gap bounds for multi-layer Transformers. A notable feature of their bounds is that the dependence on both the hidden dimension and the token length appears only through logarithmic factors. \citet{trauger2024sequence} sharpened this work by removing the explicit token-length dependence. More recently, \citet{li2026sharper} used the offset Rademacher complexity to derive excess risk bounds with an $O(1/n)$ rate, where $n$ is the training sample size.

At the same time, these advances expose two difficulties that arise when existing norm-based bounds are used to assess trained deep Transformers. First, the improvement in explicit dimension dependence can come with layerwise propagation factors that can scale as $C^{L}$ in the worst case, where $L$ is the depth and $C$ is determined by Lipschitz constants and spectral norm bounds across layers. This issue is benign for shallow networks or when the propagation constant is at most one, but it can quickly make the bounds loose for deep models. Second, the fixed norm radii appearing in existing bounds are not naturally dimension-independent for large matrices. In particular, the bounds of \citet{edelman2022inductive} and \citet{trauger2024sequence} are controlled by mixed $(2,1)$- and $(1,1)$-norm radii, respectively, where the mixed $(\alpha,\beta)$-norm is defined by $\|W\|_{\alpha,\beta}=[\sum_{j}(\sum_{i}|W_{ij}|^{\alpha})^{\frac{\beta}{\alpha}}]^{\frac{1}{\beta}}$. Such quantities can remain small for matrices that are sparse in the corresponding coordinatewise sense, but they typically grow with the hidden dimension.

A further motivation comes from the observed heterogeneity of trained Transformer weights. Recent compression and spectral analyses suggest that different sublayers can have markedly different singular-value profiles: attention-related matrices often exhibit more low-rank or compressible spectra than feedforward-related matrices \citep{li2024lorap,yuan2023asvd}. This suggests that a useful complexity measure should not impose a single global norm description on all layers and matrix types. Rather, it should be able to adapt to the spectral profile of each weight.

This point is especially important for post hoc generalization bounds. In post hoc bounds, the high-probability event is specified independently of the trained weights, while the admissible complexity parameters may be chosen after the weights are observed. Such guarantees are useful when the relevant structure of the trained model is not known a priori. In the context of DNNs and CNNs, \citet{ledent2025generalization} derived generalization bounds based on Schatten (quasi) norms that hold simultaneously over all choices of layerwise Schatten indices, so that these indices can be selected after training. Here, for a matrix $W$ and a Schatten index $p\in(0,2]$, the Schatten $p$ (quasi) norm is defined by $\|W\|_{\mathrm{s},p}=(\sum_{i}\sigma_{i}(W)^{p})^{1/p}$, where $\sigma_{i}(W)$ is the $i$-th singular value. For Transformers, post hoc adaptivity is particularly natural because the spectral profiles of trained matrices may differ substantially across layers.

In this paper, we derive post hoc generalization gap bounds for multi-layer Transformers that can adaptively reflect the spectral structure of each learned weight. Specifically, our bounds are based on spectral quantities of the weights: besides the Schatten quantity $\|W\|_{\mathrm{s},p}^{p}\ (p\in[0,2])$, with the convention $\|W\|_{\mathrm{s},0}^{0}=\operatorname{rank}(W)$, we also use the spectral norm $\|W\|_{2}=\sigma_{1}(W)$. Table~\ref{tab:GenBoundComparisonOriginal} summarizes a specialized version of our result along with existing norm-based bounds. A notable feature of our bounds is that the complexity measure, through the Schatten index $p$, can be selected after training.

\begin{table}[t]
    \caption{Leading complexity factor $B$ in simplified generalization gap bounds of the form $\tilde{O}(B/\sqrt{n})$. All weights are $N\times N$ and satisfy spectral norm constraints $\|W\|_{2}=O(1)$. Here $L$ is the depth, $n$ is the sample size, and $C$ denotes the worst-case layerwise propagation factor appearing in existing norm-based bounds. For our post hoc bounds, the displayed expression is the common-$p$ specialization of Theorem~\ref{thm:GenGapPostHocSimplified}; the theorem itself allows a separate $p$ for every matrix type and layer. While our bounds and the bounds of \citet{edelman2022inductive} depend on the token length logarithmically, the bounds of \citet{trauger2024sequence} are independent of the token length.
    }
    \label{tab:GenBoundComparisonOriginal}
    \centering
    \begin{tabular}{lcccc}
        \toprule
        &Bound&Assumption&Post Hoc\\
        \midrule
        \textbf{Ours} (Theorem~\ref{thm:GenGapPostHocSimplified})&$\inf_{p}(\|W\|_{\mathrm{s},p}^{p})^{\frac{1}{p+2}}C^{\frac{Lp}{p+2}}L^{\frac{2p+2}{p+2}}N^{\frac{p+1}{p+2}}$& & $\surd$\\
        \citet{edelman2022inductive}&$C_{2,1}C^{L}L^{\frac{3}{2}}$&$\|W\|_{2,1}\le C_{2,1}$ &
        \\
        \citet{trauger2024sequence}&$C_{1,1}C^{L}L^{\frac{3}{2}}$&$\|W\|_{1,1}\le C_{1,1}$ & \\
        \bottomrule
    \end{tabular}
\end{table}

These bounds have several consequences. First, they provide a concrete interpolation between rank-based and norm-based regimes. When $p=0$, the Schatten quantity is the rank; when $p=2$, it is the squared Frobenius norm. Intermediate values of $p$ describe a soft-rank measure through singular-value decay. Second, this interpolation directly controls the trade-off among spectral complexity, hidden dimension, and depth. Smaller values of $p$ reduce the layerwise propagation factor $C^{Lp/(p+2)}$, the polynomial depth factor $L^{(2p+2)/(p+2)}$, and the hidden dimension $N^{(p+1)/(p+2)}$, whereas larger values of $p$ provide a more Frobenius-like description. Because our bounds take an infimum over $p$ after training, they automatically select the most favorable balance for each learned matrix. Third, using BERT Miniatures checkpoints of \citet{turc2019well}, we find that the leading complexity factors suggested by our bounds grow substantially more slowly than the corresponding norm-based complexity factors of \citet{edelman2022inductive} as the depth or hidden dimension increases.

To obtain these spectrum-adaptive bounds, we derive covering number bounds that retain the dependence on the Schatten indices throughout the Transformer composition. Technically, we develop covering number bounds under layerwise spectral norm and Schatten-quantity constraints, building on the recently developed parametric interpolation \citep{ledent2024generalization,ledent2025generalization}. This technique decomposes each weight matrix into a low-rank leading part and a Frobenius-controlled tail, thereby allowing the bounds to interpolate between rank-based and norm-based regimes.

The contributions of this paper are summarized as follows.
\begin{itemize}
    \item We prove post hoc generalization gap bounds for multi-layer Transformers under layerwise spectral norm control, where the complexity parameters can be selected separately for every layer and matrix type after training (Theorem~\ref{thm:GenGapPostHocSimplified}).
    \item We extend parametric interpolation to matrix-valued function classes (Theorem~\ref{thm:ParametricInterpolation2infSimplified}), and use it as the basic building block for Transformer generalization bounds.
    \item We theoretically compare our bounds with existing norm-based bounds in representative regimes, including Frobenius norm or rank constraints, and show improved dependence on depth (Section~\ref{sec:examples}; Table~\ref{tab:GenBoundComparison}).
    \item We evaluate BERT-adapted leading complexity factors on BERT Miniatures checkpoints of \citet{turc2019well} and observe that our proxies grow more slowly than the corresponding norm-based proxies as the depth or hidden dimension increases (Section~\ref{sec:experiments}; Figure~\ref{fig:ObservedBoundComparison}).
\end{itemize}

Related work is deferred to Appendix~\ref{sec:RelatedWorks}.

\section{Preliminaries}\label{sec:Preliminaries}

\subsection{Notation}

We use the following notation throughout the paper. We use $O(\cdot)$ and $\tilde{O}(\cdot)$ in the standard asymptotic sense, where $\tilde{O}(\cdot)$ suppresses polylogarithmic factors. We write $a\lesssim b$ if there exists a universal constant $C>0$ such that $a\le Cb$. For a vector $v$, we write $v_{i}$ for its $i$-th entry. For a matrix $W$, we write $W_{ij}$ for its $(i,j)$-th entry, and write $W_{i\cdot}$ and $W_{\cdot j}$ for its $i$-th row and $j$-th column, respectively. We denote by $\sigma_{i}(W)$ the $i$-th largest singular value of $W$.

We use the following matrix norms. First, $\|W\|_{2}=\sigma_{1}(W)$ is the spectral norm. For $p\in(0,2]$, the Schatten $p$ (quasi) norm $\|W\|_{\mathrm{s},p}$ is defined by $\|W\|_{\mathrm{s},p}=(\sum_{i}\sigma_{i}(W)^{p})^{{1}/{p}}$. We refer to $p$ as the Schatten index. For the endpoint $p=0$, we use the convention $\|W\|_{\mathrm{s},0}^{0}:=\operatorname{rank}(W)$. Finally, for $\alpha,\beta\ge1$, we define the mixed $(\alpha,\beta)$-norm by $\|W\|_{\alpha,\beta}=[\sum_{j}(\sum_{i}|W_{ij}|^{\alpha})^{{\beta}/{\alpha}}]^{{1}/{\beta}}$.

\subsection{Transformers}

We specify the simplified Transformer architecture analyzed in this paper. Let $X\in\mathbb{R}^{T\times N}$ be an input matrix, where $T$ is the token length and $N$ is the hidden dimension. The model is obtained by composing Transformer blocks, each consisting of a single-head attention mechanism followed by a feedforward map and rowwise normalization.

\paragraph{Transformer head}

Let $W^{QK},W^{V}\in\mathbb{R}^{N\times N}$ denote the combined query-key weight matrix and the value weight matrix, respectively. Define a Transformer head $f_{\mathrm{head}}(\ \cdot\ ;W^{QK},W^{V}):\mathbb{R}^{T\times N}\to\mathbb{R}^{T\times N}$ parameterized by $W^{QK},W^{V}$ by
\begin{equation}\label{eq:DefHead}
    f_{\mathrm{head}}(X;W^{QK},W^{V})
    =\mathrm{SoftMax}\left(XW^{QK}X^{\top}\right)XW^{V}
    ,
\end{equation}
where $\mathrm{SoftMax}:\mathbb{R}^{T\times T}\to\mathbb{R}^{T\times T}$ is applied rowwise:
\[
    (\mathrm{SoftMax}(Z))_{st}
    =\frac{\exp(Z_{st})}{\sum_{j=1}^{T}\exp(Z_{sj})}
    \qquad(s,t\in[T])
    .
\]

\paragraph{Transformer block}

Following \citet{edelman2022inductive} and \citet{trauger2024sequence}, we consider a normalized Transformer block. Let $\phi:\mathbb{R}^{N}\to\mathbb{R}^{N}$ be a fixed activation function in the feedforward layer, and apply it rowwise to matrices by setting $(\phi(Z))_{t\cdot}=\phi(Z_{t\cdot})$ for $Z\in\mathbb{R}^{T\times N}$. Let $\Pi_{\mathrm{norm}}:\mathbb{R}^{T\times N}\to\mathbb{R}^{T\times N}$ be the rowwise projection onto the unit ball defined by $(\Pi_{\mathrm{norm}}(Z))_{t\cdot}=Z_{t\cdot}/(1\vee\|Z_{t\cdot}\|_{2})$, which keeps the row norms at most one. We define the Transformer block $f_{\mathrm{block}}(\ \cdot\ ;W^{QK},W^{V},W^{M}):\mathbb{R}^{T\times N}\to\mathbb{R}^{T\times N}$ parameterized by $W^{QK},W^{V},W^{M}\in\mathbb{R}^{N\times N}$ as
\begin{equation}\label{eq:DefBlock}
    f_{\mathrm{block}}(X;W^{QK},W^{V},W^{M})
    =\Pi_{\mathrm{norm}}(\phi(\Pi_{\mathrm{norm}}(f_{\mathrm{head}}(X;W^{QK},W^{V})))W^{M})
    .
\end{equation}

\paragraph{Multi-layer Transformer}

We introduce a multi-layer Transformer obtained by composing $L$ Transformer blocks. For each $\ell\in[L]$, we write $W^{(\ell)}=(W^{QK,(\ell)},W^{V,(\ell)},W^{M,(\ell)})$ to denote the weight matrices for the $\ell$-th Transformer block. Then we define
\[
    f_{\mathrm{tf}}^{(1)}(X;W^{(1)})
    =f_{\mathrm{block}}(X;W^{QK,(1)},W^{V,(1)},W^{M,(1)}),
\]
and, for $\ell\ge2$,
\begin{equation}\label{eq:DefMultiLayerTf}
    f_{\mathrm{tf}}^{(\ell)}(X;W^{(1:\ell)})
    =f_{\mathrm{block}}(f_{\mathrm{tf}}^{(\ell-1)}(X;W^{(1:\ell-1)});W^{(\ell)})
    ,
\end{equation}
where $W^{(1:\ell)}=(W^{(1)},\dots,W^{(\ell)})$ denotes the weights corresponding to the first $\ell$ layers.

\paragraph{Scalar output of a Transformer}

We convert the matrix-valued output of the final Transformer layer into a scalar prediction by using a special classification token. Let $t_{\mathrm{CLS}}\in[T]$ denote the index of this token, and assume that the input $X\in\mathbb{R}^{T\times N}$ has already been augmented with the CLS token. For a matrix $Z\in\mathbb{R}^{T\times N}$, we write $[Z]_{[\mathrm{CLS}]}:=(Z_{t_{\mathrm{CLS}}\cdot})^{\top}\in\mathbb{R}^{N}$ to denote the hidden representation at the CLS position. We then apply a trainable readout vector $w\in\mathbb{R}^{N}$ to the representation $[f_{\mathrm{tf}}^{(L)}(X;W^{(1:L)})]_{[\mathrm{CLS}]}$ and define
\begin{equation}\label{eq:DefTfOut}
    f_{\mathrm{out}}(X;W^{(1:L)},w)
    =w^{\top}[f_{\mathrm{tf}}^{(L)}(X;W^{(1:L)})]_{[\mathrm{CLS}]}
    .
\end{equation}

\paragraph{Class of Transformer outputs}

For each $\star\in\{QK,V,M\}$ and $\ell\in[L]$, define the class of parameters as
\begin{equation}\label{eq:ParSpectralConst}
    \mathcal{W}^{\star,(\ell)}
    =\{W\in\mathbb{R}^{N\times N}\mid\|W\|_{2}\le C_{2}^{\star,(\ell)}\}
    .
\end{equation}
Set $\mathcal{W}^{(\ell)}=\mathcal{W}^{QK,(\ell)}\times\mathcal{W}^{V,(\ell)}\times\mathcal{W}^{M,(\ell)}$ and $\mathcal{W}^{(1:L)}=\mathcal{W}^{(1)}\times\cdots\times\mathcal{W}^{(L)}$. Then the scalar-output Transformer class is
\begin{equation}\label{eq:TfOutClass}
    \mathcal{F}_{\mathrm{out}}
    =\{f_{\mathrm{out}}(\ \cdot\ ;W^{(1:L)},w):\mathbb{R}^{T\times N}\to\mathbb{R}\mid W^{(1:L)}\in\mathcal{W}^{(1:L)},w\in\mathbb{R}^{N},\|w\|_{2}\le C_{2}^{\mathrm{out}}\}
    .
\end{equation}

\subsection{Generalization gap}

Let $\mathcal{L}:\mathbb{R}\times\mathbb{R}\to\mathbb{R}$ be a $B_{\mathcal{L}}$-bounded loss function that is $L_{\mathcal{L}}$-Lipschitz in its first argument. Suppose $\{(X_{i},Y_{i})\}_{i\in[n]}$ is an i.i.d. sample drawn from distribution $\mathcal{D}$ on $\mathbb{R}^{T\times N}\times\mathbb{R}$. For each $f_{\mathrm{out}}\in\mathcal{F}_{\mathrm{out}}$, define the population risk and the empirical risk by 
\[
    \mathcal{R}(f_{\mathrm{out}})=\mathbb{E}_{(X,Y)\sim\mathcal{D}}[\mathcal{L}(f_{\mathrm{out}}(X),Y)]
    ,\qquad
    \hat{\mathcal{R}}_{n}(f_{\mathrm{out}})=\frac{1}{n}\sum_{i=1}^{n}\mathcal{L}(f_{\mathrm{out}}(X_{i}),Y_{i})
    ,
\]
respectively. Then, the generalization gap is defined by
\[
    \mathrm{GAP}(f_{\mathrm{out}})=|\mathcal{R}(f_{\mathrm{out}})-\hat{\mathcal{R}}_{n}(f_{\mathrm{out}})|
    .
\]

In this paper, we derive generalization bounds in which the high-probability event holds uniformly over a prescribed family of complexity parameters. In our setting, the post hoc parameters are the Schatten indices $\bm{p}=(p^{\star,(\ell)})_{\star\in\{QK,V,M\},\ell\in[L]}\in[0,2]^{3L}$ associated with the parameter matrices. Accordingly, our bounds take the following form: for any $\delta\in(0,1)$, with probability at least $1-\delta$, it holds that
\[
    \mathrm{GAP}(f_{\mathrm{out}}(\ \cdot\ ;W^{(1:L)},w))
    \le B(W^{(1:L)},w,\bm{p},\delta)
    \text{ for all }f_{\mathrm{out}}(\ \cdot\ ;W^{(1:L)},w)\in\mathcal{F}_{\mathrm{out}},\ \bm{p}\in[0,2]^{3L}
    .
\]
Therefore, the Schatten indices may be chosen after observing the trained weights.

\section{Spectrum-adaptive post hoc generalization bounds for Transformers}\label{sec:MainResult}

In this section, we state our spectrum-adaptive post hoc generalization bounds for multi-layer Transformers. The model class is defined in Eq.~\eqref{eq:TfOutClass}. Throughout this paper, we impose the following regularity assumption.

\begin{assumption}\label{asm:Activation}
    The activation function $\phi:\mathbb{R}^{N}\to\mathbb{R}^{N}$ satisfies $\phi(0)=0$ and is $L_{\phi}$-Lipschitz with respect to the Euclidean norm; that is, $\|\phi(x)-\phi(y)\|_{2}\le L_{\phi}\|x-y\|_{2}$ holds for all $x,y\in\mathbb{R}^{N}$.
\end{assumption}

For example, the ReLU activation satisfies Assumption~\ref{asm:Activation} with $L_{\phi}=1$.

To isolate the leading scaling, the simplified bounds below treat the following terms as order $\Theta(1)$: the loss constants $L_{\mathcal{L}}$ and $B_{\mathcal{L}}$, the readout norm constraint $C_{2}^{\mathrm{out}}$, the layerwise spectral norm constraints $C_{2}^{\star,(\ell)}$, the activation Lipschitz constant $L_{\phi}$, and the input-norm constants appearing in our full statement. We also suppress polylogarithmic factors, except for those involving $n$, $T$, and the post hoc uniformity penalty.

\begin{theorem}[Spectrum-adaptive post hoc bounds (Theorem~\ref{thm:GenGapPostHoc}, simplified)]\label{thm:GenGapPostHocSimplified}
    Suppose $n\ge3$ holds. Suppose further that there exists a universal constant $c_{0}>0$ such that $\|W^{\star,(\ell)}\|_{2}\ge\exp[-c_{0}(L+\log(N))]$ holds for every nonzero $W^{\star,(\ell)}$.\footnote{This condition is imposed only to state the post hoc bounds in the simplified form of Eq.~\eqref{eq:GenGapPostHocSimplified}: it rules out exponentially small nonzero weights so that the additional terms introduced by the post hoc discretization can be absorbed into the displayed logarithmic penalty. In the full statement, these scale-dependent terms are kept explicit.} Then, for any $\delta\in(0,1)$, with probability at least $1-\delta$, it holds simultaneously for all $f_{\mathrm{out}}(\cdot;W^{(1:L)},w)\in\mathcal{F}_{\mathrm{out}}$ satisfying the above condition that
    \begin{equation}\label{eq:GenGapPostHocSimplified}
        \begin{aligned}
            &\mathrm{GAP}(f_{\mathrm{out}})\\
            &\lesssim\sqrt{\frac{\log(nT)}{n}}\inf_{\bm{p}\in[0,2]^{3L}}\sum_{\ell=1}^{L}\sum_{\star\in\{QK,V,M\}}(\|W^{\star,(\ell)}\|_{\mathrm{s},p^{\star,(\ell)}}^{p^{\star,(\ell)}})^{\frac{1}{p^{\star,(\ell)}+2}}C^{\frac{Lp^{\star,(\ell)}}{p^{\star,(\ell)}+2}}L^{\frac{p^{\star,(\ell)}}{p^{\star,(\ell)}+2}}N^{\frac{p^{\star,(\ell)}+1}{p^{\star,(\ell)}+2}}\\
            &\quad+\frac{(\log(n))^{\frac{3}{2}}}{\sqrt{n}}+\sqrt{\frac{\log(1/\delta)+L\log(L+\log(N))}{n}}
            ,
        \end{aligned}
    \end{equation}
    where the infimum is taken over a vector of Schatten indices $\bm{p}\in[0,2]^{3L}$, and $C>0$ is a constant depending on the spectral norm bounds $C_{2}^{\star,(\ell)}$ and on $L_{\phi}$.
\end{theorem}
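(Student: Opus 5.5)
The plan is to pass from covering numbers to Rademacher complexity via Dudley's entropy integral, and then make the bound post hoc by a union bound over a discretized grid of Schatten indices and Schatten-quantity radii. First I fix, for each $\star\in\{QK,V,M\}$ and $\ell\in[L]$, an index $p^{\star,(\ell)}\in[0,2]$ and a radius $r^{\star,(\ell)}$. I then consider the restricted class in which additionally $\|W^{\star,(\ell)}\|_{\mathrm{s},p}^{p}\le r^{\star,(\ell)}$. For this class I bound the $\ell_\infty$ (sample-wise) covering number of $\mathcal{F}_{\mathrm{out}}$.

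The building block is the matrix-valued parametric interpolation (Theorem~\ref{thm:ParametricInterpolation2infSimplified}). Each weight splits into a rank-$k$ leading part and a tail. The leading part is covered parametrically, at cost $\approx kN\log(1/\epsilon)$. The tail has Frobenius norm $\le (r\,\sigma_{k}^{2-p})^{1/2}$ with $\sigma_k\le (r/k)^{1/p}$, and is covered by a Maurey/Zhang-type bound with cost $\approx \|\text{tail}\|_F^2/\epsilon^2$ (times $\log(nT)$). Optimizing $k$ gives a log-covering number of order $r^{2/(p+2)}N^{2(p+1)/(p+2)}\epsilon^{-2p/(p+2)}$, up to logarithms.

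To compose through the network, I propagate perturbations layer by layer. The rowwise projection $\Pi_{\mathrm{norm}}$ is 1-Lipschitz, $\phi$ is $L_\phi$-Lipschitz, and softmax is Lipschitz in its logits with respect to the $\ell_\infty$-row norm. Inputs have row norm at most one after each block, so perturbing $W^{\star,(\ell)}$ by $\epsilon$ changes the output by at most $C^{L}\epsilon$, where $C$ depends on the $C_2$'s and $L_\phi$. I allocate accuracy $\epsilon^{\star,(\ell)}=\epsilon/(3L\,C^{L})$ to each matrix, so the total log-covering number is the sum of the layerwise ones with $\epsilon$ replaced by $\epsilon/(LC^{L})$. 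This produces exactly the factors $(C^{L}L)^{2p/(p+2)}$. Plugging $\log\mathcal N(\epsilon)\lesssim \sum a_{\star,\ell}\,\epsilon^{-2p/(p+2)}$ into Dudley's integral is convergent since $2p/(p+2)\le1<2$, and the integral is dominated by $\sum \sqrt{a_{\star,\ell}}/\sqrt n$. Taking square roots yields $(r)^{1/(p+2)}(C^{L}L)^{p/(p+2)}N^{(p+1)/(p+2)}$. The $p=0$ endpoint and the small-$\epsilon$ range contribute the $(\log n)^{3/2}/\sqrt n$ term, after truncating the integral at $1/n$.

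For post hoc uniformity, I discretize the indices $p$ on a grid of size $O(1)$ per matrix, using monotonicity or interpolation in $p$ to pass from an arbitrary $p$ to a nearby grid point with constant loss. I also discretize the radii $r$ dyadically. The lower bound $\|W\|_2\ge e^{-c_0(L+\log N)}$, together with $\|W\|_{\mathrm{s},p}^p\le N C_2^p$, confines the relevant radii to $O(L+\log N)$ dyadic levels. This gives $(O(L+\log N))^{3L}$ classes in total. I assign failure probability $\delta$ divided by the number of classes, apply the standard Rademacher bound with McDiarmid's inequality on each, and take the union. This contributes $\sqrt{(\log(1/\delta)+L\log(L+\log N))/n}$. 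Finally, for the trained weights I pick the smallest grid radius dominating their Schatten quantities and take the infimum over $\bm p$.

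The main obstacle I expect is the composition step for the attention head. Softmax sits inside a bilinear form $XW^{QK}X^\top$ and multiplies $XW^{V}$, so the cover must be for the matrix-valued map $X\mapsto XW$ over all $nT$ rows uniformly, i.e.\ in an $\ell_{2,\infty}$ sense. I also have to ensure that the softmax Lipschitz constant does not introduce $T$-dependence beyond $\log T$. I would first try to bound the attention perturbation by $2\max_{s,t}|\Delta Z_{st}|$ times the value-norm, which is controlled because rows have norm $\le1$. This keeps the cover requirement at entrywise accuracy on $X W^{QK}X^\top$, which the interpolation theorem (applied to $x_s^\top W x_t$ over $nT^2$ pairs) handles with only a logarithmic cost.
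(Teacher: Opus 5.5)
Up to the post hoc step, your route is the paper's: the $\|\cdot\|_{2\to\infty}$ interpolation cover of each map $X\mapsto XW$, the softmax estimate via $2\max_{s,t}|\Delta Z_{st}|$ times the value norm, a balanced allocation $\epsilon^{\star,(\ell)}\asymp\epsilon/(L\,C^{L})$ over the $3L$ matrices (Theorem~\ref{thm:EntropyTfScalarGeneralP}), and Dudley. Your finite union over $O(L+\log N)$ dyadic radius levels per matrix, justified by the spectral lower bound, is also a legitimate substitute for the paper's infinite weighted union with $\omega_{j}\propto(1+|j|)^{-2}$.

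The genuine gap is the discretization of $\bm{p}$: a grid of $O(1)$ points per matrix does \emph{not} lose only a constant. Write the summand as $F_{p}(W;H)=(\|W\|_{\mathrm{s},p}^{p})^{\frac{1}{p+2}}H^{\frac{p}{p+2}}N^{\frac{p+1}{p+2}}$ with $H\asymp C^{L}L$. It is not monotone in $p$. It can decrease arbitrarily fast, e.g.\ with one singular value $1$ and $N-1$ tiny ones. It can increase at rate $\frac{2\log H+\log N}{(p+2)^{2}}$, i.e.\ of order $L+\log N$. So if the optimal index lies at constant distance from every grid point, snapping to the grid costs a factor $\exp(\Theta(L+\log N))$, i.e.\ $C^{\Theta(L)}N^{\Theta(1)}$. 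These are exactly the depth and width factors the theorem is trading off.

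What is missing is the paper's deterministic rounding inequality. Let $\rho_{p}(W)=\|W\|_{\mathrm{s},p}^{p}/\|W\|_{2}^{p}$; it lies in $[1,N]$ and is nonincreasing in $p$. Then for $0\le p\le q\le 2$,
\[
F_{q}(W;H)\le\exp\Bigl((q-p)\bigl(\tfrac12|\log\|W\|_{2}|+\tfrac12|\log H|+\tfrac14\log N\bigr)\Bigr)F_{p}(W;H).
\]
This fixes three choices your sketch leaves open:
\begin{itemize}
\item Round \emph{upward}. Downward rounding is not controlled by the mesh: $\rho_{0}=\operatorname{rank}(W)$ can equal $N$ while $\rho_{1/m}\approx1$.
\item Use mesh $1/m$ with $m=\lceil L+\log N\rceil$.
\item Use the spectral lower bound a second time, to get $|\log\|W\|_{2}|\lesssim L+\log N$, so the rounding factor is $O(1)$.
\end{itemize}
The grid then has $(2m+1)^{3L}$ points and contributes $3L\log(2m+1)\asymp L\log(L+\log N)$ to the penalty, on top of the radius shells. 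With this refinement your class count becomes $(O((L+\log N)^{2}))^{3L}$, and the stated penalty survives.

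There are two smaller issues. First, your re-derivation of the building block does not give the rate you quote. Combine the rank-$k$ cut, the Schatten-based tail bound $\|W_{2}\|_{F}^{2}\le r\sigma_{k}^{2-p}$ with $\sigma_{k}\le(r/k)^{1/p}$, and a tail cover cost $N\|W_{2}\|_{F}^{2}/\epsilon^{2}$. Optimizing over $k$ gives about $rN\epsilon^{-p}$, not $r^{2/(p+2)}N^{2(p+1)/(p+2)}\epsilon^{-2p/(p+2)}$. The factor $N$ is necessary: already for one row, a $\|\cdot\|_{2\to\infty}$ cover of $\{x^{\top}W:\|W\|_{F}\le a\}$ must cover a Euclidean ball in $\mathbb{R}^{N}$. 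The paper obtains it from the sphere net in Corollary~\ref{cor:Zhang2002Thm4TwogInf}. The exponent $\frac{1}{p+2}$ comes instead from thresholding at $\tau$ and using the cruder bound $\|W_{2}\|_{F}^{2}\le\tau^{2}N$. Since you cite Theorem~\ref{thm:ParametricInterpolation2infSimplified}, this is not fatal.

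Second, you never cover the readout $w$. Its entropy $(C_{2}^{\mathrm{out}}/\epsilon)^{2}\log n$ has the non-integrable $\epsilon^{-2}$ rate. That, not the $p=0$ endpoint, is the source of the $(\log n)^{3/2}/\sqrt{n}$ term.
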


The three terms on the right-hand side of Eq.~\eqref{eq:GenGapPostHocSimplified} have distinct origins. The first term is the main complexity term, obtained by covering the Transformer body, namely the query-key, value, and feedforward components across layers. The second term comes from covering the final readout vector $w$. The last term is the confidence and post hoc uniformity penalty. The $\log(1/\delta)$ part is the usual concentration term, while the additional $L\log(L+\log(N))$ factor is used to make the guarantee uniform over the post hoc choices of the Schatten indices.

The significance of Theorem~\ref{thm:GenGapPostHocSimplified} is that the Schatten indices become post hoc, weight-adaptive complexity parameters, rather than modeling assumptions fixed in advance. Since the high-probability event holds uniformly over all admissible Schatten indices, the index $p^{\star,(\ell)}$ can be selected after training, separately for each matrix type and each layer. Smaller values of $p^{\star,(\ell)}$ exploit rank-like spectral structure and reduce the per-matrix summand factors $C^{Lp^{\star,(\ell)}/(p^{\star,(\ell)}+2)}$, $L^{p^{\star,(\ell)}/(p^{\star,(\ell)}+2)}$, and $N^{(p^{\star,(\ell)}+1)/(p^{\star,(\ell)}+2)}$. In contrast, larger values of $p^{\star,(\ell)}$ yield a more Frobenius-like description. Thus, the infimum in Eq.~\eqref{eq:GenGapPostHocSimplified} balances spectral complexity, hidden dimension, depth, and layerwise propagation on a matrixwise and layerwise basis. Consequently, even in regimes where existing norm-based bounds may become loose because of depth accumulation or large fixed norm constants, Theorem~\ref{thm:GenGapPostHocSimplified} can select a more favorable trade-off among depth, hidden dimension, and spectral complexity.

We note that the remaining logarithmic dependence on the token length $T$ reflects a geometric trade-off. \citet{trauger2024sequence} remove explicit token-length dependence by using mixed $(1,1)$-norm constraints and an $\ell_{1}$-type covering argument. In contrast, our bounds are based on SVD/Schatten quantities and use an $\ell_{2}$-type covering argument over the empirical token rows, which introduces the factor $\log(nT)$.

\section{Proof outline}\label{sec:ProofOutline}

We outline the proof of Theorem~\ref{thm:GenGapPostHocSimplified}. Define the $2\to\infty$ norm of a matrix by $\|W\|_{2\to\infty}=\max_{i}\|W_{i\cdot}\|_{2}$. We use empirical covering numbers with respect to this metric. Fix inputs $\{X_{i}\}_{i\in[n]}$ and a class $\mathcal{F}$ of matrix-valued functions. Then the covering number $\mathcal{N}_{\infty}(\mathcal{F},\|\cdot\|_{2\to\infty},\epsilon;\{X_{i}\}_{i\in[n]})$ is the minimum cardinality of a finite set $\mathcal{C}\subset\mathcal{F}$ such that, for every $f\in\mathcal{F}$, there exists $\tilde{f}\in\mathcal{C}$ satisfying $\max_{i\in[n]}\|f(X_{i})-\tilde{f}(X_{i})\|_{2\to\infty}\le\epsilon$. For scalar-valued classes, we use the same notation with the metric $|\cdot|$.

\subsection{Parametric interpolation for matrix-valued linear maps}

The following theorem is the basic building block of our analysis.  It gives covering number bounds for linear matrix-valued function classes under spectral norm and Schatten-quantity constraints, and therefore provides a flexible way to interpolate between rank-based and norm-based complexity measures.

\begin{theorem}[Matrix-valued parametric interpolation (Theorem~\ref{thm:ParametricInterpolation2inf}, simplified)]\label{thm:ParametricInterpolation2infSimplified}
    Fix an arbitrary $p\in[0,2]$. Consider a class of matrix-valued functions $\mathcal{F}=\{f:\mathbb{R}^{d\times\ell}\to\mathbb{R}^{d\times m}\mid f(X)=XW,\ W\in\mathbb{R}^{\ell\times m},\ \|W\|_{\mathrm{s},p}^{p}\le C_{\mathrm{s}},\ \|W\|_{2}\le C_{2}\}$. Then, for any $\epsilon>0$, we have
    \[
        \log\mathcal{N}_{\infty}(\mathcal{F},\|\cdot\|_{2\to\infty},\epsilon;\{X_{i}\}_{i\in[n]})
        \lesssim\left(\frac{[C_{\mathrm{s}}(\ell+m)]^{2}(\min\{\ell,m\}m)^{p}}{\epsilon^{2p}}\right)^{\frac{1}{p+2}}\log(nd)
        ,
    \]
    where logarithmic factors other than $\log(nd)$ are suppressed.
\end{theorem}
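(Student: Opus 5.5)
The plan is to prove Theorem~\ref{thm:ParametricInterpolation2infSimplified} by the parametric interpolation idea: split each admissible weight $W$ by its singular value decomposition into a low-rank head and a small-norm tail, and cover the two pieces separately. One piece is handled by a parameter-counting argument, the other by a Maurey-type sparsification argument. A rank threshold $r$ is then chosen to balance the two costs.

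\emph{Splitting.} Write $W=\sum_i\sigma_i u_i v_i^\top$ and fix a threshold $\tau>0$. The head is $W_{\le}=\sum_{\sigma_i>\tau}\sigma_i u_iv_i^\top$ and the tail is $W_{>}=W-W_{\le}$. The constraint $\|W\|_{\mathrm{s},p}^p\le C_{\mathrm{s}}$ gives two facts:
\[
\operatorname{rank}(W_{\le})\le r:=C_{\mathrm{s}}\tau^{-p},\qquad
\|W_{>}\|_{\mathrm{s},2}^2\le \tau^{2-p}C_{\mathrm{s}}.
\]
The case $p=0$ is the pure rank case and needs no tail.

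\emph{Head.} The head lies in the set of matrices of rank at most $r$ with spectral norm at most $C_2$. Factor it as $UV^\top$ with $U\in\mathbb{R}^{\ell\times r}$ and $V\in\mathbb{R}^{m\times r}$, putting the norm into one factor. A volumetric net on the factors then gives a parameter cover with log-cardinality of order $r(\ell+m)\log(C_2\cdot\mathrm{poly}/\epsilon)$. To pass to $\|XW\|_{2\to\infty}$, use the rowwise bound $\|X_{t\cdot}W\|_2\le\|X_{t\cdot}\|_2\|W\|_2$. This gives a data-independent cover, up to input-norm constants that the simplified statement suppresses.

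\emph{Tail.} The tail is a class with bounded Frobenius norm $\sqrt{\tau^{2-p}C_{\mathrm{s}}}$, and it is covered in the empirical $2\to\infty$ metric over the $nd$ rows $X_{i,t\cdot}$. I would use a vector-valued Maurey/Zhang-type lemma: approximate $W_{>}$ by an average of $k$ atoms $\pm\|W_{>}\|_F\, e_ie_j^\top$-type rank-one pieces. Bounding the maximum over $nd$ rows by a union bound produces the $\log(nd)$ factor. The resulting log-covering bound is of order
\[
\frac{\tau^{2-p}C_{\mathrm{s}}\cdot(\text{dimension factor})}{\epsilon^2}\log(nd),
\]
where the dimension factor of order $\min\{\ell,m\}m$ comes from vector outputs: each coordinate of the $m$-dimensional output row must be controlled. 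The two covers then combine by the triangle inequality at scale $\epsilon/2$ each.

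\emph{Balancing.} The total log-cardinality is of order $C_{\mathrm{s}}\tau^{-p}(\ell+m)+C_{\mathrm{s}}\tau^{2-p}D/\epsilon^2$ with $D=\min\{\ell,m\}m$. Choosing $\tau$ to equate the two terms gives $\tau^2\asymp(\ell+m)\epsilon^2/D$. As a sanity check on the exponents, interpolating $(C_{\mathrm{s}}(\ell+m))$ against $(C_{\mathrm{s}}D/\epsilon^2)$ with weights $2/(p+2)$ and $p/(p+2)$ gives
\[
C_{\mathrm{s}}^{\frac{2}{p+2}}(\ell+m)^{\frac{2}{p+2}}D^{\frac{p}{p+2}}\epsilon^{-\frac{2p}{p+2}},
\]
which is the claimed form. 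The exact powers of $C_{\mathrm{s}}$ should be tracked carefully, because the balancing in $\tau$ interacts with $C_{\mathrm{s}}$, and one may instead need to balance through $r$ directly.

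\emph{Main obstacle.} I expect the tail step to be the crux. It requires a Maurey bound in the matrix-valued $2\to\infty$ metric whose dimension factor is $\min\{\ell,m\}m$ rather than something worse, and which depends on the data only through $\log(nd)$. I would first try sampling rank-one atoms $u_iv_i^\top$ from the SVD of the tail, with probabilities proportional to $\sigma_i^2$. If that fails, I would fall back to entrywise atoms with a row-wise union bound.
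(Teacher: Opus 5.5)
There is a genuine gap, and it is where you suspected trouble with the powers of $C_{\mathrm{s}}$. Your tail estimate $\|W_{>}\|_{\mathrm{s},2}^{2}\le\tau^{2-p}C_{\mathrm{s}}$ is valid, but it makes \emph{both} costs, $C_{\mathrm{s}}(\ell+m)\tau^{-p}$ and $C_{\mathrm{s}}\tau^{2-p}D/\epsilon^{2}$, proportional to $C_{\mathrm{s}}$. So no choice of $\tau$ (or of $r$) can produce $C_{\mathrm{s}}^{2/(p+2)}$. Carrying out your own balancing, $\tau^{2}\asymp(\ell+m)\epsilon^{2}/D$ gives $C_{\mathrm{s}}(\ell+m)^{1-p/2}D^{p/2}\epsilon^{-p}$.

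The ``sanity check'' contains an arithmetic slip. Interpolating two quantities that each carry a factor $C_{\mathrm{s}}$, with weights summing to one, returns $C_{\mathrm{s}}^{1}$. Moreover, your objective actually produces weights $1-p/2$ and $p/2$, not $2/(p+2)$ and $p/(p+2)$. The resulting bound is not dominated by the claimed one. It is linear in $C_{\mathrm{s}}$ and has $\epsilon$-exponent $p>2p/(p+2)$, so for every $p\in(0,2]$ it exceeds the claim as $\epsilon\to0$. At $p=2$ you only recover the plain Frobenius-ball bound $C_{\mathrm{s}}D/\epsilon^{2}$, instead of $\sqrt{C_{\mathrm{s}}(\ell+m)D}/\epsilon$.

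The missing idea is to bound the tail by counting rather than through the Schatten constraint. The tail has at most $\min\{\ell,m\}$ singular values, each at most $\tau$, so $\|W_{>}\|_{F}^{2}\le\tau^{2}\min\{\ell,m\}$, with no $C_{\mathrm{s}}$. This is also the real source of the factor $\min\{\ell,m\}$ in $D$; the factor $m$ is the cost of controlling the Euclidean norm of an $m$-dimensional output row. With your estimate, the tail cover would carry only a factor $m$, so the $D$ in your display is not supported by your own argument. With the counting bound, the tail costs $\tau^{2}\min\{\ell,m\}m\,B_{n,(2\to\infty)}^{2}/\epsilon^{2}$ up to logarithms. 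Balancing this against $C_{\mathrm{s}}(\ell+m)\tau^{-p}$ at $\tau^{p+2}=C_{\mathrm{s}}(\ell+m)\epsilon^{2}/(B_{n,(2\to\infty)}^{2}\min\{\ell,m\}m)$ gives exactly $\bigl([C_{\mathrm{s}}(\ell+m)]^{2}(B_{n,(2\to\infty)}^{2}\min\{\ell,m\}m)^{p}/\epsilon^{2p}\bigr)^{1/(p+2)}$, which is what the paper does.

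The tail-cover mechanism you sketch would also need repair. Sampling the SVD atoms $u_iv_i^{\top}$ of $W_{>}$ itself does not give a finite cover, since those atoms depend on $W$ and range over a continuum. A Frobenius-bounded matrix is also generally not a convex combination of $\pm\|W_{>}\|_{F}e_ie_j^{\top}$: coordinate-atom Maurey needs the entrywise $\ell_1$ norm, which can be $\sqrt{\ell m}$ times larger and would spoil the dimension factor.

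The paper instead reduces to scalar linear functionals. For a $1/2$-net $\mathcal{N}_{m}$ of $\mathcal{S}^{m-1}$ with $|\mathcal{N}_{m}|\le5^{m}$, one has $\|X_{i,t\cdot}(W-W')\|_{2}\le2\max_{u\in\mathcal{N}_{m}}|\langle W-W',X_{i,t\cdot}^{\top}u^{\top}\rangle_{F}|$. It then applies Zhang's $\ell_2$ covering bound (Proposition~\ref{prop:Zhang2002Thm4}) to the finite set $\{\mathrm{vec}(X_{i,t\cdot}^{\top}u^{\top})\}$ of at most $5^{m}nd$ points. For a Frobenius ball of radius $a$ this yields log-cardinality of order $a^{2}B_{n,(2\to\infty)}^{2}\epsilon^{-2}m\log(nd)$ up to lower-order logarithms (Corollary~\ref{cor:Zhang2002Thm4TwogInf}).

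Finally, the sum of the head and tail covers is not contained in $\mathcal{F}$. A properization step, at the cost of doubling the radius, is needed to bound $\mathcal{N}_{\infty}$ as defined.
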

\begin{proof}[Proof idea]
    The proof is based on a parametric interpolation argument.  For any $W$ in the parameter set, write its singular value decomposition as $W=\sum_{j}\sigma_{j}u_{j}v_{j}^{\top}$ and fix a threshold $\tau>0$.  We decompose $W=W_{1}+W_{2}$, where $W_{1}$ contains the singular components with $\sigma_{j}>\tau$ and $W_{2}$ is the remaining tail.  The Schatten constraint implies $\operatorname{rank}(W_{1})\le C_{\mathrm{s}}/\tau^{p}$, while the tail satisfies $\|W_{2}\|_{\mathrm{s},2}\le\tau\sqrt{\min\{\ell,m\}}$. Hence the leading part $W_{1}$ can be covered as a low-rank, spectral-norm-bounded matrix class, whereas the tail $W_{2}$ can be covered as a Frobenius-norm-bounded linear class.  These two covers yield two competing entropy terms of the form $(\ell+m)C_{\mathrm{s}}/\tau^{p}$ and $\tau^{2}\min\{\ell,m\}m/\epsilon^{2}$ up to logarithmic factors.  Optimizing the threshold $\tau$ balances these two terms and gives the exponent $1/(p+2)$ in the theorem.
\end{proof}

This argument follows the same principle as the vector-valued parametric interpolation argument of \citet{ledent2025generalization}.  The matrix-valued setting, however, requires additional care because the functions take values in $(\mathbb{R}^{d\times m},\|\cdot\|_{2\to\infty})$.  In particular, the Frobenius-tail cover must be converted into uniform rowwise output control over the sample $\{X_{i}\}_{i\in[n]}$.

The logarithmic dependence on $T$ in Theorem~\ref{thm:GenGapPostHocSimplified} arises from this matrix-valued linear covering step. Theorem~\ref{thm:ParametricInterpolation2infSimplified} contains $\log(nd)$, which gives $\log(nT)$ when applied to Transformer layers. This reflects the Schatten covering route, unlike the $\ell_{1}$-based argument of \citet{trauger2024sequence}.

\subsection{Fixed-index generalization gap bounds}

We next explain how Theorem~\ref{thm:ParametricInterpolation2infSimplified} is lifted to multi-layer Transformers. First, fix the Schatten indices $\bm{p}\in[0,2]^{3L}$ and the corresponding Schatten-quantity radii $\bm{C}_{\mathrm{s}}=(C_{\mathrm{s}}^{\star,(\ell)})_{\star\in\{QK,V,M\},\ell\in[L]}\in(0,\infty)^{3L}$ in advance. Define the constrained parameter class by
\begin{align*}
    &\mathcal{W}^{(1:L)}(\bm{p},\bm{C}_{\mathrm{s}})
    =\mathcal{W}^{(1)}(\bm{p}^{(1)},\bm{C}_{\mathrm{s}}^{(1)})\times\cdots\times\mathcal{W}^{(L)}(\bm{p}^{(L)},\bm{C}_{\mathrm{s}}^{(L)})
    ,\\
    &\mathcal{W}^{(\ell)}(\bm{p}^{(\ell)},\bm{C}_{\mathrm{s}}^{(\ell)})
    =\mathcal{W}^{QK,(\ell)}(p^{QK,(\ell)},C_{\mathrm{s}}^{QK,(\ell)})\times\mathcal{W}^{V,(\ell)}(p^{V,(\ell)},C_{\mathrm{s}}^{V,(\ell)})\times\mathcal{W}^{M,(\ell)}(p^{M,(\ell)},C_{\mathrm{s}}^{M,(\ell)})
    ,\\
    &\mathcal{W}^{\star,(\ell)}(p^{\star,(\ell)},C_{\mathrm{s}}^{\star,(\ell)})
    =\{W\in\mathcal{W}^{\star,(\ell)}\mid\|W\|_{\mathrm{s},p^{\star,(\ell)}}^{p^{\star,(\ell)}}\le C_{\mathrm{s}}^{\star,(\ell)}\}
    .
\end{align*}
The corresponding scalar output class is
\[
    \mathcal{F}_{\mathrm{out}}^{(\bm{p},\bm{C}_{\mathrm{s}})}
    =\{f_{\mathrm{out}}(\ \cdot\ ;W^{(1:L)},w):\mathbb{R}^{T\times N}\to\mathbb{R}\mid W^{(1:L)}\in\mathcal{W}^{(1:L)}(\bm{p},\bm{C}_{\mathrm{s}}),w\in\mathbb{R}^{N},\|w\|_{2}\le C_{2}^{\mathrm{out}}\}
    .
\]

Theorem~\ref{thm:ParametricInterpolation2infSimplified} is applied to the linear maps arising from the query-key, value, and feedforward matrices in each Transformer block. The resulting covers are composed layer by layer: the Lipschitz properties of the softmax map, the rowwise normalization, and the activation function allow the local approximation errors to be propagated through the Transformer in the $2\to\infty$ norm, and the final readout layer converts the matrix-valued cover into a scalar output cover. Combining the Transformer covering number bounds with a Dudley-type entropy integral and the standard Lipschitz-loss generalization inequality yields the fixed-index generalization bounds in the following two theorems.

\begin{theorem}[Fixed-index generalization bounds (Theorem~\ref{thm:GenGapGeneralP}, simplified)]\label{thm:GenGapGeneralPSimplified}
    Suppose $n\ge3$ holds. Then, for any $\delta\in(0,1)$, with probability at least $1-\delta$, it holds simultaneously for all $f_{\mathrm{out}}\in\mathcal{F}_{\mathrm{out}}^{(\bm{p},\bm{C}_{\mathrm{s}})}$ that
    \begin{align*}
        \mathrm{GAP}(f_{\mathrm{out}})
        &\lesssim\sqrt{\frac{\log(nT)}{n}}\sum_{\ell=1}^{L}\sum_{\star\in\{QK,V,M\}}(C_{\mathrm{s}}^{\star,(\ell)})^{\frac{1}{p^{\star,(\ell)}+2}}C^{\frac{Lp^{\star,(\ell)}}{p^{\star,(\ell)}+2}}L^{\frac{p^{\star,(\ell)}}{p^{\star,(\ell)}+2}}N^{\frac{p^{\star,(\ell)}+1}{p^{\star,(\ell)}+2}}\\
        &\quad+\frac{(\log(n))^{\frac{3}{2}}}{\sqrt{n}}+\sqrt{\frac{\log(1/\delta)}{n}}
        ,
    \end{align*}
    where $C>0$ is the same constant that appeared in Theorem~\ref{thm:GenGapPostHocSimplified}.
\end{theorem}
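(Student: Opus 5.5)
The plan is to combine a layerwise covering argument for the Transformer body with a Dudley entropy integral, and then the standard Lipschitz-loss Rademacher bound. The steps, in order, are as follows.

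\textbf{Step 1 (one-block perturbation).} I first establish a perturbation bound for a single block in the $2\to\infty$ norm. Let $X,\tilde X$ have row norms at most one, and suppose the weights are perturbed from $(W^{QK},W^{V},W^{M})$ to $(\tilde W^{QK},\tilde W^{V},\tilde W^{M})$. The softmax is rowwise $2$-Lipschitz from $\ell_\infty$ to $\ell_1$, and the rows of $XW^{QK}X^\top$ are controlled via $\|X_{s\cdot}W^{QK}\|_2$. Using this, I will show
\[
\|f_{\mathrm{head}}(X;W)-f_{\mathrm{head}}(\tilde X;\tilde W)\|_{2\to\infty}
\lesssim C_{\mathrm{blk}}\|X-\tilde X\|_{2\to\infty}+\|X(W^{QK}-\tilde W^{QK})\|_{2\to\infty}C_2^{V}+\|X(W^V-\tilde W^V)\|_{2\to\infty}.
\]
Here $C_{\mathrm{blk}}$ depends on $C_2^{QK},C_2^{V}$. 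The rowwise softmax error must be measured through the products $X W^{QK}$ evaluated at token rows, so the QK matrix enters as a linear map on $X$; its output is then paired with $X^\top$ with unit rows. $\Pi_{\mathrm{norm}}$ is $1$-Lipschitz per row, and $\phi$ is $L_\phi$-Lipschitz. The feedforward matrix therefore contributes $\|\Pi(\phi(\cdot))(W^M-\tilde W^M)\|_{2\to\infty}$ plus $L_\phi C_2^M$ times the input error. This yields a per-layer propagation constant $C$ depending only on the spectral bounds and $L_\phi$.

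\textbf{Step 2 (composed cover).} Next I build the cover of the composed body by induction over layers. Given a cover of layer $\ell-1$, for each cover element I apply Theorem~\ref{thm:ParametricInterpolation2infSimplified} to the linear classes $X\mapsto XW^\star$ evaluated on the cover's outputs at the $n$ samples. There are $d=T$ rows, and $\ell=m=N$. This introduces the factor $\log(nT)$. I allot error $\epsilon^{\star,(\ell)}$ to matrix $\star$ at layer $\ell$. The total output error is then $\sum_{\ell,\star}C^{L-\ell}\epsilon^{\star,(\ell)}\lesssim\epsilon$ times constants. The log-covering number is the sum
\[
\sum_{\ell,\star}\bigl(C_{\mathrm{s}}^{2}N^{2}N^{2p}/(\epsilon^{\star,(\ell)})^{2p}\bigr)^{1/(p+2)}\log(nT).
\]
I choose $\epsilon^{\star,(\ell)}=\epsilon/(L C^{L})$ (crudely bounding $C^{L-\ell}\le C^L$). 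Each summand then becomes $(C_{\mathrm s})^{2/(p+2)}N^{(2p+2)/(p+2)}(LC^L)^{2p/(p+2)}\epsilon^{-2p/(p+2)}$. The square root of this is exactly the summand of the theorem, times $\epsilon^{-p/(p+2)}$. Finally, the readout $w$ is covered in the usual way: the CLS row has norm at most one, and a scalar cover of $w^\top z$ over $n$ points with $\|w\|\le C_2^{\mathrm{out}}$ has log-entropy $\lesssim \log n/\epsilon^2$ via Maurey or Zhang.

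\textbf{Step 3 (Dudley integral).} I plug the covers into Dudley's integral, $\inf_\alpha\{\alpha+n^{-1/2}\int_\alpha^{1}\sqrt{\log\mathcal N(\epsilon)}\,d\epsilon\}$. Since $p/(p+2)\le 1/2<1$, the integral $\int\epsilon^{-p/(p+2)}d\epsilon$ converges at $0$. The body term therefore yields $\sqrt{\log(nT)/n}$ times the sum with no extra log; I use subadditivity of $\sqrt{\cdot}$ to split the sum. The readout term $\sqrt{\log n}/\epsilon$ gives $\log(n)^{3/2}/\sqrt n$ with $\alpha=1/\sqrt n$. Talagrand contraction with $L_{\mathcal L}$ and McDiarmid with $B_{\mathcal L}$ then give the $\sqrt{\log(1/\delta)/n}$ term.

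\textbf{Main obstacle.} The delicate step is Step 1's treatment of $W^{QK}$. The softmax argument $XW^{QK}X^\top$ is bilinear in $X$, and the covering for $W^{QK}$ must be realized as a $2\to\infty$ cover of the linear map $X\mapsto XW^{QK}$ on the already-covered inputs. It must also be ensured that the input perturbation enters both factors with only spectral-norm constants, without $\sqrt T$ factors. I would handle this by bounding each softmax row's $\ell_\infty$ error by $\max_t|\langle (X W^{QK})_{s\cdot}-(\tilde X\tilde W^{QK})_{s\cdot},X_{t\cdot}\rangle|$ plus a term in $\|X-\tilde X\|_{2\to\infty}C_2^{QK}$, using $\|X_{t\cdot}\|\le1$.
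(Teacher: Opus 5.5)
Your proposal is correct and follows essentially the same route as the paper. It builds parametric-interpolation covers of the linear maps $X\mapsto XW^{\star}$ on the covered previous-layer outputs (the source of $\log(nT)$), composes them layer by layer via $2\to\infty$ Lipschitz propagation, covers the readout with a Zhang-type bound, splits the radius in a balanced $\epsilon/(L\,C^{L})$ fashion, and finishes with Dudley's integral (using $p/(p+2)\le 1/2$) and the Lipschitz-loss Rademacher bound.

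The differences are cosmetic. You merge the input-Lipschitz bound (Lemma~\ref{lem:MultiLayerTFLipschitz}) and the weight-perturbation bounds (Propositions~\ref{prop:EntropyTFHead} and~\ref{prop:EntropyTfBlock}) into one inequality; its weight differences must be evaluated at the covered inputs, which your Step~2 does. You also apply $C^{L-\ell}\le C^{L}$ before allocating the radii rather than after. One small slip: the feedforward input is $\phi(\Pi_{\mathrm{norm}}(\cdot))$, with row norms at most $L_{\phi}$, not $\Pi(\phi(\cdot))$.
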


\begin{theorem}[Common-$p$ bounds (Theorem~\ref{thm:GenGapCommonP}, simplified)]\label{thm:GenGapCommonPSimplified}
    Consider the case $\bm{p}=(p,\dots,p)$. Suppose $n\ge3$ holds. Then, for any $\delta\in(0,1)$, with probability at least $1-\delta$, it holds simultaneously for all $f_{\mathrm{out}}\in\mathcal{F}_{\mathrm{out}}^{(\bm{p},\bm{C}_{\mathrm{s}})}$ that
    \begin{align*}
        \mathrm{GAP}(f_{\mathrm{out}})
        &\lesssim\sqrt{\frac{\log(nT)}{n}}C^{\frac{Lp}{p+2}}\bigg(\sum_{\ell=1}^{L}\sum_{\star\in\{QK,V,M\}}(C_{\mathrm{s}}^{\star,(\ell)})^{\frac{2}{3p+2}}\bigg)^{\frac{3p+2}{2(p+2)}}N^{\frac{p+1}{p+2}}\\
        &\quad+\frac{(\log(n))^{\frac{3}{2}}}{\sqrt{n}}+\sqrt{\frac{\log(1/\delta)}{n}}
        ,
    \end{align*}
    where $C>0$ is the same constant that appeared in Theorem~\ref{thm:GenGapPostHocSimplified}.
\end{theorem}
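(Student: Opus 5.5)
The plan is to derive Theorem~\ref{thm:GenGapCommonPSimplified} from the same covering-number machinery behind Theorem~\ref{thm:GenGapGeneralPSimplified}. The only change is how the cover resolution is allocated across the $3L$ weight matrices: because all Schatten indices equal $p$, every local entropy has the same power of $\epsilon$, and the per-matrix budgets can be optimized jointly.

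First, apply Theorem~\ref{thm:ParametricInterpolation2infSimplified} to each linear map $X\mapsto XW^{\star,(\ell)}$ with resolution $\epsilon_{\star,\ell}$, where $\ell=m=N$ and $d=T$. This gives a local entropy of order $a_{\star,\ell}\,\epsilon_{\star,\ell}^{-2p/(p+2)}\log(nT)$, with $a_{\star,\ell}=(C_{\mathrm{s}}^{\star,(\ell)})^{2/(p+2)}N^{(2+2p)/(p+2)}$ up to constants. Second, use the layerwise Lipschitz propagation already underlying Theorem~\ref{thm:GenGapGeneralPSimplified} (softmax, $\Pi_{\mathrm{norm}}$, $\phi$, and the spectral norm bounds) to reach the total output error $\epsilon$. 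This composition error is at most $\sum_{\star,\ell}\rho_{\ell}\,\epsilon_{\star,\ell}$, where $\rho_{\ell}\le C^{L}$ is the propagation factor from layer $\ell$ to the output. Bounding $\rho_{\ell}$ crudely by $C^{L}$ for all $\ell$ is what makes the factor $C^{Lp/(p+2)}$ common and pulls it out of the sum. The total entropy of the composed cover is then the sum of the local entropies, plus the readout entropy that produces the $(\log n)^{3/2}/\sqrt n$ term.

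Third, solve the allocation problem: minimize $\sum_i a_i\epsilon_i^{-q}$ with $q=2p/(p+2)$, subject to $\sum_i \epsilon_i = \epsilon/C^{L}$. Lagrange multipliers (equivalently, Hölder) give
\[
\min \sum_i a_i\epsilon_i^{-q}=\Big(\sum_i a_i^{1/(q+1)}\Big)^{q+1}(C^{L}/\epsilon)^{q}.
\]
Here $q+1=(3p+2)/(p+2)$ and $a_i^{1/(q+1)}\propto (C_{\mathrm{s}}^{\star,(\ell)})^{2/(3p+2)}$. The entropy is therefore $\log\mathcal N(\epsilon)\lesssim \big(\sum(C_{\mathrm{s}}^{\star,(\ell)})^{2/(3p+2)}\big)^{(3p+2)/(p+2)}N^{2(p+1)/(p+2)}C^{Lq}\epsilon^{-q}\log(nT)$, i.e.\ $\log\mathcal N(\epsilon)\lesssim K^{2}\epsilon^{-q}$ with $K$ the square root of the prefactor. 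Since $q<2$ for $p\le2$, Dudley's entropy integral $\int_{\alpha}^{1}\sqrt{\log\mathcal N(\epsilon)}\,d\epsilon/\sqrt n$ converges, and its value $\lesssim K/\sqrt n$ gives the stated factor $C^{Lp/(p+2)}(\sum\cdot)^{(3p+2)/(2(p+2))}N^{(p+1)/(p+2)}$. The endpoint $p=2$ is borderline and costs only a logarithmic factor, which is suppressed. Finally, the Lipschitz-loss Rademacher inequality with McDiarmid adds $\sqrt{\log(1/\delta)/n}$.

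I expect the main obstacle to be verifying that the error-propagation step really is additive with a uniform factor $C^{L}$ in the $2\to\infty$ metric. In particular, the query-key perturbation enters through the softmax, and inputs to later layers are themselves covered functions rather than fixed data. I would handle this as in the fixed-index proof, building the cover sequentially and conditioning each layer's cover on the cover centers of the previous layers, with $\log(nT)$ coming from the rowwise uniformity. The allocation optimization itself is routine.
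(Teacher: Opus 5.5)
Your proposal is correct and follows essentially the paper's route: sequential composition of per-matrix parametric-interpolation covers, a Lagrange/H\"older allocation of radii across the $3L$ matrices, then Dudley and the Lipschitz-loss Rademacher bound. The one difference is that the paper optimizes with the exact weights $\beta^{\star,(\ell)}$ and only afterwards bounds $L_{\phi}\gamma^{\star,(\ell)}\alpha^{(\ell)}$ by $C^{L}$, which yields the same simplified expression; one small correction is that $p=2$ is not a borderline Dudley case, since $q=2p/(p+2)\le1$ makes the body term integrable without any log loss, and only the readout's $\epsilon^{-2}$ entropy is borderline.
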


Theorem~\ref{thm:GenGapCommonPSimplified} is not merely the result of substituting a common value of $p$ into Theorem~\ref{thm:GenGapGeneralPSimplified}. When all matrices share a common Schatten index, the proof can optimize the covering radii across layers and matrix types when we derive the covering entropy bounds. This optimized allocation leads to the aggregate term in Theorem~\ref{thm:GenGapCommonPSimplified} and improves the polynomial dependence on the depth $L$ compared to the naive balanced allocation underlying Theorem~\ref{thm:GenGapGeneralPSimplified}.

\subsection{Post hoc selection of the Schatten indices}

The fixed-index bounds in Theorem~\ref{thm:GenGapGeneralPSimplified} assume that the Schatten indices and the corresponding radii are specified in advance. To obtain the spectrum-adaptive bounds stated in Theorem~\ref{thm:GenGapPostHocSimplified}, we remove this restriction by allowing the Schatten indices to be chosen after observing the trained weights. The key point is that the high-probability event can be made uniform over all admissible choices of the indices.

Technically, we first discretize the interval $[0,2]$ for each matrix type and layer, and combine this discretization with a dyadic peeling argument over the realized Schatten-quantity radii. A weighted union bound then gives simultaneous guarantees over all shells and grid points. Finally, each continuous index is rounded upward to the grid, and the resulting changes in the bounds are controlled using the assumption of Theorem~\ref{thm:GenGapPostHocSimplified}. This allows us to pass from the finite grid to the full continuum $[0,2]^{3L}$. A detailed argument is deferred to Appendix~\ref{subsec:GenGapPostHoc}.

\section{Examples}\label{sec:examples}

The main result of this paper is Theorem~\ref{thm:GenGapPostHocSimplified}, which provides post hoc bounds that allow the Schatten indices to be chosen after training, separately for each matrix type and each layer. In this section, however, we use the common-$p$ bounds in Theorem~\ref{thm:GenGapCommonPSimplified} in order to compare our bounds with existing bounds under matched constraints. Specializing the common Schatten index $p$, Theorem~\ref{thm:GenGapCommonPSimplified} yields Frobenius-type, rank-type, and spectral-norm-only regimes, which correspond to the three columns of Table~\ref{tab:GenBoundComparison}. These examples show how our spectral complexity improves the depth dependence, and can also improve the hidden-dimension dependence in some of the regimes.

\begin{table}[t]
    \centering
    \caption{Comparison of the leading complexity factor $B$ in the generalization gap bounds $\mathrm{GAP}(f_{\mathrm{out}})=\tilde{O}(B/\sqrt{n})$ under the matched constraints. The three columns correspond, respectively, to Frobenius norm constraints, rank constraints, and spectral norm constraints only. All other settings remain the same as in Table~\ref{tab:GenBoundComparisonOriginal}. For the norm-based baselines, the displayed rates follow from the inequalities $\|W\|_{2,1}\le\sqrt{N}\|W\|_{F}\le\sqrt{N\operatorname{rank}(W)}\|W\|_{2}$ and $\|W\|_{1,1}\le N\|W\|_{F}\le N\sqrt{\operatorname{rank}(W)}\|W\|_{2}$ for every $W\in\mathbb{R}^{N\times N}$.}

    \label{tab:GenBoundComparison}
    \begin{tabular}{lccc}
        \toprule
        & $\|W\|_{F}\le C_{F}$
        & $\operatorname{rank}(W)\le r$
        & $\|W\|_2=O(1)$ only \\
        \midrule
        \textbf{Ours}
        & $\sqrt{C_{F}}C^{\frac{L}{2}}LN^{\frac{3}{4}}$
        & $\sqrt{rLN}$
        & $\sqrt{L}N$ \\
        \cite{edelman2022inductive}
        & $C_{F}C^{L}L^{\frac{3}{2}}\sqrt N$
        & $C^{L}L^{\frac{3}{2}}\sqrt{rN}$
        & $C^{L}L^{\frac{3}{2}}N$ \\
        \cite{trauger2024sequence}
        & $C_{F}C^{L}L^{\frac{3}{2}}N$
        & $C^{L}L^{\frac{3}{2}}\sqrt{r}N$
        & $C^{L}L^{\frac{3}{2}}N^{\frac{3}{2}}$ \\
        \bottomrule
    \end{tabular}
\end{table}

\begin{example}[Frobenius and spectral norm constraints]
    Set $p^{\star,(\ell)}=2$ for all $\star\in\{QK,V,M\}$ and $\ell\in[L]$. This corresponds to the parameter class with simultaneous Frobenius and spectral norm constraints, $\mathcal{W}^{\star,(\ell)}(2,(C_{F}^{\star,(\ell)})^{2})=\{W\in\mathbb{R}^{N\times N}\mid\|W\|_{F}\le C_{F}^{\star,(\ell)},\ \|W\|_{2}\le C_{2}^{\star,(\ell)}\}$. Here, $C_{F}^{\star,(\ell)}$ denotes the Frobenius norm radius, rather than $(C_{\mathrm{s}}^{\star,(\ell)})^{\frac{1}{2}}$. Then, Theorem~\ref{thm:GenGapCommonPSimplified} gives
    \[
        \mathrm{GAP}(f_{\mathrm{out}})
        \lesssim C^{\frac{L}{2}}\bigg(\sum_{\ell=1}^{L}\sum_{\star\in\{QK,V,M\}}\sqrt{C_{F}^{\star,(\ell)}}\bigg)N^{\frac{3}{4}}\sqrt{\frac{\log(nT)}{n}}+\frac{(\log(n))^{\frac{3}{2}}}{\sqrt{n}}+\sqrt{\frac{\log(1/\delta)}{n}}
        .
    \]
    As summarized in Table~\ref{tab:GenBoundComparison}, if the Frobenius radii are uniformly bounded by $C_{F}$, our bounds improve the layerwise propagation factor, the polynomial dependence on depth and the Frobenius radius in existing norm-based bounds. This improvement comes with a larger hidden-dimension factor relative to \citet{edelman2022inductive}, increasing from $N^{\frac{1}{2}}$ to $N^{\frac{3}{4}}$.
\end{example}

\begin{example}[Rank and spectral norm constraints]\label{eg:RankConstraint}
    Set $p^{\star,(\ell)}=0$ for all $\star\in\{QK,V,M\}$ and $\ell\in[L]$. This corresponds to imposing rank constraints together with spectral norm constraints on the parameter class, which can be written as $\mathcal{W}^{\star,(\ell)}(0,r^{\star,(\ell)})=\{W\in\mathbb{R}^{N\times N}\mid\operatorname{rank}(W)\le r^{\star,(\ell)},\ \|W\|_{2}\le C_{2}^{\star,(\ell)}\}$. Here, $r^{\star,(\ell)}$ denotes the rank bound, rather than $C_{\mathrm{s}}^{\star,(\ell)}$. Theorem~\ref{thm:GenGapCommonPSimplified} gives
    \[
        \mathrm{GAP}(f_{\mathrm{out}})
        \lesssim\bigg(\sum_{\ell=1}^{L}\sum_{\star\in\{QK,V,M\}}r^{\star,(\ell)}\bigg)^{\frac{1}{2}}\sqrt{\frac{N\log(nT)}{n}}+\frac{(\log(n))^{\frac{3}{2}}}{\sqrt{n}}+\sqrt{\frac{\log(1/\delta)}{n}}
        .
    \]
    When the ranks are uniformly bounded by $r$, Table~\ref{tab:GenBoundComparison} shows that our bounds remove the layerwise propagation factor $C^{L}$ and reduce the polynomial depth dependence from $L^{\frac{3}{2}}$ to $\sqrt{L}$, compared to the norm-based baselines.
\end{example}

\begin{example}[Spectral norm constraints only]
    Finally, consider the parameter class in Eq.~\eqref{eq:ParSpectralConst}, where only the layerwise spectral norm constraints are imposed. Since every $N\times N$ matrix has rank at most $N$, Example~\ref{eg:RankConstraint} can be applied with $r^{\star,(\ell)}=N$. This yields
    \[
        \mathrm{GAP}(f_{\mathrm{out}})
        \lesssim\frac{N\sqrt{L\log(nT)}}{\sqrt{n}}+\frac{(\log(n))^{\frac{3}{2}}}{\sqrt{n}}+\sqrt{\frac{\log(1/\delta)}{n}}
        .
    \]
    As shown in Table~\ref{tab:GenBoundComparison}, our bounds improve the depth dependence from the existing $C^{L}L^{\frac{3}{2}}$ scaling to $\sqrt{L}$.
\end{example}

\section{Empirical comparison of BERT-adapted proxies}\label{sec:experiments}

We use the publicly released BERT Miniatures checkpoints of \citet{turc2019well} to compare how the leading complexity factors suggested by our theory scale for trained Transformer weights.\footnote{We do not train any new models; all quantities are computed from the released checkpoints.} These checkpoints are based on the BERT encoder architecture \citep{devlin2019bert}, rather than the simplified single-head Transformer model analyzed in this paper. We therefore do not interpret the resulting quantities as numerical evaluations of the theorems for BERT. Instead, we extract the leading complexity factors from our bounds and from the norm-based bounds of \citet{edelman2022inductive}, adapt these factors to the BERT architecture, and compare the resulting BERT-adapted proxies. The precise construction of these proxies is given in Appendix~\ref{sec:EmpiricalComparisonDiscussion}.

\begin{figure}[t]
    \centering
    \includegraphics[width=0.49\linewidth]{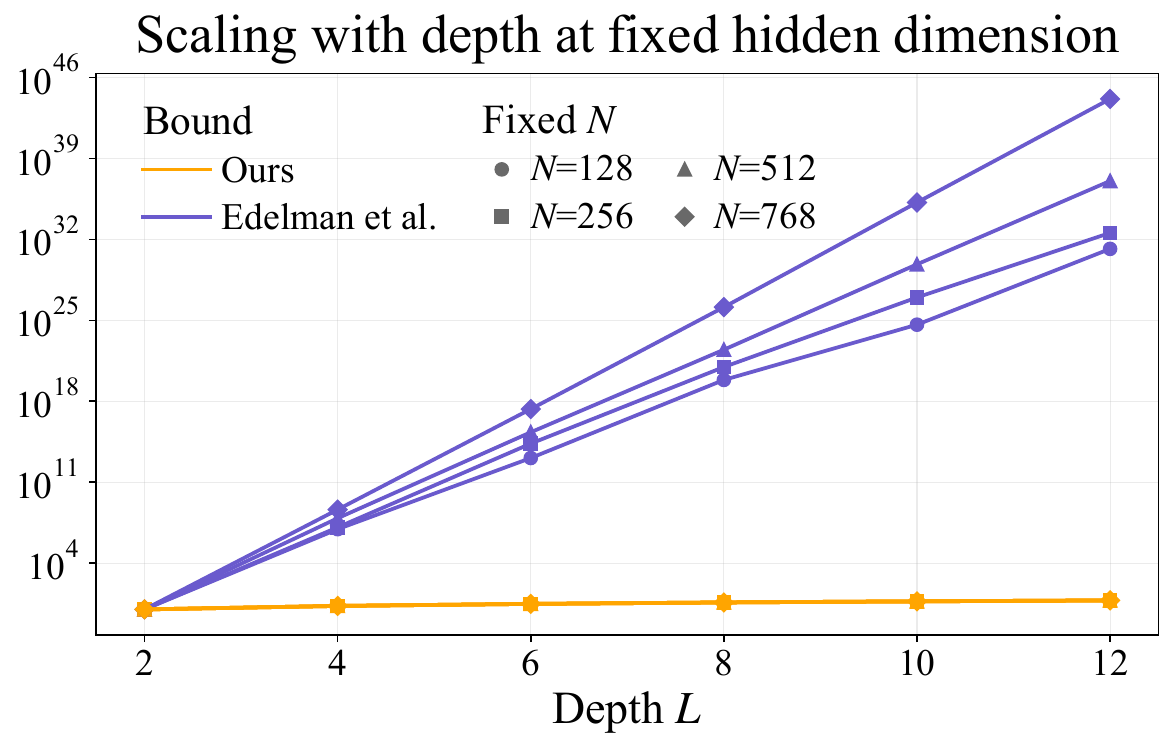}
    \includegraphics[width=0.49\linewidth]{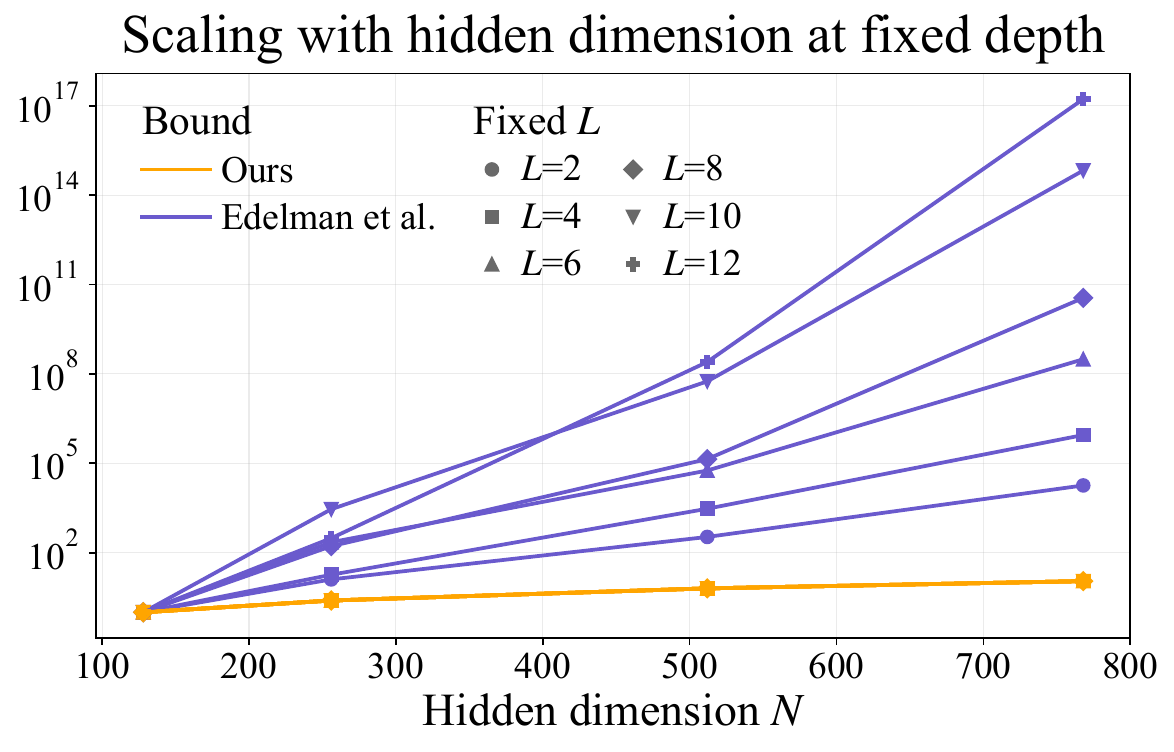}
    \caption{Comparison of the normalized BERT-adapted leading-factor proxies for our spectrum-adaptive bounds and the norm-based bounds of \citet{edelman2022inductive}. Each curve is rescaled so that its value at the smallest checkpoint, $N=128$ and $L=2$, equals one. \textbf{Left:} scaling with the depth $L$ at fixed hidden dimension $N$. \textbf{Right:} scaling with the hidden dimension $N$ at fixed depth $L$. In both regimes, our proxies grow more slowly than the proxies based on \citet{edelman2022inductive}.}
    \label{fig:ObservedBoundComparison}
\end{figure}

In Figure~\ref{fig:ObservedBoundComparison}, each proxy is normalized by its own value at the smallest checkpoint, $N=128$ and $L=2$. The relevant comparison is therefore the relative growth rate as $L$ or $N$ increases, rather than the absolute vertical scale. The figure shows that our post hoc proxies increase more slowly than the norm-based proxies both as the depth $L$ increases at fixed hidden dimension $N$ and as the hidden dimension $N$ increases at fixed depth $L$. This behavior is consistent with the main message of Theorem~\ref{thm:GenGapPostHocSimplified}: the post hoc choice of Schatten indices can balance the contributions of the spectral profile, hidden dimension, depth, and layerwise propagation.

\section{Conclusion}\label{sec:Conclusion}

We derived spectrum-adaptive post hoc generalization bounds for multi-layer Transformers. The main feature of the bounds is that the Schatten indices can be selected after training, separately for each layer and each matrix type. This allows the complexity measures to adapt to the learned singular-value profiles of the query-key, value, and feedforward matrices, rather than imposing a single fixed complexity measure across the whole network. The resulting bounds interpolate between rank-based and norm-based regimes, and show improved dependence on depth in representative spectral settings. The BERT Miniatures comparison, based on BERT-adapted proxies for leading complexity factors, is consistent with this theoretical message: the spectrum-adaptive proxies grow more slowly than the corresponding norm-based proxies as depth or hidden dimension increases.

Several directions remain open. One limitation of the present analysis is that it is agnostic to the training process, and uses the learned weights only through post hoc spectral quantities. A more complete explanation of Transformer generalization may require incorporating information about the optimization trajectory, implicit regularization, and the evolution of representations during training. Another limitation is that our bounds are formulated in terms of weight-matrix spectra, whereas trained networks may also exhibit low-dimensional activation structure. Extending the analysis to jointly exploit spectral structure in the weights and data-dependent structure in intermediate representations could lead to sharper and more explanatory bounds.

\section*{Acknowledgements}

Mana Sakai was supported by RIKEN Junior Research Associate Program. Masaaki Imaizumi was supported by JSPS KAKENHI (Grant No. 24K02904), JST CREST (Grant No. JPMJCR21D2), and JST FOREST (Grant No. JPMJFR216I).

\vfill
\pagebreak

\appendix

\section{Additional related work}\label{sec:RelatedWorks}

\subsection{Generalization bounds for Transformers}

The closest line of work studies complexity-based generalization bounds for Transformers. \citet{edelman2022inductive} derive covering number and Rademacher complexity bounds for multi-layer self-attention networks under norm constraints. Their bounds show that bounded-norm self-attention can have only logarithmic dependence on the token length and the hidden dimension, but the resulting complexity is controlled by fixed mixed $(2,1)$-norm radii and by layerwise propagation factors. \citet{trauger2024sequence} further develop this covering number approach and obtain norm-based bounds that are independent of the input token length under mixed $(1,1)$-norm constraints. These results provide the main norm-based baselines for our analysis. In contrast, our bounds use spectral complexity measures of the learned weight matrices and allow the Schatten indices to be chosen after training, separately for each layer and matrix type.

Several recent works refine or complement this complexity-based view. \citet{zhang2022analysis} analyze attention through exchangeability and latent-variable models and obtain token-length-independent generalization guarantees under exchangeability assumptions on the input tokens. This differs from our setting, where no exchangeability assumption is imposed and the token length enters only through logarithmic factors. A closely related work is \citet{truong2024rank}, which develops covering number bounds under a fixed low-dimensional column-space constraint and applies them to single-layer Transformers. This setting is different from ours, where the learned weight matrices may have arbitrary singular subspaces and the Schatten indices can be selected after training. \citet{li2026sharper} use offset Rademacher complexity to derive fast excess risk bounds for Transformers from suitable covering number bounds. Our covering number bounds are complementary to this direction: in principle, they can be combined with the offset Rademacher complexity as in \citet{li2026sharper}, whereas the present paper focuses on post hoc generalization gap bounds.

Other theoretical studies focus on more specialized Transformer settings. \citet{fu2023what} analyze a random-feature attention model with randomly sampled and frozen query-key parameters and trainable value parameters, and obtain excess risk bounds. \citet{li2023transformers} study in-context learning by viewing a Transformer as implementing an algorithm at inference time, and relate generalization to stability in multitask learning. \citet{wei2022statistically} study statistically meaningful approximation of Turing machines by Transformers and derive sample-complexity guarantees in that framework. \citet{huang2025formal} develop a formal theory of length generalization for causal Transformers with learnable absolute positional encodings. \citet{mwigo2026generalization} analyze a shallow Transformer trained by gradient descent under a bounded-drift regime in which the parameters remain close to initialization. These works identify important mechanisms in particular tasks, architectures, or training regimes, whereas our bounds are training-agnostic, apply to multi-layer Transformers, and are expressed directly in terms of the spectra of the trained weights.

\subsection{Norm-based and spectral generalization bounds for neural networks}

Early norm-based generalization bounds for DNNs include \citet{neyshabur2015norm}, \citet{bartlett2017spectrally}, \citet{neyshabur2018pac}, and \citet{golowich2018size}. This line of work explains how large networks can admit dimension-favorable bounds when their learned weights have controlled norms. The Transformer bounds of \citet{edelman2022inductive} and \citet{trauger2024sequence} can be viewed as architecture-specific continuations of this program. Our contribution follows the same complexity-based philosophy, but it replaces fixed coordinatewise norm constraints by spectral quantities that can be evaluated and optimized post hoc.

A related approach derives generalization guarantees through compressibility. Compression-based bounds show that networks admitting short descriptions or accurate compressed approximations can generalize well \citep{arora2018stronger,baykal2019data,suzuki2019compression}. In particular, \citet{suzuki2019compression} convert compression guarantees into bounds for the original non-compressed network, making compressibility itself a statistical complexity measure. This viewpoint is close in spirit to ours, because fast singular-value decay can be interpreted as a form of spectral compressibility. However, our bounds do not require constructing an explicit compressed network; instead, the complexity is expressed directly through Schatten quantities of the trained weights.

Low-rank and spectral structures have also been studied more directly in neural-network generalization. \citet{pinto2025generalization} derive Gaussian-complexity bounds for networks with low-rank layers and show how low-rank constraints can mitigate the accumulation of dimension-dependent factors across depth. \citet{ledent2025generalization} obtain post hoc generalization bounds using Schatten (quasi) norms. Their analysis is technically close to ours and provides the main inspiration for the spectrum-adaptive aspect of our bounds. We extend this idea to Transformer architectures, where the proof must control matrix-valued function classes under the $\|\cdot\|_{2\to\infty}$ metric.

\section{Mathematical preliminaries}\label{sec:MathPrelim}

\subsection{Matrix norms}

For a matrix $W$, we use the notation summarized in Table~\ref{tab:MatrixNormNotation}. Here, $\sigma_{i}(W)$ denotes the $i$-th largest singular value of $W$, and $W_{i\cdot}$ denotes the $i$-th row of $W$.

\begin{table}[h]
    \centering
    \caption{Matrix norm notation used throughout the paper.}
    \begin{tabular}{lll}
        \hline
        Notation & Definition & Description \\
        \hline
        $\|W\|_{F}$ & $(\sum_{i,j}W_{ij}^{2})^{\frac{1}{2}}$ & Frobenius norm \\
        $\|W\|_{\alpha\to\beta}$ & $\sup\{\|Wx\|_{\beta}:\|x\|_{\alpha}\le1\}$ & Induced norm from $\ell_{\alpha}$ to $\ell_{\beta}$ \\
        $\|W\|_{\alpha}$ & $\|W\|_{\alpha\to\alpha}$ & Induced norm on $\ell_{\alpha}$ \\
        $\|W\|_{2}$ & $\sigma_{1}(W)$ & Spectral norm \\
        $\|W\|_{2\to\infty}$ & $\max_{i}\|W_{i\cdot}\|_{2}$ & Maximum row Euclidean norm \\
        $\|W\|_{\alpha,\beta}$ & $[\sum_{j}(\sum_{i}|W_{ij}|^{\alpha})^{\frac{\beta}{\alpha}}]^{\frac{1}{\beta}}$ & Mixed $(\alpha,\beta)$-norm \\
        \hline
    \end{tabular}
    \label{tab:MatrixNormNotation}
\end{table}

For $\alpha,\beta\ge1$, the induced norm from $\ell_{\alpha}$ to $\ell_{\beta}$ is denoted by $\|W\|_{\alpha\to\beta}$. When $\alpha=\beta$, we simply write $\|W\|_{\alpha}:=\|W\|_{\alpha\to\alpha}$. In particular, $\|W\|_{2}=\sigma_{1}(W)$ is the spectral norm. The identity $\|W\|_{2\to\infty}=\max_{i}\|W_{i\cdot}\|_{2}$ is used throughout the paper.

We also use the following Schatten $p$ (quasi) norm.

\begin{definition}
    For $p\in(0,2]$, the Schatten $p$ (quasi) norm $\|W\|_{\mathrm{s},p}$ of a matrix $W$ is defined by $\|W\|_{\mathrm{s},p}=(\sum_{i}\sigma_{i}(W)^{p})^{\frac{1}{p}}$. We refer to $p$ as the Schatten index. For $p=0$, we use the convention $\|W\|_{\mathrm{s},0}^{0}:=\lim_{p\downarrow0}\|W\|_{\mathrm{s},p}^{p}=\operatorname{rank}(W)$.
\end{definition}

\subsection{Some basic results}

The following results are some well-known matrix norm inequalities.

\begin{lemma}\label{lem:MatrixProduct2InfInf}
    $\|AB\|_{2\to\infty}\le\|A\|_{\infty}\|B\|_{2\to\infty}$.
\end{lemma}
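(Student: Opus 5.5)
We need to prove $\|AB\|_{2\to\infty}\le\|A\|_{\infty}\|B\|_{2\to\infty}$. Here $\|A\|_{\infty}=\|A\|_{\infty\to\infty}=\max_{i}\sum_{k}|A_{ik}|$ is the maximum absolute row sum, and $\|B\|_{2\to\infty}=\max_{k}\|B_{k\cdot}\|_{2}$ is the maximum row Euclidean norm, by the identity recorded in Table~\ref{tab:MatrixNormNotation}. The plan is a direct row-by-row estimate; no earlier result beyond these two identities is needed.

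First, fix a row index $i$ and write the $i$-th row of the product as a linear combination of the rows of $B$: $(AB)_{i\cdot}=\sum_{k}A_{ik}B_{k\cdot}$.

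Second, apply the triangle inequality for the Euclidean norm:
\[
\|(AB)_{i\cdot}\|_{2}\le\sum_{k}|A_{ik}|\,\|B_{k\cdot}\|_{2}\le\Big(\sum_{k}|A_{ik}|\Big)\max_{k}\|B_{k\cdot}\|_{2}.
\]
The right-hand side is at most $\|A\|_{\infty}\|B\|_{2\to\infty}$.

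Finally, take the maximum over $i$. The left-hand side then becomes $\|AB\|_{2\to\infty}$, which gives the claim.

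The only point requiring care is verifying the identity $\|A\|_{\infty\to\infty}=\max_{i}\sum_{k}|A_{ik}|$. This is standard: the bound $\le$ follows from the triangle inequality, and equality is attained at the sign vector $x_{k}=\operatorname{sign}(A_{ik})$ for a maximizing row $i$. Only the upper bound is actually used in the argument, so there is no real obstacle.
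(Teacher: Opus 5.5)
Your proof is correct and complete. The paper gives no proof of this lemma; it lists it among ``well-known matrix norm inequalities'', and your row-by-row triangle-inequality estimate is the standard justification.

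Your second step, $\|(AB)_{i\cdot}\|_{2}\le\|A_{i\cdot}\|_{1}\|B\|_{2\to\infty}$, is exactly Lemma~\ref{lem:VecMatrixProduct} with $a=(A_{i\cdot})^{\top}$. The paper proves that lemma through the duality $\|B\|_{2\to\infty}=\|B^{\top}\|_{1\to2}$, whereas your triangle-inequality argument reaches the same rowwise bound more elementarily. Either way, the present lemma is just that vector inequality maximized over rows.

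One small slip in your closing remark concerns which direction of the identity $\|A\|_{\infty\to\infty}=\max_{i}\sum_{k}|A_{ik}|$ the argument uses. To pass from $\sum_{k}|A_{ik}|$ to $\|A\|_{\infty}$ you need $\max_{i}\sum_{k}|A_{ik}|\le\|A\|_{\infty\to\infty}$. That direction comes from evaluating at the sign vector, not from the triangle-inequality direction you call the ``upper bound''. Since you justify both directions, the proof itself is unaffected.
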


\begin{lemma}\label{lem:MatrixProduct2Inf2}
    $\|AB\|_{2\to\infty}\le\|A\|_{2\to\infty}\|B\|_{2}$.
\end{lemma}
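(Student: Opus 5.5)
The inequality $\|AB\|_{2\to\infty}\le\|A\|_{2\to\infty}\|B\|_{2}$ follows by working row by row, using the identity $\|M\|_{2\to\infty}=\max_{i}\|M_{i\cdot}\|_{2}$ recorded in Table~\ref{tab:MatrixNormNotation}.

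\textbf{Step 1 (row identity).} For each row index $i$, the $i$-th row of the product is $(AB)_{i\cdot}=A_{i\cdot}B$. Here $A_{i\cdot}$ is viewed as a row vector, which follows directly from the definition of matrix multiplication.

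\textbf{Step 2 (bound each row).} Transposing gives $\|A_{i\cdot}B\|_{2}=\|B^{\top}A_{i\cdot}^{\top}\|_{2}$. By the definition of the induced norm, this is at most $\|B^{\top}\|_{2}\|A_{i\cdot}\|_{2}$. Since transposition preserves singular values, $\|B^{\top}\|_{2}=\sigma_{1}(B^{\top})=\sigma_{1}(B)=\|B\|_{2}$. Hence $\|(AB)_{i\cdot}\|_{2}\le\|B\|_{2}\|A_{i\cdot}\|_{2}$ for every $i$.

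\textbf{Step 3 (take the maximum).} Taking the maximum over $i$ on both sides gives
\[
\|AB\|_{2\to\infty}=\max_{i}\|(AB)_{i\cdot}\|_{2}\le\|B\|_{2}\max_{i}\|A_{i\cdot}\|_{2}=\|A\|_{2\to\infty}\|B\|_{2}.
\]

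There is no genuine obstacle. The only point that needs care is the invariance $\|B^{\top}\|_{2}=\|B\|_{2}$, which holds because singular values are unchanged under transposition.
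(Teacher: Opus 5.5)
Your proof is correct. The row identity $(AB)_{i\cdot}=A_{i\cdot}B$, the bound $\|B^{\top}A_{i\cdot}^{\top}\|_{2}\le\|B\|_{2}\|A_{i\cdot}\|_{2}$, and the maximum over $i$ together give the claim. The paper gives no proof of its own and lists this among ``well-known matrix norm inequalities.''

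An equally short route avoids rows and transposition entirely. It uses submultiplicativity of induced norms directly from the definition: for $\|x\|_{2}\le1$, we have $\|ABx\|_{\infty}\le\|A\|_{2\to\infty}\|Bx\|_{2}\le\|A\|_{2\to\infty}\|B\|_{2}$.
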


\begin{lemma}\label{lem:MatrixProduct2Frob}
    \begin{enumerate}\renewcommand{\labelenumi}{(\roman{enumi})}
        \item $\|AB\|_{F}\le\|A\|_{F}\|B\|_{2}$.
        \item $\|AB\|_{F}\le\|A\|_{2}\|B\|_{F}$.
    \end{enumerate}
\end{lemma}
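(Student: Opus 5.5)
We prove both parts by splitting the Frobenius norm into rows or columns, bounding each piece by the definition of the spectral norm, and summing the squares. No earlier result from the paper is needed beyond the definitions collected in Table~\ref{tab:MatrixNormNotation}.

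For (i), use the row decomposition $\|AB\|_{F}^{2}=\sum_{i}\|(AB)_{i\cdot}\|_{2}^{2}$. Each row satisfies $(AB)_{i\cdot}=A_{i\cdot}B$. Transposing gives $(A_{i\cdot}B)^{\top}=B^{\top}A_{i\cdot}^{\top}$. By the definition of the induced norm $\|\cdot\|_{2}=\|\cdot\|_{2\to2}$,
\[
\|B^{\top}A_{i\cdot}^{\top}\|_{2}\le\|B^{\top}\|_{2}\|A_{i\cdot}\|_{2}.
\]
Since $B$ and $B^{\top}$ have the same nonzero singular values, $\|B^{\top}\|_{2}=\sigma_{1}(B^{\top})=\sigma_{1}(B)=\|B\|_{2}$. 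Squaring and summing over $i$ then gives
\[
\|AB\|_{F}^{2}\le\|B\|_{2}^{2}\sum_{i}\|A_{i\cdot}\|_{2}^{2}=\|A\|_{F}^{2}\|B\|_{2}^{2},
\]
which is (i) after taking square roots.

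For (ii), argue symmetrically with columns. We have $(AB)_{\cdot j}=AB_{\cdot j}$, so $\|AB_{\cdot j}\|_{2}\le\|A\|_{2}\|B_{\cdot j}\|_{2}$ by the definition of $\|A\|_{2}$. Summing the squares over $j$ gives
\[
\|AB\|_{F}^{2}\le\|A\|_{2}^{2}\sum_{j}\|B_{\cdot j}\|_{2}^{2}=\|A\|_{2}^{2}\|B\|_{F}^{2}.
\]
Alternatively, (ii) follows by applying (i) to $B^{\top}A^{\top}$, using $\|M^{\top}\|_{F}=\|M\|_{F}$ and $\|A^{\top}\|_{2}=\|A\|_{2}$.

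The only point needing care is the transpose invariance of the spectral norm used in (i). It holds because $\sigma_{1}$ depends only on the singular values, which $B$ and $B^{\top}$ share.
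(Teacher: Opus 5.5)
Your proof is correct. Summing the squared row-wise bounds $\|A_{i\cdot}B\|_{2}\le\|A_{i\cdot}\|_{2}\|B\|_{2}$ (using transpose invariance of the spectral norm) and the squared column-wise bounds $\|AB_{\cdot j}\|_{2}\le\|A\|_{2}\|B_{\cdot j}\|_{2}$ is the standard argument; the paper gives no proof, listing both inequalities as well-known matrix norm facts, so your argument simply supplies the omitted justification.
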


\begin{lemma}\label{lem:VecMatrixProduct}
    For any $a\in\mathbb{R}^{\ell}$ and $B\in\mathbb{R}^{\ell\times m}$, we have
    \[
        \|a^{\top}B\|_{2}
        \le\|a\|_{1}\|B\|_{2\to\infty}
        .
    \]
\end{lemma}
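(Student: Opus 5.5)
The statement to prove is Lemma~\ref{lem:VecMatrixProduct}: for $a\in\mathbb{R}^{\ell}$ and $B\in\mathbb{R}^{\ell\times m}$, $\|a^{\top}B\|_{2}\le\|a\|_{1}\|B\|_{2\to\infty}$. The plan is to expand the product $a^{\top}B$ as a linear combination of the rows of $B$ and apply the triangle inequality for the Euclidean norm.

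First, write
\[
    a^{\top}B=\sum_{i=1}^{\ell}a_{i}B_{i\cdot}.
\]
This holds because the $j$-th entry of the left-hand side is $\sum_{i}a_{i}B_{ij}$.

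Second, apply the triangle inequality and the homogeneity of $\|\cdot\|_{2}$:
\[
    \|a^{\top}B\|_{2}\le\sum_{i=1}^{\ell}|a_{i}|\,\|B_{i\cdot}\|_{2}.
\]

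Third, bound each row norm by the maximum, $\|B_{i\cdot}\|_{2}\le\max_{k}\|B_{k\cdot}\|_{2}=\|B\|_{2\to\infty}$, using the identity recorded in Table~\ref{tab:MatrixNormNotation}. Summing the coefficients then gives $\sum_{i}|a_{i}|=\|a\|_{1}$, which yields the claim.

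No step poses a genuine obstacle; the only point requiring care is identifying $\|B\|_{2\to\infty}$ with the maximal row Euclidean norm.

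As an alternative route, one could view the bound as Lemma~\ref{lem:MatrixProduct2InfInf} applied with $A=a^{\top}$, a $1\times\ell$ matrix. In that case $\|A\|_{\infty}$ equals the maximal absolute row sum, which is $\|a\|_{1}$, and for a single-row matrix $\|a^{\top}B\|_{2\to\infty}=\|a^{\top}B\|_{2}$.
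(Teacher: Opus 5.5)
Your proof is correct, and it follows a slightly different, more elementary route than the paper.

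The paper does not expand $a^{\top}B$ over the rows of $B$. Instead it:
\begin{itemize}
\item rewrites $\|a^{\top}B\|_{2}=\|B^{\top}a\|_{2}$;
\item treats $a=0$ separately and otherwise normalizes $a$ by $\|a\|_{1}$;
\item bounds the result by the induced norm $\|B^{\top}\|_{1\to2}$;
\item concludes with the duality relation $\|B^{\top}\|_{1\to2}=\|B\|_{2\to\infty}$, which it invokes without proof.
\end{itemize}

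Your triangle-inequality expansion $a^{\top}B=\sum_{i}a_{i}B_{i\cdot}$ is essentially the argument that proves that duality identity. The $\ell_{1}$ unit ball is the convex hull of $\pm e_{i}$, so $\sup_{\|x\|_{1}\le1}\|B^{\top}x\|_{2}$ is attained at some $\pm e_{i}$, giving $\max_{i}\|B_{i\cdot}\|_{2}$. Your version is therefore self-contained and needs no case split for $a=0$. The paper's version is a one-line reduction to a standard operator-norm fact.

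Your alternative via Lemma~\ref{lem:MatrixProduct2InfInf} with $A=a^{\top}$ is also valid. The maximal absolute row sum of a $1\times\ell$ matrix is $\|a\|_{1}$, and the $2\to\infty$ norm of a single-row matrix is the Euclidean norm of that row. However, the paper also states that lemma without proof, and it rests on the same row-wise computation.
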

\begin{proof}
    For $a=0$, the claim is immediate. For $a\neq0$, we have
    \[
        \|a^{\top}B\|_{2}
        =\|B^{\top}a\|_{2}
        =\left\|B^{\top}\frac{a}{\|a\|_{1}}\right\|_{2}\|a\|_{1}
        \le \|B^{\top}\|_{1\to2}\|a\|_{1}
        =\|B\|_{2\to\infty}\|a\|_{1}
        .
    \]
    Here, the last equality follows from the duality relation $\|A\|_{2\to\infty}=\|A^{\top}\|_{1\to2}$.
\end{proof}

We also state some useful properties of the softmax function and the rowwise projection operator.

\begin{lemma}[\citet{edelman2022inductive}, Lemma~A.9]\label{lem:Edelman2022LemA9}
    Suppose $\Pi_{\mathrm{norm}}$ is the rowwise projection operator onto the unit ball. Then, for any $Z,Z'\in\mathbb{R}^{T\times N}$, we have
    \[
        \|\Pi_{\mathrm{norm}}(Z)-\Pi_{\mathrm{norm}}(Z')\|_{2\to\infty}
        \le\|Z-Z'\|_{2\to\infty}
        .
    \]
\end{lemma}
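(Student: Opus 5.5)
The claim is that the rowwise projection onto the unit ball is $1$-Lipschitz in the $\|\cdot\|_{2\to\infty}$ norm. The plan is to reduce to a statement about single rows. Since $\Pi_{\mathrm{norm}}$ acts independently on each row, we have $(\Pi_{\mathrm{norm}}(Z)-\Pi_{\mathrm{norm}}(Z'))_{t\cdot}=\pi(Z_{t\cdot})-\pi(Z'_{t\cdot})$, where $\pi(x)=x/(1\vee\|x\|_{2})$ is the map on $\mathbb{R}^{N}$. So it suffices to show that $\|\pi(x)-\pi(y)\|_{2}\le\|x-y\|_{2}$ for all $x,y\in\mathbb{R}^{N}$. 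Taking the maximum over $t\in[T]$ then gives the claim, because $\|\cdot\|_{2\to\infty}$ is exactly the maximum row Euclidean norm.

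For the single-row inequality, the key observation is that $\pi$ is exactly the Euclidean metric projection onto the closed convex set $B=\{u:\|u\|_{2}\le1\}$. To check this, note that for $\|x\|_{2}\le1$ the nearest point of $B$ is $x$ itself. For $\|x\|_{2}>1$, the nearest point is $x/\|x\|_{2}$: for any $u\in B$, the reverse triangle inequality gives $\|x-u\|_{2}\ge\|x\|_{2}-\|u\|_{2}\ge\|x\|_{2}-1=\|x-x/\|x\|_{2}\|_{2}$.

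Next we use the variational characterization of projection onto a closed convex set: $\langle x-\pi(x),u-\pi(x)\rangle\le0$ for all $u\in B$. Apply it once with $u=\pi(y)$, and once with the roles of $x$ and $y$ swapped and $u=\pi(x)$. Adding the two inequalities gives
\[
\|\pi(x)-\pi(y)\|_{2}^{2}\le\langle x-y,\pi(x)-\pi(y)\rangle .
\]
By Cauchy--Schwarz the right-hand side is at most $\|x-y\|_{2}\|\pi(x)-\pi(y)\|_{2}$. Dividing through (the case $\pi(x)=\pi(y)$ is trivial) yields nonexpansiveness.

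Nothing here is a real obstacle. The only point needing care is identifying $\pi$ with the metric projection, which is the short reverse-triangle argument above. If one prefers to avoid convex-analysis facts, there is a direct case analysis. The case where both points are inside the ball is trivial. When both are outside, one verifies $\|x/a-y/b\|^{2}\le\|x-y\|^{2}$ with $a=\|x\|_{2}$ and $b=\|y\|_{2}$ by expanding in terms of $a$, $b$ and $\langle x,y\rangle$. The mixed case is handled the same way. I would use the projection argument, since it is shorter.
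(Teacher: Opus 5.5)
Your proof is correct: reducing row by row to $\pi(x)=x/(1\vee\|x\|_{2})$, identifying $\pi$ with the Euclidean metric projection onto the closed unit ball, and using the inequality $\|\pi(x)-\pi(y)\|_{2}^{2}\le\langle x-y,\pi(x)-\pi(y)\rangle$ together with Cauchy--Schwarz establishes the claim. The paper gives no proof of its own and simply cites \citet{edelman2022inductive}, Lemma~A.9; your argument (rowwise reduction plus nonexpansiveness of projection onto a closed convex set) is the standard justification for that fact.
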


\begin{lemma}\label{lem:SoftmaxMatrixInfinity}
    For any $G\in\mathbb{R}^{T\times T}$, the softmax function satisfies
    \[
        \|\mathrm{SoftMax}(G)\|_{\infty}
        =\max_{j\in[T]}\sum_{k=1}^{T}\mathrm{SoftMax}_{k}(G_{j\cdot})
        =1
        .
    \]
\end{lemma}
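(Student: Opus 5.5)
The statement is Lemma~\ref{lem:SoftmaxMatrixInfinity}. The plan is to use the standard fact that the induced $\ell_\infty$ norm of a matrix is its maximum absolute row sum, and then to observe that every row of a softmax output is a probability vector.

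\textbf{Step 1 (induced $\infty$-norm).} For any $A\in\mathbb{R}^{T\times T}$, we recall
\[
\|A\|_{\infty}=\sup_{\|x\|_\infty\le1}\max_j\Bigl|\sum_k A_{jk}x_k\Bigr|=\max_j\sum_k|A_{jk}|.
\]
The upper bound follows from the triangle inequality. The lower bound is attained by choosing $x_k=\operatorname{sign}(A_{j^\ast k})$, where $j^\ast$ is a maximizing row.

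\textbf{Step 2 (entries are nonnegative).} By definition,
\[
(\mathrm{SoftMax}(G))_{jk}=\frac{\exp(G_{jk})}{\sum_{t}\exp(G_{jt})}>0,
\]
and this is the quantity written $\mathrm{SoftMax}_k(G_{j\cdot})$ in the statement. Since every entry is positive, the absolute values in Step~1 can be dropped. This gives the first equality.

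\textbf{Step 3 (rows sum to one).} For each fixed row $j$, the denominator $\sum_t\exp(G_{jt})$ is common to all terms, so
\[
\sum_k \mathrm{SoftMax}_k(G_{j\cdot})=\frac{\sum_k\exp(G_{jk})}{\sum_t\exp(G_{jt})}=1.
\]
Because this holds for every $j$, the maximum over $j$ also equals $1$, which gives the second equality.

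There is no real obstacle here. The only point needing care is stating the row-sum characterization of $\|\cdot\|_\infty$ correctly, which is elementary.
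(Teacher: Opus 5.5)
Your proof is correct and is the standard argument: the induced $\infty$-norm is the maximum absolute row sum, softmax entries are positive, and each row sums to one. The paper states this lemma without proof as a well-known fact, so your argument supplies exactly the reasoning it takes for granted, with nothing missing.
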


\begin{lemma}[\citet{edelman2022inductive}, Corollary~A.7]\label{lem:SoftmaxLipschitzVector1}
    For any $x,y\in\mathbb{R}^{T}$, it holds that
    \[
        \|\mathrm{SoftMax}(x)-\mathrm{SoftMax}(y)\|_{1}
        \le 2\|x-y\|_{\infty}
        .
    \]
\end{lemma}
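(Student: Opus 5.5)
The plan is to prove the inequality by integrating the Jacobian of the softmax along the segment from $y$ to $x$, and then bounding the $\ell_{\infty}\to\ell_{1}$ operator norm of that Jacobian uniformly by $2$. Write $s(z)=\mathrm{SoftMax}(z)\in\mathbb{R}^{T}$. This map is smooth with Jacobian
\[
J(z)=\operatorname{diag}(s(z))-s(z)s(z)^{\top}.
\]
Set $z(t)=y+t(x-y)$ for $t\in[0,1]$ and $v=x-y$. The fundamental theorem of calculus gives
\[
\mathrm{SoftMax}(x)-\mathrm{SoftMax}(y)=\int_{0}^{1}J(z(t))\,v\,dt .
\]
Hence
\[
\|\mathrm{SoftMax}(x)-\mathrm{SoftMax}(y)\|_{1}\le\sup_{t\in[0,1]}\|J(z(t))v\|_{1}.
\]

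The key step is to bound $\|J(z)v\|_{1}$ for an arbitrary probability vector $s=s(z)$. Coordinatewise, $(J(z)v)_{i}=s_{i}(v_{i}-\bar v)$, where $\bar v=\sum_{k}s_{k}v_{k}$. Therefore
\[
\|J(z)v\|_{1}=\sum_{i}s_{i}|v_{i}-\bar v|\le\sum_{i}s_{i}\bigl(|v_{i}|+|\bar v|\bigr)\le\|v\|_{\infty}+|\bar v|\le 2\|v\|_{\infty}.
\]
This uses $s_{i}\ge0$, $\sum_{i}s_{i}=1$, and $|\bar v|\le\|v\|_{\infty}$. Combining this with the integral representation yields the claim.

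There is no real obstacle here; the only point needing care is the Jacobian computation, namely $\partial s_{i}/\partial z_{j}=s_{i}(\delta_{ij}-s_{j})$, which follows directly from the quotient rule. If one wished to avoid calculus, an alternative would be to note that $\mathrm{SoftMax}$ is invariant under adding constants and argue coordinatewise monotonicity. The derivative argument above is shorter, and I would use it.
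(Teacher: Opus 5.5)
Your proof is correct. The paper gives no argument of its own for this lemma and imports it from \citet{edelman2022inductive}, Corollary~A.7; your Jacobian plus fundamental-theorem-of-calculus argument, with the key bound $\|J(z)v\|_{1}=\sum_{i}s_{i}|v_{i}-\bar v|\le2\|v\|_{\infty}$ for an arbitrary probability vector $s$, is a complete, self-contained proof of it. As a side remark, $\sum_{i}s_{i}|v_{i}-\bar v|$ is the mean absolute deviation of $v$ under $s$, hence at most $(\max_{i}v_{i}-\min_{i}v_{i})/2\le\|v\|_{\infty}$ by Cauchy--Schwarz and Popoviciu's inequality, so the same argument yields the sharper constant $1$, which is tight (take $T=2$, $y=0$, $x=(c,-c)$, giving ratio $\tanh(c)/c\to1$ as $c\to0$).
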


The following lemma is used when we optimize the allocation of covering radii in the covering entropy bounds of Transformers.

\begin{lemma}\label{lem:OptimizationPolynomial}
    Fix $a_{i},b_{i},c,\nu>0$. The unique solution to the optimization problem
    \[
        \min_{z_{1},\ldots,z_{m}>0}\sum_{i=1}^{m}a_{i}z_{i}^{-\nu}
        \qquad\textnormal{subject to}\quad
        \sum_{i=1}^{m}b_{i}z_{i}=c
    \]
    is given by
    \[
        z_{i}^{\ast}
        =\frac{ca_{i}^{\frac{1}{\nu+1}}b_{i}^{-\frac{1}{\nu+1}}}{\sum_{j=1}^{m}a_{j}^{\frac{1}{\nu+1}}b_{j}^{\frac{\nu}{\nu+1}}}
        \qquad(i\in[m])
        .
    \]
    Moreover, the minimum value is
    \[
        \sum_{i=1}^{m}a_{i}(z_{i}^{\ast})^{-\nu}
        =\frac{1}{c^{\nu}}\left(\sum_{i=1}^{m}a_{i}^{\frac{1}{\nu+1}}b_{i}^{\frac{\nu}{\nu+1}}\right)^{\nu+1}
        .
    \]
\end{lemma}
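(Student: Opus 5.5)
This is a constrained convex problem with a strictly convex objective on the open orthant, and the plan is to solve it with Lagrange multipliers and then verify global optimality. Since $a_{i}>0$ and $\nu>0$, each map $z_{i}\mapsto a_{i}z_{i}^{-\nu}$ is strictly convex on $(0,\infty)$. The feasible set $\{z\in(0,\infty)^{m}:\sum_{i}b_{i}z_{i}=c\}$ is a nonempty convex set (an open simplex slice), because $b_{i},c>0$. Hence any stationary point of the Lagrangian is the unique global minimizer.

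The first step is the stationarity condition. Consider $\Lambda(z,\lambda)=\sum_{i}a_{i}z_{i}^{-\nu}+\lambda(\sum_{i}b_{i}z_{i}-c)$. Setting $\partial_{z_{i}}\Lambda=0$ gives $\nu a_{i}z_{i}^{-\nu-1}=\lambda b_{i}$, so $z_{i}=(\nu a_{i}/(\lambda b_{i}))^{1/(\nu+1)}$. Writing $z_{i}=\kappa\,a_{i}^{1/(\nu+1)}b_{i}^{-1/(\nu+1)}$ with $\kappa>0$ common to all $i$, the constraint reads
\[
\kappa\sum_{j}a_{j}^{\frac{1}{\nu+1}}b_{j}^{\frac{\nu}{\nu+1}}=c.
\]
Solving for $\kappa$ yields exactly the stated $z_{i}^{\ast}$, which is positive. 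To make the optimality airtight without appealing to KKT theory, I would argue directly: for any feasible $z$, convexity gives
\[
a_{i}z_{i}^{-\nu}\ge a_{i}(z_{i}^{\ast})^{-\nu}-\nu a_{i}(z_{i}^{\ast})^{-\nu-1}(z_{i}-z_{i}^{\ast}),
\]
with equality only when $z_{i}=z_{i}^{\ast}$. Moreover, $\nu a_{i}(z_{i}^{\ast})^{-\nu-1}=\lambda b_{i}$ with the same $\lambda$ for every $i$. Summing over $i$, the linear terms cancel because $\sum_{i}b_{i}(z_{i}-z_{i}^{\ast})=0$. This gives both minimality and uniqueness.

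The second step computes the minimum value. Set $S=\sum_{j}a_{j}^{1/(\nu+1)}b_{j}^{\nu/(\nu+1)}$. Then
\[
a_{i}(z_{i}^{\ast})^{-\nu}=a_{i}c^{-\nu}S^{\nu}a_{i}^{-\nu/(\nu+1)}b_{i}^{\nu/(\nu+1)}=c^{-\nu}S^{\nu}a_{i}^{1/(\nu+1)}b_{i}^{\nu/(\nu+1)}.
\]
Summing over $i$ gives $c^{-\nu}S^{\nu+1}$, as claimed.

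None of the steps is a real obstacle. The only point needing care is justifying that the stationary point is the global minimizer on the open domain, and the supporting-line argument above settles this. An alternative route is Hölder's inequality with exponents $\nu+1$ and $(\nu+1)/\nu$ applied to $S=\sum_{i}(a_{i}z_{i}^{-\nu})^{1/(\nu+1)}(b_{i}z_{i})^{\nu/(\nu+1)}$. This gives $S\le(\sum_{i}a_{i}z_{i}^{-\nu})^{1/(\nu+1)}c^{\nu/(\nu+1)}$ directly, and its equality case recovers $z^{\ast}$.
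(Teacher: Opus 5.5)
Your proof is correct and follows essentially the same route as the paper: strict convexity of the objective, the Lagrangian first-order condition, fixing the common scale through the constraint, and substituting back to get the minimum value. The only addition is that you spell out the global-optimality step with the supporting-line inequality (and note a Hölder alternative), whereas the paper simply asserts that a feasible stationary point of a strictly convex objective over an affine set is the unique global minimizer.
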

\begin{proof}
    Since
    \[
        \frac{\partial^{2}}{\partial z_{i}^{2}}(a_{i}z_{i}^{-\nu})
        =\nu(\nu+1)a_{i}z_{i}^{-\nu-2}
        >0
    \]
    holds for each $i\in[m]$, the objective function is strictly convex on $(0,\infty)^{m}$. Hence, over the affine constraint set, any feasible point satisfying the first-order condition is the unique global minimizer. Consider the Lagrangian
    \[
        \mathcal{L}(z_{1},\ldots,z_{m},\lambda)
        =\sum_{i=1}^{m}a_{i}z_{i}^{-\nu}+\lambda\left(\sum_{i=1}^{m}b_{i}z_{i}-c\right)
        .
    \]
    The first-order condition with respect to $z_{i}$ is
    \[
        \frac{\partial\mathcal{L}}{\partial z_{i}}
        =-\nu a_{i}z_{i}^{-\nu-1}+\lambda b_{i}
        =0
        ,
    \]
    which implies
    \[
        z_{i}
        =\left(\frac{\nu a_{i}}{\lambda b_{i}}\right)^{\frac{1}{\nu+1}}
        \qquad(i\in[m])
        .
    \]
    Substituting this into the constraint yields
    \[
        \sum_{i=1}^{m}b_{i}\left(\frac{\nu a_{i}}{\lambda b_{i}}\right)^{\frac{1}{\nu+1}}
        =\left(\frac{\nu}{\lambda}\right)^{\frac{1}{\nu+1}}\sum_{i=1}^{m}a_{i}^{\frac{1}{\nu+1}}b_{i}^{\frac{\nu}{\nu+1}}
        =c
        .
    \]
    Hence, we require
    \[
        \left(\frac{\nu}{\lambda}\right)^{\frac{1}{\nu+1}}
        =\frac{c}{\sum_{j=1}^{m}a_{j}^{\frac{1}{\nu+1}}b_{j}^{\frac{\nu}{\nu+1}}}
        .
    \]
    Combining the above results, we obtain
    \[
        z_{i}^{\ast}
        =\frac{c a_{i}^{\frac{1}{\nu+1}}b_{i}^{-\frac{1}{\nu+1}}}{\sum_{j=1}^{m}a_{j}^{\frac{1}{\nu+1}}b_{j}^{\frac{\nu}{\nu+1}}}
        \qquad(i\in[m])
        .
    \]
    Finally, the corresponding minimum value is given by
    \[
        \sum_{i=1}^{m}a_{i}(z_{i}^{\ast})^{-\nu}
        =\left(\frac{\sum_{j=1}^{m}a_{j}^{\frac{1}{\nu+1}}b_{j}^{\frac{\nu}{\nu+1}}}{c}\right)^{\nu}\sum_{i=1}^{m}a_{i}^{\frac{1}{\nu+1}}b_{i}^{\frac{\nu}{\nu+1}}
        =\frac{1}{c^{\nu}}\left(\sum_{i=1}^{m}a_{i}^{\frac{1}{\nu+1}}b_{i}^{\frac{\nu}{\nu+1}}\right)^{\nu+1}
        .
    \]
\end{proof}

\subsubsection{Covering numbers and generalization gap}

In this paper, we define covering numbers as follows.

\begin{definition}[Covering number of a general metric space]\label{def:CoveringNumber}
    Suppose $(\mathcal{A},\rho)$ is a metric space. $\mathcal{C}\subset\mathcal{A}$ is called an $\epsilon$-cover of $\mathcal{A}$ if for every $a\in\mathcal{A}$, there exists $a'\in\mathcal{C}$ such that $\rho(a,a')\le\epsilon$. The $\epsilon$-covering number $\mathcal{N}(\mathcal{A},\rho,\epsilon)$ is the minimum cardinality of any $\epsilon$-cover $\mathcal{C}$ of $\mathcal{A}$.
\end{definition}

\begin{definition}[Covering number of a function space]\label{def:CoveringNumberFunctionSpace}
    Suppose $\mathcal{F}$ is a class of maps from $\mathcal{X}$ to $\mathcal{Y}$, where $\mathcal{Y}$ is equipped with a metric $\rho$. For inputs $\{x_{i}\}_{i\in[n]}\subset\mathcal{X}$, the covering number $\mathcal{N}_{\infty}(\mathcal{F},\rho,\epsilon;\{x_{i}\}_{i\in[n]})$ is the minimum cardinality of any $\epsilon$-cover $\mathcal{C}\subset\mathcal{F}$ such that for all $f\in\mathcal{F}$, there exists $f'\in\mathcal{C}$ satisfying
    \[
        \max_{i\in[n]}\rho(f(x_{i}),f'(x_{i}))
        \le\epsilon
        .
    \]
\end{definition}

To convert the entropy bounds into generalization bounds, we recall a standard route through empirical Rademacher complexity.

\begin{definition}[Empirical Rademacher complexity]\label{def:EmpiricalRademacherComplexity}
    Let $\mathcal{F}$ be a class of real-valued functions on $\mathcal{X}$. For inputs $\{x_{i}\}_{i\in[n]}\subset\mathcal{X}$, define the empirical Rademacher complexity by
    \[
        \hat{\mathfrak{R}}_{n}(\mathcal{F};\{x_{i}\}_{i\in[n]})
        =\mathbb{E}\left[\left.\sup_{f\in\mathcal{F}}\frac{1}{n}\sum_{i=1}^{n}\sigma_{i}f(x_{i})\right|\{x_{i}\}_{i\in[n]}\right]
        ,
    \]
    where $\sigma=(\sigma_{1},\dots,\sigma_{n})$ is a vector of independent Rademacher random variables.
\end{definition}

\begin{lemma}[Dudley-type entropy integral; cf.\ \citet{edelman2022inductive}, Lemma~A.2; \citet{dudley1967sizes}]\label{lem:DudleyEntropyIntegral}
    Let $\mathcal{F}$ be a class of real-valued functions on $\mathcal{X}$. Suppose $|f(x)|\le A$ holds for all $f\in\mathcal{F}$ and all $x\in\mathcal{X}$. Then, we have
    \[
        \hat{\mathfrak{R}}_{n}(\mathcal{F};\{x_{i}\}_{i\in[n]})
        \lesssim\inf_{\alpha>0}\left(\alpha+\int_{\alpha}^{A}\sqrt{\frac{\log\mathcal{N}_{\infty}(\mathcal{F},|\cdot|,\epsilon;\{x_{i}\}_{i\in[n]})}{n}}d\epsilon\right)
        .
    \]
\end{lemma}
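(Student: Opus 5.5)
This is a standard Dudley chaining bound. Consider first a class $\mathcal{F}$ with $|f|\le A$ on the sample. The plan is to build a sequence of progressively finer covers and telescope the Rademacher sum along them; the total error then splits into a coarse-scale piece, a sum of finite-class pieces, and a fine-scale residual.

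\textbf{Scales and nested approximations.} Set $\epsilon_{j}=A2^{-j}$ for $j\ge0$. For each $j$, let $\mathcal{C}_{j}$ be a minimal $\epsilon_{j}$-cover of $\mathcal{F}$ in the metric $\max_{i}|f(x_{i})-f'(x_{i})|$, with the convention $\mathcal{C}_{0}=\{0\}$; this choice of $\mathcal{C}_0$ is valid since $|f|\le A$. For each $f$, let $f_{j}\in\mathcal{C}_{j}$ be its $\epsilon_{j}$-approximation. Given $\alpha>0$, choose $J$ maximal with $\epsilon_{J}\ge\alpha$; if no such $J\ge1$ exists the bound is trivial.

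\textbf{Telescoping.} Write
\[
f=(f-f_{J})+\sum_{j=1}^{J}(f_{j}-f_{j-1}).
\]
The residual contributes at most
\[
\frac{1}{n}\sum_{i}\sigma_{i}\bigl(f-f_{J}\bigr)(x_{i})\le\epsilon_{J}\le 2\alpha
\]
to the supremum.

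\textbf{Finite-class terms.} Each increment $f_{j}-f_{j-1}$ ranges over a set of size at most $|\mathcal{C}_{j}||\mathcal{C}_{j-1}|\le|\mathcal{C}_{j}|^{2}$. Its empirical $\ell_{2}$ norm satisfies $\sqrt{\tfrac{1}{n}\sum_{i}(\cdot)^{2}}\le\epsilon_{j}+\epsilon_{j-1}=3\epsilon_{j}$. Massart's finite class lemma then gives
\[
\mathbb{E}\sup\frac{1}{n}\sum_{i}\sigma_{i}(f_{j}-f_{j-1})(x_{i})\le 3\epsilon_{j}\sqrt{\frac{2\log(|\mathcal{C}_{j}|^{2})}{n}}\lesssim\epsilon_{j}\sqrt{\frac{\log\mathcal{N}_{\infty}(\mathcal{F},|\cdot|,\epsilon_{j})}{n}}.
\]

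\textbf{Sum to integral.} Since $\epsilon_{j}=2(\epsilon_{j}-\epsilon_{j+1})$ and the covering number is nonincreasing in $\epsilon$,
\[
\epsilon_{j}\sqrt{\log\mathcal{N}(\epsilon_{j})}\le 2\int_{\epsilon_{j+1}}^{\epsilon_{j}}\sqrt{\log\mathcal{N}(\epsilon)}\,d\epsilon.
\]
Summing over $j=1,\dots,J$ gives at most $2\int_{\epsilon_{J+1}}^{A}\sqrt{\log\mathcal{N}(\epsilon)/n}\,d\epsilon$.

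The only delicate point is the lower limit: $\epsilon_{J+1}$ may be smaller than $\alpha$. Two fixes are available. One is to replace $\alpha$ by $\alpha/4$ in the choice of $J$; since the $\alpha$-term carries only a constant, this is absorbed into $\lesssim$ after reparametrizing. The other, cleaner fix is to stop the chain one level earlier, at the last $J$ with $\epsilon_{J+1}\ge\alpha$, i.e.\ $\epsilon_{J}\ge2\alpha$. Then $\epsilon_{J}<4\alpha$ by maximality, the residual is at most $4\alpha$, and the integral runs over $[\alpha,A]$ exactly, since the intervals $[\epsilon_{j+1},\epsilon_{j}]$ for $j\le J$ lie inside $[\alpha,A]$.

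Taking the expectation over $\sigma$, summing the three contributions, and taking the infimum over $\alpha$ yields the stated bound.
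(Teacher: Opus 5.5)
Your chaining argument is correct and is the standard proof of this bound. The choice $\epsilon_J\ge2\alpha$ in your second fix is the right one and gives $\hat{\mathfrak{R}}_{n}\le 4\alpha+12\int_{\alpha}^{A}\sqrt{\log\mathcal{N}_{\infty}(\epsilon)/n}\,d\epsilon$; your first fix, as phrased, goes the wrong way and should use $2\alpha$ rather than $\alpha/4$. The paper gives no proof of its own and simply cites \citet{edelman2022inductive} and \citet{dudley1967sizes}, which rest on exactly this chaining argument.
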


\begin{lemma}\label{lem:RademacherPowerLawSum}
    Fix an integer $J\ge1$, constants $C_{i}>0\ (i\in[J])$, $\nu_{i}\in[0,2)\ (i\in[J-1])$, and $\nu_{J}=2$. Suppose $\log\mathcal{N}_{\infty}(\mathcal{F},|\cdot|,\epsilon;\{x_{i}\}_{i\in[n]})\lesssim\sum_{i=1}^{J}C_{i}\epsilon^{-\nu_{i}}$ holds for all $\epsilon\in(0,A]$. Then, we have
    \begin{align*}
        \hat{\mathfrak{R}}_{n}(\mathcal{F};\{x_{i}\}_{i\in[n]})
        \lesssim\frac{1}{\sqrt{n}}\left(\sum_{i=1}^{J-1}\frac{A^{1-\nu_{i}/2}}{1-\nu_{i}/2}\sqrt{C_{i}}+\left[1+\log\left(1+\frac{A\sqrt{n}}{\sqrt{C_{J}}}\right)\right]\sqrt{C_{J}}\right)
        .
    \end{align*}
\end{lemma}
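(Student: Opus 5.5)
Apply Lemma~\ref{lem:DudleyEntropyIntegral}, split the integral at the natural scale of the logarithmic term, and integrate each power-law term separately.

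Since $\sqrt{\sum_i a_i}\le\sum_i\sqrt{a_i}$, the assumed entropy bound gives, for every $\epsilon\in(0,A]$,
\[
    \sqrt{\log\mathcal{N}_{\infty}(\mathcal{F},|\cdot|,\epsilon;\{x_{i}\}_{i\in[n]})}\lesssim\sum_{i=1}^{J}\sqrt{C_{i}}\,\epsilon^{-\nu_{i}/2}.
\]
Substituting into Lemma~\ref{lem:DudleyEntropyIntegral} with any $\alpha\in(0,A]$ yields
\[
    \hat{\mathfrak{R}}_{n}
    \lesssim\alpha+\frac{1}{\sqrt n}\sum_{i=1}^{J-1}\sqrt{C_i}\int_{\alpha}^{A}\epsilon^{-\nu_i/2}d\epsilon+\sqrt{\frac{C_J}{n}}\int_\alpha^A\epsilon^{-1}d\epsilon.
\]

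For $i\le J-1$ we have $\nu_i<2$, so the integrand is integrable at $0$. Extending the lower limit to $0$ gives
\[
    \int_{\alpha}^{A}\epsilon^{-\nu_i/2}d\epsilon\le\frac{A^{1-\nu_i/2}}{1-\nu_i/2},
\]
which does not depend on $\alpha$ and produces the first sum in the claim. The $J$-th term integrates to $\sqrt{C_J/n}\,\log(A/\alpha)$.

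It remains to choose $\alpha$ to balance $\alpha$ against $\sqrt{C_J/n}\log(A/\alpha)$. There are two cases.

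\begin{itemize}
    \item If $\sqrt{C_J/n}<A$, take $\alpha=\sqrt{C_J/n}$. This gives
    \[
        \alpha+\sqrt{\tfrac{C_J}{n}}\log\!\left(\tfrac{A}{\alpha}\right)=\sqrt{\tfrac{C_J}{n}}\left[1+\log\!\left(\tfrac{A\sqrt n}{\sqrt{C_J}}\right)\right]\le\sqrt{\tfrac{C_J}{n}}\left[1+\log\!\left(1+\tfrac{A\sqrt n}{\sqrt{C_J}}\right)\right].
    \]
    \item If $\sqrt{C_J/n}\ge A$, take $\alpha=A$ (or let $\alpha\uparrow A$ if the lemma is stated only for $\alpha<A$). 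Every integral then vanishes and the bound is $A\le\sqrt{C_J/n}$, which is dominated by the same expression since the logarithm is nonnegative.
\end{itemize}

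In both cases the terms assemble into the stated bound, after factoring out $1/\sqrt n$. The only delicate point is this case split, which keeps $\alpha\le A$ and is why $\log(1+\cdot)$ appears in the statement. The condition $\nu_i<2$ for $i<J$ is exactly what makes those integrals independent of $\alpha$.
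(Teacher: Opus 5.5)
Your proposal is correct and follows the paper's proof essentially step for step. Both arguments use subadditivity of the square root, restrict the Dudley integral to $0<\alpha\le A$, bound the $i\le J-1$ integrals by $A^{1-\nu_i/2}/(1-\nu_i/2)$ and the last one by $\log(A/\alpha)$, and choose $\alpha=\min\{A,\sqrt{C_J/n}\}$, which your two cases simply spell out. Like the paper, you also rely without saying so on the uniform bound $|f(x)|\le A$ that Lemma~\ref{lem:DudleyEntropyIntegral} requires, since the statement leaves it implicit.
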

\begin{proof}
    Using $(\sum_{i=1}^{J}a_{i})^{1/2}\le\sum_{i=1}^{J}\sqrt{a_{i}}$ for $a_{i}\ge0$, we have
    \begin{align*}
        \hat{\mathfrak{R}}_{n}(\mathcal{F};\{x_{i}\}_{i\in[n]})
        &\lesssim\inf_{0<\alpha\le A}\left(\alpha+\int_{\alpha}^{A}\sqrt{\frac{\sum_{i=1}^{J}C_{i}\epsilon^{-\nu_{i}}}{n}}d\epsilon\right)\\
        &\le\inf_{0<\alpha\le A}\left(\alpha+\sum_{i=1}^{J}\sqrt{\frac{C_{i}}{n}}\int_{\alpha}^{A}\epsilon^{-\nu_{i}/2}d\epsilon\right)
        .
    \end{align*}
    Note that each integral can be bounded as
    \begin{align*}
        &\int_{\alpha}^{A}\epsilon^{-\nu_{i}/2}d\epsilon
        =\frac{A^{1-\nu_{i}/2}-\alpha^{1-\nu_{i}/2}}{1-\nu_{i}/2}
        \le\frac{A^{1-\nu_{i}/2}}{1-\nu_{i}/2}
        \qquad(i\in[J-1])
        ,\\
        &\int_{\alpha}^{A}\epsilon^{-\nu_{J}/2}d\epsilon
        =\int_{\alpha}^{A}\epsilon^{-1}d\epsilon
        =\log(A/\alpha)
        .
    \end{align*}
    Hence, we have
    \begin{align*}
        \hat{\mathfrak{R}}_{n}(\mathcal{F};\{x_{i}\}_{i\in[n]})
        \lesssim\inf_{0<\alpha\le A}\left[\alpha+\frac{1}{\sqrt{n}}\left(\sum_{i=1}^{J-1}\frac{A^{1-\nu_{i}/2}}{1-\nu_{i}/2}\sqrt{C_{i}}+\sqrt{C_{J}}\log(A/\alpha)\right)\right]
        .
    \end{align*}
    Taking $\alpha=\min\{A,\sqrt{C_{J}/n}\}$ proves the claim.
\end{proof}

\begin{lemma}[\citet{bartlett2002rademacher}]\label{lem:RademacherGeneralization}
    Let $\mathcal{D}$ be a probability distribution on $\mathcal{X}\times\mathbb{R}$, and let $\mathcal{L}:\mathbb{R}\times\mathbb{R}\to\mathbb{R}$ be a $B_{\mathcal{L}}$-bounded loss function that is $L_{\mathcal{L}}$-Lipschitz in its first argument. For each $f\in\mathcal{F}$, define the population risk and empirical risk by
    \[
        \mathcal{R}(f)
        =\mathbb{E}_{(X,Y)\sim\mathcal{D}}[\mathcal{L}(f(X),Y)]
        \qquad
        \text{and}\qquad
        \hat{\mathcal{R}}_{n}(f)
        =\frac{1}{n}\sum_{i=1}^{n}\mathcal{L}(f(x_{i}),Y_{i})
        ,
    \]
    respectively, where $\{(x_{i},Y_{i})\}_{i\in[n]}$ are i.i.d. samples from $\mathcal{D}$. Then, for any $\delta>0$, with probability at least $1-\delta$, the bound
    \[
        |\mathcal{R}(f)-\hat{\mathcal{R}}_{n}(f)|
        \lesssim L_{\mathcal{L}}\hat{\mathfrak{R}}_{n}(\mathcal{F};\{x_{i}\}_{i\in[n]})+B_{\mathcal{L}}\sqrt{\frac{\log(1/\delta)}{n}}
        ,
    \]
    holds simultaneously for all $f\in\mathcal{F}$.
\end{lemma}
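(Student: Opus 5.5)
The plan is the standard symmetrization--concentration--contraction route. Throughout I work with the loss class $\mathcal{G}=\{(x,y)\mapsto\mathcal{L}(f(x),y)\mid f\in\mathcal{F}\}$, whose members take values in $[-B_{\mathcal{L}},B_{\mathcal{L}}]$. I also write $z_{i}=(x_{i},Y_{i})$. The target is to bound the two one-sided deviations
\[
\Phi_{+}=\sup_{f\in\mathcal{F}}(\mathcal{R}(f)-\hat{\mathcal{R}}_{n}(f)),
\qquad
\Phi_{-}=\sup_{f\in\mathcal{F}}(\hat{\mathcal{R}}_{n}(f)-\mathcal{R}(f))
\]
separately, each with probability $1-\delta/4$. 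The event for the absolute value $|\mathcal{R}(f)-\hat{\mathcal{R}}_{n}(f)|$ then follows by a union bound, and the constants are absorbed into $\lesssim$.

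\textbf{Step 1 (concentration of the supremum).} Replacing one sample $z_{i}$ changes $\Phi_{\pm}$ by at most $2B_{\mathcal{L}}/n$. McDiarmid's inequality therefore gives, with probability at least $1-\delta/4$,
\[
\Phi_{\pm}\le\mathbb{E}\Phi_{\pm}+2B_{\mathcal{L}}\sqrt{\log(4/\delta)/(2n)} .
\]

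\textbf{Step 2 (symmetrization).} Introduce a ghost sample and Rademacher signs. By Jensen's inequality, $\mathbb{E}\Phi_{\pm}\le2\,\mathbb{E}\,\hat{\mathfrak{R}}_{n}(\mathcal{G};\{z_{i}\})$; the sign symmetry of the $\sigma_{i}$ covers both cases $+$ and $-$.

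\textbf{Step 3 (back to the empirical complexity).} The map $\{z_{i}\}\mapsto\hat{\mathfrak{R}}_{n}(\mathcal{G};\{z_{i}\})$ again has bounded differences $2B_{\mathcal{L}}/n$. A second application of McDiarmid gives, with probability at least $1-\delta/4$,
\[
\mathbb{E}\,\hat{\mathfrak{R}}_{n}(\mathcal{G})\le\hat{\mathfrak{R}}_{n}(\mathcal{G})+2B_{\mathcal{L}}\sqrt{\log(4/\delta)/(2n)} .
\]
This concentration step is done at the level of $\mathcal{G}$ rather than $\mathcal{F}$ on purpose, because $\mathcal{F}$ itself need not be bounded.

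\textbf{Step 4 (contraction).} Condition on the sample. Each $\varphi_{i}(t)=\mathcal{L}(t,Y_{i})$ is $L_{\mathcal{L}}$-Lipschitz. The Ledoux--Talagrand contraction inequality, in the form for the supremum of $\sum_{i}\sigma_{i}\varphi_{i}(t_{i})$ that requires no centering $\varphi_{i}(0)=0$, then yields
\[
\hat{\mathfrak{R}}_{n}(\mathcal{G};\{z_{i}\})\le L_{\mathcal{L}}\,\hat{\mathfrak{R}}_{n}(\mathcal{F};\{x_{i}\}) .
\]
The uncentered form suffices because adding a constant $\sum_{i}\sigma_{i}c_{i}$, independent of $f$, does not change the expectation of the supremum.

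Chaining Steps 1--4 gives $\Phi_{\pm}\lesssim L_{\mathcal{L}}\hat{\mathfrak{R}}_{n}(\mathcal{F})+B_{\mathcal{L}}\sqrt{\log(1/\delta)/n}$. The union bound over the four events uses total failure probability $\delta$, although two of the events ($\Phi_{+}$ and $\Phi_{-}$ in Step 3) coincide. Since $\sqrt{\log(4/\delta)}\lesssim1+\sqrt{\log(1/\delta)}$, the bound has the claimed form $\lesssim L_{\mathcal{L}}\hat{\mathfrak{R}}_{n}(\mathcal{F})+B_{\mathcal{L}}\sqrt{\log(1/\delta)/n}$ when $\delta$ is bounded away from $1$. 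For $\delta$ close to $1$, the additive constant is absorbed by first applying the result with $\delta\wedge e^{-1}$, which only strengthens the statement.

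The main point requiring care is Step 4. The contraction must be applied with sample-dependent Lipschitz functions $\varphi_{i}$ and with no boundedness of $\mathcal{F}$. I would invoke the version of the comparison inequality in Ledoux--Talagrand, or its proof in Mohri et al., which handles distinct $\varphi_{i}$ by peeling off one coordinate at a time. This version only needs $\varphi_{i}$ to be Lipschitz, not bounded or centered.
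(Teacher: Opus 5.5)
Steps 1--4 are the standard Bartlett--Mendelson argument that the paper cites; the paper gives no proof of its own. Those steps are sound. In particular, running the second McDiarmid step on $\hat{\mathfrak{R}}_{n}(\mathcal{G};\{z_{i}\})$ rather than on $\hat{\mathfrak{R}}_{n}(\mathcal{F};\{x_{i}\})$ is right, and so is the uncentered contraction inequality without absolute values, which matches the paper's definition of $\hat{\mathfrak{R}}_{n}$. What your chain actually proves is $\sup_{f}|\mathcal{R}(f)-\hat{\mathcal{R}}_{n}(f)|\lesssim L_{\mathcal{L}}\hat{\mathfrak{R}}_{n}(\mathcal{F};\{x_{i}\})+B_{\mathcal{L}}\sqrt{\log(4/\delta)/n}$. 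This has the stated form whenever $\delta$ is bounded away from $1$.

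The one step that fails is your last sentence, the reduction for $\delta$ close to $1$. Replacing $\delta$ by $\delta\wedge e^{-1}$ strengthens the probability guarantee but weakens the bound. It leaves a confidence term of order $B_{\mathcal{L}}/\sqrt{n}$, which is not $\lesssim B_{\mathcal{L}}\sqrt{\log(1/\delta)/n}$ as $\delta\uparrow1$.

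No argument can repair this, because the statement as written is false in that regime. Take the following instance:
\begin{itemize}
\item $\mathcal{F}=\{f_{0}\}$, so that $\hat{\mathfrak{R}}_{n}(\mathcal{F};\{x_{i}\})=\frac{1}{n}\sum_{i}\mathbb{E}[\sigma_{i}]f_{0}(x_{i})=0$;
\item the loss $\mathcal{L}(t,y)=B_{\mathcal{L}}\min\{1,\max\{0,y\}\}$, which is bounded and ignores its first argument;
\item $Y\sim\mathrm{Bernoulli}(1/2)$ and $n=1$.
\end{itemize}
Then $|\mathcal{R}(f_{0})-\hat{\mathcal{R}}_{1}(f_{0})|=B_{\mathcal{L}}/2$ almost surely. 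For any universal constant $C$, the claimed event is therefore empty once $\log(1/\delta)<1/(4C^{2})$, yet it should have probability at least $1-\delta>0$. For odd $n$ the same happens, since the gap is then at least $B_{\mathcal{L}}/(2n)$.

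So the lemma has to be read with $\log(1/\delta)$ replaced by $\log(2/\delta)$, which is how Bartlett and Mendelson state it, or equivalently restricted to, say, $\delta\le1/2$. With that reading your proof is complete. You should drop the $\delta\wedge e^{-1}$ remark rather than try to rescue it.
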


\section{Metric entropy bounds}

\subsection{Basic covering number bounds for matrices}

We first collect elementary covering number bounds for matrix classes that will be used as ingredients in the later interpolation argument.

\begin{fact}[\citet{vershynin2018high}, Corollary~4.2.13]\label{fact:EntropyEuclideanBall}
    Let $\mathcal{B}=\left\{a\in\mathbb{R}^{\ell}\mid\|a\|_{2}\le1\right\}$ be the Euclidean unit ball. Then, we have
    \[
        \ell\log\frac{1}{\epsilon}
        \le\log\mathcal{N}\left(\mathcal{B},\|\cdot\|_{2},\epsilon\right)
        \le\ell\log\left(\frac{2}{\epsilon}+1\right)
        .
    \]
\end{fact}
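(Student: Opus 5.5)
The plan is the standard volumetric argument in $\mathbb{R}^{\ell}$, comparing Lebesgue volumes of Euclidean balls. Write $\mathrm{vol}$ for Lebesgue measure and use the scaling identity $\mathrm{vol}(x+r\mathcal{B})=r^{\ell}\,\mathrm{vol}(\mathcal{B})$ for $x\in\mathbb{R}^{\ell}$ and $r>0$. The two inequalities are proved separately: a covering (lower) bound and a packing (upper) bound. Note that Definition~\ref{def:CoveringNumber} requires the cover $\mathcal{C}$ to lie in $\mathcal{B}$ itself; the upper-bound construction will respect this.

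\textbf{Lower bound.} Let $\mathcal{C}=\{a_{1},\dots,a_{K}\}\subset\mathcal{B}$ be any $\epsilon$-cover of $\mathcal{B}$ with respect to $\|\cdot\|_{2}$. Then $\mathcal{B}\subset\bigcup_{k}(a_{k}+\epsilon\mathcal{B})$. Subadditivity of volume gives
\[
\mathrm{vol}(\mathcal{B})\le K\epsilon^{\ell}\,\mathrm{vol}(\mathcal{B}),
\]
so $K\ge\epsilon^{-\ell}$. Taking logarithms and minimizing over covers yields $\log\mathcal{N}(\mathcal{B},\|\cdot\|_{2},\epsilon)\ge\ell\log(1/\epsilon)$. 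This holds for every $\epsilon>0$; for $\epsilon\ge1$ it is trivial.

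\textbf{Upper bound.} Take a maximal $\epsilon$-separated subset $\mathcal{P}=\{a_{1},\dots,a_{K}\}\subset\mathcal{B}$, meaning $\|a_{j}-a_{k}\|_{2}>\epsilon$ for $j\neq k$. Such a set is finite, which follows from the volume bound below applied to any finite subfamily. By maximality, every $a\in\mathcal{B}$ lies within distance $\epsilon$ of some $a_{k}$, so $\mathcal{P}$ is an $\epsilon$-cover contained in $\mathcal{B}$. The balls $a_{k}+(\epsilon/2)\mathcal{B}$ are pairwise disjoint, and by the triangle inequality all of them lie in $(1+\epsilon/2)\mathcal{B}$. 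Comparing volumes,
\[
K(\epsilon/2)^{\ell}\le(1+\epsilon/2)^{\ell},
\]
hence $K\le(2/\epsilon+1)^{\ell}$. Taking logarithms gives the upper bound.

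There is no real obstacle. The only points needing care are that the cover used for the upper bound lies inside $\mathcal{B}$, which is guaranteed because the packing points are chosen from $\mathcal{B}$, and that maximal separated sets exist and are finite, which is covered by the disjointness and volume argument above.
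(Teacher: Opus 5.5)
Your proposal is correct and takes essentially the same approach as the source. The paper gives no proof of its own and simply cites \citet{vershynin2018high}, Corollary~4.2.13, which is proved there by exactly this volume comparison: a covering lower bound $K\ge\epsilon^{-\ell}$, and a maximal $\epsilon$-separated subset of $\mathcal{B}$ whose disjoint $(\epsilon/2)$-balls lie in $(1+\epsilon/2)\mathcal{B}$. Your attention to keeping the cover inside $\mathcal{B}$ matches the proper-cover requirement of Definition~\ref{def:CoveringNumber}.
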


By scaling, the same result implies that, for $\mathcal{A}=\left\{a\in\mathbb{R}^{\ell}\mid\|a\|_{2}\le C\right\}$,
\[
    \ell\log\frac{C}{\epsilon}
    \le\log\mathcal{N}\left(\mathcal{A},\|\cdot\|_{2},\epsilon\right)
    \le\ell\log\left(\frac{2C}{\epsilon}+1\right)
    .
\]

\begin{corollary}\label{cor:EntropyEuclideanBall}
    Define
    \[
        \mathcal{W}
        =\left\{W\in\mathbb{R}^{\ell\times m}\mid\|W\|_{F}\le C\right\}
        ,
    \]
    where $\|\cdot\|_{F}$ denotes the Frobenius norm. Then, we have
    \[
        \ell m\log\frac{C}{\epsilon}
        \le\log\mathcal{N}\left(\mathcal{W},\|\cdot\|_{F},\epsilon\right)
        \le\ell m\log\left(\frac{2C}{\epsilon}+1\right)
        .
    \]
\end{corollary}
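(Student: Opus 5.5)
The statement is Corollary~\ref{cor:EntropyEuclideanBall}. I will derive it directly from Fact~\ref{fact:EntropyEuclideanBall} by identifying matrices with vectors. The map $\mathrm{vec}:\mathbb{R}^{\ell\times m}\to\mathbb{R}^{\ell m}$ stacks the columns of $W$. It is a linear bijection, and it is an isometry from $(\mathbb{R}^{\ell\times m},\|\cdot\|_{F})$ onto $(\mathbb{R}^{\ell m},\|\cdot\|_{2})$, because $\|W\|_{F}^{2}=\sum_{i,j}W_{ij}^{2}=\|\mathrm{vec}(W)\|_{2}^{2}$. Under this map, $\mathcal{W}$ is carried exactly onto the Euclidean ball $\mathcal{A}=\{a\in\mathbb{R}^{\ell m}\mid\|a\|_{2}\le C\}$.

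Since an isometric bijection preserves the covering relation in both directions, it preserves covering numbers. Concretely, $\mathcal{C}\subset\mathcal{W}$ is an $\epsilon$-cover of $\mathcal{W}$ in $\|\cdot\|_{F}$ if and only if $\mathrm{vec}(\mathcal{C})$ is an $\epsilon$-cover of $\mathcal{A}$ in $\|\cdot\|_{2}$, and the two sets have the same cardinality. Therefore
\[
\mathcal{N}(\mathcal{W},\|\cdot\|_{F},\epsilon)=\mathcal{N}(\mathcal{A},\|\cdot\|_{2},\epsilon).
\]

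It then suffices to apply the scaled form of Fact~\ref{fact:EntropyEuclideanBall} stated just after it, with ambient dimension $\ell m$ and radius $C$. This gives $\ell m\log(C/\epsilon)\le\log\mathcal{N}\le\ell m\log(2C/\epsilon+1)$, which is the claim.

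The scaling step itself follows from dilation: $a\mapsto Ca$ maps an $(\epsilon/C)$-cover of the unit ball onto an $\epsilon$-cover of $\mathcal{A}$, and the inverse dilation maps covers back the same way. There is no real obstacle here. The only point needing care is that the covers in Definition~\ref{def:CoveringNumber} are required to lie inside the set. The vectorization bijection respects this, since it maps subsets of $\mathcal{W}$ to subsets of $\mathcal{A}$ and back, so nothing is lost in passing between the two formulations.
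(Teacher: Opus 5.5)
Your proposal is correct and matches the paper's argument. The paper states the corollary without a written proof, as an immediate consequence of the scaled form of Fact~\ref{fact:EntropyEuclideanBall}, and the intended step is exactly your identification of $(\mathbb{R}^{\ell\times m},\|\cdot\|_{F})$ with $(\mathbb{R}^{\ell m},\|\cdot\|_{2})$ by vectorization. Your remark that this isometric bijection preserves proper covers, as required by Definition~\ref{def:CoveringNumber}, is the one detail worth making explicit.
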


\begin{proposition}[\citet{ledent2025generalization}, Proposition~F.1 (modified)]\label{prop:Ledent2025PropF1}
    Define
    \[
        \mathcal{W}=\left\{
        W\in\mathbb{R}^{\ell\times m}\mid\operatorname{rank}(W)\le r,\ \|W\|_{2}\le C\right\}
        ,
    \]
    where $\|\cdot\|_{2}$ denotes the operator norm and $r\le\min\{\ell,m\}$. Then, we have
    \[
        \log\mathcal{N}\left(\mathcal{W},\|\cdot\|_{F},\epsilon\right)
        \le(\ell+m)r\log\left(\frac{8C\sqrt{r}}{\epsilon}+1\right)
        .
    \]
\end{proposition}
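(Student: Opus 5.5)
We prove Proposition~\ref{prop:Ledent2025PropF1} by factorizing each low-rank matrix through the SVD and covering the factors separately. Every $W\in\mathcal{W}$ can be written as $W=U\Sigma V^{\top}$ with $U\in\mathbb{R}^{\ell\times r}$, $V\in\mathbb{R}^{m\times r}$ having orthonormal columns (padding with zero singular values if $\operatorname{rank}(W)<r$) and $\Sigma=\operatorname{diag}(\sigma_{1},\dots,\sigma_{r})$ with $0\le\sigma_{j}\le C$. Setting $A=U\Sigma\in\mathbb{R}^{\ell\times r}$ and $B=V\in\mathbb{R}^{m\times r}$, we get $W=AB^{\top}$ with $\|A\|_{2}\le C$, $\|B\|_{2}\le1$, and hence $\|A\|_{F}\le C\sqrt{r}$ and $\|B\|_{F}\le\sqrt{r}$. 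It therefore suffices to cover the set of pairs $(A,B)$.

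The key estimate is a perturbation bound. For another pair $(A',B')$,
\[
\|AB^{\top}-A'B'^{\top}\|_{F}\le\|A-A'\|_{F}\|B\|_{2}+\|A'\|_{2}\|B-B'\|_{F},
\]
which follows from Lemma~\ref{lem:MatrixProduct2Frob}. We keep $\|B\|_{2}\le1$ and need control of $\|A'\|_{2}$. To get it, we project each cover point of $A$ back onto the Frobenius ball $\{\|A\|_{F}\le C\sqrt r\}$ intersected with the spectral ball $\{\|A\|_{2}\le C\}$. Projection onto this closed convex set is nonexpansive in Frobenius norm, so the cover radius is at most doubled and we may assume $\|A'\|_{2}\le C$.

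The covers themselves come from Corollary~\ref{cor:EntropyEuclideanBall}. Take an $\epsilon/(4)$-cover of $\{\|A\|_{F}\le C\sqrt r\}$ in Frobenius norm; after projection it becomes an $\epsilon/2$-cover, with log-cardinality at most $\ell r\log(8C\sqrt r/\epsilon+1)$. Similarly, take an $\epsilon/(2C)$-cover of $\{\|B\|_{F}\le\sqrt r\}$, projected onto the convex set $\{\|B\|_{2}\le1\}$. We need that projection because the first term of the perturbation bound uses $\|B\|_{2}$ of the true $B$ while the cover point $B'$ enters only through $\|B-B'\|_{F}$. This cover has log-cardinality at most $mr\log(4C\sqrt r/\epsilon+1)\le mr\log(8C\sqrt r/\epsilon+1)$.

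Combining the two covers, the products $A'B'^{\top}$ form an $\epsilon$-cover of $\mathcal{W}$ of log-size at most $(\ell+m)r\log(8C\sqrt r/\epsilon+1)$. The cover points need not lie in $\mathcal{W}$. If an internal cover is required, the usual halving trick (cover at radius $\epsilon/2$, then replace each point by a nearby element of $\mathcal{W}$) changes only constants. To hit the exact constant $8$ one can absorb these factors by tightening the split, e.g.\ $\epsilon/2$ for each factor without the projection doubling, since projection onto a convex set containing the original point does not increase distance. This bookkeeping of constants is the main fiddly step; the structural argument is routine.
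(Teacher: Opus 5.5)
Your argument is correct and follows the same route as the paper. Both proofs factor $W$ through its SVD, cover the two $r$-column factors in Frobenius norm, and combine the covers via the product perturbation bound from Lemma~\ref{lem:MatrixProduct2Frob}; the paper uses the balanced split $\|W_{1}\|_{2},\|W_{2}\|_{2}\le\sqrt{C}$ and gets proper factor covers from Lemma~\ref{lem:EntropyMatrixFrobenius}, whereas you use the asymmetric split $A=U\Sigma$, $B=V$ with metric projection.

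Your constant bookkeeping is more pessimistic than it needs to be. Projecting onto the spectral balls never increases the distance to points already in them, so there is no doubling. Moreover, the projected factors give $\operatorname{rank}(A'B'^{\top})\le r$ and $\|A'B'^{\top}\|_{2}\le C$, so your product cover already lies in $\mathcal{W}$ and no halving step is needed. That properness is the real reason to project $B'$; the perturbation bound itself does not require it.
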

\begin{proof}
    For $W\in\mathcal{W}$, we can write $W=W_{1}W_{2}^{\top}$ with $W_{1}\in\mathbb{R}^{\ell\times r}$ and $W_{2}\in\mathbb{R}^{m\times r}$ satisfying
    \[
        \|W_{1}\|_{2}\le\sqrt{C},\quad\|W_{2}\|_{2}\le\sqrt{C}
        .
    \]
    It follows that for any $W,W'\in\mathcal{W}$,
    \begin{align*}
        &\|W-W'\|_{F}
        =\|W_{1}W_{2}^{\top}-W_{1}'W_{2}'^{\top}\|_{F}
        \le\|W_{1}(W_{2}-W_{2}')^{\top}\|_{F}+\|(W_{1}-W_{1}')W_{2}'^{\top}\|_{F}\\
        &\le\|W_{1}\|_{2}\|W_{2}-W_{2}'\|_{F}+\|W_{1}-W_{1}'\|_{F}\|W_{2}'\|_{2}
        \le\sqrt{C}\left(\|W_{1}-W_{1}'\|_{F}+\|W_{2}-W_{2}'\|_{F}\right)
        ,
    \end{align*}
    where the first inequality follows from the triangle inequality and the second inequality follows from Lemma~\ref{lem:MatrixProduct2Frob}. Set
    \[
        \mathcal{W}_{1}
        =\{W\in\mathbb{R}^{\ell\times r}\mid\|W\|_{2}\le\sqrt{C}\}
        ,\quad
        \mathcal{W}_{2}
        =\{W\in\mathbb{R}^{m\times r}\mid\|W\|_{2}\le\sqrt{C}
        \}
        .
    \]
    Then, by Lemma~\ref{lem:EntropyMatrixFrobenius}, we have
    \[
        \mathcal{N}\left(\mathcal{W}_{1},\|\cdot\|_{F},\frac{\epsilon}{2\sqrt{C}}\right)
        \le\left(\frac{8C\sqrt{r}}{\epsilon}+1\right)^{\ell r}
        ,\quad
        \mathcal{N}\left(\mathcal{W}_{2},\|\cdot\|_{F},\frac{\epsilon}{2\sqrt{C}}\right)
        \le\left(\frac{8C\sqrt{r}}{\epsilon}+1\right)^{mr}
        .
    \]
    Thus, we have
    \[
        \mathcal{N}\left(\mathcal{W},\|\cdot\|_{F},\epsilon\right)
        \le
        \mathcal{N}\left(\mathcal{W}_{1},\|\cdot\|_{F},\frac{\epsilon}{2\sqrt{C}}\right)
        \mathcal{N}\left(\mathcal{W}_{2},\|\cdot\|_{F},\frac{\epsilon}{2\sqrt{C}}\right)
        \le
        \left(\frac{8C\sqrt{r}}{\epsilon}+1\right)^{(\ell+m)r}
        .
    \]
\end{proof}

\begin{lemma}\label{lem:EntropyMatrixFrobenius}
    Define
    \[
        \mathcal{W}
        =\{W\in\mathbb{R}^{\ell\times m}\mid\|W\|_{2}\le C,\ \operatorname{rank}(W)\le r\}
        ,
    \]
    where $\|\cdot\|_{2}$ denotes the operator norm. Then, it holds that
    \[
        \log\mathcal{N}\left(\mathcal{W},\|\cdot\|_{F},\epsilon\right)
        \le \ell m\log\left(\frac{4C\sqrt{r}}{\epsilon}+1\right)
        .
    \]
\end{lemma}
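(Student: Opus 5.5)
The approach is a volumetric argument: embed $\mathcal{W}$ in a Frobenius ball of the ambient space $\mathbb{R}^{\ell\times m}$ and apply Corollary~\ref{cor:EntropyEuclideanBall}. The rank constraint enters only through the relation between the spectral and Frobenius norms. If $\operatorname{rank}(W)\le r$ and $\|W\|_{2}\le C$, then
\[
    \|W\|_{F}^{2}=\sum_{i\le r}\sigma_{i}(W)^{2}\le rC^{2},
\]
so $\mathcal{W}\subset\mathcal{B}_{F}:=\{W\in\mathbb{R}^{\ell\times m}\mid\|W\|_{F}\le C\sqrt{r}\}$.

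Covering $\mathcal{B}_{F}$ directly would yield an $\epsilon$-cover, but its centers need not lie in $\mathcal{W}$. We therefore use the standard fact that an external $\epsilon/2$-cover can be converted into an internal $\epsilon$-cover. This explains the factor $4$ in the claim, as opposed to the $2$ in Corollary~\ref{cor:EntropyEuclideanBall}.

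The steps are as follows.
\begin{enumerate}
    \item Take a minimal $\epsilon/2$-cover $\{V_{k}\}$ of $\mathcal{B}_{F}$ in $\|\cdot\|_{F}$. By Corollary~\ref{cor:EntropyEuclideanBall} with radius $C\sqrt{r}$ and scale $\epsilon/2$, its log-cardinality is at most $\ell m\log(4C\sqrt{r}/\epsilon+1)$.
    \item For each $k$ such that the ball $B_{F}(V_{k},\epsilon/2)$ meets $\mathcal{W}$, choose some $W_{k}\in\mathcal{W}$ in that ball.
    \item Given any $W\in\mathcal{W}$, it lies in $\mathcal{B}_{F}$, so some $V_{k}$ satisfies $\|W-V_{k}\|_{F}\le\epsilon/2$. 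That ball therefore meets $\mathcal{W}$, a point $W_{k}$ was selected for it, and
    \[
        \|W-W_{k}\|_{F}\le\|W-V_{k}\|_{F}+\|V_{k}-W_{k}\|_{F}\le\epsilon.
    \]
\end{enumerate}
Hence $\{W_{k}\}\subset\mathcal{W}$ is an $\epsilon$-cover, and its cardinality is at most that of $\{V_{k}\}$, which gives the bound.

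No real obstacle arises. The only care needed is the internal-cover conversion, because Definition~\ref{def:CoveringNumber} requires $\mathcal{C}\subset\mathcal{A}$. This requirement matters here since Proposition~\ref{prop:Ledent2025PropF1} uses covers of the factor classes $\mathcal{W}_{1},\mathcal{W}_{2}$ whose elements must keep spectral norm at most $\sqrt{C}$. In that application the lemma is used with $r$ columns and the full-rank case, so $\|W\|_{F}\le\sqrt{C}\sqrt{r}$ is exactly the bound needed.
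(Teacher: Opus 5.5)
Your proposal is correct and matches the paper's proof: you embed $\mathcal{W}$ in the Frobenius ball of radius $C\sqrt{r}$, take an $\epsilon/2$-cover of that ball via Corollary~\ref{cor:EntropyEuclideanBall}, and convert this external cover into a proper $\epsilon$-cover of no larger cardinality. The only difference is that you write out the properization step inline, whereas the paper obtains it by citing Lemma~\ref{lem:ProperCover}.
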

\begin{proof}
    For any $W\in\mathcal{W}$, we have
    \[
        \|W\|_{F}
        =\left(\sum_{i=1}^{\operatorname{rank}(W)}\sigma_{i}^{2}(W)\right)^{1/2}
        \le C\sqrt{r}
        .
    \]
    Hence, $\mathcal{W}$ is contained in the Frobenius ball $\mathcal{B}_{F}=\{W\in\mathbb{R}^{\ell\times m}\mid\|W\|_{F}\le C\sqrt{r}\}$. By Corollary~\ref{cor:EntropyEuclideanBall}, there exists an $\epsilon/2$-cover
    $\bar{\mathcal{C}}$ of $\mathcal{B}_{F}$ with respect to the Frobenius norm such that
    \[
        |\bar{\mathcal{C}}|
        \le \left(\frac{4C\sqrt{r}}{\epsilon}+1\right)^{\ell m}
        .
    \]
    Since $\mathcal{W}\subset\mathcal{B}_{F}$, the same set $\bar{\mathcal{C}}$ is an external $\epsilon/2$-cover of $\mathcal{W}$ with respect to the Frobenius norm. Applying Lemma~\ref{lem:ProperCover} to this external cover yields a proper $\epsilon$-cover $\mathcal{C}\subset\mathcal{W}$ satisfying $|\mathcal{C}|\le|\bar{\mathcal{C}}|$. Therefore, we have
    \[
        \mathcal{N}\left(\mathcal{W},\|\cdot\|_{F},\epsilon\right)
        \le \left(\frac{4C\sqrt{r}}{\epsilon}+1\right)^{\ell m}
        .
    \]
\end{proof}

\begin{lemma}[Properization of external empirical covers]\label{lem:ProperCover}
    Let $\mathcal{F}$ be a function class. Suppose that $\bar{\mathcal{C}}$ is a finite set of functions, not necessarily contained in $\mathcal{F}$, such that for every $f\in\mathcal{F}$ there exists $\bar{f}\in\bar{\mathcal{C}}$ satisfying $\max_{i\in[n]}\rho(f(x_{i}),\bar{f}(x_{i}))\le\epsilon$. Then there exists a proper $2\epsilon$-cover $\mathcal{C}\subset \mathcal{F}$ of $\mathcal{F}$ with $|\mathcal{C}|\le|\bar{\mathcal{C}}|$.
\end{lemma}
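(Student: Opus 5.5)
The plan is the standard ``projection onto the class'' argument. Given the external cover $\bar{\mathcal{C}}$, I would first discard every $\bar f\in\bar{\mathcal{C}}$ that is not within distance $\epsilon$ of any member of $\mathcal{F}$, that is, every $\bar f$ for which no $f\in\mathcal{F}$ satisfies $\max_{i\in[n]}\rho(f(x_{i}),\bar f(x_{i}))\le\epsilon$. Call the surviving set $\bar{\mathcal{C}}'\subset\bar{\mathcal{C}}$. It is still an external $\epsilon$-cover of $\mathcal{F}$: for every $f\in\mathcal{F}$, the element $\bar f$ guaranteed by the hypothesis witnesses that $\bar f$ is $\epsilon$-close to some member of $\mathcal{F}$, so $\bar f$ survives.

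Next, for each $\bar f\in\bar{\mathcal{C}}'$ I would use the axiom of choice, or simply pick one element since the set is finite, to select some $\pi(\bar f)\in\mathcal{F}$ with $\max_{i}\rho(\pi(\bar f)(x_{i}),\bar f(x_{i}))\le\epsilon$. Then I set $\mathcal{C}=\{\pi(\bar f):\bar f\in\bar{\mathcal{C}}'\}$. By construction $\mathcal{C}\subset\mathcal{F}$, and $|\mathcal{C}|\le|\bar{\mathcal{C}}'|\le|\bar{\mathcal{C}}|$.

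To verify the covering property, fix $f\in\mathcal{F}$ and take $\bar f\in\bar{\mathcal{C}}'$ with $\max_{i}\rho(f(x_{i}),\bar f(x_{i}))\le\epsilon$. For each $i$, the triangle inequality for the metric $\rho$ gives
\[
\rho(f(x_{i}),\pi(\bar f)(x_{i}))\le\rho(f(x_{i}),\bar f(x_{i}))+\rho(\bar f(x_{i}),\pi(\bar f)(x_{i}))\le2\epsilon .
\]
Taking the maximum over $i\in[n]$ finishes the proof.

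There is no real obstacle here. The only points needing care are these:
\begin{itemize}
\item Existence of $\pi(\bar f)$ uses the inequality ``$\le\epsilon$'' attained by an actual member of $\mathcal{F}$, which holds by the pruning step, so no infimum or closedness issue arises.
\item $\rho$ must be a genuine metric on the output space. This is true for $\|\cdot\|_{2\to\infty}$, $|\cdot|$, and the Frobenius norm, the last covering the parameter-space use in Lemma~\ref{lem:EntropyMatrixFrobenius}, where one takes $n=1$ and identity evaluation.
\end{itemize}
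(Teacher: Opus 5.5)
Your proof is correct and takes the same approach as the paper's proof: discard the elements of $\bar{\mathcal{C}}$ that have no $\epsilon$-close member of $\mathcal{F}$, choose one representative in $\mathcal{F}$ for each surviving element, and obtain the $2\epsilon$ bound from the triangle inequality. The argument uses only symmetry and the triangle inequality of $\rho$, so it also works when $\rho$ is a pseudometric.
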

\begin{proof}
Discard every $\bar{f}\in\bar{\mathcal{C}}$ for which $\{f\in\mathcal{F}\mid\max_{i\in[n]}\rho(f(x_{i}),\bar{f}(x_{i}))\le\epsilon\}$ is empty. For each remaining $\bar{f}$, choose one representative $f_{\bar{f}}\in\mathcal{F}$ satisfying $\max_{i\in[n]}\rho(f_{\bar{f}}(x_{i}),\bar{f}(x_{i}))\le\epsilon$, and set $\mathcal{C}=\{f_{\bar{f}}\mid\bar{f}\in\bar{\mathcal{C}}\}$. Then, for any $f\in\mathcal{F}$, choosing $\bar{f}\in\bar{\mathcal{C}}$ with $\max_{i\in[n]}\rho(f(x_{i}),\bar{f}(x_{i}))\le\epsilon$ gives
\[
    \max_{i\in[n]}\rho(f(x_{i}),f_{\bar{f}}(x_{i}))
    \le\max_{i\in[n]}\rho(f(x_{i}),\bar{f}(x_{i}))+\max_{i\in[n]}\rho(\bar{f}(x_{i}),f_{\bar{f}}(x_{i}))
    \le2\epsilon
    .
\]
Thus $\mathcal{C}$ is a proper $2\epsilon$-cover of $\mathcal{F}$.
\end{proof}

\begin{remark}\label{rem:NormInequalityCovering}
    Since $\|W\|_{2\to\infty}\le\|W\|_{2}\le\|W\|_{F}$ holds for any matrix $W$, we have
    \[
        \mathcal{N}\left(\mathcal{W},\|\cdot\|_{2\to\infty},\epsilon\right)\le\mathcal{N}\left(\mathcal{W},\|\cdot\|_{2},\epsilon\right)
        \le\mathcal{N}\left(\mathcal{W},\|\cdot\|_{F},\epsilon\right)
        .
    \]
\end{remark}

\begin{lemma}[\citet{vershynin2018high}, Lemma~4.4.1]\label{lem:OperatorNormNet1}
    Let $\mathcal{N}_{m}\subset\mathcal{S}^{m-1}$ be an $\epsilon$-net of the Euclidean unit sphere. Then, for any $W\in\mathbb{R}^{\ell\times m}$ and $\epsilon\in[0,1)$, we have
    \[
        \|W\|_{2}
        \le\frac{1}{1-\epsilon}\sup_{x\in\mathcal{N}_{m}}\|Wx\|_{2}
        .
    \]
\end{lemma}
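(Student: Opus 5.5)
The plan is the standard approximation argument: compare the norm at a maximizing unit vector with the norm at a nearby net point, and absorb the error term into the left-hand side.

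First, fix $W\in\mathbb{R}^{\ell\times m}$ and $\epsilon\in[0,1)$. The map $x\mapsto\|Wx\|_{2}$ is continuous and the unit sphere $\mathcal{S}^{m-1}$ is compact. So I choose $x^{\ast}\in\mathcal{S}^{m-1}$ with $\|Wx^{\ast}\|_{2}=\|W\|_{2}$. Alternatively, to avoid compactness, I would take $x$ with $\|Wx\|_{2}\ge\|W\|_{2}-\eta$ and let $\eta\downarrow0$ at the end.

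Next, since $\mathcal{N}_{m}$ is an $\epsilon$-net of $\mathcal{S}^{m-1}$, there is $x_{0}\in\mathcal{N}_{m}$ with $\|x^{\ast}-x_{0}\|_{2}\le\epsilon$. By the triangle inequality and the definition of the operator norm,
\[
    \|W\|_{2}
    =\|Wx^{\ast}\|_{2}
    \le\|Wx_{0}\|_{2}+\|W(x^{\ast}-x_{0})\|_{2}
    \le\sup_{x\in\mathcal{N}_{m}}\|Wx\|_{2}+\epsilon\|W\|_{2}
    .
\]

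Finally, $\|W\|_{2}$ is finite and $1-\epsilon>0$. Hence I can move $\epsilon\|W\|_{2}$ to the left-hand side and divide by $1-\epsilon$, which gives the claim. In the $\eta$-variant the same rearrangement produces an extra additive $\eta/(1-\epsilon)$, which vanishes as $\eta\downarrow0$.

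There is no real obstacle. The only points needing care are the existence of a maximizer, or the limiting argument in its place, and the strict positivity of $1-\epsilon$, which is guaranteed by $\epsilon<1$.
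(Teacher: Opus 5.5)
Your proof is correct and is the standard argument from \citet{vershynin2018high} (choose a maximizing unit vector by compactness, approximate it by a net point, and absorb $\epsilon\|W\|_{2}$ into the left-hand side using $1-\epsilon>0$). The paper cites this lemma without giving a proof of its own, so your argument supplies exactly the missing justification.
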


\subsection{Entropy bounds for linear functions}

We next derive entropy bounds for linear function classes evaluated on a fixed sample. These results convert norm constraints on the weight matrix into uniform covering bounds for the induced matrix-valued maps under the $\|\cdot\|_{2\to\infty}$ metric.

\begin{proposition}[\citet{zhang2002covering}, Theorem~4]\label{prop:Zhang2002Thm4}
    Suppose $p\in[2,\infty)$ and $q\in\left[1,2\right]$ satisfy $\frac{1}{p}+\frac{1}{q}=1$. Define a class of linear functions by
    \[
        \mathcal{F}
        =\{f:\mathbb{R}^{d}\to\mathbb{R}\mid f(x)=x^{\top}w,\ w\in\mathbb{R}^{d},\ \|w\|_{q}\le a,\ \|x\|_{p}\le b\}
        .
    \]
    Then, for any $\epsilon>0$, we have
    \[
        \log\mathcal{N}_{\infty}(\mathcal{F},|\cdot|,\epsilon;\{x_{i}\}_{i\in[n]})
        \le36(p-1)\frac{a^{2}b^{2}}{\epsilon^{2}}\log\left(2\left\lceil\frac{4ab}{\epsilon}+2\right\rceil n+1\right)
        .
    \]
\end{proposition}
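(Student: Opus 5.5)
The plan is the standard reduction from covering numbers to mistake bounds of an online algorithm (Zhang's argument), with the $p$-norm potential controlling the number of mistakes. After rescaling we may assume $b=1$. The constraint $\|x\|_p\le b$ is understood as holding for all sample points $x_i$. Only the values $(x_i^\top w)_{i\in[n]}$ matter, and these lie in $[-ab,ab]$. Discretize this interval into $K=\lceil 4ab/\epsilon+2\rceil$ grid points of spacing at most $\epsilon/2$. For each target $w$ with $\|w\|_q\le a$, the goal is to produce, by a deterministic procedure, a vector $\hat w$ with $|x_i^\top\hat w-x_i^\top w|\le\epsilon$ for all $i$. The procedure should be driven only by a short ``transcript'' of at most $M$ symbols, each symbol taken from an alphabet of size $2Kn+1$. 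The set of all transcripts then indexes an $\epsilon$-cover, giving $\log\mathcal{N}_\infty\le M\log(2Kn+1)$. The work is therefore to show $M\le 36(p-1)a^2b^2/\epsilon^2$.

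\textbf{Online procedure.} We run a $p$-norm (Grove--Littlestone--Schuurmans type) algorithm. It maintains a dual vector $\theta\in\mathbb{R}^d$, starting at $\theta_0=0$, and predicts with $\hat w=\nabla\psi(\theta)$, where $\psi(\theta)=\tfrac12\|\theta\|_p^2$. Its gradient $\nabla\psi$ maps $\ell_p$ into $\ell_q$. At each round the procedure inspects whether some sample index $i$ and some grid threshold separate $x_i^\top\hat w$ from $x_i^\top w$ by more than $\epsilon/2$. If so, it records the triple (index $i$, threshold, sign), which is one of at most $2Kn$ symbols. It then updates $\theta\leftarrow\theta\pm\eta x_i$, moving in the direction that reduces the discrepancy. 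If no such violation exists, it stops and records the terminal symbol; this accounts for the ``$+1$''. Given the transcript, the final $\hat w$ is a deterministic function of it. At termination every $x_i^\top\hat w$ lies within $\epsilon$ of $x_i^\top w$, because the grid spacing is at most $\epsilon/2$.

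\textbf{Mistake bound.} This is the main step. We use the fact that $\psi$ is $(p-1)$-smooth with respect to $\|\cdot\|_p$, which is equivalent to $\tfrac12\|\cdot\|_q^2$ being $(q-1)$-strongly convex with respect to $\|\cdot\|_q$. We track the Bregman-type potential $D_t=\psi(\theta_t)-\langle\theta_t,w\rangle$; this quantity is at least $-\tfrac12\|w\|_q^2\ge -a^2/2$ by Fenchel duality. At each mistake, the linear part of the change, $\eta\langle\pm x_i,\nabla\psi(\theta)-w\rangle$, is at most $-\eta\epsilon/2$, because there is a gap of at least $\epsilon/2$ between $x_i^\top\hat w$ and $x_i^\top w$ in the sign chosen. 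Smoothness adds at most $\tfrac{p-1}{2}\eta^2\|x_i\|_p^2\le\tfrac{p-1}{2}\eta^2b^2$. Each mistake therefore decreases $D$ by at least $\eta\epsilon/2-(p-1)\eta^2b^2/2$. Since $D_0=0$ and $D\ge -a^2/2$, this gives
\[
M\bigl(\eta\epsilon/2-(p-1)\eta^2b^2/2\bigr)\le a^2/2.
\]
Choosing $\eta=\epsilon/(2(p-1)b^2)$ yields $M\le 4(p-1)a^2b^2/\epsilon^2$, which is comfortably within the constant $36$.

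\textbf{Where the care goes.} The main obstacle is the bookkeeping in the mistake bound. The delicate points are: the sign convention of the update, so that the linear term is genuinely negative by the margin; the fact that $\hat w$ need not stay in the $\ell_q$-ball of radius $a$, which is harmless because the result is an external cover; and matching the discretization of thresholds so that termination really certifies $\epsilon$-accuracy. If an exact proper cover is required, one can apply Lemma~\ref{lem:ProperCover}, at the price of a factor $2$ in $\epsilon$ that the constant absorbs. Combining $M\le 36(p-1)a^2b^2/\epsilon^2$ with the alphabet size $2\lceil 4ab/\epsilon+2\rceil n+1$ gives the stated bound.
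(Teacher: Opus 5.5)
The paper gives no proof of this proposition. It imports the statement from \citet{zhang2002covering}, whose argument is the same online mistake-bound reduction you describe: a cover indexed by transcripts of a $p$-norm algorithm's updates, each symbol a sample index, threshold level and sign, plus a stop symbol. This is where $2\lceil 4ab/\epsilon+2\rceil n+1$ comes from.

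Your self-contained version is correct. With $\eta=\epsilon/(2(p-1)b^{2})$, each mistake lowers $D_t$ by at least $\epsilon^{2}/(8(p-1)b^{2})$. Fenchel--Young gives $D_t\ge -a^{2}/2$, hence $M\le 4(p-1)a^{2}b^{2}/\epsilon^{2}$, and $\sum_{k\le M}(2Kn)^{k}\le(2Kn+1)^{M}$ gives the stated form with a smaller constant.

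Writing the proof out buys two things the bare citation does not.

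First, $\nabla\psi(\theta)$ need not lie in the $\ell_q$-ball of radius $a$. Definition~\ref{def:CoveringNumberFunctionSpace}, however, requires $\mathcal{C}\subset\mathcal{F}$, so your appeal to Lemma~\ref{lem:ProperCover} is genuinely needed for the statement as the paper uses it. With $A=4ab/\epsilon$, working at radius $\epsilon/2$ gives $16(p-1)a^{2}b^{2}\epsilon^{-2}\log(2\lceil 2A+2\rceil n+1)$. Since $\lceil 2A+2\rceil\le 2\lceil A+2\rceil$, this is at most $32(p-1)a^{2}b^{2}\epsilon^{-2}\log(2\lceil A+2\rceil n+1)$, within the claimed constant $36$.

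Second, your update $\theta\leftarrow\theta\pm\eta x_i$ never uses the threshold, so the grid is unnecessary. Declare a mistake whenever $|x_i^{\top}(\hat w-w)|>\epsilon$ and record only $(i,\pm)$. With $\eta=\epsilon/((p-1)b^{2})$ this gives $M\le(p-1)a^{2}b^{2}/\epsilon^{2}$ over an alphabet of size $2n+1$. It also avoids the boundary bookkeeping when $x_i^{\top}\hat w$ overshoots $[-ab,ab]$, and it removes the $\log(ab/\epsilon)$ factor altogether.
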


\begin{corollary}\label{cor:Zhang2002Thm4TwogInf}
    Define a class of matrix-valued functions by
    \[
        \mathcal{F}
        =\{f:\mathbb{R}^{d\times\ell}\to\mathbb{R}^{d\times m}\mid f(X)=XW,\ W\in\mathcal{W}\}
        ,
    \]
    where
    \[
        \mathcal{W}
        =\{W\in\mathbb{R}^{\ell\times m}\mid\|W\|_{F}\le a\}
        .
    \]
    Suppose $\max_{i\in[n]}\|X_{i}\|_{2\to\infty}\le B_{n,(2\to\infty)}$ holds. Then, for any $\epsilon>0$ and $\delta\in(0,1)$, we have
    \begin{align*}
        &\log\mathcal{N}_{\infty}(\mathcal{F},\|\cdot\|_{2\to\infty},\epsilon;\{X_{i}\}_{i\in[n]})\\
        &\le\frac{36a^{2}B_{n,(2\to\infty)}^{2}}{(1-\delta)^{2}\epsilon^{2}}\log\left(2\left\lceil\frac{4aB_{n,(2\to\infty)}}{(1-\delta)\epsilon}+2\right\rceil\left(1+\frac{2}{\delta}\right)^{m}nd+1\right)
        .
    \end{align*}
    In particular, setting $\delta=1/2$ gives
    \begin{align*}
        \log\mathcal{N}_{\infty}(\mathcal{F},\|\cdot\|_{2\to\infty},\epsilon;\{X_{i}\}_{i\in[n]})
        &\le\frac{144a^{2}B_{n,(2\to\infty)}^{2}}{\epsilon^{2}}\log\left(4\left\lceil\frac{8aB_{n,(2\to\infty)}}{\epsilon}+2\right\rceil5^{m}nd\right)\\
        &\le\frac{144a^{2}B_{n,(2\to\infty)}^{2}m}{\epsilon^{2}}\log\left(20\left\lceil\frac{8aB_{n,(2\to\infty)}}{\epsilon}+2\right\rceil nd\right)
        .
    \end{align*}
\end{corollary}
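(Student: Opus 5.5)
The plan is to reduce the matrix-valued cover to a scalar cover of Zhang type, via a finite net of output directions. The key observation is that $\|Z\|_{2\to\infty}=\max_{t\in[d]}\|Z_{t\cdot}\|_{2}$, and each row norm $\|Z_{t\cdot}\|_{2}$ can be approximated by a supremum over a net of the unit sphere $\mathcal{S}^{m-1}$. Specifically, let $\mathcal{N}_{m}$ be a $\delta$-net of $\mathcal{S}^{m-1}$. By Fact~\ref{fact:EntropyEuclideanBall} it can be chosen with $|\mathcal{N}_{m}|\le(1+2/\delta)^{m}$. Lemma~\ref{lem:OperatorNormNet1}, applied to the $1\times m$ row vector $Z_{t\cdot}$, then gives
\[
    \|Z_{t\cdot}\|_{2}\le\frac{1}{1-\delta}\max_{u\in\mathcal{N}_{m}}|Z_{t\cdot}u|.
\]
Applying this with $Z=X_{i}(W-W')$, it suffices to control $|(X_{i})_{t\cdot}(W-W')u|$ uniformly over $i\in[n]$, $t\in[d]$ and $u\in\mathcal{N}_{m}$ at scale $(1-\delta)\epsilon$.

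Next I linearize in $W$. For $x\in\mathbb{R}^{\ell}$ and $u\in\mathbb{R}^{m}$, write $x^{\top}Wu=\langle \mathrm{vec}(W),\mathrm{vec}(xu^{\top})\rangle$. Then $f(X)=XW$ induces the scalar linear class $\{z\mapsto\langle \mathrm{vec}(W),z\rangle\}$ on the augmented sample $\{\mathrm{vec}((X_{i})_{t\cdot}^{\top}u^{\top})\}_{i,t,u}$. This sample has $nd|\mathcal{N}_{m}|$ points. The weight vector satisfies $\|\mathrm{vec}(W)\|_{2}=\|W\|_{F}\le a$, and each data point satisfies
\[
    \|\mathrm{vec}(xu^{\top})\|_{2}=\|x\|_{2}\|u\|_{2}\le B_{n,(2\to\infty)}.
\]
I then apply Proposition~\ref{prop:Zhang2002Thm4} with $p=q=2$, with $b=B_{n,(2\to\infty)}$, scale $(1-\delta)\epsilon$, and sample size $nd(1+2/\delta)^{m}$. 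This yields a set of weight matrices $\{W'\}$ of the stated cardinality such that every $W$ is matched on all augmented points within $(1-\delta)\epsilon$. By the net inequality above, this translates into $\max_{i}\|X_{i}W-X_{i}W'\|_{2\to\infty}\le\epsilon$, which gives exactly the first displayed bound with the constant $36(p-1)=36$.

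Two points need a little care. First, Zhang's cover may be external, meaning its elements need not be induced by weights $W$ with $\|W\|_{F}\le a$. If the statement is read as requiring a proper cover, I would invoke Lemma~\ref{lem:ProperCover}, at the cost of a factor $2$ in the radius, absorbed by running the argument at $\epsilon/2$. I would mainly just note this, since the displayed constants suggest the authors treat the cover directly. Second, for the specialization $\delta=1/2$, the factor $(1-\delta)^{-2}=4$ turns $36$ into $144$, and $(1+2/\delta)^{m}=5^{m}$. The argument of the logarithm then becomes $2\lceil 8aB/\epsilon+2\rceil 5^{m}nd+1\le4\lceil\cdot\rceil5^{m}nd$. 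Finally, $\log(4c\,5^{m}nd)\le m\log(20c\,nd)$ for $m\ge1$, because $4c\,5^{m}nd\le(20c\,nd)^{m}$ when $c,n,d\ge1$.

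The main obstacle, though mild, is making the net reduction rigorous for rows. One must apply Lemma~\ref{lem:OperatorNormNet1} to each row as a $1\times m$ operator, rather than to the full matrix, so that the $\|\cdot\|_{2\to\infty}$ structure is preserved. The alternative of bounding through the full spectral norm would introduce $T$-dependence beyond the logarithmic factor.
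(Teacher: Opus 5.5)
Your proposal matches the paper's proof essentially step for step: a $\delta$-net of $\mathcal{S}^{m-1}$ with Lemma~\ref{lem:OperatorNormNet1} applied row by row, the vectorization $x^{\top}Wu=\langle\mathrm{vec}(W),\mathrm{vec}(xu^{\top})\rangle$ over the augmented sample of $nd|\mathcal{N}_{m}|$ points, Proposition~\ref{prop:Zhang2002Thm4} with $p=q=2$ at scale $(1-\delta)\epsilon$, and the same $\delta=1/2$ simplification. The paper applies Proposition~\ref{prop:Zhang2002Thm4} directly with its own definition of $\mathcal{N}_{\infty}$, under which covers are subsets of the class, so the resulting cover is already proper and no properization step is needed. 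This is fortunate, since rerunning the argument at $\epsilon/2$ would not reproduce the stated constants.
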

\begin{proof}
    Fix $\delta\in(0,1)$. Let $X_{i,j\cdot}$ denote the $j$-th row of $X_{i}$. By Fact~\ref{fact:EntropyEuclideanBall}, one may choose a $\delta$-net $\mathcal{N}_{m}\subset\mathcal{S}^{m-1}$ of the Euclidean sphere satisfying $|\mathcal{N}_{m}|\le(1+2/\delta)^{m}$. For such $\mathcal{N}_{m}$, we can apply Lemma~\ref{lem:OperatorNormNet1} to obtain
    \begin{align*}
        &\|X_{i}(W-W')\|_{2\to\infty}
        =\max_{j\in[d]}\|X_{i,j\cdot}(W-W')\|_{2}\\
        &\le\frac{1}{1-\delta}\max_{j\in[d]}\max_{u\in\mathcal{N}_{m}}|X_{i,j\cdot}(W-W')u|
        =\frac{1}{1-\delta}\max_{j\in[d]}\max_{u\in\mathcal{N}_{m}}\left|\langle W-W',(X_{i,j\cdot})^{\top}u^{\top}\rangle_{F}\right|\\
        &=\frac{1}{1-\delta}\max_{j\in[d]}\max_{u\in\mathcal{N}_{m}}\left|\mathrm{vec}(W-W')^{\top}\mathrm{vec}((X_{i,j\cdot})^{\top}u^{\top})\right|
    \end{align*}
    for any $W,W'\in\mathcal{W}$ and $i\in[n]$. Define the finite set of vectors
    \[
        \mathcal{Y}
        =\{\mathrm{vec}((X_{i,j\cdot})^{\top}u^{\top})\mid i\in[n],j\in[d],u\in\mathcal{N}_{m}\}
        .
    \]
    Then, any vector in $\mathcal{Y}$ is bounded in norm as
    \[
        \|\mathrm{vec}((X_{i,j\cdot})^{\top}u^{\top})\|_{2}
        =\|(X_{i,j\cdot})^{\top}u^{\top}\|_{F}
        =\|X_{i,j\cdot}\|_{2}\|u\|_{2}
        =\|X_{i,j\cdot}\|_{2}
        \le\|X_{i}\|_{2\to\infty}
        \le B_{n,(2\to\infty)}
        .
    \]
    Thus, by applying Proposition~\ref{prop:Zhang2002Thm4} with $p=q=2$ and the linear function class
    \[
        \tilde{\mathcal{F}}
        =\{f:\mathbb{R}^{\ell m}\to\mathbb{R}\mid f(y)=w^{\top}y,\ w\in\mathbb{R}^{\ell m},\ \|w\|_{2}\le a\}
        ,
    \]
    we obtain
    \begin{align*}
        &\log\mathcal{N}_{\infty}(\mathcal{F},\|\cdot\|_{2\to\infty},\epsilon;\{X_{i}\}_{i\in[n]})\\
        &\le\log\mathcal{N}_{\infty}(\tilde{\mathcal{F}},|\cdot|,(1-\delta)\epsilon;\mathcal{Y})\\
        &\le36\frac{a^{2}B_{n,(2\to\infty)}^{2}}{(1-\delta)^{2}\epsilon^{2}}\log\left(2\left\lceil\frac{4aB_{n,(2\to\infty)}}{(1-\delta)\epsilon}+2\right\rceil nd|\mathcal{N}_{m}|+1\right)\\
        &\le36\frac{a^{2}B_{n,(2\to\infty)}^{2}}{(1-\delta)^{2}\epsilon^{2}}\log\left(2\left\lceil\frac{4aB_{n,(2\to\infty)}}{(1-\delta)\epsilon}+2\right\rceil nd\left(1+\frac{2}{\delta}\right)^{m}+1\right)
        .
    \end{align*}
    Furthermore, setting $\delta=1/2$ gives
    \begin{align*}
        &\log\mathcal{N}_{\infty}(\mathcal{F},\|\cdot\|_{2\to\infty},\epsilon;\{X_{i}\}_{i\in[n]})\\
        &\le\frac{144a^{2}B_{n,(2\to\infty)}^{2}}{\epsilon^{2}}\log\left(2\left\lceil\frac{8aB_{n,(2\to\infty)}}{\epsilon}+2\right\rceil
        5^{m}nd+1\right)\\
        &\le\frac{144a^{2}B_{n,(2\to\infty)}^{2}}{\epsilon^{2}}\log\left(4\left\lceil\frac{8aB_{n,(2\to\infty)}}{\epsilon}+2\right\rceil5^{m}nd\right)\\
        &=\frac{144a^{2}B_{n,(2\to\infty)}^{2}}{\epsilon^{2}}\left(m\log5+\log\left(4\left\lceil\frac{8aB_{n,(2\to\infty)}}{\epsilon}+2\right\rceil nd\right)\right)\\
        &\le\frac{144a^{2}B_{n,(2\to\infty)}^{2}m}{\epsilon^{2}}\log\left(20\left\lceil\frac{8aB_{n,(2\to\infty)}}{\epsilon}+2\right\rceil nd\right)
        .
    \end{align*}
\end{proof}

\subsection{Parametric interpolation}

We now prove a matrix-valued version of the parametric interpolation argument of \citet{ledent2025generalization}. The main additional point is that the output metric is $\|\cdot\|_{2\to\infty}$ evaluated on the sample $\{X_{i}\}_{i\in[n]}$; hence, the Frobenius-controlled tail must be converted into uniform rowwise output bounds.

\begin{theorem}\label{thm:ParametricInterpolation2inf}
    Fix an arbitrary $p\in[0,2]$. Consider a class of matrix-valued functions
    \[
        \mathcal{F}
        =\{f:\mathbb{R}^{d\times\ell}\to\mathbb{R}^{d\times m}\mid f(X)=XW,\ W\in\mathcal{W}\}
        ,
    \]
    where the parameter set is defined by
    \[
        \mathcal{W}
        =\{W\in\mathbb{R}^{\ell\times m}\mid\|W\|_{\mathrm{s},p}^{p}\le C_{\mathrm{s}},\ \|W\|_{2}\le C_{2}\}
        .
    \]
    Suppose $\max_{i\in[n]}\|X_{i}\|_{2\to\infty}\le B_{n,(2\to\infty)}$ holds. Then, for any $\epsilon>0$, we have
    \[
        \log\mathcal{N}_{\infty}(\mathcal{F},\|\cdot\|_{2\to\infty},\epsilon;\{X_{i}\}_{i\in[n]})
        \lesssim\left(\frac{[C_{\mathrm{s}}(\ell+m)]^{2}(B_{n,(2\to\infty)}^{2}\min\{\ell,m\}m)^{p}}{\epsilon^{2p}}\right)^{\frac{1}{p+2}}\log C_{2\to\infty}
        ,
    \]
    where $C_{2\to\infty}$ is given by
    \begin{align*}
        C_{2\to\infty}
        &=\left(32C_{\mathrm{s}}^{\frac{1}{p+2}}C_{2}\left(\frac{B_{n,(2\to\infty)}}{\epsilon}\right)^{\frac{2p+2}{p+2}}\left(\frac{\min\{\ell,m\}m}{\ell+m}\right)^{\frac{p}{2(p+2)}}+1\right)\\
        &\quad\times\left[\left(640\left(\frac{C_{\mathrm{s}}B_{n,(2\to\infty)}^{p}\min\{\ell,m\}^{p/2}(\ell+m)}{m\epsilon^{p}}\right)^{\frac{1}{p+2}}+60\right)nd\right]
        .
    \end{align*}
\end{theorem}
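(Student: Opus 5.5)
The plan is to follow the parametric interpolation scheme sketched after Theorem~\ref{thm:ParametricInterpolation2infSimplified}. Fix a threshold $\tau>0$, to be optimized later. For $W\in\mathcal{W}$ with SVD $W=\sum_{j}\sigma_{j}u_{j}v_{j}^{\top}$, split $W=W_{1}+W_{2}$, where $W_{1}$ collects the components with $\sigma_{j}>\tau$. The constraint $\sum_{j}\sigma_{j}^{p}\le C_{\mathrm{s}}$ gives $r:=\operatorname{rank}(W_{1})\le\min\{C_{\mathrm{s}}/\tau^{p},\min\{\ell,m\}\}$; for $p=0$ this reads directly $r\le C_{\mathrm{s}}$. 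The spectral constraint gives $\|W_{1}\|_{2}\le C_{2}$. The tail satisfies $\|W_{2}\|_{F}\le\tau\sqrt{\min\{\ell,m\}}$.

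We then build an external cover in two pieces, each with radius $\epsilon/2$.

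\textbf{Leading part.} Cover $\{W_{1}:\operatorname{rank}\le r,\ \|W_{1}\|_{2}\le C_{2}\}$ in Frobenius norm at radius $\epsilon/(2B_{n,(2\to\infty)})$ using Proposition~\ref{prop:Ledent2025PropF1}. By Lemma~\ref{lem:MatrixProduct2Inf2} and Remark~\ref{rem:NormInequalityCovering}, $\|X_{i}(W_{1}-W_{1}')\|_{2\to\infty}\le B_{n,(2\to\infty)}\|W_{1}-W_{1}'\|_{F}$. This costs $(\ell+m)r\log(16C_{2}\sqrt{r}B_{n,(2\to\infty)}/\epsilon+1)$.

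\textbf{Tail.} Apply Corollary~\ref{cor:Zhang2002Thm4TwogInf} with $a=\tau\sqrt{\min\{\ell,m\}}$. This costs $\lesssim\tau^{2}\min\{\ell,m\}mB_{n,(2\to\infty)}^{2}\epsilon^{-2}\log(\cdots nd)$.

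Sums $f_{1}+f_{2}$ of the two cover elements form an external $\epsilon$-cover of $\mathcal{F}$, whose log-cardinality is the sum of the two entropies. Lemma~\ref{lem:ProperCover} then makes it proper at radius $2\epsilon$, which only affects constants.

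To optimize, balance $(\ell+m)C_{\mathrm{s}}\tau^{-p}$ against $\tau^{2}\min\{\ell,m\}mB^{2}\epsilon^{-2}$, writing $B=B_{n,(2\to\infty)}$. The balancing choice is
\[
    \tau^{p+2}=\frac{C_{\mathrm{s}}(\ell+m)\epsilon^{2}}{B^{2}\min\{\ell,m\}m},
\]
which makes both terms equal to $([C_{\mathrm{s}}(\ell+m)]^{2}(B^{2}\min\{\ell,m\}m)^{p}\epsilon^{-2p})^{1/(p+2)}$, the claimed leading factor.

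It remains to substitute this $\tau$ into the two logarithms and check that they are bounded by the displayed $\log C_{2\to\infty}$, with $\log(XY)=\log X+\log Y$ covering both:
\begin{itemize}
    \item The first factor of $C_{2\to\infty}$ should come from $C_{2}\sqrt{r}B/\epsilon$ with $r\le C_{\mathrm{s}}\tau^{-p}$. Plugging in $\tau$ gives $\sqrt{r}\le C_{\mathrm{s}}^{1/(p+2)}(B/\epsilon)^{p/(p+2)}(\min\{\ell,m\}m/(\ell+m))^{p/(2(p+2))}$, hence the exponent $(2p+2)/(p+2)$ on $B/\epsilon$.
    \item The second factor should come from the ceiling term $aB/\epsilon=\tau\sqrt{\min\{\ell,m\}}B/\epsilon$ in Corollary~\ref{cor:Zhang2002Thm4TwogInf}. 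After substitution this equals $(C_{\mathrm{s}}B^{p}\min\{\ell,m\}^{p/2}(\ell+m)/(m\epsilon^{p}))^{1/(p+2)}$, matching the displayed expression.
\end{itemize}
The constants $32$, $640$ and $60$ arise from the halved radii, $\delta=1/2$, and the properization step.

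\textbf{Main obstacles.} The main obstacle is the edge cases rather than the balancing.

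First, when the chosen $\tau$ gives $C_{\mathrm{s}}\tau^{-p}>\min\{\ell,m\}$, the rank bound must be capped. In that regime one should either set $W_{2}=0$, or observe that the capped rank is still dominated by the same expression. Because $\tau$ is chosen by balancing, the rank term $(\ell+m)r$ remains bounded by the balanced value. I would first try to argue monotonicity: the bound holds for any $\tau$, and capping $r$ only helps.

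Second, $p=0$ has to be handled separately. There the tail is empty, and we take $\tau\to0$ with $r\le C_{\mathrm{s}}$, which the formula recovers.

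Third, for $p=2$, $\|W\|_{F}^{2}\le C_{\mathrm{s}}$ bounds the tail Frobenius norm directly by $\sqrt{C_{\mathrm{s}}}$. The cruder bound $\tau\sqrt{\min\{\ell,m\}}$ is still used to match the stated form, which is consistent.
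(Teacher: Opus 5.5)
Your proposal is correct and follows the paper's proof essentially step for step: the same SVD threshold split, Proposition~\ref{prop:Ledent2025PropF1} for the low-rank leading part, Corollary~\ref{cor:Zhang2002Thm4TwogInf} for the Frobenius tail, properization via Lemma~\ref{lem:ProperCover}, and the same balancing choice of $\tau$. The differences are cosmetic. The paper covers each piece at radius $\epsilon/4$, so properization lands directly at $\epsilon$; this is where the constants $32$, $640$, $60$ come from. It also runs $p=0$ through the general argument rather than as a separate case. Your rank-capping remark fills in a detail the paper leaves implicit.
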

\begin{proof}
    Fix $p\in[0,2]$ and use the convention $\|W\|_{\mathrm{s},0}^{0}=\operatorname{rank}(W)$ when $p=0$. For any $W\in\mathcal{W}$, consider its singular value decomposition
    \[
        W
        =\sum_{i=1}^{\min\{\ell,m\}}\sigma_{i}u_{i}v_{i}^{\top}
        ,
        \qquad
        \sigma_{1}\ge\sigma_{2}\ge\cdots\ge0
        .
    \]
    Fix $\tau>0$ and define $r=\max\{i\mid\sigma_{i}>\tau\}$, with the convention $r=0$ if this set is empty. Consider the decomposition of $W$ as $W=W_{1}+W_{2}$, where
    \[
        W_{1}=\sum_{i=1}^{r}\sigma_{i}u_{i}v_{i}^{\top},
        \qquad
        W_{2}=W-W_{1}
        .
    \]
    It holds that $r\tau^{p}\le\sum_{i=1}^{r}\sigma_{i}^{p}\le\sum_{i=1}^{\min\{\ell,m\}}\sigma_{i}^{p}=\|W\|_{\mathrm{s},p}^{p}\le C_{\mathrm{s}}$, which implies
    \[
        \operatorname{rank}(W_{1})\le r\le\frac{C_{\mathrm{s}}}{\tau^{p}}
        .
    \]
    Therefore, we have $W_{1}\in\mathcal{W}_{1}=\{W\in\mathcal{W}\mid\operatorname{rank}(W)\le C_{\mathrm{s}}/\tau^{p}\}$.\footnote{Note that this argument holds for both $p>0$ and $p=0$.} Furthermore, for any $W,W'\in\mathcal{W}_{1}$, we have
    \[
        \max_{i\in[n]}\|X_{i}(W-W')\|_{2\to\infty}
        \le\max_{i\in[n]}\|X_{i}\|_{2\to\infty}\|W-W'\|_{2}
        \le B_{n,(2\to\infty)}\|W-W'\|_{2}
        .
    \]
    Define the function class $\mathcal{F}_{1}=\{f:\mathbb{R}^{d\times\ell}\to\mathbb{R}^{d\times m}\mid f(X)=XW,\ W\in\mathcal{W}_{1}\}$. By applying Proposition~\ref{prop:Ledent2025PropF1} and Remark~\ref{rem:NormInequalityCovering}, there exists a $\epsilon/4$-cover $\mathcal{C}_{\mathcal{F}_{1}}$ of $\mathcal{F}_{1}$ on $\{X_{i}\}_{i\in[n]}$ that satisfies
    \[
        \log|\mathcal{C}_{\mathcal{F}_{1}}|
        \le\log\mathcal{N}\left(\mathcal{W}_{1},\|\cdot\|_{2},\frac{\epsilon}{4B_{n,(2\to\infty)}}\right)
        \le(\ell+m)\frac{C_{\mathrm{s}}}{\tau^{p}}\log\left(\frac{32C_{\mathrm{s}}^{\frac{1}{2}}C_{2}B_{n,(2\to\infty)}}{\tau^{\frac{p}{2}}\epsilon}+1\right)
        .
    \]
    On the other hand, since
    \[
        \|W_{2}\|_{F}^{2}
        =\sum_{i=r+1}^{\min\{\ell,m\}}\sigma_{i}^{2}
        \le\sum_{i=r+1}^{\min\{\ell,m\}}\tau^{2}
        \le\tau^{2}\min\{\ell,m\}
    \]
    holds, we have $W_{2}\in\{W\in\mathcal{W}\mid\|W\|_{F}\le\tau\min\{\ell,m\}^{1/2}\}$. By Corollary~\ref{cor:Zhang2002Thm4TwogInf}, there exists a $\epsilon/4$-cover $\mathcal{C}_{\mathcal{F}_{2}}$ of $\mathcal{F}_{2}$ on $\{X_{i}\}_{i\in[n]}$ that satisfies
    \[
        \log|\mathcal{C}_{\mathcal{F}_{2}}|
        \lesssim\frac{\tau^{2}\min\{\ell,m\}B_{n,(2\to\infty)}^{2}m}{\epsilon^{2}}\log\left[\left(\frac{640\tau\min\{\ell,m\}^{1/2}B_{n,(2\to\infty)}}{\epsilon}+60\right)nd\right]
        .
    \]
    Set $\bar{\mathcal{C}}_{\mathcal{F}}=\{f_{1}+f_{2}\mid f_{1}\in\mathcal{C}_{\mathcal{F}_{1}},f_{2}\in\mathcal{C}_{\mathcal{F}_{2}}\}$. Then, for any $f=f_{1}+f_{2}\in\mathcal{F}$ with $f_{1}\in\mathcal{F}_{1}$ and $f_{2}\in\mathcal{F}_{2}$, there exist $\tilde{f}_{1}\in\mathcal{C}_{\mathcal{F}_{1}}$ and $\tilde{f}_{2}\in\mathcal{C}_{\mathcal{F}_{2}}$ that satisfy
    \[
        \max_{i\in[n]}\|f_{1}(X_{i})-\tilde{f}_{1}(X_{i})\|_{2\to\infty}
        \le\frac{\epsilon}{4}
        ,\qquad
        \max_{i\in[n]}\|f_{2}(X_{i})-\tilde{f}_{2}(X_{i})\|_{2\to\infty}
        \le\frac{\epsilon}{4}
        .
    \]
    Hence, it holds for $\tilde{f}=\tilde{f}_{1}+\tilde{f}_{2}$ that
    \begin{align*}
        &\max_{i\in[n]}\|f(X_{i})-\tilde{f}(X_{i})\|_{2\to\infty}
        \le\max_{i\in[n]}\|f_{1}(X_{i})-\tilde{f}_{1}(X_{i})\|_{2\to\infty}+\max_{i\in[n]}\|f_{2}(X_{i})-\tilde{f}_{2}(X_{i})\|_{2\to\infty}\\
        &\le\frac{\epsilon}{2}
        ,
    \end{align*}
    which implies that $\bar{\mathcal{C}}_{\mathcal{F}}$ is a (possibly improper) $\epsilon/2$-cover of $\mathcal{F}$ on $\{X_{i}\}_{i\in[n]}$. Finally, by Lemma~\ref{lem:ProperCover}, we have
    \begin{align*}
        &\log\mathcal{N}_{\infty}(\mathcal{F},\|\cdot\|_{2\to\infty},\epsilon;\{X_{i}\}_{i\in[n]})\\
        &\le\log|\bar{\mathcal{C}}_{\mathcal{F}}|
        \le\log|\bar{\mathcal{C}}_{\mathcal{F}_{1}}|+\log|\bar{\mathcal{C}}_{\mathcal{F}_{2}}|\\
        &\lesssim(\ell+m)\frac{C_{\mathrm{s}}}{\tau^{p}}\log\left(\frac{32C_{\mathrm{s}}^{\frac{1}{2}}C_{2}B_{n,(2\to\infty)}}{\tau^{\frac{p}{2}}\epsilon}+1\right)\\
        &\quad+\frac{\tau^{2}\min\{\ell,m\}B_{n,(2\to\infty)}^{2}m}{\epsilon^{2}}\log\left[\left(\frac{640\tau\min\{\ell,m\}^{1/2}B_{n,(2\to\infty)}}{\epsilon}+60\right)nd\right]
        .
    \end{align*}
    Choosing $\tau=\left(\frac{C_{\mathrm{s}}(\ell+m)\epsilon^{2}}{B_{n,(2\to\infty)}^{2}\min\{\ell,m\}m}\right)^{\frac{1}{p+2}}$, we have
    \[
        \log\mathcal{N}_{\infty}(\mathcal{F},\|\cdot\|_{2\to\infty},\epsilon;\{X_{i}\}_{i\in[n]})
        \lesssim\left(\frac{[C_{\mathrm{s}}(\ell+m)]^{2}(B_{n,(2\to\infty)}^{2}\min\{\ell,m\}m)^{p}}{\epsilon^{2p}}\right)^{\frac{1}{p+2}}\log C_{2\to\infty}
        .
    \]
\end{proof}

\subsection{Entropy bounds for the class of composite functions}

We finally record a simple composition rule for covering numbers. This lemma allows us to construct a cover of a composed function class by combining a cover of the inner class with sample-dependent covers of the outer class, while keeping track of the Lipschitz propagation of approximation errors.

\begin{lemma}\label{lem:CoveringCompositeFunctions}
    Let $(S_{1},d_{1})$ and $(S_{2},d_{2})$ be metric spaces. Let $\mathcal{F}$ be a class of functions from $S$ to $(S_{1},d_{1})$, and let $\mathcal{G}$ be a class of functions from $(S_{1},d_{1})$ to $(S_{2},d_{2})$. Suppose that there exists a constant $L_{\mathcal{G}}>0$ such that, for every $g\in\mathcal{G}$ and every $y,y'\in\{f(x_{i})\mid f\in\mathcal{F},i\in[n]\}$,
    \[
        d_{2}(g(y),g(y'))
        \le L_{\mathcal{G}}d_{1}(y,y')
    \]
    holds. Suppose $\mathcal{C}_{\mathcal{F}}$ is an $\epsilon_{\mathcal{F}}$-cover of $\mathcal{F}$ on $\{x_{i}\}_{i\in[n]}$. For each $\tilde{f}\in\mathcal{C}_{\mathcal{F}}$, suppose $\mathcal{C}_{\mathcal{G}}(\tilde{f})$ is an $\epsilon_{\mathcal{G}}$-cover of $\mathcal{G}$ on $\{\tilde{f}(x_{i})\}_{i\in[n]}$. Then, $\mathcal{C}=\{\tilde{g}\circ\tilde{f}\mid\tilde{f}\in\mathcal{C}_{\mathcal{F}},\tilde{g}\in\mathcal{C}_{\mathcal{G}}(\tilde{f})\}$ is an $(L_{\mathcal{G}}\epsilon_{\mathcal{F}}+\epsilon_{\mathcal{G}})$-cover of $\mathcal{G}\circ\mathcal{F}$.
\end{lemma}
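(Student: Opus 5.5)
The plan is a direct triangle-inequality argument: fix an arbitrary element $g\circ f\in\mathcal{G}\circ\mathcal{F}$, locate a nearby element of $\mathcal{C}$ in two stages (first replace $f$, then replace $g$), and bound the resulting error at each sample point $x_{i}$.

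Let $f\in\mathcal{F}$ and $g\in\mathcal{G}$ be arbitrary. The steps are as follows.

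\textbf{Step 1 (cover the inner map).} Since $\mathcal{C}_{\mathcal{F}}$ is an $\epsilon_{\mathcal{F}}$-cover of $\mathcal{F}$ on $\{x_{i}\}_{i\in[n]}$, choose $\tilde{f}\in\mathcal{C}_{\mathcal{F}}$ with $\max_{i}d_{1}(f(x_{i}),\tilde{f}(x_{i}))\le\epsilon_{\mathcal{F}}$.

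\textbf{Step 2 (cover the outer map on the covered points).} Since $\mathcal{C}_{\mathcal{G}}(\tilde{f})$ is an $\epsilon_{\mathcal{G}}$-cover of $\mathcal{G}$ on the points $\{\tilde{f}(x_{i})\}_{i\in[n]}$, choose $\tilde{g}\in\mathcal{C}_{\mathcal{G}}(\tilde{f})$ with $\max_{i}d_{2}(g(\tilde{f}(x_{i})),\tilde{g}(\tilde{f}(x_{i})))\le\epsilon_{\mathcal{G}}$. By construction, $\tilde{g}\circ\tilde{f}\in\mathcal{C}$.

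\textbf{Step 3 (combine).} For each $i\in[n]$, the triangle inequality in $(S_{2},d_{2})$ gives
\[
d_{2}\bigl(g(f(x_{i})),\tilde{g}(\tilde{f}(x_{i}))\bigr)\le d_{2}\bigl(g(f(x_{i})),g(\tilde{f}(x_{i}))\bigr)+d_{2}\bigl(g(\tilde{f}(x_{i})),\tilde{g}(\tilde{f}(x_{i}))\bigr).
\]
The first term is bounded using the Lipschitz hypothesis with $y=f(x_{i})$ and $y'=\tilde{f}(x_{i})$, which gives at most $L_{\mathcal{G}}\epsilon_{\mathcal{F}}$. The second term is at most $\epsilon_{\mathcal{G}}$ by Step 2. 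Taking the maximum over $i$ shows that $\mathcal{C}$ is an $(L_{\mathcal{G}}\epsilon_{\mathcal{F}}+\epsilon_{\mathcal{G}})$-cover of $\mathcal{G}\circ\mathcal{F}$.

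\textbf{The one point needing care.} The Lipschitz bound is assumed only on the set $\{f(x_{i})\mid f\in\mathcal{F},\,i\in[n]\}$, so both $f(x_{i})$ and $\tilde{f}(x_{i})$ must belong to it. This holds because the covers in Definition~\ref{def:CoveringNumberFunctionSpace} are proper, so $\tilde{f}\in\mathcal{C}_{\mathcal{F}}\subset\mathcal{F}$. If $\mathcal{C}_{\mathcal{F}}$ were only an external cover, one would first apply Lemma~\ref{lem:ProperCover}, which doubles the radius. Note also that $\mathcal{C}\subset\mathcal{G}\circ\mathcal{F}$, since each $\tilde{g}\in\mathcal{G}$ and each $\tilde{f}\in\mathcal{F}$, so the resulting cover is itself proper. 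Beyond this, the argument is routine.
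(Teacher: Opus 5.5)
Your proposal is correct and follows essentially the same argument as the paper. You pick $\tilde{f}$, then pick $\tilde{g}$ from the $\tilde{f}$-dependent cover, and split $d_{2}(g(f(x_{i})),\tilde{g}(\tilde{f}(x_{i})))$ by the triangle inequality, bounding one term by the Lipschitz hypothesis and the other by the outer cover. Your remark that properness of $\mathcal{C}_{\mathcal{F}}$ (i.e., $\tilde{f}\in\mathcal{F}$) is what puts $\tilde{f}(x_{i})$ in the set where the Lipschitz bound holds makes explicit a point the paper's proof uses only implicitly.
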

\begin{proof}
    For any $f\in\mathcal{F}$ and $g\in\mathcal{G}$, there exist $\tilde{f}\in\mathcal{C}_{\mathcal{F}}$ and $\tilde{g}\in\mathcal{C}_{\mathcal{G}}(\tilde{f})$ that satisfy
    \[
        d_{1}(f(x_{i}),\tilde{f}(x_{i}))
        \le\epsilon_{\mathcal{F}}
        ,\qquad
        d_{2}(g(\tilde{f}(x_{i})),\tilde{g}(\tilde{f}(x_{i})))
        \le\epsilon_{\mathcal{G}}
    \]
    for every $i\in[n]$. Therefore, by the triangle inequality and the Lipschitz continuity of $g$, we have
    \begin{align*}
        d_{2}(g\circ f(x_{i}),\tilde{g}\circ\tilde{f}(x_{i}))
        &\le d_{2}(g\circ f(x_{i}),g\circ\tilde{f}(x_{i}))+d_{2}(g\circ\tilde{f}(x_{i}),\tilde{g}\circ\tilde{f}(x_{i}))\\
        &\le L_{\mathcal{G}}d_{1}(f(x_{i}),\tilde{f}(x_{i}))+d_{2}(g(\tilde{f}(x_{i})),\tilde{g}(\tilde{f}(x_{i})))\\
        &\le L_{\mathcal{G}}\epsilon_{\mathcal{F}}+\epsilon_{\mathcal{G}}
        .
    \end{align*}
    Thus, $\mathcal{C}$ is an $(L_{\mathcal{G}}\epsilon_{\mathcal{F}}+\epsilon_{\mathcal{G}})$-cover of $\mathcal{G}\circ\mathcal{F}$.
\end{proof}

\section{Proofs of the main results}\label{sec:ProofsMainResults}

Throughout this section, the $O(\cdot)$ notation suppresses all logarithmic factors except those involving $n$ and $T$.

\subsection{Covering number bounds for Transformer heads}

Consider a Transformer head $f_{\mathrm{head}}(\ \cdot\ ;W^{QK},W^{V}):\mathbb{R}^{T\times N}\to\mathbb{R}^{T\times N}$ defined in Eq.~\eqref{eq:DefHead}. We define the function class of the Transformer head as
\[
    \mathcal{F}_{\mathrm{head}}
    =\{f_{\mathrm{head}}(\ \cdot\ ;W^{QK},W^{V})\mid W^{QK}\in\mathcal{W}^{QK}(p^{QK},C_{\mathrm{s}}^{QK}),\ W^{V}\in\mathcal{W}^{V}(p^{V},C_{\mathrm{s}}^{V})\}
    ,
\]
where, for fixed $p^{QK},p^{V}\in[0,2]$, the parameter sets $\mathcal{W}^{QK},\mathcal{W}^{V}$ are defined by
\begin{equation}\label{eq:HeadParameterClass}
    \begin{aligned}
        &\mathcal{W}^{QK}(p^{QK},C_{\mathrm{s}}^{QK})
        =\{W\in\mathbb{R}^{N\times N}\mid\|W\|_{\mathrm{s},p^{QK}}^{p^{QK}}\le C_{\mathrm{s}}^{QK},\ \|W\|_{2}\le C_{2}^{QK}\}
        ,\\
        &\mathcal{W}^{V}(p^{V},C_{\mathrm{s}}^{V})
        =\{W\in\mathbb{R}^{N\times N}\mid \|W\|_{\mathrm{s},p^{V}}^{p^{V}}\le C_{\mathrm{s}}^{V},\ \|W\|_{2}\le C_{2}^{V}\}
        .
    \end{aligned}
\end{equation}

\begin{proposition}\label{prop:EntropyTFHead}
    Suppose $\max_{i\in[n]}\|X_{i}\|_{2\to\infty}\le B_{n,(2\to\infty)}$ holds. Then, for any $\epsilon^{QK},\epsilon^{V}>0$, we have
    \begin{align*}
        &\log\mathcal{N}_{\infty}(\mathcal{F}_{\mathrm{head}},\|\cdot\|_{2\to\infty},2C_{2}^{V}B_{n,(2\to\infty)}^{2}\epsilon^{QK}+\epsilon^{V};\{X_{i}\}_{i\in[n]})\\
        &\lesssim\left[\left(\frac{(C_{\mathrm{s}}^{QK})^{2}B_{n,(2\to\infty)}^{2p^{QK}}N^{p^{QK}}}{(\epsilon^{QK})^{2p^{QK}}}\right)^{\frac{1}{p^{QK}+2}}+\left(\frac{(C_{\mathrm{s}}^{V})^{2}B_{n,(2\to\infty)}^{2p^{V}}N^{p^{V}}}{(\epsilon^{V})^{2p^{V}}}\right)^{\frac{1}{p^{V}+2}}\right]N\log(nT)
        ,
    \end{align*}
    where we omit the logarithmic factors except $\log(nT)$. Therefore, for any $\epsilon>0$, setting $\epsilon^{QK}=\epsilon/(4C_{2}^{V}B_{n,(2\to\infty)}^{2})$ and $\epsilon^{V}=\epsilon/2$ yields
    \begin{align*}
        &\log\mathcal{N}_{\infty}(\mathcal{F}_{\mathrm{head}},\|\cdot\|_{2\to\infty},\epsilon;\{X_{i}\}_{i\in[n]})\\
        &\lesssim\left[\left(\frac{(C_{\mathrm{s}}^{QK})^{2}(C_{2}^{V})^{2p^{QK}}B_{n,(2\to\infty)}^{6p^{QK}}N^{p^{QK}}}{\epsilon^{2p^{QK}}}\right)^{\frac{1}{p^{QK}+2}}+\left(\frac{(C_{\mathrm{s}}^{V})^{2}B_{n,(2\to\infty)}^{2p^{V}}N^{p^{V}}}{\epsilon^{2p^{V}}}\right)^{\frac{1}{p^{V}+2}}\right]\\
        &\quad\times N\log(nT)
        .
    \end{align*}
\end{proposition}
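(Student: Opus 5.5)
We will build the cover of $\mathcal{F}_{\mathrm{head}}$ in two stages: cover the query-key part first, then cover the value part conditionally on each query-key cover element, and combine the two with a perturbation bound on $\mathrm{SoftMax}(XW^{QK}X^{\top})XW^{V}$. For the query-key stage, consider the linear class $\mathcal{F}^{QK}=\{X\mapsto XW^{QK}\mid W^{QK}\in\mathcal{W}^{QK}(p^{QK},C_{\mathrm{s}}^{QK})\}$. Applying Theorem~\ref{thm:ParametricInterpolation2inf} with $d=T$ and $\ell=m=N$ gives an $\epsilon^{QK}$-cover $\mathcal{C}^{QK}$ in $\|\cdot\|_{2\to\infty}$ on $\{X_{i}\}$. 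Its log-cardinality is of order $((C_{\mathrm{s}}^{QK})^{2}N^{2}(B_{n,(2\to\infty)}^{2}N^{2})^{p^{QK}}/(\epsilon^{QK})^{2p^{QK}})^{1/(p^{QK}+2)}\log(nT)$. Since $N^{(2+2p)/(p+2)}=N\cdot N^{p/(p+2)}$, this matches the displayed form with the factor $N$ pulled out. For the value stage, fix $\widetilde{W}^{QK}$ and define $A_{i}=\mathrm{SoftMax}(X_{i}\widetilde{W}^{QK}X_{i}^{\top})X_{i}$. By Lemma~\ref{lem:MatrixProduct2InfInf} and Lemma~\ref{lem:SoftmaxMatrixInfinity}, $\|A_{i}\|_{2\to\infty}\le B_{n,(2\to\infty)}$. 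Applying Theorem~\ref{thm:ParametricInterpolation2inf} again, now on inputs $\{A_{i}\}$, gives an $\epsilon^{V}$-cover $\mathcal{C}^{V}(\widetilde{W}^{QK})$ of $\{A\mapsto AW^{V}\}$ of the analogous size.

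Next we combine the two stages. Given $(W^{QK},W^{V})$, pick $\widetilde{W}^{QK}\in\mathcal{C}^{QK}$ and then $\widetilde{W}^{V}\in\mathcal{C}^{V}(\widetilde{W}^{QK})$, and split the error as
\[
\bigl[\mathrm{SoftMax}(X_{i}W^{QK}X_{i}^{\top})-\mathrm{SoftMax}(X_{i}\widetilde{W}^{QK}X_{i}^{\top})\bigr]X_{i}W^{V}+\bigl[A_{i}W^{V}-A_{i}\widetilde{W}^{V}\bigr].
\]
The second bracket is at most $\epsilon^{V}$ by construction. For the first, apply Lemma~\ref{lem:MatrixProduct2InfInf}, then Lemma~\ref{lem:MatrixProduct2Inf2}, which bound it by $\|\Delta\mathrm{SoftMax}\|_{\infty}\,B_{n,(2\to\infty)}C_{2}^{V}$. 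Row $s$ of the softmax difference has $\ell_{1}$ norm at most $2\|X_{i,s\cdot}(W^{QK}-\widetilde{W}^{QK})X_{i}^{\top}\|_{\infty}$ by Lemma~\ref{lem:SoftmaxLipschitzVector1}. Each entry of that vector is an inner product of the row $X_{i,s\cdot}(W^{QK}-\widetilde{W}^{QK})$, of norm at most $\epsilon^{QK}$, with a row of $X_{i}$, of norm at most $B_{n,(2\to\infty)}$. The first bracket is therefore at most $2C_{2}^{V}B_{n,(2\to\infty)}^{2}\epsilon^{QK}$, which gives the radius $2C_{2}^{V}B_{n,(2\to\infty)}^{2}\epsilon^{QK}+\epsilon^{V}$.

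The cover $\{(\widetilde{W}^{QK},\widetilde{W}^{V})\}$ has size at most $|\mathcal{C}^{QK}|\max_{\widetilde{W}^{QK}}|\mathcal{C}^{V}(\widetilde{W}^{QK})|$, so the log-covering numbers add. This is the composition principle of Lemma~\ref{lem:CoveringCompositeFunctions}, implemented directly. The cover elements lie in $\mathcal{F}_{\mathrm{head}}$ because Theorem~\ref{thm:ParametricInterpolation2inf} is stated for proper covers; if needed, Lemma~\ref{lem:ProperCover} fixes this at a constant factor. The second display follows by substituting $\epsilon^{QK}=\epsilon/(4C_{2}^{V}B_{n,(2\to\infty)}^{2})$ and $\epsilon^{V}=\epsilon/2$. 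This turns $B^{2p}/(\epsilon^{QK})^{2p}$ into $(C_{2}^{V})^{2p}B^{6p}/\epsilon^{2p}$ up to constants.

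The main obstacle is bookkeeping rather than conceptual. The first task is checking that the logarithmic factor $\log C_{2\to\infty}$ of Theorem~\ref{thm:ParametricInterpolation2inf} reduces to $\log(nT)$ times factors that are suppressed. The second is making sure the value cover's dependence on $\widetilde{W}^{QK}$ is harmless. This holds because the uniform bound $\|A_{i}\|_{2\to\infty}\le B_{n,(2\to\infty)}$ makes the size bound uniform over $\widetilde{W}^{QK}$.
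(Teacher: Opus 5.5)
Your proof is correct and reaches the same radius $2C_{2}^{V}B_{n,(2\to\infty)}^{2}\epsilon^{QK}+\epsilon^{V}$ and the same additive entropy bound, but you organize the cover differently from the paper.

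The paper has no conditional second stage. It takes two independent proper covers $\mathcal{C}_{\mathcal{F}_{QK}}$ and $\mathcal{C}_{\mathcal{F}_{V}}$, both built once on the raw inputs $\{X_{i}\}_{i\in[n]}$, and forms their product. It then splits the error the other way around: $\mathrm{SoftMax}(X_{i}W^{QK}X_{i}^{\top})X_{i}(W^{V}-\tilde{W}^{V})$ plus $[\mathrm{SoftMax}(X_{i}W^{QK}X_{i}^{\top})-\mathrm{SoftMax}(X_{i}\tilde{W}^{QK}X_{i}^{\top})]X_{i}\tilde{W}^{V}$. The first term is pushed through the row-stochastic attention matrix via Lemma~\ref{lem:SoftmaxMatrixInfinity}, so a value cover measured on $X_{i}$ suffices. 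The second term uses $\|\tilde{W}^{V}\|_{2}\le C_{2}^{V}$, i.e., properness of the value cover.

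You instead cover the value map on the mixed inputs $A_{i}$ separately for each $\tilde{W}^{QK}$. This is the composition pattern of Lemma~\ref{lem:CoveringCompositeFunctions}, which the paper reserves for blocks and layers. It places the true $W^{V}$ in the softmax-difference term, so your error estimate does not depend on properness of the value cover; properness matters only for membership in $\mathcal{F}_{\mathrm{head}}$. The cost is a data-dependent second-stage cover, plus the extra check $\|A_{i}\|_{2\to\infty}\le B_{n,(2\to\infty)}$ that makes its size uniform in $\tilde{W}^{QK}$.

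Because the attention weights act by left multiplication with a row-stochastic matrix, the paper's product construction is slightly simpler here. The two routes are equally sharp.
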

\begin{proof}
    Define linear function classes $\mathcal{F}_{QK}$ and $\mathcal{F}_{V}$ by
    \begin{align*}
        &\mathcal{F}_{QK}
        =\{f:\mathbb{R}^{T\times N}\to\mathbb{R}^{T\times N}\mid f(X)=XW,\ W\in\mathcal{W}^{QK}\}
        ,\\
        &\mathcal{F}_{V}
        =\{f:\mathbb{R}^{T\times N}\to\mathbb{R}^{T\times N}\mid f(X)=XW,\ W\in\mathcal{W}^{V}\}
        .
    \end{align*}
    Suppose $\mathcal{C}_{\mathcal{F}_{QK}}$ is an $\epsilon^{QK}$-cover of $\mathcal{F}_{QK}$, and $\mathcal{C}_{\mathcal{F}_{V}}$ is an $\epsilon^{V}$-cover of $\mathcal{F}_{V}$. By Theorem~\ref{thm:ParametricInterpolation2inf}, we can take $\mathcal{C}_{\mathcal{F}_{QK}}$ and $\mathcal{C}_{\mathcal{F}_{V}}$ with cardinalities satisfying
    \begin{align*}
        &\log|\mathcal{C}_{\mathcal{F}_{QK}}|
        \lesssim\left(\frac{(C_{\mathrm{s}}^{QK})^{2}B_{n,(2\to\infty)}^{2p^{QK}}N^{p^{QK}}}{(\epsilon^{QK})^{2p^{QK}}}\right)^{\frac{1}{p^{QK}+2}}N\log(nT)
        ,\\
        &\log|\mathcal{C}_{\mathcal{F}_{V}}|
        \lesssim\left(\frac{(C_{\mathrm{s}}^{V})^{2}B_{n,(2\to\infty)}^{2p^{V}}N^{p^{V}}}{(\epsilon^{V})^{2p^{V}}}\right)^{\frac{1}{p^{V}+2}}N\log(nT)
        .
    \end{align*}
    We denote the corresponding parameter sets of $\mathcal{C}_{\mathcal{F}_{QK}}$ and $\mathcal{C}_{\mathcal{F}_{V}}$ by $\mathcal{C}_{\mathcal{W}^{QK}}$ and $\mathcal{C}_{\mathcal{W}^{V}}$, respectively. We show that
    \[
        \mathcal{C}_{\mathcal{F}_{\mathrm{head}}}
        =\{f_{\mathrm{head}}(\ \cdot\ ;\tilde{W}^{QK},\tilde{W}^{V})\mid\tilde{W}^{QK}\in\mathcal{C}_{\mathcal{W}^{QK}},\tilde{W}^{V}\in\mathcal{C}_{\mathcal{W}^{V}}\}
    \]
    is a $(2C_{2}^{V}B_{n,(2\to\infty)}^{2}\epsilon^{QK}+\epsilon^{V})$-cover of $\mathcal{F}_{\mathrm{head}}$ on $\{X_{i}\}_{i\in[n]}$.
    
    By the definition of $\mathcal{C}_{\mathcal{F}_{QK}}$ and $\mathcal{C}_{\mathcal{F}_{V}}$, for any $W^{QK}\in\mathcal{W}^{QK}$ and $W^{V}\in\mathcal{W}^{V}$, there exist $\tilde{W}^{QK}\in\mathcal{C}_{\mathcal{W}^{QK}}$ and $\tilde{W}^{V}\in\mathcal{C}_{\mathcal{W}^{V}}$ that satisfy
    \[
        \max_{i\in[n]}\|X_{i}(W^{QK}-\tilde{W}^{QK})\|_{2\to\infty}
        \le\epsilon^{QK}
        ,\qquad
        \max_{i\in[n]}\|X_{i}(W^{V}-\tilde{W}^{V})\|_{2\to\infty}
        \le\epsilon^{V}
        .
    \]
    For any $X\in\mathbb{R}^{T\times N}$, we first bound the distance between $f_{\mathrm{head}}(X;W^{QK},W^{V})$ and $f_{\mathrm{head}}(X;\tilde{W}^{QK},\tilde{W}^{V})$ as
    \begin{align*}
        &\|f_{\mathrm{head}}(X;W^{QK},W^{V})-f_{\mathrm{head}}(X;\tilde{W}^{QK},\tilde{W}^{V})\|_{2\to\infty}\\
        &=\|\mathrm{SoftMax}(XW^{QK}X^{\top})XW^{V}-\mathrm{SoftMax}(X\tilde{W}^{QK}X^{\top})X\tilde{W}^{V}\|_{2\to\infty}\\
        &\le\|\mathrm{SoftMax}(XW^{QK}X^{\top})X(W^{V}-\tilde{W}^{V})\|_{2\to\infty}\\
        &\quad+\|(\mathrm{SoftMax}(XW^{QK}X^{\top})-\mathrm{SoftMax}(X\tilde{W}^{QK}X^{\top}))X\tilde{W}^{V}\|_{2\to\infty}
        .
    \end{align*}
    The first term on the right-hand side can be bounded by
    \begin{align*}
        &\|\mathrm{SoftMax}(XW^{QK}X^{\top})X(W^{V}-\tilde{W}^{V})\|_{2\to\infty}\\
        &\le\|\mathrm{SoftMax}(XW^{QK}X^{\top})\|_{\infty}\|X(W^{V}-\tilde{W}^{V})\|_{2\to\infty}
        \le\|X(W^{V}-\tilde{W}^{V})\|_{2\to\infty}
        ,
    \end{align*}
    where the last inequality follows from Lemma~\ref{lem:SoftmaxMatrixInfinity}. For the second term, it holds that
    \begin{align*}
        &\|(\mathrm{SoftMax}(XW^{QK}X^{\top})-\mathrm{SoftMax}(X\tilde{W}^{QK}X^{\top}))X\tilde{W}^{V}\|_{2\to\infty}\\
        &=\max_{t\in[T]}\|(\mathrm{SoftMax}_{t\cdot}(XW^{QK}X^{\top})-\mathrm{SoftMax}_{t\cdot}(X\tilde{W}^{QK}X^{\top}))X\tilde{W}^{V}\|_{2}\\
        &\le\max_{t\in[T]}\|(\mathrm{SoftMax}_{t\cdot}(XW^{QK}X^{\top})-\mathrm{SoftMax}_{t\cdot}(X\tilde{W}^{QK}X^{\top}))^{\top}\|_{1}\|X\tilde{W}^{V}\|_{2\to\infty}\\
        &\le2\max_{t\in[T]}\|((XW^{QK}X^{\top})_{t\cdot}-(X\tilde{W}^{QK}X^{\top})_{t\cdot})^{\top}\|_{\infty}\|X\|_{2\to\infty}\|\tilde{W}^{V}\|_{2}
        ,
    \end{align*}
    where the first inequality follows from Lemma~\ref{lem:VecMatrixProduct} and the second inequality follows from Lemmas~\ref{lem:SoftmaxLipschitzVector1}~and~\ref{lem:MatrixProduct2Inf2}. Using Lemma~\ref{lem:MatrixProduct2Inf2} once again, we further bound as
    \begin{align*}
        &\max_{t\in[T]}\|((XW^{QK}X^{\top})_{t\cdot}-(X\tilde{W}^{QK}X^{\top})_{t\cdot})^{\top}\|_{\infty}
        =\max_{t\in[T]}\max_{s\in[T]}|X_{t\cdot}(W^{QK}-\tilde{W}^{QK})(X_{s\cdot})^{\top}|\\
        &=\max_{s\in[T]}\|X(W^{QK}-\tilde{W}^{QK})(X_{s\cdot})^{\top}\|_{2\to\infty}\\
        &\le\max_{s\in[T]}\|X(W^{QK}-\tilde{W}^{QK})\|_{2\to\infty}\|(X_{s\cdot})^{\top}\|_{2}
        =\|X(W^{QK}-\tilde{W}^{QK})\|_{2\to\infty}\|X\|_{2\to\infty}
        ,
    \end{align*}
    where the last equality uses $\|v\|_{2}=\|v^{\top}\|_{2}$ for any vector $v$. Combining the results, we have
    \begin{align*}
        &\max_{i\in[n]}\|f_{\mathrm{head}}(X_{i};W^{QK},W^{V})-f_{\mathrm{head}}(X_{i};\tilde{W}^{QK},\tilde{W}^{V})\|_{2\to\infty}\\
        &\le\max_{i\in[n]}\|X_{i}(W^{V}-\tilde{W}^{V})\|_{2\to\infty}+2\max_{i\in[n]}\|X_{i}(W^{QK}-\tilde{W}^{QK})\|_{2\to\infty}\|X_{i}\|_{2\to\infty}^{2}\|\tilde{W}^{V}\|_{2}\\
        &\le\epsilon^{V}+2\epsilon^{QK}B_{n,(2\to\infty)}^{2}C_{2}^{V}
        ,
    \end{align*}
    which implies that $\mathcal{C}_{\mathcal{F}_{\mathrm{head}}}$ is a $(2C_{2}^{V}B_{n,(2\to\infty)}^{2}\epsilon^{QK}+\epsilon^{V})$-cover of $\mathcal{F}_{\mathrm{head}}$ on $\{X_{i}\}_{i\in[n]}$. Thus, we obtain the entropy bound
    \begin{align*}
        &\log\mathcal{N}_{\infty}(\mathcal{F}_{\mathrm{head}},\|\cdot\|_{2\to\infty},2C_{2}^{V}B_{n,(2\to\infty)}^{2}\epsilon^{QK}+\epsilon^{V};\{X_{i}\}_{i\in[n]})\\
        &\le\log|\mathcal{C}_{\mathcal{F}_{\mathrm{head}}}|\\
        &\le\log|\mathcal{C}_{\mathcal{F}_{QK}}|+\log|\mathcal{C}_{\mathcal{F}_{V}}|\\
        &\lesssim\left[\left(\frac{(C_{\mathrm{s}}^{QK})^{2}B_{n,(2\to\infty)}^{2p^{QK}}N^{p^{QK}}}{(\epsilon^{QK})^{2p^{QK}}}\right)^{\frac{1}{p^{QK}+2}}+\left(\frac{(C_{\mathrm{s}}^{V})^{2}B_{n,(2\to\infty)}^{2p^{V}}N^{p^{V}}}{(\epsilon^{V})^{2p^{V}}}\right)^{\frac{1}{p^{V}+2}}\right]N\log(nT)
        .
    \end{align*}
\end{proof}

\subsection{Covering number bounds for Transformer blocks}

We define the Transformer block $f_{\mathrm{block}}(\ \cdot\ ;W^{QK},W^{V},W^{M}):\mathbb{R}^{T\times N}\to\mathbb{R}^{T\times N}$ by Eq.~\eqref{eq:DefBlock}. Define $\mathcal{W}^{QK}(p^{QK},C_{\mathrm{s}}^{QK})$ and $\mathcal{W}^{V}(p^{V},C_{\mathrm{s}}^{V})$ by Eq.~\eqref{eq:HeadParameterClass}, and define
\[
    \mathcal{W}^{M}(p^{M},C_{\mathrm{s}}^{M})
    =\{W\in\mathbb{R}^{N\times N}\mid \|W\|_{\mathrm{s},p^{M}}^{p^{M}}\le C_{\mathrm{s}}^{M},\ \|W\|_{2}\le C_{2}^{M}\}
    .
\]
We introduce the class of Transformer blocks as
\[
    \mathcal{F}_{\mathrm{block}}
    =\{f_{\mathrm{block}}(\ \cdot\ ;W^{QK},W^{V},W^{M})\mid W^{\star}\in\mathcal{W}^{\star}(p^{\star},C_{\mathrm{s}}^{\star}),\ \star\in\{QK,V,M\}\}
    .
\]

We first note that by the Lipschitz property of the activation function $\phi:\mathbb{R}^{N}\to\mathbb{R}^{N}$ in Assumption~\ref{asm:Activation}, the same Lipschitz property holds for the rowwise extension of $\phi$ with respect to the $\|\cdot\|_{2\to\infty}$ norm. Indeed, for all $Z,Z'\in\mathbb{R}^{T\times N}$, it holds that
\begin{equation}\label{eq:ActivationMatrixLipschitz}
    \|\phi(Z)-\phi(Z')\|_{2\to\infty}
    =\max_{t\in[T]}\|\phi(Z_{t\cdot})-\phi(Z'_{t\cdot})\|_{2}
    \le L_{\phi}\max_{t\in[T]}\|Z_{t\cdot}-Z'_{t\cdot}\|_{2}
    =L_{\phi}\|Z-Z'\|_{2\to\infty}
    .
\end{equation}

\begin{proposition}\label{prop:EntropyTfBlock}
    Suppose $\max_{i\in[n]}\|X_{i}\|_{2\to\infty}\le B_{n,(2\to\infty)}$ holds. Then, for any $\epsilon^{QK},\epsilon^{V},\epsilon^{M}>0$, we have
    \begin{align*}
        &\log\mathcal{N}_{\infty}(\mathcal{F}_{\mathrm{block}},\|\cdot\|_{2\to\infty},2L_{\phi}C_{2}^{V}C_{2}^{M}B_{n,(2\to\infty)}^{2}\epsilon^{QK}+L_{\phi}C_{2}^{M}\epsilon^{V}+\epsilon^{M};\{X_{i}\}_{i\in[n]})\\
        &\lesssim(\Upsilon^{QK}(\epsilon^{QK})+\Upsilon^{V}(\epsilon^{V})+\Upsilon^{M}(\epsilon^{M}))N\log(nT)
        ,
    \end{align*}
    where
    \[
        \Upsilon^{\star}(\epsilon^{\star})
        =\left(\frac{(C_{\mathrm{s}}^{\star})^{2}B_{n,(2\to\infty)}^{2p^{\star}}N^{p^{\star}}}{(\epsilon^{\star})^{2p^{\star}}}\right)^{\frac{1}{p^{\star}+2}}
        ,\qquad
        \Upsilon^{M}(\epsilon^{M})
        =\left(\frac{(C_{\mathrm{s}}^{M})^{2}L_{\phi}^{2p^{M}}N^{p^{M}}}{(\epsilon^{M})^{2p^{M}}}\right)^{\frac{1}{p^{M}+2}}
    \]
    with $\star\in\{QK,V\}$.
\end{proposition}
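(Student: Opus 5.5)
We view the block as a composition $f_{\mathrm{block}}=g_{M}\circ h$, where $h=\phi\circ\Pi_{\mathrm{norm}}\circ f_{\mathrm{head}}(\cdot;W^{QK},W^{V})$ is the inner map and $g_{M}(Z)=\Pi_{\mathrm{norm}}(ZW^{M})$ is the outer map. The argument then combines Proposition~\ref{prop:EntropyTFHead} for the inner class with Theorem~\ref{thm:ParametricInterpolation2inf} for the outer linear class, glued together by Lemma~\ref{lem:CoveringCompositeFunctions}.

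\textbf{Inner class.} Let $\mathcal{H}=\{\phi\circ\Pi_{\mathrm{norm}}\circ f\mid f\in\mathcal{F}_{\mathrm{head}}\}$. By Lemma~\ref{lem:Edelman2022LemA9} and Eq.~\eqref{eq:ActivationMatrixLipschitz}, the map $\phi\circ\Pi_{\mathrm{norm}}$ is $L_{\phi}$-Lipschitz in $\|\cdot\|_{2\to\infty}$. Hence a cover $\mathcal{C}_{\mathrm{head}}$ of $\mathcal{F}_{\mathrm{head}}$ at scale $2C_{2}^{V}B_{n,(2\to\infty)}^{2}\epsilon^{QK}+\epsilon^{V}$ pushes forward to a cover of $\mathcal{H}$ at scale $L_{\phi}(2C_{2}^{V}B_{n,(2\to\infty)}^{2}\epsilon^{QK}+\epsilon^{V})$, with the same cardinality. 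Proposition~\ref{prop:EntropyTFHead} bounds its log-cardinality by $(\Upsilon^{QK}(\epsilon^{QK})+\Upsilon^{V}(\epsilon^{V}))N\log(nT)$.

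\textbf{Outer class.} Every element of $\mathcal{H}$ has row norms bounded independently of $B_{n,(2\to\infty)}$. Indeed, $\phi(0)=0$ and $\Pi_{\mathrm{norm}}$ outputs rows of norm at most one, so $\|\phi(\Pi_{\mathrm{norm}}(\cdot))\|_{2\to\infty}\le L_{\phi}$. This is why $\Upsilon^{M}$ carries $L_{\phi}$ rather than $B_{n,(2\to\infty)}$. For each cover element $\tilde h$, we apply Theorem~\ref{thm:ParametricInterpolation2inf} to the linear class $\{Z\mapsto ZW^{M}\mid W^{M}\in\mathcal{W}^{M}\}$ on the sample $\{\tilde h(X_{i})\}_{i\in[n]}$, with $B=L_{\phi}$, $\ell=m=d\cdot$-independent $N$, $d=T$, and radius $\epsilon^{M}$. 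With $\ell=m=N$, we have $[C_{\mathrm{s}}^{M}(2N)]^{2}(L_{\phi}^{2}N^{2})^{p}$ raised to the power $1/(p+2)$, which equals $\Upsilon^{M}(\epsilon^{M})\cdot N$ up to constants. The logarithm $\log C_{2\to\infty}$ is $\log(nT)$ times factors that are polylogarithmic in the other quantities and are suppressed. Composing with the outer $\Pi_{\mathrm{norm}}$ is $1$-Lipschitz by Lemma~\ref{lem:Edelman2022LemA9}, so it preserves the radius $\epsilon^{M}$. The same computation, with $B_{n,(2\to\infty)}$ and $\ell=m=N$, $d=T$, is also what yields the head bound.

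\textbf{Composition.} For any $Z,Z'$ in the image set, $\|\Pi_{\mathrm{norm}}(ZW^{M})-\Pi_{\mathrm{norm}}(Z'W^{M})\|_{2\to\infty}\le\|Z-Z'\|_{2\to\infty}\|W^{M}\|_{2}\le C_{2}^{M}\|Z-Z'\|_{2\to\infty}$, using Lemmas~\ref{lem:Edelman2022LemA9} and~\ref{lem:MatrixProduct2Inf2}. So the outer class is uniformly $C_{2}^{M}$-Lipschitz. Lemma~\ref{lem:CoveringCompositeFunctions} then gives a cover of $\mathcal{F}_{\mathrm{block}}$ at radius
\[
C_{2}^{M}L_{\phi}\bigl(2C_{2}^{V}B_{n,(2\to\infty)}^{2}\epsilon^{QK}+\epsilon^{V}\bigr)+\epsilon^{M},
\]
which is exactly the stated radius. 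Its log-cardinality is the head entropy plus the maximum over $\tilde h$ of the outer entropy. Since the outer bound is uniform in $\tilde h$, the sum gives the claim. The cover may be improper (the outer covers depend on $\tilde h$), but Lemma~\ref{lem:ProperCover} handles this at the cost of a constant factor in the radius, which is absorbed by $\lesssim$ after rescaling.

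\textbf{Main obstacle.} The delicate step is the outer covering. The Theorem~\ref{thm:ParametricInterpolation2inf} cover must be built on the data-dependent sample $\{\tilde h(X_{i})\}$, and its size must be bounded uniformly in $\tilde h$. This uniformity relies on the row-norm bound $L_{\phi}$ being independent of $\tilde h$, and on the $\log(nd)$ factor becoming $\log(nT)$ because the sample still consists of $n$ matrices with $T$ rows each.
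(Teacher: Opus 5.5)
Your proof is correct and follows the paper's argument. You cover the head via Proposition~\ref{prop:EntropyTFHead}, cover the linear class $Z\mapsto ZW^{M}$ on each sample $\{\phi(\Pi_{\mathrm{norm}}(\tilde f_{\mathrm{head}}(X_i)))\}_{i\in[n]}$ via Theorem~\ref{thm:ParametricInterpolation2inf} with row-norm bound $L_{\phi}$, and propagate the errors through the $1$-Lipschitz outer $\Pi_{\mathrm{norm}}$ and $\|W^{M}\|_{2}\le C_{2}^{M}$; the paper does this last step by direct computation rather than citing Lemma~\ref{lem:CoveringCompositeFunctions}, but it is the same estimate. One small correction: the composed cover is already proper, because each element $\tilde g\circ\tilde h$ uses parameters in $\mathcal{W}^{QK}\times\mathcal{W}^{V}\times\mathcal{W}^{M}$, so the dependence of the outer cover on $\tilde h$ does not make it improper and the final appeal to Lemma~\ref{lem:ProperCover} is unnecessary (though harmless).
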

\begin{proof}
    Define $\epsilon_{\mathrm{head}}=2C_{2}^{V}B_{n,(2\to\infty)}^{2}\epsilon^{QK}+\epsilon^{V}$, and suppose $\mathcal{C}_{\mathcal{F}_{\mathrm{head}}}\subset\mathcal{F}_{\mathrm{head}}$ is an $\epsilon_{\mathrm{head}}$-cover of $\mathcal{F}_{\mathrm{head}}$ on $\{X_{i}\}_{i\in[n]}$. Note that by Proposition~\ref{prop:EntropyTFHead}, we can choose $\mathcal{C}_{\mathcal{F}_{\mathrm{head}}}$ that satisfies
    \begin{align*}
        \log|\mathcal{C}_{\mathcal{F}_{\mathrm{head}}}|
        \lesssim[\Upsilon^{QK}(\epsilon^{QK})+\Upsilon^{V}(\epsilon^{V})]N\log(nT)
        .
    \end{align*}
    Consider a linear function class
    \[
        \mathcal{F}_{M}
        =\{f:\mathbb{R}^{T\times N}\to\mathbb{R}^{T\times N}\mid f(X)=XW^{M},\ W^{M}\in\mathcal{W}^{M}\}
        ,
    \]
    and define $\mathcal{C}_{\mathcal{F}_{M}}(\tilde{f}_{\mathrm{head}})$ as an $\epsilon^{M}$-cover of $\mathcal{F}_{M}$ on $\{\phi(\Pi_{\mathrm{norm}}(\tilde{f}_{\mathrm{head}}(X_{i})))\}_{i\in[n]}$. Note that the input $\{\phi(\Pi_{\mathrm{norm}}(\tilde{f}_{\mathrm{head}}(X_{i})))\}_{i\in[n]}$ has the norm bound
    \begin{align*}
        &\max_{i\in[n]}\|\phi(\Pi_{\mathrm{norm}}(\tilde{f}_{\mathrm{head}}(X_{i})))\|_{2\to\infty}
        =\max_{i\in[n]}\|\phi(\Pi_{\mathrm{norm}}(\tilde{f}_{\mathrm{head}}(X_{i})))-\phi(0)\|_{2\to\infty}\\
        &\le L_{\phi}\max_{i\in[n]}\|\Pi_{\mathrm{norm}}(\tilde{f}_{\mathrm{head}}(X_{i}))\|_{2\to\infty}
        \le L_{\phi}
        ,
    \end{align*}
    thanks to the normalization $\Pi_{\mathrm{norm}}$. Thus, by Theorem~\ref{thm:ParametricInterpolation2inf}, we can choose such a cover with cardinality
    \[
        \log|\mathcal{C}_{\mathcal{F}_{M}}(\tilde{f}_{\mathrm{head}})|
        \lesssim\Upsilon^{M}(\epsilon^{M})N\log(nT)
        .
    \]
    Define
    \[
        \mathcal{C}_{\mathcal{F}_{\mathrm{block}}}
        =\{X\mapsto\Pi_{\mathrm{norm}}\circ\tilde{f}_{M}\circ\phi\circ\Pi_{\mathrm{norm}}\circ\tilde{f}_{\mathrm{head}}(X)\mid\tilde{f}_{\mathrm{head}}\in\mathcal{C}_{\mathcal{F}_{\mathrm{head}}},\tilde{f}_{M}\in\mathcal{C}_{\mathcal{F}_{M}}(\tilde{f}_{\mathrm{head}})\}
        .
    \]
    Then, for any $f_{\mathrm{block}}=\Pi_{\mathrm{norm}}\circ f_{M}\circ\phi\circ\Pi_{\mathrm{norm}}\circ f_{\mathrm{head}}\in\mathcal{F}_{\mathrm{block}}$, there exist $\tilde{f}_{\mathrm{head}}\in\mathcal{C}_{\mathcal{F}_{\mathrm{head}}}$ and $\tilde{f}_{M}\in\mathcal{C}_{\mathcal{F}_{M}}(\tilde{f}_{\mathrm{head}})$ that satisfy
    \begin{align*}
        &\|f_{\mathrm{head}}(X_{i})-\tilde{f}_{\mathrm{head}}(X_{i})\|_{2\to\infty}
        \le\epsilon_{\mathrm{head}}
        ,\\
        &\|f_{M}(\phi(\Pi_{\mathrm{norm}}(\tilde{f}_{\mathrm{head}}(X_{i}))))-\tilde{f}_{M}(\phi(\Pi_{\mathrm{norm}}(\tilde{f}_{\mathrm{head}}(X_{i}))))\|_{2\to\infty}
        \le\epsilon^{M}
        .
    \end{align*}
    By Lemma~\ref{lem:Edelman2022LemA9} and Eq.~\eqref{eq:ActivationMatrixLipschitz}, we have
    \begin{align*}
        &\|f_{\mathrm{block}}(X_{i})-\tilde{f}_{\mathrm{block}}(X_{i})\|_{2\to\infty}\\
        &=\|\Pi_{\mathrm{norm}}\circ f_{M}\circ\phi\circ\Pi_{\mathrm{norm}}\circ f_{\mathrm{head}}(X_{i})-\Pi_{\mathrm{norm}}\circ\tilde{f}_{M}\circ\phi\circ\Pi_{\mathrm{norm}}\circ\tilde{f}_{\mathrm{head}}(X_{i})\|_{2\to\infty}\\
        &\le\|f_{M}(\phi(\Pi_{\mathrm{norm}}(f_{\mathrm{head}}(X_{i}))))-\tilde{f}_{M}(\phi(\Pi_{\mathrm{norm}}(\tilde{f}_{\mathrm{head}}(X_{i}))))\|_{2\to\infty}
        .
    \end{align*}
    By a computation similar to the proof of Lemma~\ref{lem:CoveringCompositeFunctions}, we can bound the norm on the right-hand side as
    \begin{align*}
        &\|f_{M}(\phi(\Pi_{\mathrm{norm}}(f_{\mathrm{head}}(X_{i}))))-\tilde{f}_{M}(\phi(\Pi_{\mathrm{norm}}(\tilde{f}_{\mathrm{head}}(X_{i}))))\|_{2\to\infty}\\
        &\le\|f_{M}(\phi(\Pi_{\mathrm{norm}}(f_{\mathrm{head}}(X_{i}))))-f_{M}(\phi(\Pi_{\mathrm{norm}}(\tilde{f}_{\mathrm{head}}(X_{i}))))\|_{2\to\infty}\\
        &\quad+\|f_{M}(\phi(\Pi_{\mathrm{norm}}(\tilde{f}_{\mathrm{head}}(X_{i}))))-\tilde{f}_{M}(\phi(\Pi_{\mathrm{norm}}(\tilde{f}_{\mathrm{head}}(X_{i}))))\|_{2\to\infty}\\
        &\le\|[\phi(\Pi_{\mathrm{norm}}(f_{\mathrm{head}}(X_{i})))-\phi(\Pi_{\mathrm{norm}}(\tilde{f}_{\mathrm{head}}(X_{i})))]W^{M}\|_{2\to\infty}+\epsilon^{M}
        .
    \end{align*}
    By Lemma~\ref{lem:MatrixProduct2Inf2} and Assumption~\ref{asm:Activation}, we further bound the first term as
    \begin{align*}
        &\|[\phi(\Pi_{\mathrm{norm}}(f_{\mathrm{head}}(X_{i})))-\phi(\Pi_{\mathrm{norm}}(\tilde{f}_{\mathrm{head}}(X_{i})))]W^{M}\|_{2\to\infty}\\
        &\le L_{\phi}\|f_{\mathrm{head}}(X_{i})-\tilde{f}_{\mathrm{head}}(X_{i})\|_{2\to\infty}\|W^{M}\|_{2}\\
        &\le L_{\phi}C_{2}^{M}\epsilon_{\mathrm{head}}
        .
    \end{align*}
    Hence, we have
    \[
        \|f_{\mathrm{block}}(X_{i})-\tilde{f}_{\mathrm{block}}(X_{i})\|_{2\to\infty}
        \le L_{\phi}C_{2}^{M}\epsilon_{\mathrm{head}}+\epsilon^{M}
        ,
    \]
    which implies that $\mathcal{C}_{\mathcal{F}_{\mathrm{block}}}$ is an $L_{\phi}C_{2}^{M}\epsilon_{\mathrm{head}}+\epsilon^{M}$-cover of $\mathcal{F}_{\mathrm{block}}$ on $\{X_{i}\}_{i\in[n]}$. Finally, we obtain the entropy bound
    \begin{align*}
        &\log\mathcal{N}_{\infty}(\mathcal{F}_{\mathrm{block}},\|\cdot\|_{2\to\infty},2L_{\phi}C_{2}^{V}C_{2}^{M}B_{n,(2\to\infty)}^{2}\epsilon^{QK}+L_{\phi}C_{2}^{M}\epsilon^{V}+\epsilon^{M};\{X_{i}\}_{i\in[n]})\\
        &\le\log|\mathcal{C}_{\mathcal{F}_{\mathrm{block}}}|\\
        &\le\log|\mathcal{C}_{\mathcal{F}_{\mathrm{head}}}|+\sup_{\tilde{f}_{\mathrm{head}}\in\mathcal{C}_{\mathcal{F}_{\mathrm{head}}}}\log|\mathcal{C}_{\mathcal{F}_{M}}(\tilde{f}_{\mathrm{head}})|\\
        &\lesssim(\Upsilon^{QK}(\epsilon^{QK})+\Upsilon^{V}(\epsilon^{V})+\Upsilon^{M}(\epsilon^{M}))N\log(nT)
        .
    \end{align*}
\end{proof}

\subsection{Covering number bounds for multi-layer Transformers}

We next consider a multi-layer Transformer in Eq.~\eqref{eq:DefMultiLayerTf}. For each $\star\in\{QK,V,M\}$ and $\ell\in[L]$, set
\[
    \mathcal{W}^{\star,(\ell)}(p^{\star,(\ell)},C_{\mathrm{s}}^{\star,(\ell)})
    =\{W\in\mathbb{R}^{N\times N}\mid\|W\|_{\mathrm{s},p^{\star,(\ell)}}^{p^{\star,(\ell)}}\le C_{\mathrm{s}}^{\star,(\ell)},\ \|W\|_{2}\le C_{2}^{\star,(\ell)}\}
    .
\]
We also define
\[
    \mathcal{W}^{(\ell)}(\bm{p}^{(\ell)},\bm{C}_{\mathrm{s}}^{(\ell)})
    =\mathcal{W}^{QK,(\ell)}(p^{QK,(\ell)},C_{\mathrm{s}}^{QK,(\ell)})\times\mathcal{W}^{V,(\ell)}(p^{V,(\ell)},C_{\mathrm{s}}^{V,(\ell)})\times\mathcal{W}^{M,(\ell)}(p^{M,(\ell)},C_{\mathrm{s}}^{M,(\ell)})
\]
and
\[
    \mathcal{W}^{(1:\ell)}(\bm{p}^{(1:\ell)},\bm{C}_{\mathrm{s}}^{(1:\ell)})
    =\mathcal{W}^{(1)}(\bm{p}^{(1)},\bm{C}_{\mathrm{s}}^{(1)})\times\cdots\times\mathcal{W}^{(\ell)}(\bm{p}^{(\ell)},\bm{C}_{\mathrm{s}}^{(\ell)})
    \qquad(\ell=2,\dots,L)
    .
\]
We then define the class of $\ell$-layer Transformers by
\[
    \mathcal{F}_{\mathrm{tf}}^{(\ell)}
    =\{f:\mathbb{R}^{T\times N}\to\mathbb{R}^{T\times N}\mid f(X)=f_{\mathrm{tf}}^{(\ell)}(X;W^{(1:\ell)}),\ W^{(1:\ell)}\in\mathcal{W}^{(1:\ell)}(\bm{p}^{(1:\ell)},\bm{C}_{\mathrm{s}}^{(1:\ell)})\}
    .
\]
For each $\ell\in[L]$, we also define
\[
    \mathcal{F}_{\mathrm{block}}^{(\ell)}
    =\{f:\mathbb{R}^{T\times N}\to\mathbb{R}^{T\times N}\mid f(X)=f_{\mathrm{block}}(X;W^{(\ell)}),W^{(\ell)}\in\mathcal{W}^{(\ell)}(\bm{p}^{(\ell)},\bm{C}_{\mathrm{s}}^{(\ell)})\}
    .
\]

\begin{lemma}\label{lem:MultiLayerTFLipschitz}
    For every $\ell\in\{2,\dots,L\}$, consider any $f\in\mathcal{F}_{\mathrm{block}}^{(\ell)}$. Suppose $Z,Z'\in\mathbb{R}^{T\times N}$ satisfy
    \[
        \|Z\|_{2\to\infty}
        \le B_{2\to\infty}
        ,\qquad
        \|Z'\|_{2\to\infty}
        \le B_{2\to\infty}
        .
    \]
    Then, we have
    \[
        \|f(Z)-f(Z')\|_{2\to\infty}
        \le L_{\phi}C_{2}^{M,(\ell)}C_{2}^{V,(\ell)}(1+4C_{2}^{QK,(\ell)}B_{2\to\infty}^{2})\|Z-Z'\|_{2\to\infty}
        .
    \]
\end{lemma}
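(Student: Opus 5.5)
We peel the block $f=\Pi_{\mathrm{norm}}\circ f_{M}\circ\phi\circ\Pi_{\mathrm{norm}}\circ f_{\mathrm{head}}$ from the outside in, tracking the $\|\cdot\|_{2\to\infty}$ distance at each stage. By Lemma~\ref{lem:Edelman2022LemA9}, the outer projection is $1$-Lipschitz. By Lemma~\ref{lem:MatrixProduct2Inf2}, the right multiplication by $W^{M,(\ell)}$ contributes a factor $C_{2}^{M,(\ell)}$. By Eq.~\eqref{eq:ActivationMatrixLipschitz}, the activation contributes $L_{\phi}$, and the inner projection is again $1$-Lipschitz. This reduces the claim to the head Lipschitz bound
\[
    \|f_{\mathrm{head}}(Z)-f_{\mathrm{head}}(Z')\|_{2\to\infty}\le C_{2}^{V,(\ell)}(1+4C_{2}^{QK,(\ell)}B_{2\to\infty}^{2})\|Z-Z'\|_{2\to\infty}.
\]

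For the head, write $S=\mathrm{SoftMax}(ZW^{QK}Z^{\top})$ and $S'=\mathrm{SoftMax}(Z'W^{QK}Z'^{\top})$. Split
\[
    SZW^{V}-S'Z'W^{V}=S(Z-Z')W^{V}+(S-S')Z'W^{V}.
\]
For the first term, Lemmas~\ref{lem:MatrixProduct2InfInf}, \ref{lem:SoftmaxMatrixInfinity} and \ref{lem:MatrixProduct2Inf2} give the bound $C_{2}^{V}\|Z-Z'\|_{2\to\infty}$. For the second term, proceed row by row, exactly as in the proof of Proposition~\ref{prop:EntropyTFHead}. Lemma~\ref{lem:VecMatrixProduct} together with Lemma~\ref{lem:SoftmaxLipschitzVector1} gives the bound
\[
    2\max_{t,s}|(ZW^{QK}Z^{\top})_{ts}-(Z'W^{QK}Z'^{\top})_{ts}|\cdot\|Z'\|_{2\to\infty}C_{2}^{V}.
\]

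The remaining step is the entrywise difference of the bilinear logits, which is where the constant $4$ must come from. Write
\[
    ZW^{QK}Z^{\top}-Z'W^{QK}Z'^{\top}=(Z-Z')W^{QK}Z^{\top}+Z'W^{QK}(Z-Z')^{\top}.
\]
By Cauchy--Schwarz and the spectral bound, each entry of the first term is at most $C_{2}^{QK}B_{2\to\infty}\|Z-Z'\|_{2\to\infty}$, and the same holds for the second term. Hence the maximal entry is at most $2C_{2}^{QK}B_{2\to\infty}\|Z-Z'\|_{2\to\infty}$. Multiplying by the prefactor $2\|Z'\|_{2\to\infty}C_{2}^{V}\le 2B_{2\to\infty}C_{2}^{V}$ yields $4C_{2}^{QK}C_{2}^{V}B_{2\to\infty}^{2}\|Z-Z'\|_{2\to\infty}$. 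Adding the first term gives the head bound, and composing with the factors from the first paragraph completes the argument.

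The only delicate point is keeping the bilinear perturbation symmetric so that both pieces use the common bound $B_{2\to\infty}$ on $Z$ and $Z'$. The rest is a direct reuse of the head computation already carried out in Proposition~\ref{prop:EntropyTFHead}.
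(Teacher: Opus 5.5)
Your proposal is correct and follows essentially the same route as the paper. You peel off the outer projection, $W^{M,(\ell)}$, $\phi$, and the inner projection to reduce to a head Lipschitz bound, split the head difference as $S(Z-Z')W^{V}+(S-S')Z'W^{V}$, and bound the logit difference with a two-term bilinear decomposition, which produces the constant $4$. The paper groups that difference entrywise as $Z_{t\cdot}W^{QK}(Z_{s\cdot}-Z'_{s\cdot})^{\top}+(Z_{t\cdot}-Z'_{t\cdot})W^{QK}(Z'_{s\cdot})^{\top}$ instead of your $(Z-Z')W^{QK}Z^{\top}+Z'W^{QK}(Z-Z')^{\top}$, but this is purely cosmetic and yields the same bound.
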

\begin{proof}
    Take any $f(\cdot)=f_{\mathrm{block}}(\ \cdot\ ;W^{QK,(\ell)},W^{V,(\ell)},W^{M,(\ell)})\in\mathcal{F}_{\mathrm{block}}^{(\ell)}$. Then, by Lemmas~\ref{lem:Edelman2022LemA9},~\ref{lem:MatrixProduct2Inf2}, and Eq.~\eqref{eq:ActivationMatrixLipschitz}, we have
    \begin{align*}
        &\|f(Z)-f(Z')\|_{2\to\infty}\\
        &\le\|\phi(\Pi_{\mathrm{norm}}(f_{\mathrm{head}}(Z;W^{QK,(\ell)},W^{V,(\ell)})))-\phi(\Pi_{\mathrm{norm}}(f_{\mathrm{head}}(Z';W^{QK,(\ell)},W^{V,(\ell)})))\|_{2\to\infty}C_{2}^{M,(\ell)}\\
        &\le L_{\phi}C_{2}^{M,(\ell)}\|f_{\mathrm{head}}(Z;W^{QK,(\ell)},W^{V,(\ell)})-f_{\mathrm{head}}(Z';W^{QK,(\ell)},W^{V,(\ell)})\|_{2\to\infty}
        .
    \end{align*}
    Hence it suffices to bound the distance between the head outputs. Write
    \[
        A
        =\mathrm{SoftMax}(ZW^{QK,(\ell)}Z^{\top})
        ,\qquad
        A'
        =\mathrm{SoftMax}(Z'W^{QK,(\ell)}Z'^{\top})
        .
    \]
    Then, we have
    \begin{align}
        \notag
        &\|f_{\mathrm{head}}(Z;W^{QK,(\ell)},W^{V,(\ell)})-f_{\mathrm{head}}(Z';W^{QK,(\ell)},W^{V,(\ell)})\|_{2\to\infty}\\
        \notag
        &=\|AZW^{V,(\ell)}-A'Z'W^{V,(\ell)}\|_{2\to\infty}\\
        \label{eq:HeqdLipschitz}
        &\le\|A(Z-Z')W^{V,(\ell)}\|_{2\to\infty}+\|(A-A')Z'W^{V,(\ell)}\|_{2\to\infty}
        .
    \end{align}
    For the first term, using Lemmas~\ref{lem:SoftmaxMatrixInfinity}, \ref{lem:MatrixProduct2InfInf} and \ref{lem:MatrixProduct2Inf2}, we have
    \begin{equation}\label{eq:HeqdLipschitz1}
        \|A(Z-Z')W^{V,(\ell)}\|_{2\to\infty}
        \le\|(Z-Z')W^{V,(\ell)}\|_{2\to\infty}
        \le C_{2}^{V,(\ell)}\|Z-Z'\|_{2\to\infty}
        .
    \end{equation}
    For the second term, we first note that by Lemmas~\ref{lem:VecMatrixProduct} and \ref{lem:MatrixProduct2Inf2}, we have
    \begin{align*}
        \|(A-A')Z'W^{V,(\ell)}\|_{2\to\infty}
        &=\max_{t\in[T]}\|(A_{t\cdot}-A'_{t\cdot})Z'W^{V,(\ell)}\|_{2}\\
        &\le\max_{t\in[T]}\|(A_{t\cdot}-A'_{t\cdot})^{\top}\|_{1}\|Z'W^{V,(\ell)}\|_{2\to\infty}\\
        &\le\max_{t\in[T]}\|(A_{t\cdot}-A'_{t\cdot})^{\top}\|_{1}\|Z'\|_{2\to\infty}\|W^{V,(\ell)}\|_{2}\\
        &\le\max_{t\in[T]}\|(A_{t\cdot}-A'_{t\cdot})^{\top}\|_{1}B_{2\to\infty}C_{2}^{V,(\ell)}
        .
    \end{align*}
    It remains to bound the row difference of the attention matrices. Noting that $A_{t\cdot},A'_{t\cdot}$ are written as
    \[
        A_{t\cdot}
        =\mathrm{SoftMax}((ZW^{QK,(\ell)}Z^{\top})_{t\cdot})
        ,\qquad
        A'_{t\cdot}
        =\mathrm{SoftMax}((Z'W^{QK,(\ell)}Z'^{\top})_{t\cdot})
        .
    \]
    Thus, Lemma~\ref{lem:SoftmaxLipschitzVector1} yields
    \begin{align*}
        \|(A_{t\cdot}-A'_{t\cdot})^{\top}\|_{1}
        &\le2\|((ZW^{QK,(\ell)}Z^{\top})_{t\cdot}-(Z'W^{QK,(\ell)}Z'^{\top})_{t\cdot})^{\top}\|_{\infty}\\
        &=2\max_{s\in[T]}|Z_{t\cdot}W^{QK,(\ell)}(Z_{s\cdot})^{\top}-Z'_{t\cdot}W^{QK,(\ell)}(Z'_{s\cdot})^{\top}|
        .
    \end{align*}
    For each $s,t\in[T]$, we decompose
    \begin{align*}
        &|Z_{t\cdot}W^{QK,(\ell)}(Z_{s\cdot})^{\top}-Z'_{t\cdot}W^{QK,(\ell)}(Z'_{s\cdot})^{\top}|\\
        &\le|Z_{t\cdot}W^{QK,(\ell)}\left(Z_{s\cdot}-Z'_{s\cdot}\right)^{\top}|+|\left(Z_{t\cdot}-Z'_{t\cdot}\right)W^{QK,(\ell)}(Z'_{s\cdot})^{\top}|\\
        &\le\|Z_{t\cdot}\|_{2}\|W^{QK,(\ell)}\left(Z_{s\cdot}-Z'_{s\cdot}\right)^{\top}\|_{2}+\|Z_{t\cdot}-Z'_{t\cdot}\|_{2}\|W^{QK,(\ell)}(Z'_{s\cdot})^{\top}\|_{2}\\
        &\le \|Z_{t\cdot}\|_{2}\|W^{QK,(\ell)}\|_{2}\|Z_{s\cdot}-Z'_{s\cdot}\|_{2}+\|Z_{t\cdot}-Z'_{t\cdot}\|_{2}\|W^{QK,(\ell)}\|_{2}\|Z'_{s\cdot}\|_{2}\\
        &\le B_{2\to\infty}C_{2}^{QK,(\ell)}(\|Z_{s\cdot}-Z'_{s\cdot}\|_{2}+\|Z_{t\cdot}-Z'_{t\cdot}\|_{2})
        .
    \end{align*}
    Substituting these results into the previous inequality yields
    \begin{align*}
        \|(A-A')Z'W^{V,(\ell)}\|_{2\to\infty}
        &\le2C_{2}^{QK,(\ell)}C_{2}^{V,(\ell)}B_{2\to\infty}^{2}\max_{t\in[T]}\max_{s\in[T]}(\|Z_{s\cdot}-Z'_{s\cdot}\|_{2}+\|Z_{t\cdot}-Z'_{t\cdot}\|_{2})\\
        &\le4C_{2}^{QK,(\ell)}C_{2}^{V,(\ell)}B_{2\to\infty}^{2}\|Z-Z'\|_{2\to\infty}
        .
    \end{align*}
    Combining this with Eq.~\eqref{eq:HeqdLipschitz} and \eqref{eq:HeqdLipschitz1}, we have
    \begin{align*}
        &\|f_{\mathrm{head}}(Z;W^{QK,(\ell)},W^{V,(\ell)})-f_{\mathrm{head}}(Z';W^{QK,(\ell)},W^{V,(\ell)})\|_{2\to\infty}\\
        &\le C_{2}^{V,(\ell)}(1+4C_{2}^{QK,(\ell)}B_{2\to\infty}^{2})\|Z-Z'\|_{2\to\infty}
        ,
    \end{align*}
    which implies
    \[
        \|f(Z)-f(Z')\|_{2\to\infty}
        \le L_{\phi}C_{2}^{M,(\ell)}C_{2}^{V,(\ell)}(1+4C_{2}^{QK,(\ell)}B_{2\to\infty}^{2})\|Z-Z'\|_{2\to\infty}
    \]
    as desired.
\end{proof}

\begin{proposition}\label{prop:EntropyMultiLayerTf}
    Suppose $\max_{i\in[n]}\|X_{i}\|_{2\to\infty}\le B_{n,(2\to\infty)}$ holds. For each $\ell\in[L]$, fix arbitrary $\epsilon^{QK,(\ell)},\epsilon^{V,(\ell)},\epsilon^{M,(\ell)}>0$, and define
    \[
        \epsilon^{(\ell)}
        =2L_{\phi}C_{2}^{V,(\ell)}C_{2}^{M,(\ell)}B_{n,(2\to\infty)}^{2\delta_{\ell=1}}\epsilon^{QK,(\ell)}+L_{\phi}C_{2}^{M,(\ell)}\epsilon^{V,(\ell)}+\epsilon^{M,(\ell)}
        ,
    \]
    where $\delta_{\ell=1}$ is an indicator function that equals $1$ if $\ell=1$ and $0$ otherwise. We also define
    \begin{align*}
        &\Upsilon^{\star,(\ell)}(\epsilon^{\star,(\ell)})
        =\left(\frac{(C_{\mathrm{s}}^{\star,(\ell)})^{2}B_{n,(2\to\infty)}^{2\delta_{\ell=1}p^{\star,(\ell)}}N^{p^{\star,(\ell)}}}{(\epsilon^{\star,(\ell)})^{2p^{\star,(\ell)}}}\right)^{\frac{1}{p^{\star,(\ell)}+2}}
        ,\\
        &\Upsilon^{M,(\ell)}(\epsilon^{M,(\ell)})
        =\left(\frac{(C_{\mathrm{s}}^{M,(\ell)})^{2}L_{\phi}^{2p^{M,(\ell)}}N^{p^{M,(\ell)}}}{(\epsilon^{M,(\ell)})^{2p^{M,(\ell)}}}\right)^{\frac{1}{p^{M,(\ell)}+2}}
    \end{align*}
    with $\star\in\{QK,V\}$. Then, we have
    \begin{align*}
        &\log\mathcal{N}_{\infty}(\mathcal{F}_{\mathrm{tf}}^{(L)},\|\cdot\|_{2\to\infty},\eta^{(L)};\{X_{i}\}_{i\in[n]})\\
        &\lesssim\sum_{k=1}^{L}\left(\Upsilon^{QK,(k)}(\epsilon^{QK,(k)})+\Upsilon^{V,(k)}(\epsilon^{V,(k)})+\Upsilon^{M,(k)}(\epsilon^{M,(k)})\right)N\log(nT)
        ,
    \end{align*}
    where $\eta^{(L)}$ is defined by
    \[
        \eta^{(L)}
        =\sum_{j=1}^{L-1}L_{\phi}^{L-j}\epsilon^{(j)}\prod_{k=j+1}^{L}C_{2}^{M,(k)}C_{2}^{V,(k)}(1+4C_{2}^{QK,(k)})+\epsilon^{(L)}
        .
    \]
\end{proposition}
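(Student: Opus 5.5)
We argue by induction on the number of layers, constructing a cover of $\mathcal{F}_{\mathrm{tf}}^{(\ell)}$ at scale $\eta^{(\ell)}$ from a cover of $\mathcal{F}_{\mathrm{tf}}^{(\ell-1)}$ at scale $\eta^{(\ell-1)}$ via Lemma~\ref{lem:CoveringCompositeFunctions}. Here $\eta^{(1)}=\epsilon^{(1)}$ and, for $\ell\ge2$,
\[
    \eta^{(\ell)}=L_{\phi}C_{2}^{M,(\ell)}C_{2}^{V,(\ell)}(1+4C_{2}^{QK,(\ell)})\eta^{(\ell-1)}+\epsilon^{(\ell)}.
\]
Unrolling this recursion gives exactly the stated $\eta^{(L)}$.

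\emph{Base case $\ell=1$.} This is Proposition~\ref{prop:EntropyTfBlock} applied to the raw inputs with bound $B_{n,(2\to\infty)}$. Its cover radius coincides with $\epsilon^{(1)}$, since $\delta_{\ell=1}=1$ retains the factor $B_{n,(2\to\infty)}^{2}$. Its entropy is $(\Upsilon^{QK,(1)}+\Upsilon^{V,(1)}+\Upsilon^{M,(1)})N\log(nT)$, and the $\Upsilon^{\star,(1)}$ carry $B_{n,(2\to\infty)}^{2p}$, matching the definitions in the statement.

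\emph{Inductive step, $\ell\ge2$.} Let $\mathcal{C}^{(\ell-1)}\subset\mathcal{F}_{\mathrm{tf}}^{(\ell-1)}$ be a proper $\eta^{(\ell-1)}$-cover; properness is guaranteed by Lemma~\ref{lem:ProperCover} if needed. For each $\tilde f\in\mathcal{C}^{(\ell-1)}$, apply Proposition~\ref{prop:EntropyTfBlock} to $\mathcal{F}_{\mathrm{block}}^{(\ell)}$ on the inputs $\{\tilde f(X_i)\}_{i\in[n]}$. The key observation is that every output of a block, and hence of $\tilde f$ and of any $f\in\mathcal{F}_{\mathrm{tf}}^{(\ell-1)}$, passes through the final $\Pi_{\mathrm{norm}}$, so all rows have norm at most one. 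We may therefore take $B_{n,(2\to\infty)}=1$ in that proposition. This yields a cover $\mathcal{C}_{\mathrm{block}}(\tilde f)$ at radius $\epsilon^{(\ell)}$ (the factor $B^{2}$ disappears, consistent with $\delta_{\ell=1}=0$), with entropy $(\Upsilon^{QK,(\ell)}+\Upsilon^{V,(\ell)}+\Upsilon^{M,(\ell)})N\log(nT)$, uniformly in $\tilde f$.

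For the Lipschitz constant $L_{\mathcal{G}}$ in Lemma~\ref{lem:CoveringCompositeFunctions}, we use Lemma~\ref{lem:MultiLayerTFLipschitz} with $B_{2\to\infty}=1$. This is valid because all points $\{f(X_i)\}$ with $f\in\mathcal{F}_{\mathrm{tf}}^{(\ell-1)}$ satisfy the row-norm bound. The resulting constant is $L_{\phi}C_{2}^{M,(\ell)}C_{2}^{V,(\ell)}(1+4C_{2}^{QK,(\ell)})$.

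Lemma~\ref{lem:CoveringCompositeFunctions} then gives a cover of $\mathcal{F}_{\mathrm{block}}^{(\ell)}\circ\mathcal{F}_{\mathrm{tf}}^{(\ell-1)}=\mathcal{F}_{\mathrm{tf}}^{(\ell)}$ at radius $\eta^{(\ell)}$. Its log-cardinality is at most $\log|\mathcal{C}^{(\ell-1)}|+\sup_{\tilde f}\log|\mathcal{C}_{\mathrm{block}}(\tilde f)|$. The entropies therefore add over layers, giving the sum over $k$ after $L$ steps.

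The main point requiring care is the uniform input bound at layers $\ell\ge2$. Both the per-layer covers and the Lipschitz lemma must be applied with a bound independent of the particular covering element, and this is supplied entirely by the normalization $\Pi_{\mathrm{norm}}$. A secondary issue is that the $\log(nT)$ factors hide radius-dependent logs from Theorem~\ref{thm:ParametricInterpolation2inf}. These are suppressed per the section's convention.
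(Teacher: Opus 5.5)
Your proposal is correct and follows the paper's proof essentially step for step. The paper also inducts over layers with Lemma~\ref{lem:CoveringCompositeFunctions}. It takes the per-layer covers from Proposition~\ref{prop:EntropyTfBlock} and the Lipschitz constant from Lemma~\ref{lem:MultiLayerTFLipschitz}, both with input bound $1$ thanks to the final $\Pi_{\mathrm{norm}}$. The recursion $\eta^{(\ell)}=L_{\phi}C_{2}^{M,(\ell)}C_{2}^{V,(\ell)}(1+4C_{2}^{QK,(\ell)})\eta^{(\ell-1)}+\epsilon^{(\ell)}$ then unrolls to the stated radius.

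One small remark: you do not need Lemma~\ref{lem:ProperCover}, which would double the radius. Covers realizing $\mathcal{N}_{\infty}$ are proper by definition, and composing proper covers keeps every element inside $\mathcal{F}_{\mathrm{tf}}^{(\ell)}$.
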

\begin{proof}
    First, note that for all $\ell\in[L] $ and $f_{\mathrm{tf}}^{(\ell)}\in\mathcal{F}_{\mathrm{tf}}^{(\ell)}$, we have
    \begin{equation}\label{eq:TfEllNormBound}
        \max_{i\in[n]}\|f_{\mathrm{tf}}^{(\ell)}(X_{i})\|_{2\to\infty}
        \le1
        .
    \end{equation}
    Set
    \begin{align*}
        &\eta^{(1)}
        =\epsilon^{(1)}
        ,\\
        &\eta^{(\ell)}
        =\sum_{j=1}^{\ell-1}L_{\phi}^{\ell-j}\epsilon^{(j)}\prod_{k=j+1}^{\ell}C_{2}^{M,(k)}C_{2}^{V,(k)}(1+4C_{2}^{QK,(k)})+\epsilon^{(\ell)}
        \quad(\ell=2,\dots,L)
        .
    \end{align*}
    We show by induction that for each $\ell\in[L]$, there exists a proper $\eta^{(\ell)}$-cover $\mathcal{C}_{\mathcal{F}_{\mathrm{tf}}^{(\ell)}}$ of $\mathcal{F}_{\mathrm{tf}}^{(\ell)}$ on $\{X_{i}\}_{i\in[n]}$ with cardinality
    \begin{equation}\label{eq:TfEllEntropy}
        \log|\mathcal{C}_{\mathcal{F}_{\mathrm{tf}}^{(\ell)}}|
        \lesssim\sum_{k=1}^{\ell}\left(\Upsilon^{QK,(k)}(\epsilon^{QK,(k)})+\Upsilon^{V,(k)}(\epsilon^{V,(k)})+\Upsilon^{M,(k)}(\epsilon^{M,(k)})\right)N\log(nT)
        .
    \end{equation}
    For the base case $\ell=1$, the claim follows from Proposition~\ref{prop:EntropyTfBlock}. Now, suppose the claim holds for some $\ell\in[L-1]$. We show that it also holds for $\ell+1$.

    Suppose $\mathcal{C}_{\mathcal{F}_{\mathrm{tf}}^{(\ell)}}$ is a $\eta^{(\ell)}$-cover of $\mathcal{F}_{\mathrm{tf}}^{(\ell)}$ satisfying Eq.~\eqref{eq:TfEllEntropy}. For each $\tilde{f}_{\mathrm{tf}}^{(\ell)}\in\mathcal{C}_{\mathcal{F}_{\mathrm{tf}}^{(\ell)}}$, let $\mathcal{C}_{\mathcal{F}_{\mathrm{block}}^{(\ell+1)}}(\tilde{f}_{\mathrm{tf}}^{(\ell)})$ be an $\epsilon^{(\ell+1)}$-cover of $\mathcal{F}_{\mathrm{block}}^{(\ell+1)}$ on $\{\tilde{f}_{\mathrm{tf}}^{(\ell)}(X_{i})\}_{i\in[n]}$. Note that by Proposition~\ref{prop:EntropyTfBlock} and Eq.~\eqref{eq:TfEllNormBound}, we can choose such a cover with cardinality
    \begin{align*}
        &\log|\mathcal{C}_{\mathcal{F}_{\mathrm{block}}^{(\ell+1)}}(\tilde{f}_{\mathrm{tf}}^{(\ell)})|\\
        &\lesssim(\Upsilon^{QK,(\ell+1)}(\epsilon^{QK,(\ell+1)})+\Upsilon^{V,(\ell+1)}(\epsilon^{V,(\ell+1)})+\Upsilon^{M,(\ell+1)}(\epsilon^{M,(\ell+1)}))N\log(nT)
        .
    \end{align*}
    By Lemma~\ref{lem:MultiLayerTFLipschitz} and Eq.~\eqref{eq:TfEllNormBound}, we have for any $i\in[n]$, $\ell\in[L-1]$, $f_{\mathrm{block}}^{(\ell+1)}\in\mathcal{F}_{\mathrm{block}}^{(\ell+1)}$ and $f_{\mathrm{tf}}^{(\ell)},\tilde{f}_{\mathrm{tf}}^{(\ell)}\in\mathcal{F}_{\mathrm{tf}}^{(\ell)}$ that
    \begin{align*}
        &\|f_{\mathrm{block}}^{(\ell+1)}(f_{\mathrm{tf}}^{(\ell)}(X_{i}))-f_{\mathrm{block}}^{(\ell+1)}(\tilde{f}_{\mathrm{tf}}^{(\ell)}(X_{i}))\|_{2\to\infty}\\
        &\le L_{\phi}C_{2}^{M,(\ell+1)}C_{2}^{V,(\ell+1)}(1+4C_{2}^{QK,(\ell+1)})\|f_{\mathrm{tf}}^{(\ell)}(X_{i})-\tilde{f}_{\mathrm{tf}}^{(\ell)}(X_{i})\|_{2\to\infty}
        .
    \end{align*}
    The recursive relation
    \[
        \eta^{(\ell+1)}
        =L_{\phi}C_{2}^{M,(\ell+1)}C_{2}^{V,(\ell+1)}(1+4C_{2}^{QK,(\ell+1)})\eta^{(\ell)}+\epsilon^{(\ell+1)}
        .
    \]
    holds. Therefore, Lemma~\ref{lem:CoveringCompositeFunctions} implies that
    \[
        \mathcal{C}_{\mathcal{F}_{\mathrm{tf}}^{(\ell+1)}}
        =\{\tilde{f}_{\mathrm{block}}^{(\ell+1)}\circ\tilde{f}_{\mathrm{tf}}^{(\ell)}\mid\tilde{f}_{\mathrm{tf}}^{(\ell)}\in\mathcal{C}_{\mathcal{F}_{\mathrm{tf}}^{(\ell)}},\tilde{f}_{\mathrm{block}}^{(\ell+1)}\in\mathcal{C}_{\mathcal{F}_{\mathrm{block}}^{(\ell+1)}}(\tilde{f}_{\mathrm{tf}}^{(\ell)})\}
    \]
    is an $\eta^{(\ell+1)}$-cover of $\mathcal{F}_{\mathrm{tf}}^{(\ell+1)}$. Moreover, we can bound the cardinality of $\mathcal{C}_{\mathcal{F}_{\mathrm{tf}}^{(\ell+1)}}$ as
    \begin{align*}
        &\log|\mathcal{C}_{\mathcal{F}_{\mathrm{tf}}^{(\ell+1)}}|\\
        &\le\log|\mathcal{C}_{\mathcal{F}_{\mathrm{tf}}^{(\ell)}}|+\sup_{\tilde{f}_{\mathrm{tf}}^{(\ell)}\in\mathcal{C}_{\mathcal{F}_{\mathrm{tf}}^{(\ell)}}}\log|\mathcal{C}_{\mathcal{F}_{\mathrm{block}}^{(\ell+1)}}(\tilde{f}_{\mathrm{tf}}^{(\ell)})|\\
        &\lesssim\sum_{k=1}^{\ell+1}\left(\Upsilon^{QK,(k)}(\epsilon^{QK,(k)})+\Upsilon^{V,(k)}(\epsilon^{V,(k)})+\Upsilon^{M,(k)}(\epsilon^{M,(k)})\right)N\log(nT)
        .
    \end{align*}
\end{proof}

\subsection{Covering number bounds for the scalar outputs of multi-layer Transformers}

For a vector of Schatten indices
\[
    \bm{p}
    =(p^{QK,(1)},p^{V,(1)},p^{M,(1)},\dots,p^{QK,(L)},p^{V,(L)},p^{M,(L)})
    \in[0,2]^{3L}
\]
and a vector of Schatten-quantity radii
\[
    \bm{C}_{\mathrm{s}}
    =(C_{\mathrm{s}}^{QK,(1)},C_{\mathrm{s}}^{V,(1)},C_{\mathrm{s}}^{M,(1)},\dots,C_{\mathrm{s}}^{QK,(L)},C_{\mathrm{s}}^{V,(L)},C_{\mathrm{s}}^{M,(L)})
    \in(0,\infty)^{3L},
\]
consider the class of scalar outputs
\[
    \mathcal{F}_{\mathrm{out}}^{(\bm{p},\bm{C}_{\mathrm{s}})}
    =\{f_{\mathrm{out}}(\ \cdot\ ;W^{(1:L)},w):\mathbb{R}^{T\times N}\to\mathbb{R}\mid W^{(1:L)}\in\mathcal{W}^{(1:L)}(\bm{p},\bm{C}_{\mathrm{s}}),\ w\in\mathbb{R}^{N},\ \|w\|_{2}\le C_{2}^{\mathrm{out}}\}
    ,
\]
where $f_{\mathrm{out}}(\ \cdot\ ;W^{(1:L)},w)$ is defined in Eq.~\eqref{eq:DefTfOut}. For $\ell\in[L]$, we also define
\begin{equation}\label{eq:DefAlpha}
    \alpha^{(\ell)}
    =\prod_{k=\ell+1}^{L}L_{\phi}C_{2}^{M,(k)}C_{2}^{V,(k)}(1+4C_{2}^{QK,(k)})
    ,
\end{equation}
where we adopt the convention that an empty product is equal to $1$.

\begin{proposition}\label{prop:EntropyTfOut}
    Assume the same setting as in Proposition~\ref{prop:EntropyMultiLayerTf} for the localized class $\mathcal{F}_{\mathrm{out}}^{(\bm{p},\bm{C}_{\mathrm{s}})}$. Then, for any $\epsilon^{\mathrm{out}}>0$, it holds that
    \begin{align*}
        &\log\mathcal{N}_{\infty}(\mathcal{F}_{\mathrm{out}}^{(\bm{p},\bm{C}_{\mathrm{s}})},|\cdot|,C_{2}^{\mathrm{out}}\eta^{(L)}+\epsilon^{\mathrm{out}};\{X_{i}\}_{i\in[n]})\\
        &\lesssim\sum_{k=1}^{L}\left(\Upsilon^{QK,(k)}(\epsilon^{QK,(k)})+\Upsilon^{V,(k)}(\epsilon^{V,(k)})+\Upsilon^{M,(k)}(\epsilon^{M,(k)})\right)N\log(nT)+\left(\frac{C_{2}^{\mathrm{out}}}{\epsilon^{\mathrm{out}}}\right)^{2}\log(n)
        .
    \end{align*}
\end{proposition}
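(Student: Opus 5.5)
The plan is to treat the scalar output as the composition of the matrix-valued Transformer body with a linear readout. The cover of $\mathcal{F}_{\mathrm{tf}}^{(L)}$ from Proposition~\ref{prop:EntropyMultiLayerTf} is then combined with sample-dependent covers of the readout class via Lemma~\ref{lem:CoveringCompositeFunctions}.

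\textbf{Step 1 (body cover).} Let $\mathcal{C}_{\mathcal{F}_{\mathrm{tf}}^{(L)}}$ be the proper $\eta^{(L)}$-cover given by Proposition~\ref{prop:EntropyMultiLayerTf}, whose log-cardinality is the first sum in the claim.

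\textbf{Step 2 (readout as the outer class).} Take $\mathcal{G}=\{Z\mapsto w^{\top}[Z]_{[\mathrm{CLS}]}\mid\|w\|_{2}\le C_{2}^{\mathrm{out}}\}$. This class is $C_{2}^{\mathrm{out}}$-Lipschitz from $(\mathbb{R}^{T\times N},\|\cdot\|_{2\to\infty})$ to $(\mathbb{R},|\cdot|)$, because by Cauchy--Schwarz
\[
    |w^{\top}([Z]_{[\mathrm{CLS}]}-[Z']_{[\mathrm{CLS}]})|\le\|w\|_{2}\|Z_{t_{\mathrm{CLS}}\cdot}-Z'_{t_{\mathrm{CLS}}\cdot}\|_{2}\le C_{2}^{\mathrm{out}}\|Z-Z'\|_{2\to\infty}.
\]
This gives $L_{\mathcal{G}}=C_{2}^{\mathrm{out}}$.

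\textbf{Step 3 (covering the readout on each cover element).} Fix $\tilde{f}\in\mathcal{C}_{\mathcal{F}_{\mathrm{tf}}^{(L)}}$. The inputs $x_{i}=[\tilde{f}(X_{i})]_{[\mathrm{CLS}]}\in\mathbb{R}^{N}$ satisfy $\|x_{i}\|_{2}\le1$ by Eq.~\eqref{eq:TfEllNormBound}, since $\tilde{f}$ is a proper element of $\mathcal{F}_{\mathrm{tf}}^{(L)}$ and the normalization $\Pi_{\mathrm{norm}}$ bounds every row norm by one. We then apply Proposition~\ref{prop:Zhang2002Thm4} with $p=q=2$, $a=C_{2}^{\mathrm{out}}$, and $b=1$. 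This yields an $\epsilon^{\mathrm{out}}$-cover $\mathcal{C}_{\mathcal{G}}(\tilde{f})$ with
\[
    \log|\mathcal{C}_{\mathcal{G}}(\tilde{f})|\le36\left(\frac{C_{2}^{\mathrm{out}}}{\epsilon^{\mathrm{out}}}\right)^{2}\log\left(2\left\lceil\frac{4C_{2}^{\mathrm{out}}}{\epsilon^{\mathrm{out}}}+2\right\rceil n+1\right),
\]
which is $\lesssim(C_{2}^{\mathrm{out}}/\epsilon^{\mathrm{out}})^{2}\log(n)$ after suppressing the non-$n$ logarithmic factors. If a proper cover is needed, Lemma~\ref{lem:ProperCover} supplies one at the cost of a constant factor in the radius, which is absorbed into $\lesssim$.

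\textbf{Step 4 (combine).} Lemma~\ref{lem:CoveringCompositeFunctions} shows that $\{\tilde{g}\circ\tilde{f}\}$ is a $(C_{2}^{\mathrm{out}}\eta^{(L)}+\epsilon^{\mathrm{out}})$-cover of $\mathcal{F}_{\mathrm{out}}^{(\bm{p},\bm{C}_{\mathrm{s}})}$. Its log-cardinality is at most $\log|\mathcal{C}_{\mathcal{F}_{\mathrm{tf}}^{(L)}}|+\sup_{\tilde{f}}\log|\mathcal{C}_{\mathcal{G}}(\tilde{f})|$, which is the claimed bound.

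The only delicate point is the readout term. It must be covered with an $\ell_{2}$-ball argument whose log factor depends on $n$ only, and not on $N$. That is why I use Zhang's dimension-free bound rather than a volumetric net on $w$, since a volumetric net would introduce a factor of $N$.
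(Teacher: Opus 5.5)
Your proposal is correct and follows essentially the same route as the paper. You take the proper $\eta^{(L)}$-cover of the body from Proposition~\ref{prop:EntropyMultiLayerTf}, apply Proposition~\ref{prop:Zhang2002Thm4} to the readout on the CLS rows (norm at most one) for each cover element, and compose to get radius $C_{2}^{\mathrm{out}}\eta^{(L)}+\epsilon^{\mathrm{out}}$ with additive log-cardinalities; the only difference is that you invoke Lemma~\ref{lem:CoveringCompositeFunctions} with Lipschitz constant $C_{2}^{\mathrm{out}}$, while the paper writes the same triangle inequality inline.
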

\begin{proof}
    Proposition~\ref{prop:EntropyMultiLayerTf} implies that there exists an $\eta^{(L)}$-cover $\mathcal{C}_{\mathcal{F}_{\mathrm{tf}}^{(L)}}$ of $\mathcal{F}_{\mathrm{tf}}^{(L)}$ on $\{X_{i}\}_{i\in[n]}$ that satisfies
    \[
        \log|\mathcal{C}_{\mathcal{F}_{\mathrm{tf}}^{(L)}}|
        \lesssim\sum_{k=1}^{L}\left(\Upsilon^{QK,(k)}(\epsilon^{QK,(k)})+\Upsilon^{V,(k)}(\epsilon^{V,(k)})+\Upsilon^{M,(k)}(\epsilon^{M,(k)})\right)N\log(nT)
        .
    \]
    Moreover, for every $\tilde{f}_{\mathrm{tf}}^{(L)}\in\mathcal{C}_{\mathcal{F}_{\mathrm{tf}}^{(L)}}$, it holds that
    \[
        \max_{i\in[n]}\|[\tilde{f}_{\mathrm{tf}}^{(L)}(X_{i})]_{[\mathrm{CLS}]}\|_{2}
        \le\max_{i\in[n]}\|\tilde{f}_{\mathrm{tf}}^{(L)}(X_{i})\|_{2\to\infty}
        \le1
        .
    \]
    Fix $\tilde{f}_{\mathrm{tf}}^{(L)}\in\mathcal{C}_{\mathcal{F}_{\mathrm{tf}}^{(L)}}$. Consider the linear function class
    \[
        \mathcal{F}_{w}
        =\{f:\mathbb{R}^{N}\to\mathbb{R}\mid f(x)=w^{\top}x,w\in\mathbb{R}^{N},\|w\|_{2}\le C_{2}^{\mathrm{out}}\}
        .
    \]
    Applying Proposition~\ref{prop:Zhang2002Thm4}, we obtain an $\epsilon^{\mathrm{out}}$-cover $\mathcal{C}_{\mathcal{F}_{w}}(\tilde{f}_{\mathrm{tf}}^{(L)})$ of $\mathcal{F}_{w}$ on $\{[\tilde{f}_{\mathrm{tf}}^{(L)}(X_{i})]_{[\mathrm{CLS}]}\}_{i\in[n]}$ satisfying
    \[
        \log|\mathcal{C}_{\mathcal{F}_{w}}(\tilde{f}_{\mathrm{tf}}^{(L)})|
        \lesssim\left(\frac{C_{2}^{\mathrm{out}}}{\epsilon^{\mathrm{out}}}\right)^{2}\log(n)
        .
    \]
    Now define
    \[
        \mathcal{C}_{\mathcal{F}_{\mathrm{out}}^{(\bm{p},\bm{C}_{\mathrm{s}})}}
        =\{f:\mathbb{R}^{T\times N}\to\mathbb{R}\mid f(X)=\tilde{w}^{\top}[\tilde{f}_{\mathrm{tf}}^{(L)}(X)]_{[\mathrm{CLS}]},\ \tilde{f}_{\mathrm{tf}}^{(L)}\in\mathcal{C}_{\mathcal{F}_{\mathrm{tf}}^{(L)}},\ \tilde{w}^{\top}(\cdot)\in\mathcal{C}_{\mathcal{F}_{w}}(\tilde{f}_{\mathrm{tf}}^{(L)})\}
        .
    \]
    Take any $f_{\mathrm{out}}(\ \cdot\ ;W^{(1:L)},w)\in\mathcal{F}_{\mathrm{out}}^{(\bm{p},\bm{C}_{\mathrm{s}})}$. By the definition of $\mathcal{C}_{\mathcal{F}_{\mathrm{tf}}^{(L)}}$, there exists $\tilde{f}_{\mathrm{tf}}^{(L)}\in\mathcal{C}_{\mathcal{F}_{\mathrm{tf}}^{(L)}}$
    that satisfies
    \[
        \max_{i\in[n]}\|f_{\mathrm{tf}}^{(L)}(X_{i};W^{(1:L)})-\tilde{f}_{\mathrm{tf}}^{(L)}(X_{i})\|_{2\to\infty}
        \le\eta^{(L)}
        .
    \]
    Also, by the definition of $\mathcal{C}_{\mathcal{F}_{w}}(\tilde{f}_{\mathrm{tf}}^{(L)})$, there exists $\tilde{w}^{\top}(\cdot)\in\mathcal{C}_{\mathcal{F}_{w}}(\tilde{f}_{\mathrm{tf}}^{(L)})$
    that satisfies
    \[
        \max_{i\in[n]}|(w-\tilde{w})^{\top}[\tilde{f}_{\mathrm{tf}}^{(L)}(X_{i})]_{[\mathrm{CLS}]}|
        \le\epsilon^{\mathrm{out}}
        .
    \]
    For such $\tilde{f}_{\mathrm{tf}}^{(L)}$ and $\tilde{w}$, we have
    \begin{align*}
        &|f_{\mathrm{out}}(X_{i};W^{(1:L)},w)-\tilde{w}^{\top}[\tilde{f}_{\mathrm{tf}}^{(L)}(X_{i})]_{[\mathrm{CLS}]}|\\
        &\le|w^{\top}([f_{\mathrm{tf}}^{(L)}(X_{i};W^{(1:L)})]_{[\mathrm{CLS}]}-[\tilde{f}_{\mathrm{tf}}^{(L)}(X_{i})]_{[\mathrm{CLS}]})|+|(w-\tilde{w})^{\top}[\tilde{f}_{\mathrm{tf}}^{(L)}(X_{i})]_{[\mathrm{CLS}]}|\\
        &\le\|w\|_{2}\|[f_{\mathrm{tf}}^{(L)}(X_{i};W^{(1:L)})]_{[\mathrm{CLS}]}-[\tilde{f}_{\mathrm{tf}}^{(L)}(X_{i})]_{[\mathrm{CLS}]}\|_{2}+|(w-\tilde{w})^{\top}[\tilde{f}_{\mathrm{tf}}^{(L)}(X_{i})]_{[\mathrm{CLS}]}|\\
        &\le C_{2}^{\mathrm{out}}\|f_{\mathrm{tf}}^{(L)}(X_{i};W^{(1:L)})-\tilde{f}_{\mathrm{tf}}^{(L)}(X_{i})\|_{2\to\infty}+\epsilon^{\mathrm{out}}\\
        &\le C_{2}^{\mathrm{out}}\eta^{(L)}+\epsilon^{\mathrm{out}}
        .
    \end{align*}
    Therefore, $\mathcal{C}_{\mathcal{F}_{\mathrm{out}}^{(\bm{p},\bm{C}_{\mathrm{s}})}}$ is a $(C_{2}^{\mathrm{out}}\eta^{(L)}+\epsilon^{\mathrm{out}})$-cover of
    $\mathcal{F}_{\mathrm{out}}^{(\bm{p},\bm{C}_{\mathrm{s}})}$. Finally, its cardinality is bounded by
    \[
        \log|\mathcal{C}_{\mathcal{F}_{\mathrm{out}}^{(\bm{p},\bm{C}_{\mathrm{s}})}}|
        \le\log|\mathcal{C}_{\mathcal{F}_{\mathrm{tf}}^{(L)}}|+\sup_{\tilde{f}_{\mathrm{tf}}^{(L)}\in\mathcal{C}_{\mathcal{F}_{\mathrm{tf}}^{(L)}}}\log|\mathcal{C}_{\mathcal{F}_{w}}(\tilde{f}_{\mathrm{tf}}^{(L)})|
        ,
    \]
    and substituting the above two entropy bounds proves the claim.
\end{proof}

\subsection{$\epsilon$-entropy bounds for the scalar outputs of multi-layer Transformers}

We now derive the $\epsilon$-entropy bounds for $\mathcal{F}_{\mathrm{out}}^{(\bm{p},\bm{C}_{\mathrm{s}})}$. For $\ell\in[L]$, define
\[
    \alpha^{(\ell)}
    =L_{\phi}^{L-\ell}\prod_{k=\ell+1}^{L}C_{2}^{M,(k)}C_{2}^{V,(k)}(1+4C_{2}^{QK,(k)})
    ,
\]
where we adopt the convention that an empty product is equal to $1$. We also define
\begin{align*}
    &\beta^{QK,(\ell)}
    =L_{\phi}C_{2}^{\mathrm{out}}\alpha^{(\ell)}2C_{2}^{V,(\ell)}C_{2}^{M,(\ell)}B_{n,(2\to\infty)}^{2\delta_{\ell=1}}
    ,\\
    &
    \beta^{V,(\ell)}
    =L_{\phi}C_{2}^{\mathrm{out}}\alpha^{(\ell)}C_{2}^{M,(\ell)}
    ,\qquad
    \beta^{M,(\ell)}
    =C_{2}^{\mathrm{out}}\alpha^{(\ell)}
    .
\end{align*}
Then, we can write
\[
    C_{2}^{\mathrm{out}}\eta^{(L)}
    =\sum_{\ell=1}^{L}(\beta^{QK,(\ell)}\epsilon^{QK,(\ell)}+\beta^{V,(\ell)}\epsilon^{V,(\ell)}+\beta^{M,(\ell)}\epsilon^{M,(\ell)})
    .
\]
Thus, we need to choose $\epsilon^{QK,(\ell)},\epsilon^{V,(\ell)},\epsilon^{M,(\ell)}$ for $\ell\in[L]$ and $\epsilon^{\mathrm{out}}>0$ such that
\begin{equation}\label{eq:CoveringRadius}
    \epsilon
    =\sum_{\ell=1}^{L}(\beta^{QK,(\ell)}\epsilon^{QK,(\ell)}+\beta^{V,(\ell)}\epsilon^{V,(\ell)}+\beta^{M,(\ell)}\epsilon^{M,(\ell)})+\epsilon^{\mathrm{out}}
\end{equation}
holds. Furthermore, setting
\begin{align*}
    &\Lambda^{\star,(\ell)}
    =((C_{\mathrm{s}}^{\star,(\ell)})^{2}B_{n,(2\to\infty)}^{2\delta_{\ell=1}p^{\star,(\ell)}}N^{p^{\star,(\ell)}})^{\frac{1}{p^{\star,(\ell)}+2}}
    \quad(\star\in\{QK,V\})
    ,\\
    &\Lambda^{M,(\ell)}
    =((C_{\mathrm{s}}^{M,(\ell)})^{2}L_{\phi}^{2p^{M,(\ell)}}N^{p^{M,(\ell)}})^{\frac{1}{p^{M,(\ell)}+2}}
    ,\\
    &\nu^{\star,(\ell)}
    =\frac{2p^{\star,(\ell)}}{p^{\star,(\ell)}+2}
    \quad(\star\in\{QK,V,M\})
    ,
\end{align*}
for each $\ell\in[L]$, we can write
\[
    \Upsilon^{\star,(\ell)}(\epsilon^{\star,(\ell)})
    =\Lambda^{\star,(\ell)}(\epsilon^{\star,(\ell)})^{-\nu^{\star,(\ell)}}
    \quad(\star\in\{QK,V,M\})
    .
\]
Thus, by Proposition~\ref{prop:EntropyTfOut}, we obtain the general form
\begin{equation}\label{eq:EpsilonObjectiveFunction}
    \log\mathcal{N}_{\infty}(\mathcal{F}_{\mathrm{out}}^{(\bm{p},\bm{C}_{\mathrm{s}})},|\cdot|,\epsilon;\{X_{i}\}_{i\in[n]})
    \lesssim\sum_{\ell=1}^{L}\sum_{\star\in\{QK,V,M\}}\frac{\Lambda^{\star,(\ell)}}{(\epsilon^{\star,(\ell)})^{\nu^{\star,(\ell)}}}N\log(nT)+\left(\frac{C_{2}^{\mathrm{out}}}{\epsilon^{\mathrm{out}}}\right)^{2}\log(n)
    ,
\end{equation}
where $\{\epsilon^{QK,(\ell)},\epsilon^{V,(\ell)},\epsilon^{M,(\ell)}\}_{\ell\in[L]}$ and $\epsilon^{\mathrm{out}}>0$ satisfy Eq.~\eqref{eq:CoveringRadius}.

We now state two entropy bounds for $\mathcal{F}_{\mathrm{out}}^{(\bm{p},\bm{C}_{\mathrm{s}})}$. The first allows for different $p^{QK,(\ell)},p^{V,(\ell)},p^{M,(\ell)}$ to vary across layers, whereas the second assumes a common $p$ for all layers and all weight matrices.

\begin{theorem}\label{thm:EntropyTfScalarGeneralP}
    Suppose $\max_{i\in[n]}\|X_{i}\|_{2\to\infty}\le B_{n,(2\to\infty)}$ holds. Fix any $\bm{p}\in[0,2]^{3L}$ and $\bm{C}_{\mathrm{s}}\in(0,\infty)^{3L}$. For each $\ell\in[L]$, define
    \[
        \gamma^{QK,(\ell)}
        =2C_{2}^{V,(\ell)}C_{2}^{M,(\ell)}B_{n,(2\to\infty)}^{3\delta_{\ell=1}}
        ,\qquad
        \gamma^{V,(\ell)}
        =C_{2}^{M,(\ell)}B_{n,(2\to\infty)}^{\delta_{\ell=1}}
        ,\qquad
        \gamma^{M,(\ell)}
        =1
        .
    \]
    Then, for any $\epsilon>0$, it holds that
    \begin{align*}
        &\log\mathcal{N}_{\infty}(\mathcal{F}_{\mathrm{out}}^{(\bm{p},\bm{C}_{\mathrm{s}})},|\cdot|,\epsilon;\{X_{i}\}_{i\in[n]})\\
        &\lesssim\sum_{\ell=1}^{L}\sum_{\star\in\{QK,V,M\}}(C_{\mathrm{s}}^{\star,(\ell)})^{\frac{2}{p^{\star,(\ell)}+2}}\left(\frac{L_{\phi}C_{2}^{\mathrm{out}}\gamma^{\star,(\ell)}\alpha^{(\ell)}L}{\epsilon}\right)^{\frac{2p^{\star,(\ell)}}{p^{\star,(\ell)}+2}}N^{1+\frac{p^{\star,(\ell)}}{p^{\star,(\ell)}+2}}\log(nT)\\
        &\quad+\left(\frac{C_{2}^{\mathrm{out}}}{\epsilon}\right)^{2}\log(n)
        .
    \end{align*}
\end{theorem}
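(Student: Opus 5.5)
The plan is to start from the general entropy form \eqref{eq:EpsilonObjectiveFunction}, which is valid for any admissible radii satisfying the budget constraint \eqref{eq:CoveringRadius}. Rather than optimizing the radii exactly, I will use a naive balanced allocation. There are $3L$ body terms plus one readout term. I split the total budget $\epsilon$ into two halves: $\epsilon^{\mathrm{out}}=\epsilon/2$, and the remaining $\epsilon/2$ is distributed evenly over the $3L$ terms. Concretely, I set
\[
    \epsilon^{\star,(\ell)}=\frac{\epsilon}{6L\beta^{\star,(\ell)}}\qquad(\star\in\{QK,V,M\},\ \ell\in[L]).
\]
With this choice every summand $\beta^{\star,(\ell)}\epsilon^{\star,(\ell)}$ equals $\epsilon/(6L)$, so \eqref{eq:CoveringRadius} holds with equality. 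The readout term then becomes $(2C_{2}^{\mathrm{out}}/\epsilon)^{2}\log(n)\lesssim(C_{2}^{\mathrm{out}}/\epsilon)^{2}\log(n)$, which is exactly the last term of the claim.

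Next I substitute into each body term $\Lambda^{\star,(\ell)}(\epsilon^{\star,(\ell)})^{-\nu^{\star,(\ell)}}N\log(nT)$. Since $\nu^{\star,(\ell)}=2p^{\star,(\ell)}/(p^{\star,(\ell)}+2)$, this gives
\[
    (C_{\mathrm{s}}^{\star,(\ell)})^{\frac{2}{p^{\star,(\ell)}+2}}N^{\frac{p^{\star,(\ell)}}{p^{\star,(\ell)}+2}}\Big(\frac{6L\beta^{\star,(\ell)}\kappa^{\star,(\ell)}}{\epsilon}\Big)^{\frac{2p^{\star,(\ell)}}{p^{\star,(\ell)}+2}}N\log(nT).
\]
Here $\kappa^{\star,(\ell)}$ collects the extra factor inside $\Lambda^{\star,(\ell)}$. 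For $\star\in\{QK,V\}$ it is $B_{n,(2\to\infty)}^{\delta_{\ell=1}}$, and for $\star=M$ it is $L_{\phi}$. This works because the factor $B^{2\delta p}$ or $L_{\phi}^{2p}$, raised to $1/(p+2)$, equals $(B^{\delta}\text{ or }L_{\phi})^{2p/(p+2)}$ and can be absorbed into the parenthesis. The factor $N^{1+p/(p+2)}$ then appears directly.

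It remains to check that $\beta^{\star,(\ell)}\kappa^{\star,(\ell)}\lesssim L_{\phi}C_{2}^{\mathrm{out}}\gamma^{\star,(\ell)}\alpha^{(\ell)}$ in each case. This is pure bookkeeping and is the only step needing care.
\begin{itemize}
    \item For $QK$: $\beta\kappa=2L_{\phi}C_{2}^{\mathrm{out}}\alpha^{(\ell)}C_{2}^{V,(\ell)}C_{2}^{M,(\ell)}B^{2\delta_{\ell=1}}\cdot B^{\delta_{\ell=1}}$, which matches $\gamma^{QK,(\ell)}$ with $B^{3\delta_{\ell=1}}$.
    \item For $V$: $\beta\kappa=L_{\phi}C_{2}^{\mathrm{out}}\alpha^{(\ell)}C_{2}^{M,(\ell)}B^{\delta_{\ell=1}}$, which matches $\gamma^{V,(\ell)}$.
    \item For $M$: $\beta\kappa=C_{2}^{\mathrm{out}}\alpha^{(\ell)}L_{\phi}$, which matches $\gamma^{M,(\ell)}=1$.
\end{itemize}

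The constant $6^{2p/(p+2)}\le 6^{2}$ is absorbed into $\lesssim$. Summing over $\ell$ and $\star$ yields the stated bound. I expect no genuine obstacle. The main risk is mismatched exponent bookkeeping of $B_{n,(2\to\infty)}$ in the first layer, where the $B^{2}$ from the head Lipschitz constant and the input norm inside $\Lambda$ must combine to $B^{3}$. I will verify this case explicitly.
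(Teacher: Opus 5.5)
Your proposal is correct and is essentially the paper's own proof. The paper also takes $\epsilon^{\mathrm{out}}=\epsilon/2$ and the balanced allocation $\epsilon^{\star,(\ell)}=\epsilon/(6L\beta^{\star,(\ell)})$, so that each $\beta^{\star,(\ell)}\epsilon^{\star,(\ell)}=\epsilon/(6L)$; it substitutes this into Eq.~\eqref{eq:EpsilonObjectiveFunction}, combines the $B_{n,(2\to\infty)}^{\delta_{\ell=1}}$ (respectively $L_{\phi}$) factor from $\Lambda^{\star,(\ell)}$ with $\beta^{\star,(\ell)}$ into $L_{\phi}C_{2}^{\mathrm{out}}\gamma^{\star,(\ell)}\alpha^{(\ell)}$ exactly as you checked, and absorbs the resulting constant $6^{2p/(p+2)}$ into $\lesssim$.
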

\begin{proof}
    For any $\epsilon>0$, set $\epsilon^{\mathrm{out}}=\epsilon/2$. For each $\ell\in[L]$, define
    \begin{align*}
        &\epsilon^{QK,(\ell)}
        =\frac{\epsilon}{12LL_{\phi}C_{2}^{\mathrm{out}}\alpha^{(\ell)}C_{2}^{V,(\ell)}C_{2}^{M,(\ell)}B_{n,(2\to\infty)}^{2\delta_{\ell=1}}}
        ,\\
        &\epsilon^{V,(\ell)}
        =\frac{\epsilon}{6LL_{\phi}C_{2}^{\mathrm{out}}\alpha^{(\ell)}C_{2}^{M,(\ell)}}
        ,\qquad
        \epsilon^{M,(\ell)}
        =\frac{\epsilon}{6LC_{2}^{\mathrm{out}}\alpha^{(\ell)}}
        .
    \end{align*}
    Note that for each $\ell\in[L]$, we have
    \[
        \beta^{QK,(\ell)}\epsilon^{QK,(\ell)}
        =\beta^{V,(\ell)}\epsilon^{V,(\ell)}
        =\beta^{M,(\ell)}\epsilon^{M,(\ell)}
        =\frac{\epsilon}{6L}
        ,
    \]
    which shows that the within-layer allocation is balanced in the sense prescribed above. This also implies that Eq.~\eqref{eq:CoveringRadius} is satisfied. We now substitute this admissible choice into Eq.~\eqref{eq:EpsilonObjectiveFunction}. We obtain
    \begin{align*}
        \frac{\Lambda^{\star,(\ell)}}{(\epsilon^{\star,(\ell)})^{\nu^{\star,(\ell)}}}
        =(C_{\mathrm{s}}^{\star,(\ell)})^{\frac{2}{p^{\star,(\ell)}+2}}\left(\frac{6L_{\phi}C_{2}^{\mathrm{out}}\gamma^{\star,(\ell)}\alpha^{(\ell)}L}{\epsilon}\right)^{\frac{2p^{\star,(\ell)}}{p^{\star,(\ell)}+2}}N^{\frac{p^{\star,(\ell)}}{p^{\star,(\ell)}+2}}
        .
    \end{align*}
    Combining these bounds with Eq.~\eqref{eq:EpsilonObjectiveFunction} proves the claim.
\end{proof}

\begin{theorem}\label{thm:EntropyTfScalarCommonP}
    Suppose $\max_{i\in[n]}\|X_{i}\|_{2\to\infty}\le B_{n,(2\to\infty)}$ holds. For some $p\in[0,2]$, suppose $\bm{p}=(p,\dots,p)$ holds. Fix any $\bm{C}_{\mathrm{s}}\in(0,\infty)^{3L}$. For each $\ell\in[L]$, define $\alpha^{(\ell)}$ by Eq.~\eqref{eq:DefAlpha} and $\Gamma^{(\ell)}$ by
    \begin{align*}
        \Gamma^{(\ell)}
        =\sum_{\star\in\{QK,V,M\}}(\gamma^{\star,(\ell)})^{\frac{2p}{3p+2}}(C_{\mathrm{s}}^{\star,(\ell)})^{\frac{2}{3p+2}}
        ,
    \end{align*}
    where $\gamma^{\star,(\ell)}$ is defined in Theorem~\ref{thm:EntropyTfScalarGeneralP}. Then, for any $\epsilon>0$, we have
    \begin{align*}
        &\log\mathcal{N}_{\infty}(\mathcal{F}_{\mathrm{out}}^{(\bm{p},\bm{C}_{\mathrm{s}})},|\cdot|,\epsilon;\{X_{i}\}_{i\in[n]})\\
        &\lesssim\left(\frac{L_{\phi}C_{2}^{\mathrm{out}}}{\epsilon}\right)^{\frac{2p}{p+2}}\left(\sum_{\ell=1}^{L}(\alpha^{(\ell)})^{\frac{2p}{3p+2}}\Gamma^{(\ell)}\right)^{\frac{3p+2}{p+2}}N^{1+\frac{p}{p+2}}\log(nT)+\left(\frac{C_{2}^{\mathrm{out}}}{\epsilon}\right)^{2}\log(n)
        .
    \end{align*}
\end{theorem}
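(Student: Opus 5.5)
The plan is to start from the general entropy bound in Eq.~\eqref{eq:EpsilonObjectiveFunction}, which holds for every admissible allocation of radii satisfying Eq.~\eqref{eq:CoveringRadius}. Instead of the balanced allocation used in Theorem~\ref{thm:EntropyTfScalarGeneralP}, I will optimize the allocation using Lemma~\ref{lem:OptimizationPolynomial}. With a common index $p$, all exponents coincide: $\nu^{\star,(\ell)}=\nu:=2p/(p+2)$ for all $\star$ and $\ell$. I first dispose of the readout term by setting $\epsilon^{\mathrm{out}}=\epsilon/2$. This produces $(C_{2}^{\mathrm{out}}/\epsilon)^{2}\log(n)$ up to constants and leaves the budget constraint $\sum_{\ell,\star}\beta^{\star,(\ell)}\epsilon^{\star,(\ell)}=\epsilon/2$. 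The case $p=0$ is trivial: then $\nu=0$, the body term does not depend on the radii, and it equals $\sum_{\ell,\star}C_{\mathrm{s}}^{\star,(\ell)}N\log(nT)$, which matches the claimed formula since all exponents involving $p$ vanish. So I assume $p>0$.

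For $p\in(0,2]$, I apply Lemma~\ref{lem:OptimizationPolynomial} with index set $\{(\ell,\star)\}$, $z=\epsilon^{\star,(\ell)}$, $a=\Lambda^{\star,(\ell)}$, $b=\beta^{\star,(\ell)}$, $c=\epsilon/2$, and exponent $\nu$. This gives the minimal body term
\[
  (2/\epsilon)^{\nu}\Bigl(\sum_{\ell,\star}(\Lambda^{\star,(\ell)})^{\frac{1}{\nu+1}}(\beta^{\star,(\ell)})^{\frac{\nu}{\nu+1}}\Bigr)^{\nu+1}N\log(nT).
\]
Next I compute the exponents. We have $\nu+1=(3p+2)/(p+2)$, so $1/(\nu+1)=(p+2)/(3p+2)$ and $\nu/(\nu+1)=2p/(3p+2)$. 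Hence $(\Lambda^{\star,(\ell)})^{1/(\nu+1)}=(C_{\mathrm{s}}^{\star,(\ell)})^{2/(3p+2)}(B^{2\delta_{\ell=1}}N)^{p/(3p+2)}$ for $\star\in\{QK,V\}$, with $L_{\phi}^{2}$ in place of $B^{2\delta_{\ell=1}}$ when $\star=M$. In the same way, $(\beta^{\star,(\ell)})^{2p/(3p+2)}$ factors as $(L_{\phi}C_{2}^{\mathrm{out}}\alpha^{(\ell)})^{2p/(3p+2)}$ times a layer- and type-specific constant. Factoring $N^{p/(3p+2)}$ and $(L_{\phi}C_{2}^{\mathrm{out}})^{2p/(3p+2)}$ out of the sum and raising to the power $\nu+1$ yields $N^{p/(p+2)}(L_{\phi}C_{2}^{\mathrm{out}})^{2p/(p+2)}$. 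Combined with $\epsilon^{-\nu}$, this reproduces $(L_{\phi}C_{2}^{\mathrm{out}}/\epsilon)^{2p/(p+2)}N^{1+p/(p+2)}$. What remains inside the sum is $\sum_\ell(\alpha^{(\ell)})^{2p/(3p+2)}\sum_\star(\text{const})^{2p/(3p+2)}(C_{\mathrm{s}}^{\star,(\ell)})^{2/(3p+2)}$.

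The main obstacle is the bookkeeping step of checking that these leftover constants are bounded by $(\gamma^{\star,(\ell)})^{2p/(3p+2)}$, so that the inner sum is dominated by $\Gamma^{(\ell)}$. I will handle this by absorbing factors as follows. The factor $B_{n,(2\to\infty)}^{2\delta_{\ell=1}}$ from $\beta^{QK,(1)}$, together with $B^{\delta_{\ell=1}}$ coming from $\Lambda$ after the power $1/(\nu+1)$ (which equals $B^{2p\delta/(3p+2)}$, i.e.\ $B^{\delta}$ raised to $2p/(3p+2)$), should give $B^{3\delta_{\ell=1}}$ in $\gamma^{QK,(\ell)}$. Likewise, in the $V$ term the single $B^{\delta}$ from $\Lambda$ should be exactly accounted for by $\gamma^{V,(\ell)}$. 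For $M$, the leftover factor $L_{\phi}^{2p/(3p+2)}$ from $\Lambda^{M,(\ell)}$, and the fact that $\beta^{M,(\ell)}$ lacks a factor of $L_{\phi}$, offset each other; for $QK$ and $V$, any surplus powers of $L_{\phi}$ are absorbed as $\Theta(1)$ constants under $\lesssim$. Any mismatch here only changes $\Theta(1)$ constants. Finally, since the exact minimizer over the constraint set is admissible, plugging it into Eq.~\eqref{eq:EpsilonObjectiveFunction} proves the claim, with the optimized allocation replacing the factor $L$ of the balanced scheme by the aggregated sum.
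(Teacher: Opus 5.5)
Your proposal is correct and follows the paper's proof essentially step for step. You fix $\epsilon^{\mathrm{out}}=\epsilon/2$, treat $p=0$ separately because the body term is then allocation-independent, and for $p>0$ apply Lemma~\ref{lem:OptimizationPolynomial} with $c=\epsilon/2$, factoring $(\Lambda^{\star,(\ell)})^{\frac{1}{\nu+1}}(\beta^{\star,(\ell)})^{\frac{\nu}{\nu+1}}$ as $(L_{\phi}C_{2}^{\mathrm{out}}\alpha^{(\ell)})^{\frac{2p}{3p+2}}(\gamma^{\star,(\ell)})^{\frac{2p}{3p+2}}(C_{\mathrm{s}}^{\star,(\ell)})^{\frac{2}{3p+2}}N^{\frac{p}{3p+2}}$.

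The only point to tighten is your closing hedge: this factorization is exact, since $\beta^{QK,(\ell)}$ and $\beta^{V,(\ell)}$ each contain exactly one $L_{\phi}$ and $\Lambda^{QK,(\ell)},\Lambda^{V,(\ell)}$ contain none. So there are no surplus powers of $L_{\phi}$ to absorb, which is just as well, because $\lesssim$ hides only universal constants and not $L_{\phi}$.
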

\begin{proof}
    When $p^{\star,(\ell)}=p$ holds, we can rewrite
    \begin{align*}
        &\Lambda^{\star,(\ell)}
        =((C_{\mathrm{s}}^{\star,(\ell)})^{2}B_{n,(2\to\infty)}^{2\delta_{\ell=1}p}N^{p})^{\frac{1}{p+2}}
        \quad(\star\in\{QK,V\})
        ,\qquad
        \Lambda^{M,(\ell)}
        =((C_{\mathrm{s}}^{M,(\ell)})^{2}L_{\phi}^{2p}N^{p})^{\frac{1}{p+2}}
        ,\\
        &\nu^{\star,(\ell)}
        =\frac{2p}{p+2}
        =\nu
        \quad(\star\in\{QK,V,M\})
        .
    \end{align*} 
    For a fixed $\epsilon^{\mathrm{out}}<\epsilon$, we first consider the following optimization problem:
    \begin{align*}
        &\min_{\{\epsilon^{\star,(\ell)}\}_{\star\in\{QK,V,M\},\ell\in[L]}}\sum_{\ell=1}^{L}\sum_{\star\in\{QK,V,M\}}\Lambda^{\star,(\ell)}(\epsilon^{\star,(\ell)})^{-\nu}\\
        &\textnormal{subject to}\quad
        \sum_{\ell=1}^{L}\sum_{\star\in\{QK,V,M\}}\beta^{\star,(\ell)}\epsilon^{\star,(\ell)}
        =\epsilon-\epsilon^{\mathrm{out}}
        .
    \end{align*}
    If we take $p=0$, we have $\nu=0$. Thus, the first term on the right-hand side of Eq.~\eqref{eq:EpsilonObjectiveFunction} reduces to $\sum_{\ell=1}^{L}\sum_{\star\in\{QK,V,M\}}\Lambda^{\star,(\ell)}N\log(nT)$, which is independent of the allocation of $\{\epsilon^{\star,(\ell)}\}_{\ell\in[L],\star\in\{QK,V,M\}}$. Thus, setting $\epsilon^{\mathrm{out}}=\epsilon/2$ and choosing any $\{\epsilon^{\star,(\ell)}\}_{\ell\in[L],\star\in\{QK,V,M\}}$ such that Eq.~\eqref{eq:CoveringRadius} is satisfied, we obtain the desired bounds.

    Now, we consider the case $p>0$. By Lemma~\ref{lem:OptimizationPolynomial}, the optimal solution is given by
    \[
        \epsilon^{\star,(\ell)}
        =\frac{(\epsilon-\epsilon^{\mathrm{out}})(\Lambda^{\star,(\ell)})^{\frac{1}{\nu+1}}(\beta^{\star,(\ell)})^{-\frac{1}{\nu+1}}}{\sum_{k=1}^{L}\sum_{\star\in\{QK,V,M\}}(\Lambda^{\star,(k)})^{\frac{1}{\nu+1}}(\beta^{\star,(k)})^{\frac{\nu}{\nu+1}}}
        \qquad(\star\in\{QK,V,M\},\ell\in[L])
    \]
    and the corresponding minimum value is
    \begin{align*}
        &\frac{1}{(\epsilon-\epsilon^{\mathrm{out}})^{\nu}}\left(\sum_{\ell=1}^{L}\sum_{\star\in\{QK,V,M\}}(\Lambda^{\star,(\ell)})^{\frac{1}{\nu+1}}(\beta^{\star,(\ell)})^{\frac{\nu}{\nu+1}}\right)^{\nu+1}
        .
    \end{align*}
    Noting that we can write
    \begin{align*}
        (\Lambda^{QK,(\ell)})^{\frac{1}{\nu+1}}(\beta^{QK,(\ell)})^{\frac{\nu}{\nu+1}}
        &=(2L_{\phi}C_{2}^{\mathrm{out}}C_{2}^{V,(\ell)}C_{2}^{M,(\ell)}\alpha^{(\ell)})^{\frac{2p}{3p+2}}(C_{\mathrm{s}}^{QK,(\ell)})^{\frac{2}{3p+2}}(B_{n,(2\to\infty)}^{\delta_{\ell=1}})^{\frac{6p}{3p+2}}N^{\frac{p}{3p+2}}
        ,\\
        (\Lambda^{V,(\ell)})^{\frac{1}{\nu+1}}(\beta^{V,(\ell)})^{\frac{\nu}{\nu+1}}
        &=(L_{\phi}C_{2}^{\mathrm{out}}C_{2}^{M,(\ell)}\alpha^{(\ell)})^{\frac{2p}{3p+2}}(C_{\mathrm{s}}^{V,(\ell)})^{\frac{2}{3p+2}}(B_{n,(2\to\infty)}^{\delta_{\ell=1}})^{\frac{2p}{3p+2}}N^{\frac{p}{3p+2}}
        ,\\
        (\Lambda^{M,(\ell)})^{\frac{1}{\nu+1}}(\beta^{M,(\ell)})^{\frac{\nu}{\nu+1}}
        &=(L_{\phi}C_{2}^{\mathrm{out}}\alpha^{(\ell)})^{\frac{2p}{3p+2}}(C_{\mathrm{s}}^{M,(\ell)})^{\frac{2}{3p+2}}N^{\frac{p}{3p+2}}
        ,
    \end{align*}
    we have
    \begin{align*}
        &\frac{1}{(\epsilon-\epsilon^{\mathrm{out}})^{\nu}}\left(\sum_{\ell=1}^{L}\sum_{\star\in\{QK,V,M\}}(\Lambda^{\star,(\ell)})^{\frac{1}{\nu+1}}(\beta^{\star,(\ell)})^{\frac{\nu}{\nu+1}}\right)^{\nu+1}\\
        &=\left(\frac{L_{\phi}C_{2}^{\mathrm{out}}}{\epsilon-\epsilon^{\mathrm{out}}}\right)^{\frac{2p}{p+2}}\left(\sum_{\ell=1}^{L}(\alpha^{(\ell)})^{\frac{2p}{3p+2}}\Gamma^{(\ell)}\right)^{\frac{3p+2}{p+2}}N^{\frac{p}{p+2}}
        .
    \end{align*}
    Substituting $\epsilon^{\mathrm{out}}=\epsilon/2$ and combining the resulting bound with Eq.~\eqref{eq:EpsilonObjectiveFunction} completes the proof.
\end{proof}

\subsection{Generalization gap bounds for Transformers}\label{subsec:GenGapBounds}

\begin{theorem}\label{thm:GenGapGeneralP}
    Suppose $\max_{i\in[n]}\|X_{i}\|_{2\to\infty}\le B_{n,(2\to\infty)}$ holds. Fix any $\bm{p}\in[0,2]^{3L}$ and $\bm{C}_{\mathrm{s}}\in(0,\infty)^{3L}$. For each $\ell\in[L]$ and $\star\in\{QK,V,M\}$, define $\Psi^{\star,(\ell)}$ by
    \[
        \Psi^{\star,(\ell)}
        =(C_{\mathrm{s}}^{\star,(\ell)})^{\frac{1}{p^{\star,(\ell)}+2}}(L_{\phi}\gamma^{\star,(\ell)}\alpha^{(\ell)})^{\frac{p^{\star,(\ell)}}{p^{\star,(\ell)}+2}}
        ,
    \]
    where $\alpha^{(\ell)}$ is defined in Eq.~\eqref{eq:DefAlpha} and $\gamma^{\star,(\ell)}$ is defined in Theorem~\ref{thm:EntropyTfScalarGeneralP}. Suppose $n\ge3$ holds. Then, for any $\delta\in(0,1)$, with probability at least $1-\delta$, it holds simultaneously for all $f_{\mathrm{out}}\in\mathcal{F}_{\mathrm{out}}^{(\bm{p},\bm{C}_{\mathrm{s}})}$ that
    \begin{align*}
        \mathrm{GAP}(f_{\mathrm{out}})
        &\lesssim L_{\mathcal{L}}C_{2}^{\mathrm{out}}\sqrt{\frac{\log(nT)}{n}}\left(\sum_{\ell=1}^{L}\sum_{\star\in\{QK,V,M\}}\Psi^{\star,(\ell)}L^{\frac{p^{\star,(\ell)}}{p^{\star,(\ell)}+2}}N^{\frac{p^{\star,(\ell)}+1}{p^{\star,(\ell)}+2}}\right)\\
        &\quad+L_{\mathcal{L}}C_{2}^{\mathrm{out}}\frac{(\log(n))^{\frac{3}{2}}}{\sqrt{n}}+B_{\mathcal{L}}\sqrt{\frac{\log(1/\delta)}{n}}
        .
    \end{align*}
\end{theorem}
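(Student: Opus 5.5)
The plan is to combine the entropy bound of Theorem~\ref{thm:EntropyTfScalarGeneralP} with the power-law Dudley bound of Lemma~\ref{lem:RademacherPowerLawSum}, and then pass to the generalization gap through Lemma~\ref{lem:RademacherGeneralization}. First, every $f_{\mathrm{out}}\in\mathcal{F}_{\mathrm{out}}^{(\bm{p},\bm{C}_{\mathrm{s}})}$ satisfies $|f_{\mathrm{out}}(X_i)|\le\|w\|_2\|f_{\mathrm{tf}}^{(L)}(X_i)\|_{2\to\infty}\le C_2^{\mathrm{out}}$. This follows from the rowwise normalization bound in Eq.~\eqref{eq:TfEllNormBound}, so we may take $A=C_2^{\mathrm{out}}$ in Lemma~\ref{lem:DudleyEntropyIntegral}.

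Next, I would read the entropy bound of Theorem~\ref{thm:EntropyTfScalarGeneralP} as a sum of $3L+1$ power laws $\sum_i C_i\epsilon^{-\nu_i}$. For each $(\star,\ell)$ the exponent and constant are
\begin{align*}
\nu^{\star,(\ell)}&=\frac{2p^{\star,(\ell)}}{p^{\star,(\ell)}+2}\in[0,1],\\
C^{\star,(\ell)}&=(C_{\mathrm{s}}^{\star,(\ell)})^{\frac{2}{p+2}}(L_\phi C_2^{\mathrm{out}}\gamma^{\star,(\ell)}\alpha^{(\ell)}L)^{\frac{2p}{p+2}}N^{\frac{2p+2}{p+2}}\log(nT),
\end{align*}
where $p=p^{\star,(\ell)}$ and I used $1+\frac{p}{p+2}=\frac{2p+2}{p+2}$. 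The readout term has $\nu_J=2$ and $C_J=(C_2^{\mathrm{out}})^2\log n$. Since every $\nu^{\star,(\ell)}\le1<2$, the factor $1/(1-\nu_i/2)$ is at most $2$. Lemma~\ref{lem:RademacherPowerLawSum} then gives a bound of the form
\[
\frac{1}{\sqrt n}\Big(\sum_{\ell,\star}A^{1-\nu/2}\sqrt{C^{\star,(\ell)}}+[1+\log(1+A\sqrt n/\sqrt{C_J})]\sqrt{C_J}\Big).
\]
Here $\sqrt{C^{\star,(\ell)}}$ equals $(C_{\mathrm{s}})^{\frac1{p+2}}(L_\phi\gamma\alpha)^{\frac{p}{p+2}}L^{\frac{p}{p+2}}N^{\frac{p+1}{p+2}}\sqrt{\log(nT)}$ times $(C_2^{\mathrm{out}})^{\frac p{p+2}}$. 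Multiplying by $A^{1-\nu/2}=(C_2^{\mathrm{out}})^{\frac{2}{p+2}}$ combines the powers of $C_2^{\mathrm{out}}$ into exactly $C_2^{\mathrm{out}}$. This reproduces $C_2^{\mathrm{out}}\Psi^{\star,(\ell)}L^{\frac{p}{p+2}}N^{\frac{p+1}{p+2}}\sqrt{\log(nT)/n}$.

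For the readout term, I have $\sqrt{C_J}=C_2^{\mathrm{out}}\sqrt{\log n}$ and $\log(1+\sqrt{n/\log n})\lesssim\log n$, using $n\ge3$ so that $\log n\ge 1$. Together these give $C_2^{\mathrm{out}}(\log n)^{3/2}/\sqrt n$. Applying Lemma~\ref{lem:RademacherGeneralization} with Lipschitz constant $L_{\mathcal{L}}$ and bound $B_{\mathcal{L}}$ then yields the stated inequality simultaneously over the class.

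I expect the only delicate step to be the bookkeeping of how the constants pass through Lemma~\ref{lem:RademacherPowerLawSum}. This includes the $A^{1-\nu/2}$ weighting of $C_2^{\mathrm{out}}$ and the endpoint case $p=0$, where $\nu=0$ and the integral is simply $A$; that case is consistent with the formula above. I also need to check that the entropy bound is valid on all of $(0,A]$, which holds because Theorem~\ref{thm:EntropyTfScalarGeneralP} holds for every $\epsilon>0$. Suppressed logarithmic factors, such as those hidden in Theorem~\ref{thm:ParametricInterpolation2inf}, are absorbed into $\lesssim$ by the stated convention.
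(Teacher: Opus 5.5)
Your proposal is correct and follows the paper's proof essentially step for step. You read Theorem~\ref{thm:EntropyTfScalarGeneralP} as a sum of power laws with exponents $\nu^{\star,(\ell)}=2p^{\star,(\ell)}/(p^{\star,(\ell)}+2)$ plus the $\epsilon^{-2}$ readout term, use $|f_{\mathrm{out}}|\le C_{2}^{\mathrm{out}}$, apply Lemma~\ref{lem:RademacherPowerLawSum} (your $A^{1-\nu/2}$ bookkeeping that recombines into a single factor $C_{2}^{\mathrm{out}}$ is exactly what the paper's compressed final step uses), absorb $1+\log(1+\sqrt{n/\log n})\lesssim\log n$, and finish with Lemma~\ref{lem:RademacherGeneralization}.
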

\begin{proof}
    For each $\ell\in[L]$ and $\star\in\{QK,V,M\}$, define $\nu^{\star,(\ell)}=2p^{\star,(\ell)}/(p^{\star,(\ell)}+2)\in[0,2)$. By Theorem~\ref{thm:EntropyTfScalarGeneralP}, for every $\epsilon\in(0,C_{2}^{\mathrm{out}}]$, we have
    \[
        \log\mathcal{N}_{\infty}(\mathcal{F}_{\mathrm{out}}^{(\bm{p},\bm{C}_{\mathrm{s}})},|\cdot|,\epsilon;\{X_{i}\}_{i\in[n]})
        \lesssim\sum_{\ell=1}^{L}\sum_{\star\in\{QK,V,M\}}c^{\star,(\ell)}\epsilon^{-\nu^{\star,(\ell)}}+c^{\mathrm{out}}\epsilon^{-2}
        ,
    \]
    where $c^{\star,(\ell)}$ and $c^{\mathrm{out}}$ are defined by
    \begin{align*}
        &c^{\star,(\ell)}
        =(C_{\mathrm{s}}^{\star,(\ell)})^{\frac{2}{p^{\star,(\ell)}+2}}(L_{\phi}C_{2}^{\mathrm{out}}\gamma^{\star,(\ell)}\alpha^{(\ell)}L)^{\nu^{\star,(\ell)}}N^{1+\frac{p^{\star,(\ell)}}{p^{\star,(\ell)}+2}}\log(nT)
        ,\\
        &c^{\mathrm{out}}
        =(C_{2}^{\mathrm{out}})^{2}\log(n)
        .
    \end{align*}
    Next, note that every $f\in\mathcal{F}_{\mathrm{out}}^{(\bm{p},\bm{C}_{\mathrm{s}})}$ satisfies $|f(X)|\le C_{2}^{\mathrm{out}}$ for all $X\in\mathbb{R}^{T\times N}$, because $\|[f_{\mathrm{tf}}^{(L)}(X)]_{[\mathrm{CLS}]}\|_{2}\le1$ holds by construction. Therefore, by Lemma~\ref{lem:RademacherPowerLawSum}, we have
    \begin{align*}
        &\hat{\mathfrak{R}}_{n}(\mathcal{F}_{\mathrm{out}}^{(\bm{p},\bm{C}_{\mathrm{s}})};\{X_{i}\}_{i\in[n]})\\
        &\lesssim\frac{1}{\sqrt{n}}\left(\sum_{\ell=1}^{L}\sum_{\star\in\{QK,V,M\}}(C_{2}^{\mathrm{out}})^{1-\nu^{\star,(\ell)}/2}\sqrt{c^{\star,(\ell)}}+\sqrt{c^{\mathrm{out}}}\left[1+\log\left(1+\frac{C_{2}^{\mathrm{out}}\sqrt{n}}{\sqrt{c^{\mathrm{out}}}}\right)\right]\right)\\
        &\lesssim C_{2}^{\mathrm{out}}\sqrt{\frac{\log(nT)}{n}}\left(\sum_{\ell=1}^{L}\sum_{\star\in\{QK,V,M\}}\Psi^{\star,(\ell)}L^{\frac{p^{\star,(\ell)}}{p^{\star,(\ell)}+2}}N^{\frac{p^{\star,(\ell)}+1}{p^{\star,(\ell)}+2}}\right)+C_{2}^{\mathrm{out}}\frac{(\log(n))^{\frac{3}{2}}}{\sqrt{n}}
        ,
    \end{align*}
    where the last inequality follows from $1+\log(1+\sqrt{n/\log(n)})\lesssim\log(n)$. Finally, applying Lemma~\ref{lem:RademacherGeneralization} completes the proof.
\end{proof}

\begin{theorem}\label{thm:GenGapCommonP}
    Suppose $\max_{i\in[n]}\|X_{i}\|_{2\to\infty}\le B_{n,(2\to\infty)}$ holds. For some $p\in[0,2]$, suppose $\bm{p}=(p,\dots,p)$ holds. Fix any $\bm{C}_{\mathrm{s}}\in(0,\infty)^{3L}$. Define
    \[
        \Xi^{(p)}
        =\left(\sum_{\ell=1}^{L}(\alpha^{(\ell)})^{\frac{2p}{3p+2}}\Gamma^{(\ell)}\right)^{\frac{3p+2}{2(p+2)}}N^{\frac{p+1}{p+2}}
        ,
    \]
    where $\alpha^{(\ell)}$ is defined in Eq.~\eqref{eq:DefAlpha} and $\Gamma^{(\ell)}$ is defined in Theorem~\ref{thm:EntropyTfScalarCommonP}. Suppose $n\ge3$ holds. Then, for any $\delta\in(0,1)$, with probability at least $1-\delta$, it holds simultaneously for all $f_{\mathrm{out}}\in\mathcal{F}_{\mathrm{out}}^{(\bm{p},\bm{C}_{\mathrm{s}})}$ that
    \[
        \mathrm{GAP}(f_{\mathrm{out}})
        \lesssim\frac{L_{\mathcal{L}}C_{2}^{\mathrm{out}}}{\sqrt{n}}[L_{\phi}^{\frac{p}{p+2}}\Xi^{(p)}\sqrt{\log(nT)}+(\log(n))^{\frac{3}{2}}]+B_{\mathcal{L}}\sqrt{\frac{\log(1/\delta)}{n}}
        .
    \]
\end{theorem}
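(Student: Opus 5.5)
The plan mirrors the proof of Theorem~\ref{thm:GenGapGeneralP}, with the balanced allocation replaced by the optimized common-$p$ entropy bound. Set $\nu=2p/(p+2)\in[0,1]$; in particular $\nu<2$. Theorem~\ref{thm:EntropyTfScalarCommonP} gives, for every $\epsilon\in(0,C_{2}^{\mathrm{out}}]$,
\[
    \log\mathcal{N}_{\infty}(\mathcal{F}_{\mathrm{out}}^{(\bm{p},\bm{C}_{\mathrm{s}})},|\cdot|,\epsilon;\{X_{i}\}_{i\in[n]})
    \lesssim c_{1}\epsilon^{-\nu}+c^{\mathrm{out}}\epsilon^{-2}.
\]
Here the constants are
\[
    c_{1}=(L_{\phi}C_{2}^{\mathrm{out}})^{\nu}\Big(\sum_{\ell=1}^{L}(\alpha^{(\ell)})^{\frac{2p}{3p+2}}\Gamma^{(\ell)}\Big)^{\frac{3p+2}{p+2}}N^{1+\frac{p}{p+2}}\log(nT),
    \qquad
    c^{\mathrm{out}}=(C_{2}^{\mathrm{out}})^{2}\log(n).
\]
This is a two-term power-law entropy bound of exactly the form required by Lemma~\ref{lem:RademacherPowerLawSum}, with $J=2$, $\nu_{1}=\nu$ and $\nu_{2}=2$.

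Next I check the boundedness hypothesis of the Dudley bound. Every $f\in\mathcal{F}_{\mathrm{out}}^{(\bm{p},\bm{C}_{\mathrm{s}})}$ satisfies $|f(X)|\le C_{2}^{\mathrm{out}}$, because the final $\Pi_{\mathrm{norm}}$ forces $\|[f_{\mathrm{tf}}^{(L)}(X)]_{[\mathrm{CLS}]}\|_{2}\le1$. So I take $A=C_{2}^{\mathrm{out}}$ in Lemma~\ref{lem:RademacherPowerLawSum}. That lemma then bounds $\sqrt{n}\,\hat{\mathfrak{R}}_{n}$ by the sum of two terms:
\begin{itemize}
    \item a main term $(C_{2}^{\mathrm{out}})^{1-\nu/2}\sqrt{c_{1}}/(1-\nu/2)$;
    \item a readout term $\sqrt{c^{\mathrm{out}}}\,[1+\log(1+C_{2}^{\mathrm{out}}\sqrt{n}/\sqrt{c^{\mathrm{out}}})]$.
\end{itemize}
The factor $1/(1-\nu/2)$ is at most $2$, since $\nu\le1$.

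I then simplify the main term by exponent bookkeeping:
\begin{itemize}
    \item $(C_{2}^{\mathrm{out}})^{1-\nu/2}(C_{2}^{\mathrm{out}})^{\nu/2}=C_{2}^{\mathrm{out}}$;
    \item $\sqrt{L_{\phi}^{\nu}}=L_{\phi}^{p/(p+2)}$;
    \item $\sqrt{(\sum_{\ell}\cdots)^{(3p+2)/(p+2)}N^{1+p/(p+2)}}=\Xi^{(p)}$, using $\tfrac12(1+\tfrac{p}{p+2})=\tfrac{p+1}{p+2}$.
\end{itemize}
The main term therefore becomes $C_{2}^{\mathrm{out}}L_{\phi}^{p/(p+2)}\Xi^{(p)}\sqrt{\log(nT)}$.

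For the readout term, $\sqrt{c^{\mathrm{out}}}=C_{2}^{\mathrm{out}}\sqrt{\log n}$, and for $n\ge3$ we have $1+\log(1+\sqrt{n/\log n})\lesssim\log n$. This yields $C_{2}^{\mathrm{out}}(\log n)^{3/2}$. Finally, Lemma~\ref{lem:RademacherGeneralization} multiplies the Rademacher bound by $L_{\mathcal{L}}$ and adds $B_{\mathcal{L}}\sqrt{\log(1/\delta)/n}$, which gives the stated bound.

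The only delicate point is the endpoint $p=0$. There $\nu=0$, the entropy term $c_{1}$ is $\epsilon$-independent, and Lemma~\ref{lem:RademacherPowerLawSum} still applies with $\nu_{1}=0$; its integral contributes $A\sqrt{c_{1}}$, consistent with $\Xi^{(0)}$. I also need the entropy bound to hold on the whole range $(0,C_{2}^{\mathrm{out}}]$. It does, because Theorem~\ref{thm:EntropyTfScalarCommonP} is valid for every $\epsilon>0$.
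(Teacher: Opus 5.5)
Your proposal is correct and follows the paper's proof essentially step for step. You write the common-$p$ entropy bound of Theorem~\ref{thm:EntropyTfScalarCommonP} as $c_1\epsilon^{-\nu}+c_2\epsilon^{-2}$ with $c_1=(L_\phi C_2^{\mathrm{out}})^{\nu}(\Xi^{(p)})^2\log(nT)$, use $|f|\le C_2^{\mathrm{out}}$, then apply Lemma~\ref{lem:RademacherPowerLawSum} and Lemma~\ref{lem:RademacherGeneralization}; your extra bookkeeping (the factor $1/(1-\nu/2)\le 2$ and the $p=0$ endpoint) only makes explicit what the paper absorbs into $\lesssim$.
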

\begin{proof}
    Define $\nu=2p/(p+2)\in[0,2)$. By Theorem~\ref{thm:EntropyTfScalarCommonP}, it holds for every $\epsilon\in(0,C_{2}^{\mathrm{out}}]$ that
    \[
        \log\mathcal{N}_{\infty}(\mathcal{F}_{\mathrm{out}}^{(\bm{p},\bm{C}_{\mathrm{s}})},|\cdot|,\epsilon;\{X_{i}\}_{i\in[n]})
        \lesssim c_{1}\epsilon^{-\nu}+c_{2}\epsilon^{-2}
        ,
    \]
    where
    \[
        c_{1}
        =(L_{\phi}C_{2}^{\mathrm{out}})^{\nu}(\Xi^{(p)})^{2}\log(nT)
        ,\qquad
        c_{2}
        =(C_{2}^{\mathrm{out}})^{2}\log(n)
        .
    \]
    Next, note that every $f\in\mathcal{F}_{\mathrm{out}}^{(\bm{p},\bm{C}_{\mathrm{s}})}$ satisfies $|f(X)|\le C_{2}^{\mathrm{out}}$ for all $X\in\mathbb{R}^{T\times N}$, because $\|[f_{\mathrm{tf}}^{(L)}(X)]_{[\mathrm{CLS}]}\|_{2}\le1$ holds by construction. Hence we may apply Lemma~\ref{lem:RademacherPowerLawSum} as
    \begin{align*}
        &\hat{\mathfrak{R}}_{n}(\mathcal{F}_{\mathrm{out}}^{(\bm{p},\bm{C}_{\mathrm{s}})};\{X_{i}\}_{i\in[n]})\\
        &\lesssim(C_{2}^{\mathrm{out}})^{1-\nu/2}\sqrt{\frac{c_{1}}{n}}+\frac{\sqrt{c_{2}}}{\sqrt{n}}\left[1+\log\left(1+\frac{C_{2}^{\mathrm{out}}\sqrt{n}}{\sqrt{c_{2}}}\right)\right]\\
        &\lesssim\frac{C_{2}^{\mathrm{out}}}{\sqrt{n}}\left(L_{\phi}^{\frac{p}{p+2}}\Xi^{(p)}\sqrt{\log(nT)}+(\log(n))^{\frac{3}{2}}\right)
        ,
    \end{align*}
    where the last inequality follows from $1+\log(1+\sqrt{n/\log(n)})\lesssim\log(n)$. Finally, applying Lemma~\ref{lem:RademacherGeneralization} completes the proof.
\end{proof}

\subsection{Obtaining the post hoc generalization bounds}\label{subsec:GenGapPostHoc}

In this section, we consider the scalar-output Transformer class $\mathcal{F}_{\mathrm{out}}$ in Eq.~\eqref{eq:TfOutClass}. In what follows, we derive a result in which the high-probability generalization gap bounds are uniform over the admissible choices of Schatten indices $p^{\star,(\ell)}$, so the indices may be selected after the trained weights have been observed. For that purpose, we partition the possible values of the realized Schatten quantity into dyadic shells. Define
\[
    \mathcal{J}=\mathbb{Z}\cup\{\bot\}
    ,\qquad
    R_{\bot}=0,\qquad
    R_{j}=2^{j}\quad(j\in\mathbb{Z})
    .
\]
Here, the symbol $\bot$ is an auxiliary index that does not belong to $\mathbb{Z}$, and it is used to represent the zero-radius shell. Thus, $j=\bot$ corresponds to $\|W\|_{\mathrm{s},p}^{p}=0$, whereas $j\in\mathbb{Z}$ corresponds to the positive dyadic shell $2^{j-1}<\|W\|_{\mathrm{s},p}^{p}\le2^{j}$. Set
\[
    Z_{\mathcal{J}}
    =1+\sum_{j\in\mathbb{Z}}\frac{1}{(1+|j|)^{2}}
    =\frac{\pi^{2}}{3}
    ,
    \qquad
    \omega_{\bot}
    =Z_{\mathcal{J}}^{-1}
    ,
    \qquad
    \omega_{j}
    =\frac{Z_{\mathcal{J}}^{-1}}{(1+|j|)^{2}}
    \quad(j\in\mathbb{Z})
\]
so that $\sum_{j\in\mathcal{J}}\omega_{j}=1$ holds. For $p\in[0,2]$ and $W\in\mathbb{R}^{N\times N}$, define
\[
    \kappa_{p}(W)
    =
    \begin{cases}
        \bot,
        &\|W\|_{\mathrm{s},p}^{p}=0
        ,\\
        \lceil\log_{2}\|W\|_{\mathrm{s},p}^{p}\rceil,
        &\|W\|_{\mathrm{s},p}^{p}>0
        .
    \end{cases}
\]
Thus, if $\|W\|_{\mathrm{s},p}^{p}$ is nonzero, it belongs to the $\kappa_{p}(W)$-th dyadic shell as
\[
    2^{\kappa_{p}(W)-1}
    <\|W\|_{\mathrm{s},p}^{p}
    \le2^{\kappa_{p}(W)}
    .
\]

For an integer $m\ge1$, define the grid
\[
    \mathcal{P}_{m}
    =\left\{0,\frac{1}{m},\frac{2}{m},\dots,\frac{2m-1}{m},2\right\}^{3L}.
\]
For $\bm{p}\in\mathcal{P}_{m}$, define the post hoc logarithmic penalty
\begin{equation}\label{eq:PosthocPenalty}
    \Omega_{\bm{p}}(W^{(1:L)})
    =3L\log(2m+1)+\sum_{\ell=1}^{L}\sum_{\star\in\{QK,V,M\}}\log(1/\omega_{\kappa_{p^{\star,(\ell)}}(W^{\star,(\ell)})})
    .
\end{equation}

For $\bm{p}\in[0,2]^{3L}$ and weights $W^{(1:L)}$, define
\begin{equation}\label{eq:PosthocComplexity}
    \mathfrak{B}_{\bm{p}}(W^{(1:L)})
    =\sum_{\ell=1}^{L}\sum_{\star\in\{QK,V,M\}}(\|W^{\star,(\ell)}\|_{\mathrm{s},p^{\star,(\ell)}}^{p^{\star,(\ell)}})^{\frac{1}{p^{\star,(\ell)}+2}}(L_{\phi}\gamma^{\star,(\ell)}\alpha^{(\ell)}L)^{\frac{p^{\star,(\ell)}}{p^{\star,(\ell)}+2}}N^{\frac{p^{\star,(\ell)}+1}{p^{\star,(\ell)}+2}}
    .
\end{equation}
Here $\alpha^{(\ell)}$ is defined in Eq.~\eqref{eq:DefAlpha}, and
$\gamma^{\star,(\ell)}$ is the quantity defined in Theorem~\ref{thm:EntropyTfScalarGeneralP}. When $p^{\star,(\ell)}=0$, we interpret $\|W^{\star,(\ell)}\|_{\mathrm{s},0}^{0}$ as $\operatorname{rank}(W^{\star,(\ell)})$.

\begin{lemma}[Post hoc selection of the Schatten indices]\label{lem:GenGapPostHocGrid}
    Suppose $\max_{i\in[n]}\|X_{i}\|_{2\to\infty}\le B_{n,(2\to\infty)}$ and $n\ge3$ hold. Fix an integer $m\ge1$. Then, for any $\delta\in(0,1)$, with probability at least $1-\delta$, every $f_{\mathrm{out}}(\ \cdot\ ;W^{(1:L)},w)\in\mathcal{F}_{\mathrm{out}}$ satisfies
    \begin{align*}
        \mathrm{GAP}(f_{\mathrm{out}})
        &\lesssim\inf_{\bm{p}\in\mathcal{P}_{m}}\left(L_{\mathcal{L}}C_{2}^{\mathrm{out}}\sqrt{\frac{\log(nT)}{n}}\mathfrak{B}_{\bm{p}}(W^{(1:L)})+B_{\mathcal{L}}\sqrt{\frac{\log(1/\delta)+\Omega_{\bm{p}}(W^{(1:L)})}{n}}\right)\\
        &\quad+L_{\mathcal{L}}C_{2}^{\mathrm{out}}\frac{(\log(n))^{\frac{3}{2}}}{\sqrt{n}}
        .
    \end{align*}
\end{lemma}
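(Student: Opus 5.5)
The plan is a weighted union bound over a countable family of fixed-index classes, each handled by Theorem~\ref{thm:GenGapGeneralP}.

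\textbf{Indexing the classes.} Index the family by a grid vector $\bm{p}\in\mathcal{P}_{m}$ and a shell vector $\bm{j}=(j^{\star,(\ell)})\in\mathcal{J}^{3L}$. For each pair $(\bm{p},\bm{j})$, consider the class $\mathcal{F}_{\mathrm{out}}^{(\bm{p},\bm{C}_{\mathrm{s}}(\bm{j}))}$ with radii $C_{\mathrm{s}}^{\star,(\ell)}=R_{j^{\star,(\ell)}}$. When $j=\bot$, the radius is zero, so the class forces $W^{\star,(\ell)}=0$. Theorem~\ref{thm:GenGapGeneralP} asks for positive radii; for $j=\bot$ one may either take any positive radius and let it tend to zero, or observe directly that a zero matrix contributes no covering entropy. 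Either way the corresponding summand equals $0$, which matches $\mathfrak{B}_{\bm{p}}$ since $\|W\|_{\mathrm{s},p}^{p}=0$ there.

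\textbf{Confidence allocation.} Assign the pair $(\bm{p},\bm{j})$ the confidence level
\[
\delta_{\bm{p},\bm{j}}=\delta(2m+1)^{-3L}\prod_{\ell,\star}\omega_{j^{\star,(\ell)}}.
\]
Since $|\mathcal{P}_{m}|=(2m+1)^{3L}$ and $\sum_{j\in\mathcal{J}}\omega_j=1$, these levels sum to exactly $\delta$. Apply Theorem~\ref{thm:GenGapGeneralP} to each class at level $\delta_{\bm{p},\bm{j}}$. A union bound then gives one event of probability at least $1-\delta$ on which every bound holds simultaneously. On that event, the confidence term for $(\bm{p},\bm{j})$ is $\log(1/\delta_{\bm{p},\bm{j}})=\log(1/\delta)+3L\log(2m+1)+\sum\log(1/\omega_{j^{\star,(\ell)}})$.

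\textbf{Assigning a trained model to a class.} Fix any $f_{\mathrm{out}}\in\mathcal{F}_{\mathrm{out}}$ and any $\bm{p}\in\mathcal{P}_{m}$. Choose $j^{\star,(\ell)}=\kappa_{p^{\star,(\ell)}}(W^{\star,(\ell)})$. By construction $\|W^{\star,(\ell)}\|_{\mathrm{s},p}^{p}\le R_{j}$, and the spectral norm constraints already hold, so $f_{\mathrm{out}}$ belongs to the class indexed by $(\bm{p},\bm{j})$. With this choice the confidence term above is exactly $\log(1/\delta)+\Omega_{\bm{p}}(W^{(1:L)})$.

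\textbf{Comparing the complexity terms.} The complexity term from Theorem~\ref{thm:GenGapGeneralP} contains $R_j^{1/(p+2)}$ in place of $(\|W\|_{\mathrm{s},p}^{p})^{1/(p+2)}$. Because $R_j<2\|W\|_{\mathrm{s},p}^{p}$, we have $R_j^{1/(p+2)}\le 2^{1/2}(\|W\|_{\mathrm{s},p}^{p})^{1/(p+2)}$, a universal constant factor. The remaining factors match term by term: $\Psi^{\star,(\ell)}L^{p/(p+2)}$ coincides with the corresponding summand of $\mathfrak{B}_{\bm{p}}$ once $C_{\mathrm{s}}$ is replaced by the realized Schatten quantity. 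The readout term $(\log n)^{3/2}/\sqrt n$ carries no dependence on $\bm{p}$ or $\bm{j}$, so it survives unchanged.

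\textbf{Main obstacle.} The hardest step is the uniformity bookkeeping in the final conclusion. The constant hidden in $\lesssim$ must not depend on $(\bm{p},\bm{j})$. The infimum is harmless because $\mathcal{P}_m$ is finite, so it is a minimum, and the event was fixed before $\bm{p}$ is chosen. Uniformity of the constant holds because Theorem~\ref{thm:GenGapGeneralP} treats every index with universal constants. The constant inherited from Theorem~\ref{thm:ParametricInterpolation2inf} does have a $p$-dependence through the Dudley factor $1/(1-\nu/2)=(p+2)/2\le2$, but this is bounded uniformly. I would verify explicitly that no step of Theorem~\ref{thm:GenGapGeneralP} degrades as $p\to2$ or $p\to0$ (for example, the factor $1-\nu/2$ stays at least $1/2$). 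Since every bound holds on the common event for all $\bm{p}\in\mathcal{P}_m$, taking the infimum over $\bm{p}$ yields the stated inequality.
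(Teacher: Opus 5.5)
Your proposal follows the paper's proof almost step for step. It uses the same index set $\mathcal{P}_{m}\times\mathcal{J}^{3L}$, the same weights $\delta_{\bm{p},\bm{j}}=\delta(2m+1)^{-3L}\prod_{\ell,\star}\omega_{j^{\star,(\ell)}}$, the same assignment $j^{\star,(\ell)}=\kappa_{p^{\star,(\ell)}}(W^{\star,(\ell)})$, the same factor $2^{1/(p+2)}\le\sqrt{2}$, and the same identification $\log(1/\delta_{\bm{p},\bm{j}})=\log(1/\delta)+\Omega_{\bm{p}}(W^{(1:L)})$.

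There is one real gap, in the step you yourself single out as the crux. You justify uniformity of the implied constant by saying that Theorem~\ref{thm:GenGapGeneralP} ``treats every index with universal constants'', and you check $1/(1-\nu/2)=(p+2)/2\le 2$. That check only covers the dependence on $p$. The $\lesssim$ in Theorem~\ref{thm:GenGapGeneralP} also suppresses logarithmic factors that depend on the Schatten radius. The factor $\log C_{2\to\infty}$ in Theorem~\ref{thm:ParametricInterpolation2inf} contains $C_{\mathrm{s}}^{1/(p+2)}$ in both of its pieces. With $C_{\mathrm{s}}=R_{j}=2^{j}$, that hidden factor therefore grows linearly in $j$. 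Because $\mathcal{J}$ is infinite, applying Theorem~\ref{thm:GenGapGeneralP} with the raw radii $R_{j}$ does not give one implied constant valid for all $(\bm{p},\bm{j})$. So the final inequality with a universal $\lesssim$ does not follow as written.

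The paper closes this gap by truncating the radii. Set $M_{a}=N(1\vee(C_{2}^{\star,(\ell)})^{2})$.
\begin{itemize}
\item The spectral-norm constraint already in $\mathcal{F}_{\mathrm{out}}$ forces $\|W^{\star,(\ell)}\|_{\mathrm{s},p}^{p}\le M_{a}$ for every $p\in[0,2]$.
\item Hence $\mathcal{F}_{\bm{p},\bm{j}}$ is unchanged when $R_{j}$ is replaced by $\bar{R}_{j}^{a}=R_{j}\wedge M_{a}$.
\item The fixed-radius theorem is then invoked only with radii at most $M_{a}$, so every radius-dependent logarithm is bounded uniformly.
\item Finally one coarsens back using $\bar{R}_{j}^{a}\le R_{j}$.
\end{itemize}
An equivalent repair works inside your own setup. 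The shell bound is only ever used at $\bm{j}=\kappa(W)$, where $R_{j}<2\|W\|_{\mathrm{s},p}^{p}\le 2M_{a}$. The suppressed factor is nondecreasing in the radius, so it is bounded by its value at $2M_{a}$. You need one of these two observations; with it, the rest of your argument goes through.
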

\begin{proof}
    Write the index sets as $\mathcal{I}=\{QK,V,M\}\times[L]$. For $a=(\star,\ell)\in\mathcal{I}$, we write $W^{a}=W^{\star,(\ell)}$ and $p^{a}=p^{\star,(\ell)}$.
    For $\bm{p}\in\mathcal{P}_{m}$ and $\bm{j}=(j_{a})_{a\in\mathcal{I}}\in\mathcal{J}^{3L}$, define
    \[
        \mathcal{F}_{\bm{p},\bm{j}}
        =\{f_{\mathrm{out}}(\ \cdot\ ;W^{(1:L)},w)\in\mathcal{F}_{\mathrm{out}}\mid\|W^{a}\|_{\mathrm{s},p^{a}}^{p^{a}}\le R_{j_{a}}\ (a\in\mathcal{I})\}.
    \]
    When $j_{a}=\bot$, this condition is interpreted as $\|W^{a}\|_{\mathrm{s},p^{a}}^{p^{a}}=0$. For each $a=(\star,\ell)\in\mathcal{I}$, define
    \[
        M_{a}
        =N(1\vee(C_{2}^{\star,(\ell)})^{2})
        ,\qquad
        \bar{R}_{j_{a}}^{a}
        =
        \begin{cases}
            0,&j_{a}=\bot
            ,\\
            R_{j_{a}}\wedge M_{a},&j_{a}\in\mathbb{Z}
            .
        \end{cases}
    \]
    Since every element of $\mathcal{F}_{\mathrm{out}}$ satisfies $\|W^{a}\|_2\le C_{2}^{\star,(\ell)}$, we have $\|W^{a}\|_{\mathrm{s},p^{a}}^{p^{a}}\le M_{a}$ for every $p^{a}\in[0,2]$. Therefore, under the spectral norm constraints already included in $\mathcal{F}_{\mathrm{out}}$, the class $\mathcal{F}_{\bm{p},\bm{j}}$ is equivalently described by the constraints
    \[
        \|W^{a}\|_{\mathrm{s},p^{a}}^{p^{a}}\le\bar{R}_{j_{a}}^{a}
        \qquad(a\in\mathcal{I})
        .
    \]

    The localized fixed-radius version of Theorem~\ref{thm:GenGapGeneralP}, applied to $\mathcal{F}_{\bm{p},\bm{j}}$ with Schatten-quantity radii $\bar{R}_{j_{a}}^{a}$, gives an event $E_{\bm{p},\bm{j}}$ such that
    \[
        \mathrm{P}(E_{\bm{p},\bm{j}})
        \ge1-\delta_{\bm{p},\bm{j}}
        ,\qquad
        \delta_{\bm{p},\bm{j}}
        =\frac{\delta}{(2m+1)^{3L}}\prod_{a\in\mathcal{I}}\omega_{j_{a}}
        ,
    \]
    and, on $E_{\bm{p},\bm{j}}$, the following bounds hold simultaneously for all $f_{\mathrm{out}}(\ \cdot\ ;W^{(1:L)},w)\in\mathcal{F}_{\bm{p},\bm{j}}$:
    \begin{align*}
        \mathrm{GAP}(f_{\mathrm{out}})
        &\lesssim L_{\mathcal{L}}C_{2}^{\mathrm{out}}\sqrt{\frac{\log(nT)}{n}}\sum_{a=(\star,\ell)\in\mathcal{I}}(\bar{R}_{j_{a}}^{a})^{\frac{1}{p^{a}+2}}(L_{\phi}\gamma^{\star,(\ell)}\alpha^{(\ell)}L)^{\frac{p^{a}}{p^{a}+2}}N^{\frac{p^{a}+1}{p^{a}+2}}\\
        &\quad+L_{\mathcal{L}}C_{2}^{\mathrm{out}}\frac{(\log(n))^{\frac{3}{2}}}{\sqrt{n}}+B_{\mathcal{L}}\sqrt{\frac{\log(1/\delta_{\bm{p},\bm{j}})}{n}}
        .
    \end{align*}
    The logarithmic factors suppressed in this application are uniform over $\bm{p}\in\mathcal{P}_{m}$ and $\bm{j}\in\mathcal{J}^{3L}$, because the fixed-radius entropy bounds are used only with the truncated radii $\bar{R}_{j_{a}}^{a}\le M_{a}$. Thus any radius-dependent logarithmic factor is absorbed into the notation $\lesssim$. If $\bar{R}_{j_{a}}^{a}=0$ for some $a$, the same conclusion follows by fixing the corresponding weight matrix to zero and applying the same proof with that coordinate omitted. Since $\bar{R}_{j_{a}}^{a}\le R_{j_{a}}$ holds, the preceding display implies the coarser bound
    \begin{align}
        \notag
        \mathrm{GAP}(f_{\mathrm{out}})
        &\lesssim L_{\mathcal{L}}C_{2}^{\mathrm{out}}\sqrt{\frac{\log(nT)}{n}}\sum_{a=(\star,\ell)\in\mathcal{I}}R_{j_{a}}^{\frac{1}{p^{a}+2}}(L_{\phi}\gamma^{\star,(\ell)}\alpha^{(\ell)}L)^{\frac{p^{a}}{p^{a}+2}}N^{\frac{p^{a}+1}{p^{a}+2}}\\
        \label{eq:PostHocShellBound}
        &\quad+L_{\mathcal{L}}C_{2}^{\mathrm{out}}\frac{(\log(n))^{\frac{3}{2}}}{\sqrt{n}}+B_{\mathcal{L}}\sqrt{\frac{\log(1/\delta_{\bm{p},\bm{j}})}{n}}
        .
    \end{align}

    By the union bound, the event $E=\bigcap_{\bm{p}\in\mathcal{P}_{m}}\bigcap_{\bm{j}\in\mathcal{J}^{3L}}E_{\bm{p},\bm{j}}$ satisfies
    \begin{align*}
        &\mathrm{P}(E)
        =1-\mathrm{P}(E^{C})
        =1-\mathrm{P}\left(\bigcup_{\bm{p}\in\mathcal{P}_{m}}\bigcup_{\bm{j}\in\mathcal{J}^{3L}}E_{\bm{p},\bm{j}}^{C}\right)
        \ge1-\sum_{\bm{p}\in\mathcal{P}_{m}}\sum_{\bm{j}\in\mathcal{J}^{3L}}\mathrm{P}(E_{\bm{p},\bm{j}}^{C})\\
        &\ge1-\sum_{\bm{p}\in\mathcal{P}_{m}}\sum_{\bm{j}\in\mathcal{J}^{3L}}\delta_{\bm{p},\bm{j}}
        =1-\delta
        ,
    \end{align*}
    where the last equality follows from
    \[
        \sum_{\bm{p}\in\mathcal{P}_{m}}\sum_{\bm{j}\in\mathcal{J}^{3L}}\delta_{\bm{p},\bm{j}}
        =\delta\sum_{\bm{j}\in\mathcal{J}^{3L}}\prod_{a\in\mathcal{I}}\omega_{j_{a}}
        =\delta\left(\sum_{j\in\mathcal{J}}\omega_{j}\right)^{3L}
        =\delta
        .
    \]

    We now work on the event $E$. Fix an arbitrary $f_{\mathrm{out}}(\ \cdot\ ;W^{(1:L)},w)\in\mathcal{F}_{\mathrm{out}}$ and an arbitrary $\bm{p}\in\mathcal{P}_{m}$. For each $a=(\star,\ell)\in\mathcal{I}$, set $j_{a}=\kappa_{p^{a}}(W^{a})$. Then we have $f_{\mathrm{out}}\in\mathcal{F}_{\bm{p},\bm{j}}$, and Eq.~\eqref{eq:PostHocShellBound} is applicable. Moreover, by the definition of $\kappa_{p^{a}}$, we have
    \[
        R_{j_{a}}
        \le2\|W^{a}\|_{\mathrm{s},p^{a}}^{p^{a}}
    \]
    whenever $\|W^{a}\|_{\mathrm{s},p^{a}}^{p^{a}}>0$, while both sides are zero when $\|W^{a}\|_{\mathrm{s},p^{a}}^{p^{a}}=0$ and $j_{a}=\bot$. Since $p^{a}\in[0,2]$ holds, this implies
    \[
        R_{j_{a}}^{\frac{1}{p^{a}+2}}
        \le2^{\frac{1}{p^{a}+2}}(\|W^{a}\|_{\mathrm{s},p^{a}}^{p^{a}})^{\frac{1}{p^{a}+2}}
        \le\sqrt{2}(\|W^{a}\|_{\mathrm{s},p^{a}}^{p^{a}})^{\frac{1}{p^{a}+2}}
        .
    \]
    The factor $\sqrt{2}$ is absorbed into the universal constant. We also have
    \[
        \log(1/\delta_{\bm{p},\bm{j}})
        =\log(1/\delta)+3L\log(2m+1)+\sum_{a\in\mathcal{I}}\log\left(\frac{1}{\omega_{j_{a}}}\right)
        =\log(1/\delta)+\Omega_{\bm{p}}(W^{(1:L)}).
    \]
    Substituting these two estimates into Eq.~\eqref{eq:PostHocShellBound} gives
    \begin{align*}
        \mathrm{GAP}(f_{\mathrm{out}})
        &\lesssim L_{\mathcal{L}}C_{2}^{\mathrm{out}}\sqrt{\frac{\log(nT)}{n}}\mathfrak{B}_{\bm{p}}(W^{(1:L)})+L_{\mathcal{L}}C_{2}^{\mathrm{out}}\frac{(\log(n))^{\frac{3}{2}}}{\sqrt{n}}\\
        &\quad+B_{\mathcal{L}}\sqrt{\frac{\log(1/\delta)+\Omega_{\bm{p}}(W^{(1:L)})}{n}}.
    \end{align*}
    This holds for every $\bm{p}\in\mathcal{P}_{m}$ on the same event $E$. Taking the infimum over $\bm{p}\in\mathcal{P}_{m}$ proves the claim.
\end{proof}

\begin{theorem}[Post hoc generalization bounds]\label{thm:GenGapPostHoc}
    Suppose $\max_{i\in[n]}\|X_{i}\|_{2\to\infty}\le B_{n,(2\to\infty)}$ and $n\ge3$ hold. Suppose further that $\gamma^{\star,(\ell)}\alpha^{(\ell)}>0$ holds for all $\star\in\{QK,V,M\}$ and $\ell\in[L]$, and $L_{\phi}>0$ holds. For $\bm{p}\in[0,2]^{3L}$ and weights $W^{(1:L)}$, define $\Omega_{\bm{p}}(W^{(1:L)})$ and $\mathfrak{B}_{\bm{p}}(W^{(1:L)})$ by Eq.~\eqref{eq:PosthocPenalty} and Eq.~\eqref{eq:PosthocComplexity}, respectively. Fix an integer $m\ge1$, and define the coordinatewise upward grid projection $\Pi_{m}:[0,2]^{3L}\to\mathcal{P}_{m}$ as follows. For $t\in[0,2]$, define $\pi_{m}(t)$ by $\pi_{m}(t)=\lceil mt\rceil/m$ and set
    \[
        \Pi_{m}(\bm{p})
        =(\pi_{m}(p^{\star,(\ell)}))_{\star\in\{QK,V,M\},\ell\in[L]}
        .
    \]
    For weights $W^{(1:L)}$, define
    \[
        \chi(W^{(1:L)})
        =\max_{\ell\in[L],\star\in\{QK,V,M\}:W^{\star,(\ell)}\ne0}(|\log\|W^{\star,(\ell)}\|_{2}|+|\log(L_{\phi}\gamma^{\star,(\ell)}\alpha^{(\ell)})|)
        ,
    \]
    with the convention that the maximum over an empty set is zero. Then, for any $\delta\in(0,1)$, with probability at least $1-\delta$, every $f_{\mathrm{out}}(\ \cdot\ ;W^{(1:L)},w)\in\mathcal{F}_{\mathrm{out}}$ satisfies
    \begin{align*}
        \mathrm{GAP}(f_{\mathrm{out}})
        &\lesssim\inf_{\bm{p}\in[0,2]^{3L}}\Bigg[L_{\mathcal{L}}C_{2}^{\mathrm{out}}\sqrt{\frac{\log(nT)}{n}}\exp\left(\frac{\chi(W^{(1:L)})}{m}\right)L^{\frac{1}{2m}}N^{\frac{1}{4m}}\mathfrak{B}_{\bm{p}}(W^{(1:L)})\\
        &\hspace{30mm}+B_{\mathcal{L}}\sqrt{\frac{\log(1/\delta)+\Omega_{\Pi_{m}(\bm{p})}(W^{(1:L)})}{n}}\Bigg]+L_{\mathcal{L}}C_{2}^{\mathrm{out}}\frac{(\log(n))^{\frac{3}{2}}}{\sqrt{n}}
        .
    \end{align*}
\end{theorem}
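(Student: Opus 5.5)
The plan is to work on the single event from Lemma~\ref{lem:GenGapPostHocGrid}, which has probability at least $1-\delta$. That event already delivers the bound for every grid vector $\bm{p}'\in\mathcal{P}_{m}$, so everything reduces to a deterministic comparison. Fix $f_{\mathrm{out}}(\cdot;W^{(1:L)},w)\in\mathcal{F}_{\mathrm{out}}$ and an arbitrary $\bm{p}\in[0,2]^{3L}$, and set $\bm{p}'=\Pi_{m}(\bm{p})$. The penalty term in the lemma is then exactly $\Omega_{\Pi_{m}(\bm{p})}(W^{(1:L)})$. It therefore suffices to show, summand by summand, that
\[
\mathfrak{B}_{\bm{p}'}(W^{(1:L)})\lesssim\exp(\chi(W^{(1:L)})/m)\,L^{\frac{1}{2m}}N^{\frac{1}{4m}}\,\mathfrak{B}_{\bm{p}}(W^{(1:L)}).
\]
Taking the infimum over $\bm{p}$ afterwards gives the theorem.

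For a single matrix $W=W^{\star,(\ell)}$, write $q=p^{\star,(\ell)}$, $q'=\pi_{m}(q)\in[q,q+1/m]$, and $K=L_{\phi}\gamma^{\star,(\ell)}\alpha^{(\ell)}L$. If $W=0$, both summands vanish for $q>0$. For $q=0$ the rank convention also gives zero, so nothing needs checking. For $W\neq0$, let $s=\|W\|_{2}$ and $\sigma_i=s\rho_i$ with $\rho_i\in(0,1]$. Then $\|W\|_{\mathrm{s},q'}^{q'}=s^{q'}\sum_i\rho_i^{q'}\le s^{q'}\sum_i\rho_i^{q}=s^{q'-q}\|W\|_{\mathrm{s},q}^{q}$, which is monotonicity of $t\mapsto\sum\rho_i^t$. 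This also covers $q=0$, since $\sum_i\rho_i^0=\operatorname{rank}(W)$. The summand for $q'$ is therefore at most
\[
s^{\frac{q'-q}{q'+2}}(\|W\|_{\mathrm{s},q}^{q})^{\frac{1}{q'+2}}K^{\frac{q'}{q'+2}}N^{\frac{q'+1}{q'+2}}.
\]
I then compare each factor with its $q$-counterpart. The exponent differences are $\frac{q'}{q'+2}-\frac{q}{q+2}=\frac{2(q'-q)}{(q'+2)(q+2)}\le\frac{1}{2m}$ and $\frac{q'+1}{q'+2}-\frac{q+1}{q+2}\le\frac{1}{4m}$. The $N$-factor thus costs at most $N^{1/(4m)}$. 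The $L$ part of $K$ costs $L^{1/(2m)}$. The remaining part $L_\phi\gamma\alpha$ of $K$, together with $s^{(q'-q)/(q'+2)}$, costs at most $\exp\big((|\log s|+|\log(L_{\phi}\gamma\alpha)|)/m\big)\le\exp(\chi/m)$; here the positivity assumptions make the logarithms finite.

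The main obstacle is the Schatten factor $(\|W\|_{\mathrm{s},q}^{q})^{1/(q'+2)}$ versus $(\|W\|_{\mathrm{s},q}^{q})^{1/(q+2)}$, since the exponent decreases. If $\|W\|_{\mathrm{s},q}^{q}\ge1$, the new factor is smaller and we are done. If $\|W\|_{\mathrm{s},q}^{q}<1$, I handle it by working directly with the normalized form: $\|W\|_{\mathrm{s},q}^{q}=s^{q}A$ with $A=\sum_i\rho_i^{q}\ge1$, since $\rho_1=1$. Then $A^{1/(q'+2)}\le A^{1/(q+2)}$ fails only in the harmless direction. The actual loss comes from $s^{q'/(q'+2)}$ versus $s^{q/(q+2)}$, whose exponent gap is at most $1/(2m)$ and is again bounded by $\exp(|\log s|/(2m))$. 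So I would redo the comparison in the variables $(s,A)$ rather than $\|W\|_{\mathrm{s},q}^{q}$. The summand equals $A^{1/(q+2)}\,(sK)^{q/(q+2)}N^{(q+1)/(q+2)}$, and the $q'$-summand is at most $A^{1/(q'+2)}(sK)^{q'/(q'+2)}N^{(q'+1)/(q'+2)}$, using $\sum\rho_i^{q'}\le A$. Since $A\ge1$, the first factor only decreases. The second factor changes by a factor of at most $(sK)^{\pm1/(2m)}$, which is bounded by $\exp(\chi/m)L^{1/(2m)}$ as before; this argument still needs checking at the endpoints $q=0$ and $q'=2$. Combining these bounds with Lemma~\ref{lem:GenGapPostHocGrid} completes the proof.
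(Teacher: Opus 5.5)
Your proof is correct and follows the paper's argument. You set aside the direct comparison of $(\|W\|_{\mathrm{s},q}^{q})^{1/(q'+2)}$ with $(\|W\|_{\mathrm{s},q}^{q})^{1/(q+2)}$ and pass to the normalized variables $(s,A)$, which are exactly the paper's $\|W\|_{2}$ and $\rho_{q}(W)=\|W\|_{\mathrm{s},q}^{q}/\|W\|_{2}^{q}$. The paper likewise uses $1\le\rho_{q'}(W)\le\rho_{q}(W)$ together with the exponent gaps $\le 1/(2m)$ and $\le 1/(4m)$, and then evaluates Lemma~\ref{lem:GenGapPostHocGrid} at $\Pi_{m}(\bm{p})$. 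Your endpoint concern is vacuous, since $A=\operatorname{rank}(W)$ at $q=0$ and $\pi_{m}(q)\le 2$. Your computation also gives the sharper factor $\exp(\chi(W^{(1:L)})/(2m))$, matching the paper's intermediate rounding inequality.
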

\begin{proof}
    We first prove a deterministic rounding inequality. Fix $0\le p\le q\le2$ and a nonzero matrix $W\in\mathbb{R}^{N\times N}$. For $H>0$, define
    \[
        F_{p}(W;H)
        =(\|W\|_{\mathrm{s},p}^{p})^{\frac{1}{p+2}}H^{\frac{p}{p+2}}N^{\frac{p+1}{p+2}}
        .
    \]
    Set $\rho_{p}(W)=\|W\|_{\mathrm{s},p}^{p}/\|W\|_{2}^{p}$. Since the singular values normalized by $\|W\|_{2}$ belong to $[0,1]$ and the largest normalized singular value is one, we have
    \begin{equation}\label{eq:IneqSchattenSpectral}
        1\le\rho_{q}(W)\le\rho_{p}(W)
        \le N
        .
    \end{equation}
    Hence, it holds that
    \begin{align*}
        \log\frac{F_{q}(W;H)}{F_{p}(W;H)}
        &=\frac{q\log\|W\|_{2}+\log\rho_{q}(W)}{q+2}-\frac{p\log\|W\|_{2}+\log\rho_{p}(W)}{p+2}\\
        &\quad+\left(\frac{q}{q+2}-\frac{p}{p+2}\right)\log H+\left(\frac{q+1}{q+2}-\frac{p+1}{p+2}\right)\log N\\
        &\le(q-p)\left(\frac{|\log\|W\|_{2}|}{2}+\frac{|\log H|}{2}+\frac{\log N}{4}\right)
        .
    \end{align*}
    Therefore, we have
    \[
        F_{q}(W;H)
        \le\exp\left((q-p)\left(\frac{|\log\|W\|_{2}|}{2}+\frac{|\log H|}{2}+\frac{\log N}{4}\right)\right)F_{p}(W;H)
        .
    \]
    The same inequality is trivial when $W=0$, because both sides are zero.

    Now fix $\bm{p}\in[0,2]^{3L}$ and set $\tilde{\bm{p}}=\Pi_{m}(\bm{p})$. For every coordinate
    $(\star,\ell)$, we have
    \[
        \tilde{p}^{\star,(\ell)}-p^{\star,(\ell)}
        =\frac{\lceil mp^{\star,(\ell)}\rceil}{m}-p^{\star,(\ell)}
        \in\left[0,\frac{1}{m}\right]
        .
    \]
    Applying the preceding deterministic inequality with $H=L_{\phi}\gamma^{\star,(\ell)}\alpha^{(\ell)}L$ to each summand in $\mathfrak{B}_{\bm{p}}(W^{(1:L)})$ gives
    \begin{equation}\label{eq:GridRoundingContinuum}
        \mathfrak{B}_{\Pi_{m}(\bm{p})}(W^{(1:L)})
        \le\exp\left(\frac{\chi(W^{(1:L)})}{2m}\right)L^{\frac{1}{2m}}N^{\frac{1}{4m}}\mathfrak{B}_{\bm{p}}(W^{(1:L)})
        .
    \end{equation}

    We next apply Lemma~\ref{lem:GenGapPostHocGrid}. On an event of probability at least $1-\delta$, every $f_{\mathrm{out}}(\ \cdot\ ;W^{(1:L)},w)\in\mathcal{F}_{\mathrm{out}}$ satisfies
    \begin{align*}
        \mathrm{GAP}(f_{\mathrm{out}})
        &\lesssim\inf_{\bm{q}\in\mathcal{P}_{m}}\left(L_{\mathcal{L}}C_{2}^{\mathrm{out}}\sqrt{\frac{\log(nT)}{n}}\mathfrak{B}_{\bm{q}}(W^{(1:L)})+B_{\mathcal{L}}\sqrt{\frac{\log(1/\delta)+\Omega_{\bm{q}}(W^{(1:L)})}{n}}\right)\\
        &\quad+L_{\mathcal{L}}C_{2}^{\mathrm{out}}\frac{(\log(n))^{\frac{3}{2}}}{\sqrt{n}}
        .
    \end{align*}
    Since $\Pi_{m}(\bm{p})\in\mathcal{P}_{m}$ for every $\bm{p}\in[0,2]^{3L}$, we may upper bound the finite-grid infimum by evaluating it at $\bm{q}=\Pi_{m}(\bm{p})$. Using Eq.~\eqref{eq:GridRoundingContinuum}, we obtain, for every $\bm{p}\in[0,2]^{3L}$,
    \begin{align*}
        \mathrm{GAP}(f_{\mathrm{out}})
        &\lesssim L_{\mathcal{L}}C_{2}^{\mathrm{out}}\sqrt{\frac{\log(nT)}{n}}\exp\left(\frac{\chi(W^{(1:L)})}{m}\right)L^{\frac{1}{2m}}N^{\frac{1}{4m}}\mathfrak{B}_{\bm{p}}(W^{(1:L)})\\
        &\quad+B_{\mathcal{L}}\sqrt{\frac{\log(1/\delta)+\Omega_{\Pi_{m}(\bm{p})}(W^{(1:L)})}{n}}+L_{\mathcal{L}}C_{2}^{\mathrm{out}}\frac{(\log(n))^{\frac{3}{2}}}{\sqrt{n}}
        .
    \end{align*}
    Taking the infimum over $\bm{p}\in[0,2]^{3L}$ proves the claim.
\end{proof}

\begin{remark}[Derivation of Theorem~\ref{thm:GenGapPostHocSimplified}]\label{rem:GenGapPostHocSimplified}
    We derive Theorem~\ref{thm:GenGapPostHocSimplified} from Theorem~\ref{thm:GenGapPostHoc}. Choose $m=\lceil L+\log(N)\rceil$. Since $\alpha^{(\ell)}$ is a product of at most $L-\ell$ fixed spectral norm and Lipschitz factors, there exists a constant $C>0$ such that $L_{\phi}\gamma^{\star,(\ell)}\alpha^{(\ell)}\le C^{L}$ holds for all $\star$ and $\ell$. Hence, we have
    \[
        \mathfrak{B}_{\bm{p}}(W^{(1:L)})
        \lesssim\sum_{\ell=1}^{L}\sum_{\star\in\{QK,V,M\}}(\|W^{\star,(\ell)}\|_{\mathrm{s},p^{\star,(\ell)}}^{p^{\star,(\ell)}})^{\frac{1}{p^{\star,(\ell)}+2}}C^{\frac{Lp^{\star,(\ell)}}{p^{\star,(\ell)}+2}}L^{\frac{p^{\star,(\ell)}}{p^{\star,(\ell)}+2}}N^{\frac{p^{\star,(\ell)}+1}{p^{\star,(\ell)}+2}}
        .
    \]

    We now simplify the two additional terms, $\exp(\chi(W^{(1:L)})/m)L^{\frac{1}{2m}}N^{\frac{1}{4m}}$ and $\Omega_{\Pi_{m}(\bm{p})}(W^{(1:L)})$, introduced by the post hoc argument. First, we control the rounding factor $\exp(\chi(W^{(1:L)})/m)L^{\frac{1}{2m}}N^{\frac{1}{4m}}$. Suppose that the trained weights are not exponentially small, in the sense that there exists a universal constant $c_{0}>0$ that satisfies $\|W^{\star,(\ell)}\|_{2}\ge\exp[-c_{0}(L+\log(N))]$ for every nonzero $W^{\star,(\ell)}$. Under the simplified spectral norm assumptions, the quantity $\chi(W^{(1:L)})$ appearing in Theorem~\ref{thm:GenGapPostHoc} satisfies $\chi(W^{(1:L)})\lesssim L+\log(N)$. Consequently, our choice of $m$ gives
    \[
        \exp\left(\frac{\chi(W^{(1:L)})}{m}\right)L^{\frac{1}{2m}}N^{\frac{1}{4m}}
        \lesssim L^{\frac{1}{2m}}N^{\frac{1}{4m}}
        =\exp\left(\frac{\log(L)}{2m}+\frac{\log(N)}{4m}\right)
        \le\exp\left(\frac{1}{2}+\frac{1}{4}\right)
        \lesssim1
        .
    \]
    It remains to simplify the post hoc logarithmic penalty $\Omega_{\Pi_{m}(\bm{p})}(W^{(1:L)})$. Note that for $j\in\mathbb{Z}$, we have $\log(1/\omega_{j})=\log(Z_{\mathcal{J}})+2\log(1+|j|)$. For a nonzero matrix $W$ and $p\in[0,2]$, we have $\|W\|_{2}^{p}\le\|W\|_{\mathrm{s},p}^{p}\le N\|W\|_{2}^{p}$ (see Eq.~\eqref{eq:IneqSchattenSpectral}). Together with the upper spectral norm constraint and the lower bound on nonzero spectral norms, this implies
    \[
        |\kappa_{p}(W)|
        =|\lceil\log_{2}\|W\|_{\mathrm{s},p}^{p}\rceil|
        \lesssim L+\log(N)
        .
    \]
    Thus, if $W^{\star,(\ell)}\neq0$ and $\kappa_{p^{\star,(\ell)}}(W^{\star,(\ell)})\neq0$ hold, we have
    \[
        \log(1/\omega_{\kappa_{p^{\star,(\ell)}}(W^{\star,(\ell)})})
        \lesssim\log(|\kappa_{p^{\star,(\ell)}}(W^{\star,(\ell)})|)
        \lesssim\log(L+\log(N))
        .
    \]
    On the other hand, if either $\kappa_{p^{\star,(\ell)}}(W^{\star,(\ell)})=0$ or $W^{\star,(\ell)}=0$ holds, we have by definition that $\log(1/\omega_{\kappa_{p^{\star,(\ell)}}(W^{\star,(\ell)})})=\log(Z_{\mathcal{J}})$. Hence, uniformly over $p^{\star,(\ell)}\in[0,2]$ and $W^{\star,(\ell)}$, we have
    \[
        \log(1/\omega_{\kappa_{p^{\star,(\ell)}}(W^{\star,(\ell)})})
        \lesssim\log(L+\log(N))
        .
    \]
    Therefore, we obtain
    \[
        \Omega_{\Pi_{m}(\bm{p})}(W^{(1:L)})
        \lesssim L\log(L+\log(N))
    \]
    uniformly over $\bm{p}\in[0,2]^{3L}$.
\end{remark}

\section{Empirical comparison and discussion}\label{sec:EmpiricalComparisonDiscussion}

In this section, we first discuss a practical limitation of existing norm-based bounds: their leading factors involve fixed norm radii whose dimension-independent scaling is difficult to justify for large learned weight matrices. Then, we compare the growth of the leading-factor proxies induced by our bounds and by existing norm-based bounds. For illustration, we use the BERT Miniatures checkpoints of \citet{turc2019well}. Since these checkpoints use multi-head attention and two-layer feedforward sublayers, we construct BERT-adapted proxies from the leading polynomial and spectral factors of the theoretical generalization gap bounds.

\subsection{Existing norm-based bounds}

We first recall the two existing norm-based bounds. The bounds of \citet{edelman2022inductive} assume mixed $(2,1)$-norm constraints and obtain a logarithmic dependence on the token length and hidden dimension. The bounds of \citet{trauger2024sequence} replace these assumptions with mixed $(1,1)$-norm constraints and remove the explicit dependence on the token length.

\begin{theorem}[\citet{edelman2022inductive}, Theorem~A.17]\label{thm:Edelman2022ThmA.17}
    Consider the parameter classes
    \begin{align*}
        \mathcal{W}^{QK,(\ell)}
        &=\{W\in\mathbb{R}^{N\times N}\mid\|W^{\top}\|_{2,1}\le C_{2,1}^{QK,(\ell)},\|W\|_{2}\le C_{2}^{QK,(\ell)}\}
        ,\\
        \mathcal{W}^{V,(\ell)}
        &=\{W\in\mathbb{R}^{N\times N}\mid\|W\|_{2,1}\le C_{2,1}^{V,(\ell)},\|W\|_{2}\le C_{2}^{V,(\ell)}\}
        ,\\
        \mathcal{W}^{M,(\ell)}
        &=\{W\in\mathbb{R}^{N\times N}\mid\|W\|_{2,1}\le C_{2,1}^{M,(\ell)},\|W\|_{2}\le C_{2}^{M,(\ell)}\}
        .
    \end{align*}
    Then, for any $\delta\in(0,1)$, with probability at least $1-\delta$, it holds simultaneously for all $f_{\mathrm{out}}\in\mathcal{F}_{\mathrm{out}}$ that
    \[
        \mathrm{GAP}(f_{\mathrm{out}})
        \lesssim L_{\mathcal{L}}C_{2}^{\mathrm{out}}\left(1+\sum_{\ell=1}^{L}(\alpha^{(\ell)})^{\frac{2}{3}}\xi^{(\ell)}\right)^{\frac{3}{2}}\sqrt{\frac{\log(NnT)}{n}}+B_{\mathcal{L}}\sqrt{\frac{\log(1/\delta)}{n}}
    \]
    with $\xi^{(\ell)}=(C_{2,1}^{M,(\ell)})^{\frac{2}{3}}+(2L_{\phi}C_{2}^{M,(\ell)}C_{2}^{V,(\ell)}C_{2,1}^{QK,(\ell)})^{\frac{2}{3}}+(L_{\phi}C_{2}^{M,(\ell)}C_{2,1}^{V,(\ell)})^{\frac{2}{3}}$.
\end{theorem}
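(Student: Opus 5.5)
The plan is to reproduce the covering-number pipeline of Section~\ref{sec:ProofsMainResults}, with Theorem~\ref{thm:ParametricInterpolation2inf} replaced by a mixed-norm linear covering bound of Zhang type. This bound is what produces the $(2,1)$-norm radii. The first step is a matrix-valued analogue of Corollary~\ref{cor:Zhang2002Thm4TwogInf} for $\mathcal{F}=\{X\mapsto XW\mid\|W\|_{2,1}\le a\}$ in the $\|\cdot\|_{2\to\infty}$ metric on the sample. I expect
\[
    \log\mathcal{N}_{\infty}(\mathcal{F},\|\cdot\|_{2\to\infty},\epsilon;\{X_{i}\})\lesssim\frac{a^{2}B_{n,(2\to\infty)}^{2}}{\epsilon^{2}}\log(NnT).
\]
The intended route is as in \citet{edelman2022inductive}: since $\|W\|_{2,1}\le a$ bounds the sum of column norms, each output coordinate is covered as a Maurey-type sparse combination of columns, and then one takes a union over the $N$ output coordinates, the $T$ rows and the $n$ samples. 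This union over $N$ coordinates is the source of the $\log N$ inside $\log(NnT)$, in contrast to the $\log(nT)$ of our Schatten route. For $W^{QK}$ the constraint is on $\|W^{\top}\|_{2,1}$. I would handle this by covering the bilinear scores $X_{t\cdot}W^{QK}X_{s\cdot}^{\top}$ directly through $X(W^{QK})^{\top}$ entrywise, which is what the softmax Lipschitz bound (Lemma~\ref{lem:SoftmaxLipschitzVector1}) actually consumes.

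Next I would redo Propositions~\ref{prop:EntropyTFHead} and~\ref{prop:EntropyTfBlock} with these per-matrix entropies. The Lipschitz computations carry over verbatim: the softmax contributes the factor $2C_{2}^{V}B^{2}$, the value and feedforward steps contribute $C_{2}^{M}L_{\phi}$, and $\Pi_{\mathrm{norm}}$ is $1$-Lipschitz by Lemma~\ref{lem:Edelman2022LemA9}. This gives a block entropy of the form
\[
    \sum_{\star}\frac{(C_{2,1}^{\star})^{2}}{(\epsilon^{\star})^{2}}\log(NnT)
\]
at the radius $2L_{\phi}C_{2}^{V}C_{2}^{M}\epsilon^{QK}+L_{\phi}C_{2}^{M}\epsilon^{V}+\epsilon^{M}$. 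Composition across layers then follows Proposition~\ref{prop:EntropyMultiLayerTf} using Lemma~\ref{lem:MultiLayerTFLipschitz}, and the readout follows Proposition~\ref{prop:EntropyTfOut}, which adds $(C_{2}^{\mathrm{out}}/\epsilon^{\mathrm{out}})^{2}\log n$. The outcome is that the output entropy is $\sum_{\ell,\star}(C_{2,1}^{\star,(\ell)})^{2}(\epsilon^{\star,(\ell)})^{-2}\log(NnT)$ plus the readout term, subject to the constraint in Eq.~\eqref{eq:CoveringRadius}.

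Then I would optimize the radii exactly as in Theorem~\ref{thm:EntropyTfScalarCommonP}, now with $\nu=2$, including $\epsilon^{\mathrm{out}}$ as one more variable with $a=(C_{2}^{\mathrm{out}})^{2}$ and $b=1$. Lemma~\ref{lem:OptimizationPolynomial} gives a minimum of
\[
    \epsilon^{-2}\Bigl(\sum_{i}a_{i}^{1/3}b_{i}^{2/3}\Bigr)^{3}.
\]
Here $a_{i}^{1/3}b_{i}^{2/3}$ equals $(C_{2}^{\mathrm{out}})^{2/3}$ for the readout and $(C_{2}^{\mathrm{out}}\alpha^{(\ell)})^{2/3}$ times the three $\xi^{(\ell)}$ summands for the layer terms. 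The total is therefore $(C_{2}^{\mathrm{out}})^{2}\bigl(1+\sum_{\ell}(\alpha^{(\ell)})^{2/3}\xi^{(\ell)}\bigr)^{3}\epsilon^{-2}\log(NnT)$. The $B_{n,(2\to\infty)}$ factors of the first layer are normalized to $1$, as in the cited setting.

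Finally, since the entropy is of the pure form $K\epsilon^{-2}$, I would apply Lemma~\ref{lem:DudleyEntropyIntegral} and Lemma~\ref{lem:RademacherPowerLawSum} with $J=1$. This gives a Rademacher complexity of order $\sqrt{K/n}$ up to a logarithmic factor from the $\epsilon^{-1}$ integral, and Lemma~\ref{lem:RademacherGeneralization} then yields the displayed bound with the $3/2$ power. The main obstacle is the first step: obtaining the $2\to\infty$ cover from $(2,1)$-norm constraints while keeping only logarithmic dependence on $N$ and $T$, and dealing with the transposed norm for $W^{QK}$. The logarithm produced by the Dudley integral also has to be absorbed into the $\log(NnT)$ factor, and I would follow the cited argument for how that is done.
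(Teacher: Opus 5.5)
The paper gives no proof of this statement. It is quoted from \citet{edelman2022inductive}, so the comparison is with their covering argument, and your overall architecture matches it. Three steps are correct: the layerwise composition, treating $\epsilon^{\mathrm{out}}$ as one more variable in Lemma~\ref{lem:OptimizationPolynomial} with $\nu=2$ (this is what puts the $1$ inside $(1+\sum_{\ell}(\alpha^{(\ell)})^{2/3}\xi^{(\ell)})^{3}$), and the resulting $3/2$ power.

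For $W^{QK}$ no separate argument is needed. Since $\max_{t,s}|X_{t\cdot}(W-\tilde{W})X_{s\cdot}^{\top}|\le\|X(W-\tilde{W})^{\top}\|_{2\to\infty}\|X\|_{2\to\infty}$, you need a $\|\cdot\|_{2\to\infty}$ cover of $X\mapsto X(W^{QK})^{\top}$, not an entrywise one. Its weight matrix $(W^{QK})^{\top}$ carries exactly the constrained norm.

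\textbf{Gap 1: the linear covering lemma.} The genuine gap is the $(2,1)$ covering lemma you defer, which is the only new ingredient, and the mechanism you sketch for it fails. Covering each output coordinate $x^{\top}W_{\cdot j}$ and taking a union over the $N$ coordinates controls only $\|x^{\top}(W-\tilde{W})\|_{\infty}$. The metric $\|\cdot\|_{2\to\infty}$ measures the $\ell_{2}$ norm of the row, so with a uniform split each coordinate must be covered at scale $\epsilon/\sqrt{N}$. That puts a polynomial factor $N$ back into the entropy. There is also nothing to sparsify inside one coordinate: it depends on a single column, on which the $(2,1)$ constraint gives only $\ell_{2}$ control.

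What is missing is a $W$-dependent allocation across columns, i.e.\ Maurey-type sparsification over the columns of $W$ for the whole row. Set $s_{j}=\|W_{\cdot j}\|_{2}$, so $\sum_{j}s_{j}\le a$, and let row norms be at most $b$.
\begin{enumerate}
\item Drop the columns with $s_{j}<\epsilon^{2}/(ab^{2})$. Their joint contribution to the row has squared norm at most $b^{2}\sum_{\mathrm{dropped}}s_{j}^{2}\le\epsilon^{2}$.
\item At most $a^{2}b^{2}/\epsilon^{2}$ columns remain. Cover each by Proposition~\ref{prop:Zhang2002Thm4} with $p=2$ on the $nT$ rows, at scale $\epsilon_{j}=\epsilon\sqrt{s_{j}/a}$. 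Then $\sum_{j}\epsilon_{j}^{2}\le\epsilon^{2}$, and the entropies sum to $\lesssim ab^{2}\epsilon^{-2}\sum_{j}s_{j}\log(nT)\le a^{2}b^{2}\epsilon^{-2}\log(nT)$.
\item Choosing which columns are active costs $\lesssim(a^{2}b^{2}/\epsilon^{2})\log N$, plus a lower-order cost for discretizing the $s_{j}$ dyadically. This selection is where $\log N$ really comes from.
\end{enumerate}
The resulting cover is external, so you also need Lemma~\ref{lem:ProperCover} to keep $\|\tilde{W}^{V}\|_{2}\le C_{2}^{V}$, which the head estimate uses.

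\textbf{Gap 2: the Dudley logarithm.} This logarithm cannot be absorbed into $\log(NnT)$. For an entropy $K\epsilon^{-2}$, Lemma~\ref{lem:RademacherPowerLawSum} with $J=1$ gives $\sqrt{K/n}\,[1+\log(1+C_{2}^{\mathrm{out}}\sqrt{n/K})]$. In the nontrivial regime this is an extra factor of order $\log n$ multiplying $\sqrt{\log(NnT)}$. It is the same mechanism that produces the $(\log n)^{3/2}/\sqrt{n}$ readout term in the paper's own Theorem~\ref{thm:GenGapGeneralP}. The $\epsilon$-dependent logarithm inside Proposition~\ref{prop:Zhang2002Thm4} adds further log factors. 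So your route proves the display only up to additional polylogarithmic factors, and no step of the argument removes them. The $\lesssim$ in the quoted statement has to be read as suppressing them, which is consistent with the paper using this bound only to compare polynomial factors.
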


\begin{theorem}[\citet{trauger2024sequence}, Corollary~4.2.1]
    Consider parameter classes of the form
    \[
        \mathcal{W}^{\star,(\ell)}
        =\{W\in\mathbb{R}^{N\times N}\mid\|W\|_{1,1}\le C_{1,1},\|W\|_{2}\le C_{2}^{\star}\}
        \quad(\star\in\{QK,V,M\},\ell\in[L])
        .
    \]
    Then, for any $\delta\in(0,1)$, with probability at least $1-\delta$, it holds simultaneously for all $f_{\mathrm{out}}\in\mathcal{F}_{\mathrm{out}}$ that
    \begin{align*}
        \mathrm{GAP}(f_{\mathrm{out}})
        &\lesssim L_{\mathcal{L}}C_{2}^{\mathrm{out}}C_{1,1}\left(1+(L_{\phi}C_{2}^{V})^{\frac{2}{3}}+\sum_{\ell=1}^{L}(\alpha^{(\ell)})^{\frac{2}{3}}\upsilon^{(\ell)}\right)^{\frac{3}{2}}\sqrt{\frac{\log(2N^{2}+1)}{n}}\\
        &\quad+B_{\mathcal{L}}\sqrt{\frac{\log(1/\delta)}{n}}
        ,
    \end{align*}
    where
    \[
        \upsilon^{(\ell)}
        =
        \begin{cases}
            (2L_{\phi}C_{2}^{M}C_{2}^{V}B_{n,(2\to\infty)})^{\frac{2}{3}}
            &\text{if }\ell=1
            \\
            1+(2L_{\phi}C_{2}^{M}C_{2}^{V})^{\frac{2}{3}}+(L_{\phi}C_{2}^{V})^{\frac{2}{3}}
            &\text{if }\ell>1
            .
        \end{cases}
    \]
\end{theorem}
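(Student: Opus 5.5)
The approach is to follow the same three-stage pipeline used above for Theorem~\ref{thm:GenGapGeneralP}: a cover for each linear map, composition through the layers, and then Dudley plus Lemma~\ref{lem:RademacherGeneralization}. The only change is the linear step, which becomes an input-independent Maurey-type cover in place of Theorem~\ref{thm:ParametricInterpolation2inf}.

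\emph{Linear step.} Fix a weight class $\{W:\|W\|_{1,1}\le C_{1,1}\}$. Every such $W$ lies in the convex hull of the $2N^{2}+1$ atoms $\{\pm C_{1,1}e_{i}e_{j}^{\top}\}\cup\{0\}$, and each atom has Frobenius norm at most $C_{1,1}$. Maurey's empirical method in the Hilbert space $(\mathbb{R}^{N\times N},\|\cdot\|_{F})$ then gives, deterministically, an average $\hat W$ of $k$ atoms with $\|W-\hat W\|_{F}\le C_{1,1}/\sqrt{k}$. For any input $X$ with $\|X\|_{2\to\infty}\le B$, Lemma~\ref{lem:MatrixProduct2Inf2} and $\|\cdot\|_{2}\le\|\cdot\|_{F}$ give
\[
\|X(W-\hat W)\|_{2\to\infty}\le B\,\|W-\hat W\|_{F}.
\]
The set of $k$-averages has cardinality at most $(2N^{2}+1)^{k}$, and it does not depend on the sample. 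Choosing $k\asymp(BC_{1,1}/\epsilon)^{2}$ therefore yields
\[
\log\mathcal{N}_{\infty}\lesssim\frac{B^{2}C_{1,1}^{2}}{\epsilon^{2}}\log(2N^{2}+1).
\]
This holds uniformly in $n$ and $T$, and it is the source of the token-length independence. The cover is external, so Lemma~\ref{lem:ProperCover} is used to make it proper.

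\emph{Composition.} Substitute this bound for Theorem~\ref{thm:ParametricInterpolation2inf} in Propositions~\ref{prop:EntropyTFHead}, \ref{prop:EntropyTfBlock}, \ref{prop:EntropyMultiLayerTf} and~\ref{prop:EntropyTfOut}, and reuse the softmax and normalization Lipschitz estimates verbatim. Inputs to layers $\ell\ge2$ have row norm at most one; at layer one the factor $B_{n,(2\to\infty)}$ appears, exactly as in $\gamma^{\star,(\ell)}$, and this produces the special form of $\upsilon^{(1)}$. The readout $w$ is covered in the same Maurey fashion, which gives the extra $(L_{\phi}C_{2}^{V})^{2/3}$ and $1$ terms. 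The entropy then takes the form
\[
\sum_{\ell,\star}\frac{\Lambda^{\star,(\ell)}}{(\epsilon^{\star,(\ell)})^{2}}\log(2N^{2}+1),
\]
with $\Lambda\propto C_{1,1}^{2}$, subject to the radius budget Eq.~\eqref{eq:CoveringRadius}.

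\emph{Generalization bound.} Lemma~\ref{lem:OptimizationPolynomial} with $\nu=2$ optimizes the allocation of radii. The exponents $1/(\nu+1)=1/3$ on $\Lambda$ and $2/3$ on $\beta$ produce the aggregate
\[
\epsilon^{-2}\,C_{1,1}^{2}\Bigl(1+(L_{\phi}C_{2}^{V})^{2/3}+\sum_{\ell}(\alpha^{(\ell)})^{2/3}\upsilon^{(\ell)}\Bigr)^{3}\log(2N^{2}+1).
\]
Lemma~\ref{lem:RademacherPowerLawSum} with a single $\epsilon^{-2}$ term converts this into a Rademacher bound. That lemma produces a logarithmic factor in $n$, which the statement's $\lesssim$ absorbs. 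Lemma~\ref{lem:RademacherGeneralization} then finishes the argument.

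The main obstacle is the linear step. Zhang-type covers (Proposition~\ref{prop:Zhang2002Thm4}) carry $\log n$ and, through the net over rows, $\log(nT)$ factors. To avoid them I would first try the observation above: control the matrix error in Frobenius norm rather than through evaluations at the sample, so that Maurey applies with a fixed, sample-free atom dictionary. The only remaining check is that the bounded-row-norm inputs make the Frobenius error dominate the $2\to\infty$ output error at every layer.
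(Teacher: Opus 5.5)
The paper does not prove this statement. It quotes it from \citet{trauger2024sequence}, so your argument has to stand on its own.

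Your linear step is sound and is the right mechanism. The constraint $\|W\|_{1,1}\le C_{1,1}$ places $W$ in the hull of the $2N^{2}+1$ atoms, Maurey gives $\|W-\hat W\|_{F}\le C_{1,1}/\sqrt{k}$, and Lemma~\ref{lem:MatrixProduct2Inf2} turns this into the sample-free and $T$-free bound $\log\mathcal{N}\lesssim B^{2}C_{1,1}^{2}\epsilon^{-2}\log(2N^{2}+1)$. The gap is your claim that composition and radius optimization then ``produce'' the displayed constants. They do not.

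Carry out your own pipeline in this paper's block structure. Take
\[
\Lambda^{QK,(\ell)}=\Lambda^{V,(\ell)}=B_{n,(2\to\infty)}^{2\delta_{\ell=1}}C_{1,1}^{2}\log(2N^{2}+1),\qquad \Lambda^{M,(\ell)}=L_{\phi}^{2}C_{1,1}^{2}\log(2N^{2}+1),
\]
together with the paper's $\beta^{\star,(\ell)}$. Lemma~\ref{lem:OptimizationPolynomial} with $\nu=2$ then gives, after factoring out $(C_{2}^{\mathrm{out}})^{2/3}(C_{1,1}^{2}\log(2N^{2}+1))^{1/3}$, the bracket
\begin{align*}
&\sum_{\ell=1}^{L}\sum_{\star\in\{QK,V,M\}}(L_{\phi}\gamma^{\star,(\ell)}\alpha^{(\ell)})^{\frac{2}{3}}\\
&=\sum_{\ell=1}^{L}(\alpha^{(\ell)})^{\frac{2}{3}}\Big[(2L_{\phi}C_{2}^{V,(\ell)}C_{2}^{M,(\ell)}B_{n,(2\to\infty)}^{3\delta_{\ell=1}})^{\frac{2}{3}}+(L_{\phi}C_{2}^{M,(\ell)}B_{n,(2\to\infty)}^{\delta_{\ell=1}})^{\frac{2}{3}}+L_{\phi}^{\frac{2}{3}}\Big]
\end{align*}
plus a readout term. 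This differs from the statement exactly where you waved the constants through:
(i) At $\ell=1$ the query-key term carries $B_{n,(2\to\infty)}^{2}$, not $B_{n,(2\to\infty)}^{2/3}$. The logits are bilinear in $X$, and the value path adds one more factor of $B_{n,(2\to\infty)}$; your own remark ``exactly as in $\gamma^{\star,(\ell)}$'' already commits you to $\gamma^{QK,(1)}\propto B_{n,(2\to\infty)}^{3}$.
(ii) The layer-one value and feedforward terms are absent from $\upsilon^{(1)}$ and carry $(\alpha^{(1)})^{2/3}$, because their errors propagate through every later block. So they cannot be matched by the $\alpha$-free terms $1+(L_{\phi}C_{2}^{V})^{2/3}$ outside the sum.
(iii) The value term involves $C_{2}^{M}$, not $C_{2}^{V}$, since a value error passes through $\phi$ and then $W^{M}$.
(iv) Covering the readout yields a single term depending only on $C_{2}^{\mathrm{out}}$, which cannot generate $(L_{\phi}C_{2}^{V})^{2/3}$.
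Since $\alpha^{(1)}$ is generically exponential in $L$, your bound does not imply the displayed one. The displayed constants are quoted from the cited paper and are not what this paper's architecture yields. Reusing Propositions~\ref{prop:EntropyTFHead}--\ref{prop:EntropyTfOut} verbatim therefore cannot reach them.

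Two further steps fail as written.

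First, the readout constraint is $\|w\|_{2}\le C_{2}^{\mathrm{out}}$. The $\ell_{2}$-ball is not the hull of a finite dictionary with $N$-free atom norms; for example, $\{\pm C_{2}^{\mathrm{out}}\sqrt{N}e_{i}\}$ costs a factor $N$ in the entropy. So $w$ cannot be covered ``in the same Maurey fashion.'' It must be covered on the sample via Proposition~\ref{prop:Zhang2002Thm4}, as in Proposition~\ref{prop:EntropyTfOut}, and that term then carries $\log n$ rather than $\log(2N^{2}+1)$.

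Second, Lemma~\ref{lem:RademacherPowerLawSum} with $\nu_{J}=2$ leaves a factor of order $\log n$. In this paper $\lesssim$ means up to a universal constant, so it cannot absorb that factor.

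What your argument actually establishes is a $T$-independent bound of the form $L_{\mathcal{L}}C_{2}^{\mathrm{out}}C_{1,1}(\cdots)^{3/2}\sqrt{\log(2N^{2}+1)}\,\log(n)/\sqrt{n}$, with the bracket above in place of the stated one. This comes with a readout contribution of order $L_{\mathcal{L}}C_{2}^{\mathrm{out}}(\log n)^{3/2}/\sqrt{n}$.
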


The apparent dimension-favorable behavior of these bounds is tied to their norm assumptions. Recall that for every $W\in\mathbb{R}^{N\times N}$, we have
\[
    \|W\|_{2,1}
    \le\sqrt{N\operatorname{rank}(W)}\|W\|_{2}
    ,
    \qquad
    \|W\|_{1,1}
    \le N\sqrt{\operatorname{rank}(W)}\|W\|_{2}
    .
\]
Thus, treating $C_{2,1}$ or $C_{1,1}$ as independent of $N$ is substantially stronger than imposing only spectral norm constraints. This issue is visible in trained BERT weights: the mixed norms increase with the hidden dimension $N$, as shown in Figure~\ref{fig:NormScalingBERT}.

\begin{figure}[t]
    \centering
    \includegraphics[width=0.49\linewidth]{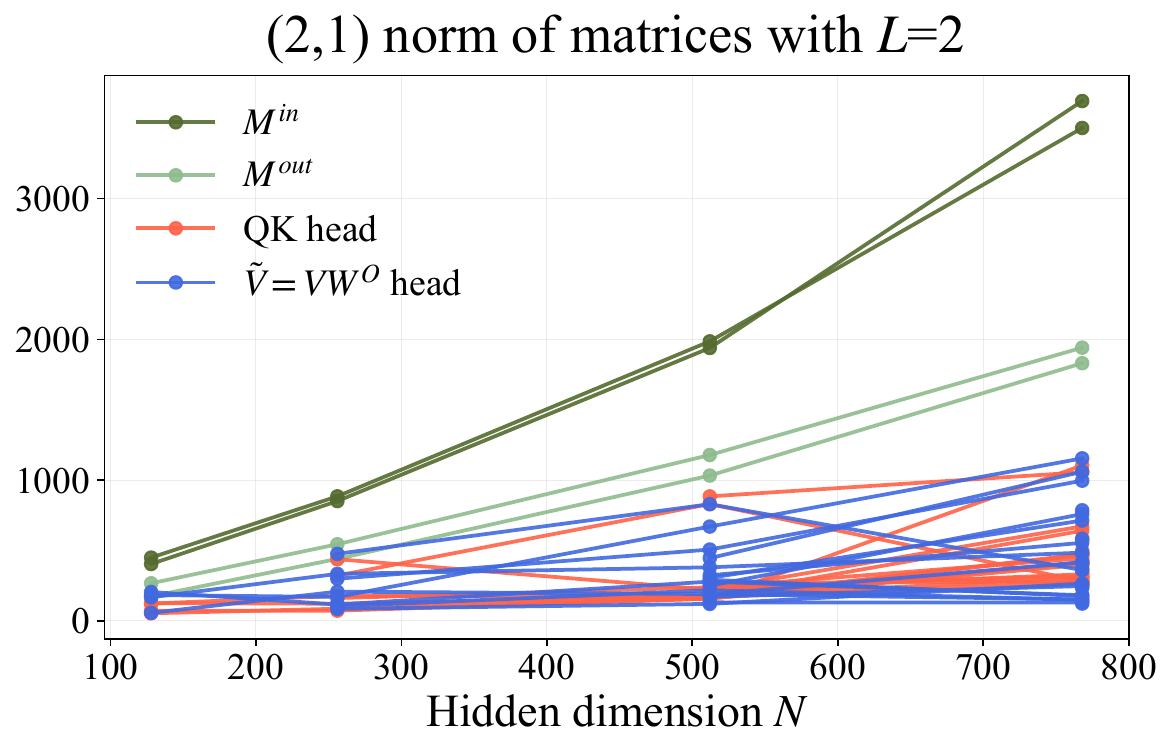}
    \includegraphics[width=0.49\linewidth]{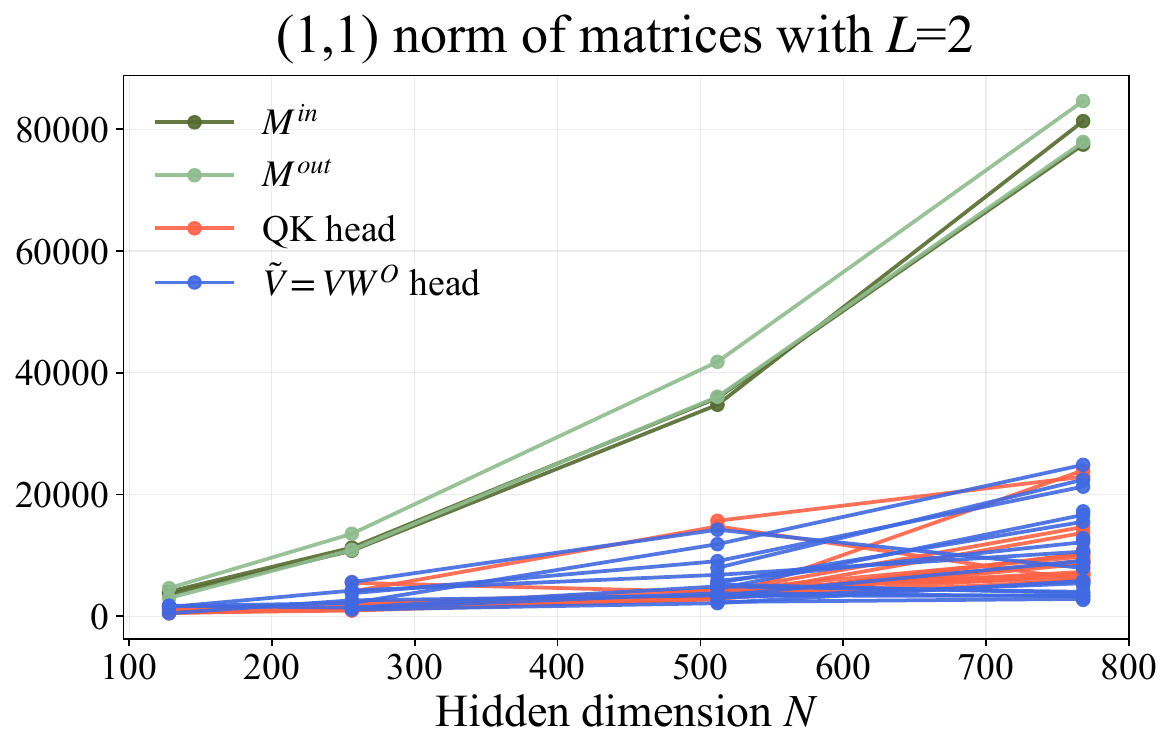}
    \caption{Scaling of the mixed $(2,1)$- and $(1,1)$-norms of trained BERT weights at depth $L=2$. The observed growth with the hidden dimension $N$ suggests that treating the corresponding mixed-norm radii as dimension-independent is not well supported by these checkpoints.}
    \label{fig:NormScalingBERT}
\end{figure}

We focus on the bounds of \citet{edelman2022inductive} in the numerical comparison. The present experiment suppresses constants and logarithmic factors and compares only the leading polynomial growth of BERT-adapted proxies for the leading complexity factors. The main advantage of \citet{trauger2024sequence} is the removal of explicit token-length dependence, but this advantage is not visible in this polynomial-only comparison. Moreover, their bounds are based on mixed $(1,1)$-norm radii, which can grow more severely with the hidden dimension than the mixed $(2,1)$-norm radii used in \citet{edelman2022inductive}. Thus, Edelman-type proxies provide a direct baseline for assessing the polynomial scaling behavior of our post hoc bounds. This does not diminish the importance of \citet{trauger2024sequence} in settings where the token-length dependence is the central quantity of interest.

\subsection{BERT checkpoints and construction of proxies}

We used the publicly released BERT Miniatures checkpoints of \citet{turc2019well}, hosted on Hugging Face under the Google organization. The checkpoints are identified by the model names \texttt{bert\_uncased\_L-\{2,4,6,8,10,12\}\_H-\{128,256,512,768\}\_A-\{2,4,8,12\}}. We accessed the checkpoints through their public Hugging Face URLs, and did not modify or redistribute the checkpoint files. In this model family, the number of self-attention heads is $A_{h}=N/d_{h}$ with $d_{h}=64$, and the intermediate dimension of the feedforward sublayer is $I=4N$. The grid is
\[
    L\in\{2,4,6,8,10,12\}
    ,
    \qquad
    N\in\{128,256,512,768\}
    .
\]

There are several architectural differences between these checkpoints and the theoretical model in the main text. Since the theory is formulated in terms of single-head weight matrices, the BERT-adapted proxies below modify only the corresponding weight-matrix components. Specifically, we adapt the theoretical query-key, value, and feedforward matrices to four BERT-specific features: multi-head attention, headwise query-key products, the value-output projection structure, and the two-layer feedforward sublayer.

First, BERT uses multi-head attention. We therefore treat each head $h\in[A_{h}]$ as a local single-head component. Second, the query-key matrix in our theory is represented by the headwise product
\[
    W^{QK,(\ell,h)}
    =(W^{Q,(\ell,h)})^{\top}W^{K,(\ell,h)}
    \in\mathbb{R}^{N\times N}
    .
\]
Since the query and key matrices in these checkpoints have dimensions $W^{Q,(\ell,h)},W^{K,(\ell,h)}\in\mathbb{R}^{d_{h}\times N}$, the matrix $W^{QK,(\ell,h)}$ has rank at most $d_{h}=64$. Third, our single-head theoretical model contains a single value matrix $W^{V,(\ell)}\in\mathbb{R}^{N\times N}$ and does not include a separate output projection for each attention head. In BERT, for head $h$, we write the value projection and the corresponding slice of the output projection in row-vector orientation as $W^{V,(\ell,h)}\in\mathbb{R}^{N\times d_{h}}$ and $W^{O,(\ell,h)}\in\mathbb{R}^{d_{h}\times N}$. We therefore use the composed headwise value-output matrix $W^{\tilde{V},(\ell,h)}:=W^{V,(\ell,h)}W^{O,(\ell,h)}\in\mathbb{R}^{N\times N}$, whose rank is at most $d_{h}=64$. Fourth, the feedforward sublayer is a two-layer map
\[
    X
    \mapsto\phi_{\mathrm{GELU}}(XW^{M,(\ell,\mathrm{in})}+b^{M,(\ell,\mathrm{in})})W^{M,(\ell,\mathrm{out})}+b^{M,(\ell,\mathrm{out})}
    ,
\]
so we use $W^{M,(\ell,\mathrm{in})}\in\mathbb{R}^{N\times I}$ and $W^{M,(\ell,\mathrm{out})}\in\mathbb{R}^{I\times N}$ separately in the proxy. We also use $L_{\mathrm{GELU}}\le1.13$ for the Lipschitz constant of the elementwise GELU map.

These definitions should not be interpreted as a formal extension of our generalization bounds to the full BERT encoder architecture. They do not incorporate residual connections, LayerNorm, bias parameters, or token and positional embeddings. These components may affect the effective Lipschitz constants, norm propagation, and parameter complexity of literal BERT generalization bounds, and incorporating them would require a separate covering analysis. Thus, the quantities defined in this appendix should be understood as BERT-adapted leading-factor diagnostics, whose purpose is to compare how the leading spectral and mixed-norm complexity factors scale across checkpoints.

Following Lemma~\ref{lem:GenGapPostHocGrid}, we isolate the leading polynomial factor of our post hoc bounds. We suppress absolute constants, logarithmic factors, and confidence terms, because the experiment is intended to compare scaling across checkpoints. To adapt the single-head expression in Lemma~\ref{lem:GenGapPostHocGrid} to BERT, we first replace the propagation factor $\alpha^{(\ell)}$ by
\[
    \tilde{\alpha}^{(\ell)}
    =\prod_{k=\ell+1}^{L}L_{\phi}C_{2}^{M,(k,\mathrm{in})}C_{2}^{M,(k,\mathrm{out})}\sum_{h=1}^{A_{h}}C_{2}^{\tilde{V},(k,h)}(1+4C_{2}^{QK,(k,h)})
    .
\]
We also replace the local factors $\gamma^{\star,(\ell)}$ by
\begin{align*}
    &\tilde{\gamma}^{QK,(\ell,h)}
    =2C_{2}^{\tilde{V},(\ell,h)}C_{2}^{M,(\ell,\mathrm{out})}C_{2}^{M,(\ell,\mathrm{in})}
    ,
    &&\tilde{\gamma}^{V,(\ell,h)}
    =C_{2}^{M,(\ell,\mathrm{out})}C_{2}^{M,(\ell,\mathrm{in})}
    ,\\
    &\tilde{\gamma}^{M,(\ell,\mathrm{in})}
    =C_{2}^{M,(\ell,\mathrm{out})}
    ,
    &&\tilde{\gamma}^{M,(\ell,\mathrm{out})}
    =1
    .
\end{align*}
With $m=\lceil L+\log(N)\rceil$, we define the BERT-adapted proxies for the polynomial part of our bounds by $\tilde{O}(B_{\mathrm{ours}}/\sqrt{n})$, where
\begin{align*}
    B_{\mathrm{ours}}
    &=\inf_{\bm{p}\in\mathcal{P}_{m}}\mathfrak{B}_{\mathrm{BERT}}(\bm{p})+\sqrt{L}
    ,\\
    \mathfrak{B}_{\mathrm{BERT}}(\bm{p})
    &=\sum_{\ell=1}^{L}\left(\sum_{h=1}^{A_{h}}\mathfrak{B}_{QK}^{(\ell,h)}+\sum_{h=1}^{A_{h}}\mathfrak{B}_{\tilde{V}}^{(\ell,h)}+\mathfrak{B}_{M}^{(\ell,\mathrm{in})}+\mathfrak{B}_{M}^{(\ell,\mathrm{out})}\right)
    ,
\end{align*}
and for $\star\in\{QK,\tilde{V},M\}$, with $a\in[A_{h}]$ when $\star\in\{QK,\tilde{V}\}$ and $a\in\{\mathrm{in},\mathrm{out}\}$ when $\star=M$,
\begin{equation}\label{eq:MatrixWiseBoundProxy}
    \mathfrak{B}_{\star}^{(\ell,a)}
    =(\|W^{\star,(\ell,a)}\|_{\mathrm{s},p^{\star,(\ell,a)}}^{p^{\star,(\ell,a)}})^{\frac{1}{p^{\star,(\ell,a)}+2}}(\tilde{\gamma}^{\star,(\ell,a)}\tilde{\alpha}^{(\ell)}L)^{\frac{p^{\star,(\ell,a)}}{p^{\star,(\ell,a)}+2}}N^{\frac{p^{\star,(\ell,a)}+1}{p^{\star,(\ell,a)}+2}}
    .
\end{equation}
Here $\mathcal{P}_{m}$ is the corresponding grid of Schatten indices for all headwise attention matrices and feedforward matrices.

For the bounds of \citet{edelman2022inductive}, we analogously retain only the leading polynomial factor and define the BERT-adapted proxies by $\tilde{O}(B_{\mathrm{Edelman}}/\sqrt{n})$, where
\begin{align*}
    &B_{\mathrm{Edelman}}
    =\left(1+\sum_{\ell=1}^{L}(\tilde{\alpha}^{(\ell)})^{\frac{2}{3}}\tilde{\xi}^{(\ell)}\right)^{\frac{3}{2}}
    ,\\
    &\tilde{\xi}^{(\ell)}
    =\sum_{h=1}^{A_{h}}(C_{2}^{M,(\ell,\mathrm{out})}C_{2}^{M,(\ell,\mathrm{in})}C_{2}^{\tilde{V},(\ell,h)}C_{2,1}^{QK,(\ell,h)})^{\frac{2}{3}}+\sum_{h=1}^{A_{h}}(C_{2}^{M,(\ell,\mathrm{out})}C_{2}^{M,(\ell,\mathrm{in})}C_{2,1}^{\tilde{V},(\ell,h)})^{\frac{2}{3}}\\
    &\qquad+(C_{2}^{M,(\ell,\mathrm{out})}C_{2,1}^{M,(\ell,\mathrm{in})})^{\frac{2}{3}}+(C_{2,1}^{M,(\ell,\mathrm{out})})^{\frac{2}{3}}
    .
\end{align*}

One may worry that the feedforward matrices in the BERT checkpoints have intermediate dimension $I=4N$, and hence that the mixed $(2,1)$-norms used in the Edelman-type proxies are penalized by the larger rectangular shape. This effect should be interpreted with some care. Since the expansion ratio $I/N=4$ is fixed for all checkpoints, replacing an $N\times N$ feedforward matrix by the two rectangular matrices $W^{M,(\ell,\mathrm{in})}\in\mathbb{R}^{N\times 4N}$ and $W^{M,(\ell,\mathrm{out})}\in\mathbb{R}^{4N\times N}$ does not by itself change the polynomial exponent in $N$; it changes only fixed aspect-ratio constants, together with the actual scaling of the trained weights. Indeed, if $W^{M,(\ell,\mathrm{in})}$ is partitioned into four $N\times N$ column blocks, then its mixed $(2,1)$-norm is exactly the sum of the mixed $(2,1)$-norms of these four blocks. Similarly, if $W^{M,(\ell,\mathrm{out})}$ is partitioned into four $N\times N$ row blocks, then its mixed $(2,1)$-norm lies between one half of the sum and the full sum of the blockwise mixed $(2,1)$-norms.\footnote{In general, if we partition $W\in\mathbb{R}^{I\times N}$ into four $N\times N$ row blocks as $W^{\top}=[W_{1}^{\top}\ W_{2}^{\top}\ W_{3}^{\top}\ W_{4}^{\top}]$ with $W_{b}\in\mathbb{R}^{N\times N}$, we have $2^{-1}\sum_{b=1}^{4}\|W_{b}\|_{2,1}\le\|W\|_{2,1}\le\sum_{b=1}^{4}\|W_{b}\|_{2,1}$.} Thus the intermediate dimension contributes a fixed-width factor rather than a new $N$-dependent exponent. The remaining growth of the Edelman-type proxies with $N$ is therefore not merely an artifact of using $I=4N$, but reflects the behavior of the mixed norms of the trained feedforward weights.

\textbf{}\subsection{Comparison and interpretation}

Figure~\ref{fig:ObservedBoundComparison} plots the normalized version of the generalization gap proxies $B_{\mathrm{ours}}$ and $B_{\mathrm{Edelman}}$ as $L$ or $N$ varies. The resulting curves show that our proxies grow more slowly than the proxies obtained from \citet{edelman2022inductive} both when the depth is varied at fixed hidden dimension and when the hidden dimension is varied at fixed depth.

We next examine how the post hoc choice of Schatten indices produces this behavior. In all BERT Miniatures checkpoints considered here, the minimum of $\mathfrak{B}_{\mathrm{BERT}}(\bm{p})$ is attained at $p=0$ for every matrix type, layer, and head. This common optimizer should not be interpreted as saying that all trained weight matrices have the same spectral structure. Rather, it means that, for the contribution term in Eq.~\eqref{eq:MatrixWiseBoundProxy}, the rank-based endpoint gives the most favorable balance among the Schatten term, the layerwise propagation factor, the depth, and the hidden dimension.

The reason is visible directly from Eq.~\eqref{eq:MatrixWiseBoundProxy}. Increasing $p$ replaces the rank-like term by a more norm-like Schatten quantity, but it also increases the architectural factor
\[
    (\tilde{\gamma}^{\star,(\ell,a)}\tilde{\alpha}^{(\ell)}L)^{\frac{p^{\star,(\ell,a)}}{p^{\star,(\ell,a)}+2}}N^{\frac{p^{\star,(\ell,a)}+1}{p^{\star,(\ell,a)}+2}}
    .
\]
Therefore, positive Schatten indices can improve the proxies only when the reduction in the Schatten term is large enough to compensate for the additional depth- and dimension-dependent factors. In the grid of BERT Miniatures checkpoints used here, this compensation does not occur, so the rank-based endpoint remains optimal.

\begin{figure}[t]
    \centering
    \includegraphics[width=0.45\linewidth]{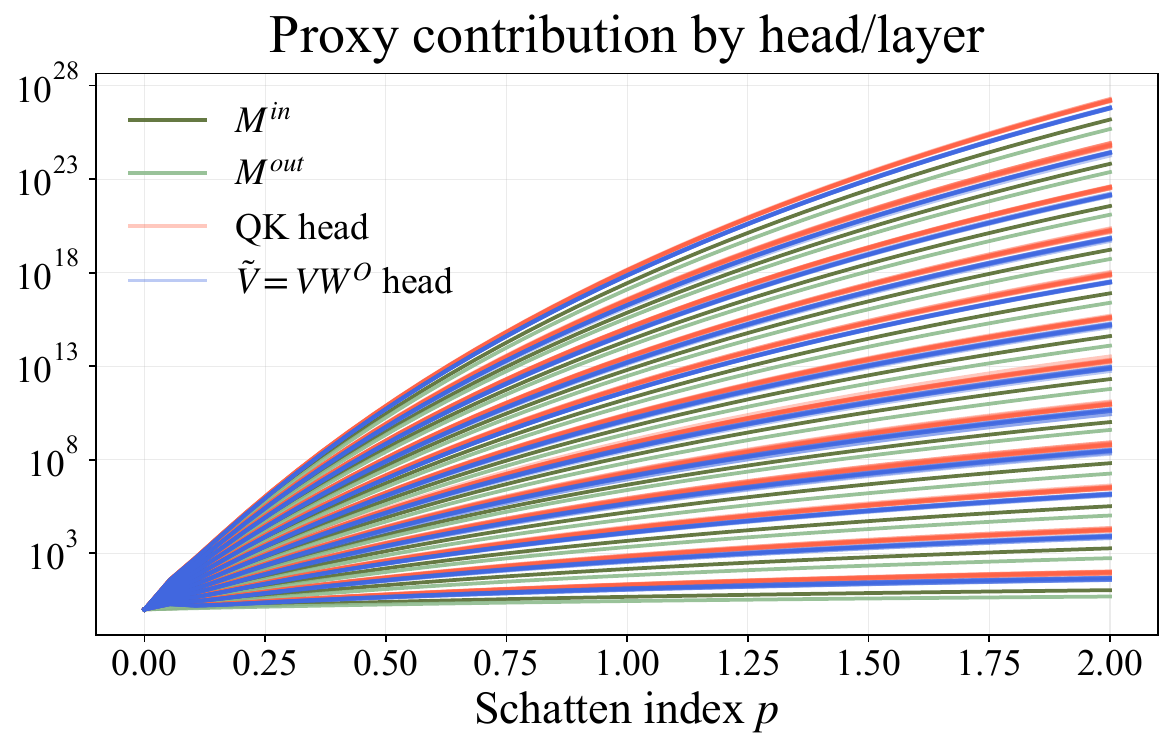}
    \includegraphics[width=0.45\linewidth]{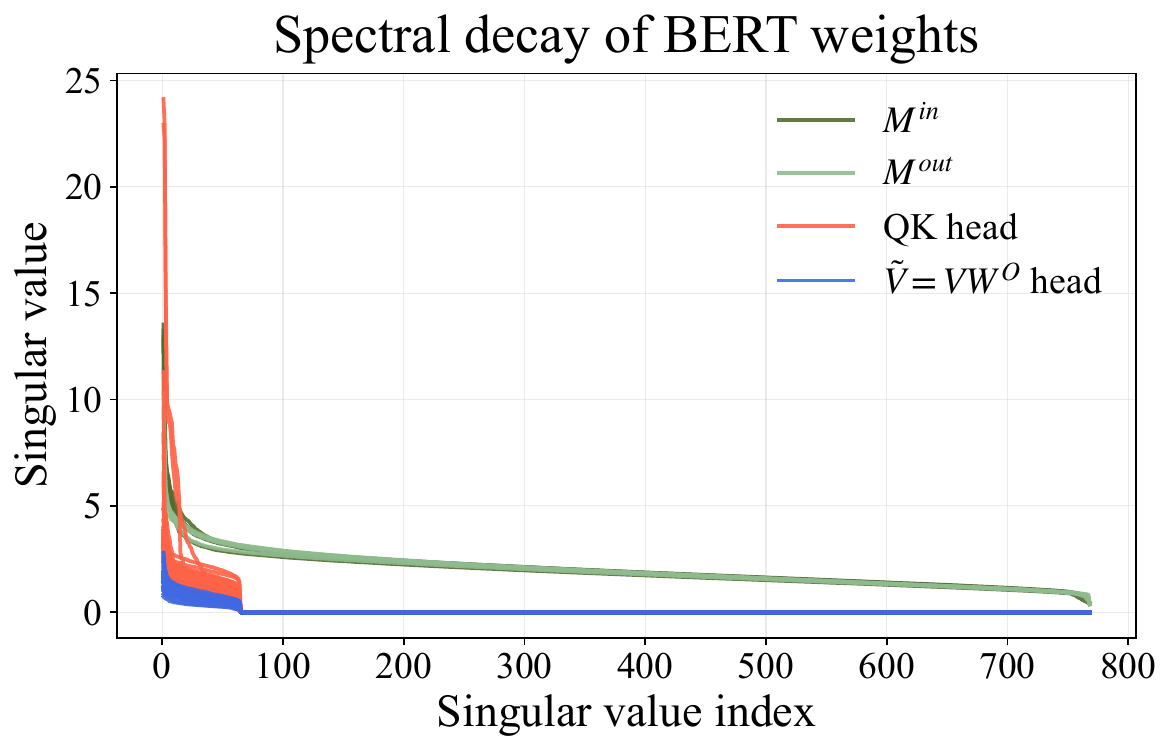}
    \caption{Diagnostics for the post hoc Schatten-index choice for each weight matrix at hidden dimension $N=768$ and depth $L=12$. \textbf{Left:} Matrixwise proxy contributions as a function of the Schatten index $p$, normalized by the value at $p=0$. The increase of all curves as $p$ moves away from zero explains why the proxies select the rank-based endpoint. \textbf{Right:} Singular-value spectra of BERT weight matrices. It shows that this common endpoint selection does not come from identical spectra: the headwise attention-related matrices are exactly low-rank with rank at most $d_{h}=64$, whereas the feedforward matrices have slower spectral decay. Together, the two panels show that the selected indices reflect a bound-dependent trade-off between spectral structure and architectural factors.}
    \label{fig:ChoosingP}
\end{figure}

The left panel of Figure~\ref{fig:ChoosingP} gives a matrixwise view of the endpoint selection. After normalization by the value at $p=0$, every contribution increases as $p$ moves away from zero. The curves are organized mainly by layer rather than by head: within a fixed layer, the headwise curves for $W^{QK}$ and $W^{\tilde{V}}$ remain close to one another, whereas the separation across layers is much larger. The higher blocks correspond to earlier layers and the lower blocks to later layers, which is consistent with the stronger effect of the propagation factor $\tilde{\alpha}^{(\ell)}$ in earlier layers. Within a fixed layer, the rise away from $p=0$ is qualitatively strongest for value-related matrices, then for query-key matrices, and then for feedforward matrices.

The right panel of Figure~\ref{fig:ChoosingP} explains why this common optimizer still hides substantial matrix-type heterogeneity. The attention-related matrices $W^{QK}$ and $W^{\tilde{V}}$ are headwise objects and, by construction, have rank at most $d_{h}=64$. Hence, for the BERT checkpoints considered here, these matrices are exactly low-rank relative to the hidden dimension. This makes $p=0$ particularly natural for the attention-related terms. Moreover, the singular values of $W^{\tilde{V}}$ tend to be larger than those of $W^{QK}$, so using positive Schatten indices is relatively more expensive for the value-related proxies.

The feedforward matrices $W^{M,\mathrm{in}}$ and $W^{M,\mathrm{out}}$ behave differently. They do not have the same exact low-rank structure, and their spectral decay is slower. Thus, for these matrices, positive values of $p$ could in principle exploit norm-based spectral information from the singular-value profile. However, in the present proxies, that possible spectral gain is outweighed by the corresponding increase in the propagation, depth, and dimension factors. Overall, Figure~\ref{fig:ChoosingP} shows that the post hoc Schatten indices are not merely diagnostics of spectral decay, but are bound-dependent complexity parameters that balance spectral structure against architectural scaling factors.

\bibliographystyle{chicago}
\bibliography{main}

\end{document}